\def\showcomments{1} % set to 0 to hide comments
  \newcommand\LL[1]{}
\newenvironment{myassumption}[1] % takes one argument: the label
  {\begin{assumption}}
  {\end{assumption}}
\title{Improved Scaling Laws in Linear Regression via  Data Reuse}
\date{\today}
\author[1]{Licong Lin}
\author[1]{Jingfeng Wu}
\author[1,2]{Peter L.\ Bartlett}
\affil[1]{University of California, Berkeley}
\affil[2]{Google DeepMind}
\affil[ ]{\texttt {\{liconglin,uuujf,peter\}@berkeley.edu} }
\begin{document}
\doparttoc % Tell to minitoc to generate a toc for the parts
\faketableofcontents % Run a fake tableofcontents command for the partocs

\maketitle

\begin{abstract}
    Neural scaling laws suggest that the test error of large language models trained online decreases polynomially as the model size and data size increase. However, such scaling can be unsustainable when running out of new data.
    In this work, we show that data reuse can improve existing scaling laws in linear regression. Specifically, we derive sharp test error bounds on $M$-dimensional linear models trained by multi-pass \emph{stochastic gradient descent} (multi-pass SGD)  on $N$ data with sketched features.
    Assuming that the data covariance has a power-law spectrum of degree $a$, and that the true parameter follows a prior with an aligned power-law spectrum of degree $ b-a$ (with $a > b > 1$), we show that multi-pass SGD achieves a test error of $\Theta(M^{1-b} + \step^{(1-b)/a})$, where $\step \lesssim N^{a/b}$ is the number of iterations. 
    In the same setting, one-pass SGD only attains a test error of $\Theta(M^{1-b} + N^{(1-b)/a})$ \citep[see, e.g.,][]{lin2024scaling}.
    This suggests an improved scaling law via data reuse (i.e., choosing $\step>N$) in data-constrained regimes.
    Numerical simulations are also provided to verify our theoretical findings.

    % Neural scaling laws suggest that the test error of large language models decreases polynomially as the model size and data size increase. However, such scaling can be impractical when the number of available data is limited.
    % In this work,
    % we study the impact of data reuse on the test error of linear models in data-constrained regimes in an infinite-dimensional linear regression setting. Specifically, we train $M$-dimensional linear models on $N$ sketched samples using multi-pass stochastic gradient descent (multi-pass SGD) and derive test error bounds for its last iterate. Assuming that the true parameter follows a Gaussian prior with a power-law spectrum of degree $ b-a$, and that the data covariance  has an aligned power-law spectrum of degree $a$ (with $a > b > 1$), we show that the test error scales as $\Theta(M^{1-b} + \step^{(1-b)/a})$, where $\step \lesssim N^{a/b}$ is the number of SGD iterations. This improves upon the bound $\Theta(M^{1-b} + N^{(1-b)/a})$ from~\citet{lin2024scaling} for one-pass SGD, whenever $\step > N$.
    % Our results thus suggest that reusing data to certain extent can improve model performance in data-constrained regimes (e.g., when $N \ll M^b$) in linear regression. Numerical simulations are also provided to verify our theoretical findings.

\end{abstract}

\section{Introduction}
% Recent progress in deep learning is closely linked to the trend of scaling up models and data~. 
 Empirical studies reveal that the performance of large-scale  models often improves in a predictable manner as both model size (denoted by $M$) and sample size (denoted by $N$) increase \citep[see, e.g.,][]{hoffmann2022training,besiroglu2024chinchilla}. These observations, known as \emph{neural scaling laws}, suggest that the population risk (denoted by $\risk$) of  large models decreases following a power-law formula, namely,
\begin{equation*}%\label{eq:empirical-scaling}
    \risk(M,N)\approx \risk^* + c_1 {M^{-a_1}} + {c_2}{N^{-a_2}},
\end{equation*}
where $\risk^*>0$ denotes the irreducible error---such as the intrinsic entropy of natural language in the case of language modeling~\citep{kaplan2020scaling}---and $a_1, a_2, c_1, c_2$ are positive constants.
Neural scaling laws predict a path for improving the state-of-the-art models via \emph{scaling model and data size}.

% The empirical values of  these constants vary across tasks, but the power-law trend is generally consistent, making scaling laws  practical guides for allocating compute and scaling up large models~\citep[see, e.g.,][]{hoffmann2022training, besiroglu2024chinchilla}.

A line of recent work establishes provable scaling laws in simplified settings such as linear regression ~\citep[see, e.g.,][other related works will be discussed later in \Cref{sec:related}]{lin2024scaling,paquette20244+}.
Specifically, they consider an infinite-dimensional linear regression problem, where an $M$-dimensional linear model is trained by one-pass \emph{stochastic gradient descent} (SGD) on $N$ Gaussian-sketched samples. 
Under power-law assumptions on the spectra of the data covaraince and the prior covariance, they show power-law type scaling laws in linear regression. 
However, their results are limited to one-pass SGD, where each sample is used once. In particular, \citet{lin2024scaling} attributed the nice, power-law type scaling laws to the \emph{implicit regularization} effect of one-pass SGD (see  Section 1 therein). 
It is unclear if scaling laws apply to other training algorithms, particularly those involving multiple passes of the data.
Indeed, the scaling laws that only apply to one-pass method are not sustainable in a data-constrained regime.

% {\centering
% \emph{How does data reuse affect the test error of linear models in data-constrained regimes?}
% }

% In particular, the work by~\citet{lin2024scaling} 
% analyzed scaling laws in an infinite-dimensional linear regression setting. They trained $M$-dimensional linear models using $N$ i.i.d. Gaussian-sketched samples via one-pass , and showed that the test error decays following a power law consistent with the empirical scaling law in~\eqref{eq:empirical-scaling}.
% However, the analysis in~\citet{lin2024scaling} is restricted to one-pass SGD, whereas in practice---especially in data-constrained regimes---it is common to reuse data across multiple passes~\citep{muennighoff2023scaling}. This raises a natural question: 

There is evidence that data reuse can improve existing scaling laws developed for one-pass training. 
Empirically, \citet{muennighoff2023scaling} showed that with up to four passes, scaling laws approximately hold as if the reused data is new. 
From a theoretical perspective, 
the work by \citet{pillaud2018statistical} shows that in a class of linear regression problems, the sample complexity of one-pass SGD is strictly suboptimal; and it can be made minimax optimal by considering multiple passes. 
However, \citet{pillaud2018statistical} did not discuss the effect of model size or sketching. 
These results motivate the study of scaling laws for multi-pass methods. 

% proposed an improved scaling law that incorporates the number of passes over the training set, suggesting that the  reused data can be approximately treated as fresh data when the number of passes is small.

% Despite the predictive power of scaling laws, scaling up models as prescribed can be impractical when the number of available training data  is less than what the scaling laws suggest. Various approaches have been studied to address this limitation, including  data reuse~\citep{muennighoff2023scaling,parmar2024reuse}, curriculum learning~\citep{bengio2009curriculum},  synthetic data training~\citep{wang2022self,dohmatob2024tale}, etc.  
% In this work, we focus on a simple form of data reuse, namely, running stochastic gradient descent (SGD) for multiple passes (epochs) over the training set.
% Notably,~\citet{muennighoff2023scaling} proposed an improved scaling law that incorporates the number of passes over the training set, suggesting that the  reused data can be approximately treated as fresh data when the number of passes is small.

\paragraph{Contributions.}
In this work, we study scaling laws induced by multi-pass SGD in the same infinite-dimensional linear regression setting considered by \citet{lin2024scaling,paquette20244+}.
Our results suggest that in certain regimes, the test error of models trained by multi-pass SGD scales strictly better with respect to the number of training samples compared to one-pass SGD.

We assume that the data covariance and the prior covariance exhibit aligned power-law spectra with exponents $a$ and $b-a$, respectively (see Assumption~\ref{assump:power-law}~and~\ref{assump:simple:param}) \citep{lin2024scaling,paquette20244+}.
We prove that multi-pass SGD achieves an excess test error of order $\Theta(M^{1-b} + \step^{(1-b)/a})$ when $a>b>1$ and the number of SGD iterations $\step \lesssim N^{a/b}$.
This improves over the $\Theta(M^{1-b} + N^{(1-b)/a})$ bound for one-pass SGD \citep{lin2024scaling} when $\step > N$.
In particular, when choosing the optimal number of iterations $\step \eqsim N^{a/b}$, multi-pass SGD achieves an excess test error of order $\Theta(N^{(1-b)/b})$, in contrast to  $\Theta(N^{(1-b)/a})$  for one-pass SGD in the data-constrained regime where $N \ll M^b$. Our results thus suggest that, to a certain extent, reusing data can improve the test performance of linear models in data-constrained regimes.

\paragraph{Notation.} Let $f(x)$ and $g(x)$ be two positive-valued functions. We write $f(x)\lesssim g(x)$ (and $f(x) = \mathcal{O}(g(x))$) if there exists some absolute constant (if not otherwise specified) $c > 0$ such that $f(x) \leq c g(x)$ for all $x$. Similarly, $f(x) \gtrsim g(x)$ (and $f(x) = \Omega(g(x))$) means $f(x) \geq c g(x)$ for some constant $c > 0$. We write $f(x) \eqsim g(x)$ (and $f(x) = \Theta(g(x))$) when $f(x)\lesssim g(x)\lesssim f(x)$. We also occasionally use $\bigOtil(\cdot),\Thetatil(\cdot)$ to hide logarithmic factors.
 In this work, $\log(\cdot)$ denotes the base-$2$ logarithm. 
For two vectors $\uB,\vB$ in a Hilbert space, we denote their inner product by $\langle \uB,\vB\rangle$ or $\uB^\top\vB$. We denote the operator norm for matrices by $\|\cdot\|$ (or $\|\cdot\|_2$), the Frobenius norm for matrices by $\|\cdot\|_{F}$, and the $\ell_2$-norm for vectors by $\|\cdot\|_2$.
For a positive semi-definite (PSD) matrix $\bm{A}$ and a vector $\bm{v}$ of compatible dimensions, we write $\| \bm{v} \|_{\bm{A}}^2 := \bm{v}^\top \bm{A} \bm{v}$. For symmetric matrices, we denote the $j$-th eigenvalue of $\bm{A}$ by $\mu_j(\bm{A})$, and the rank of $\bm{A}$ by $r(\bm{A})$.

\section{Setup}\label{sec:preliminaries}
Let $\xB\in\Hbb$ denote a feature vector in 
 a Hilbert space $\Hbb$ (finite or countably infinite-dimensional) with dimension $d := \dim(\Hbb)$, and  $y\in\Rbb$ denote its corresponding response. In linear regression, the test error (i.e., population risk) of the parameter $\wB\in\Hbb$ is measured by the mean squared error:
\begin{align*}
\risk(\wB) \defn \Ebb_{(\xB,y)\sim \Dist}\Big[\big(\la \xB,\wB\ra - y\big)^2\Big]
\end{align*} 
for some distribution $\Dist$ on $\Hbb\times \R.$
Given samples of the form $(\xB,y)$, 
instead of fitting a $d$-dimensional linear model, we train an $M$-dimensional sketched linear model with $M \ll d$. Namely, 
we consider linear predictors with $M$ parameters, defined as
\begin{equation}
    f_{\vB} : \Hbb \to \Rbb, \quad \xB \mapsto \langle \vB, \SB \xB \rangle,\label{eq:linear-predictor}
\end{equation}
where $\vB \in \Rbb^M$ are the trainable parameters, and $\SB \in \Rbb^{M\times d}$ is some fixed sketching matrix. 
In this work, we consider Gaussian sketching, where the entries of $\SB$ are drawn independently from $\Ncal(0, 1/M)$.
Given a set of $N$ i.i.d. samples $(\xB_i,y_i)_{i=1}^N$ from $\Dist$,
we
train $f_\vB$ via  \emph{multi-pass stochastic gradient descent} (multi-pass SGD), that is,
\begin{align}\label{eq:multi-sgd}\tag{\text{multi-pass SGD}}
\begin{aligned}
\vB_{t} 
&:= \vB_{t-1} -  {\gamma_t}\big( f_{\vB_{t-1}} (\xB_{i_t}) - y_{i_t} \big) \nabla_{\vB} f_{\vB_{t-1}}(\xB_{i_t}) \\
&= \vB_{t-1} -
{\gamma_t} \SB\xB_{i_t}(\xB_{i_t}^\top\SB^\top\vB_{t-1}-y_{i_t}), && t=1,\dots,\step,
\end{aligned}
\end{align}
where $\step$ is the number of total steps, $i_t\simiid \unif{([N])}$ for $t\in[\step]$, and  $(\gamma_t)_{t=1}^\step$ are the stepsizes. 
Without loss of generality, we assume zero initialization $\vB_0=\zeroB.$
We consider a geometric decaying stepsize scheduler \citep{ge2019step,wu2022iterate,lin2024scaling}, 
\begin{equation}\label{eq:lr}
   \gamma_t := \gamma_0 / 2^\ell \text{~~~for $t=1,\dots,\step$},\  \ \text{where}\ \ell  = \lfloor t /(\step/ \log(\step)) \rfloor,
\end{equation}
and $\gamma_0>0$ is the initial stepsize.  The output of multi-pass SGD is taken as its
last iterate $\vB_\step$.
We emphasize that the algorithm we consider differs slightly from the standard SGD used in practice, where the samples are shuffled at the beginning of each epoch (pass) and then processed sequentially without replacement. In contrast, we assume that at each step, a sample is drawn independently from the training dataset, allowing for repeated sampling within an epoch.
Moreover,
our analysis applies to other stepsize schedules (such as polynomial decay), but we focus on geometric decay since it is known to yield near minimax optimal excess test error for the last iterate of SGD in the finite-dimensional regime~\citep{ge2019step}. 

Conditioned on a sketching matrix $\SB$, the risk of $\vB_\step$ is computed as 
\begin{align*}
    \risk_M(\vB_\step)=\risk(\SB^\top\vB_\step) 
    =
     \Ebb\Big[\big(\la \xB,\SB^\top\vB_\step\ra - y\big)^2\Big],
\end{align*}
where the expectation is over $(\xB,y)$ from $\Dist.$ 
As an important component of our analysis, we also consider the
 \textit{gradient descent} (GD) iterates
\begin{align}\label{eq:gd}\tag{\text{GD}}
\begin{aligned}
\thetaB_{t} 
&:= \thetaB_{t-1} -  \frac{\gamma_t}{N}\sum_{i=1}^N\big( f_{\thetaB_{t-1}} (\xB_i) - y_i \big) \nabla_{\vB} f_{\thetaB_{t-1}}(\xB_{i}) \\
&= \thetaB_{t-1} -
\frac{\gamma_t}{N}\SB\XB^\top(\XB\SB^\top\thetaB_{t-1}-\yB), && t=1,\dots,\step,
\end{aligned}
\end{align}
where $\XB=(\xB_1,\ldots,\xB_N)^\top$, $\yB=(y_1,\ldots,y_N)^\top$, $\thetaB_0=\zeroB,$
and $(\gamma_t)_{t=1}^\step$ are the same stepsizes as in~\eqref{eq:lr}. Conditioned on the sketching matrix $\SB$ and the dataset $(\xB_i,y_i)_{i=1}^N$, it can be verified by induction that $\vB_\step$ is an unbiased estimate of $\thetaB_\step$, i.e., $\E[\vB_\step]=\thetaB_\step$, where the expectation is over the randomness of the indices $(i_t)_{t=1}^\step$.

% \begin{align}\label{eq:gd}\tag{\text{GD}}
% \begin{aligned}
% \vB_{t} 
% &:= \vB_{t-1} -  \frac{\gamma_t}{N}\sum_{i=1}^N\big( f_{\vB_{t-1}} (\xB_i) - y_i \big) \nabla_{\vB} f_{\vB_{t-1}}(\xB_{i}) \\
% &:= \vB_{t-1} -
% \frac{\gamma_t}{N}\cdot \SB\XB^\top(\XB\SB^\top\vB_{t-1}-\yB), && t=1,\dots,\step,
% \end{aligned}
% \end{align}
% where $\XB=(\xB_1,\ldots,\xB_N)^\top$, $\yB=(y_1,\ldots,y_N)^\top$
% and $(\gamma_t)_{t=1}^N$ are the stepsizes. 

\paragraph{Risk decomposition.}
We can decompose the risk (i.e., the test error) achieved by $\vB_\step$, the last iterate of~\eqref{eq:multi-sgd}, into the sum of \emph{irreducible risk}, \emph{approximation error}, the \emph{excess risk} of the last iterate of~\eqref{eq:gd}, and a \emph{fluctuation error}:
\begin{equation}\label{eq:approx-excess-decomp}
    \vRisk(\vB_\step) = \underbrace{\min \risk(\cdot)}_{\mathsf{Irreducible}} + \underbrace{\min \vRisk(\cdot) - \min \risk(\cdot)}_{\approximation} + \underbrace{\vRisk(\thetaB_\step) - \min \vRisk(\cdot)}_{\excessRisk}
    +
    \underbrace{\vRisk(\vB_\step) - \vRisk(\thetaB_\step)}_{\fluctuationerr}
    .
\end{equation}
Compared with~\citet{lin2024scaling}~(cf.~Eq.~4), the decomposition in~\eqref{eq:approx-excess-decomp} includes an additional \emph{fluctuation error} term arising from the randomness of the indices $(i_t)_{t=1}^\step$ in multi-pass SGD \citep{zou2022risk}. Note that the fluctuation error is non-negative by Jensen's inequality, as $\vB_\step$ is an unbiased estimate of $\thetaB_\step$.

\section{Main results}\label{sec:scaling_law_examples}
In this section, we present our main result,  showing that under certain power-law assumptions on the data covariance and the prior covariance, the expected risk of $\vB_\step$ from~\eqref{eq:multi-sgd} decays polynomially in the number of training steps $\step$ and model size $M$. We begin by introducing the data assumption used throughout this work.
\begin{assumption} \label{assump:simple}
Assume the following conditions on the data distribution $\Dist$.
\begin{assumpenum}[leftmargin=*]
\item \label{assump:simple:data} \textbf{Gaussian design.}~
The feature vector satisfies \(\xB \sim \Ncal(0,\HB)\).
\item \label{assump:simple:noise} \textbf{Well-specified model.}~ 
The response satisfies \(\Ebb[y \mid \xB,\wB^*] = \xB^\top \wB^*\).  
Define \(\sigma^2 := \Ebb[(y - \xB^\top \wB^*)^2]\).
\item \label{assump:power-law}
\textbf{Power-law spectrum.}~ 
The eigenvalues of $\HB$ satisfy $\lambda_i \eqsim i^{-a}$ for all $i>0$ for some $a>1$.
\item \label{assump:simple:param} \textbf{Source condition.}~ 
Let $(\lambda_i, \vB_i)_{i> 0}$ be the eigenvalues and eigenvectors of $\HB$.
Assume $\wB^*$ follows a prior such that 
\[
\text{for $i\neq j$},\ \ \E [\la \vB_i, \wB^*\ra \la \vB_j, \wB^*\ra] =0;\ \  
\text{and for $i> 0$},\ \ 
\E [\lambda_i \la \vB_i, \wB^{*}\ra^2] \eqsim i^{-b},\ \text{ for some $b>1$}.\]
\end{assumpenum}
\end{assumption}

Assumptions~\ref{assump:simple:data} and~\ref{assump:simple:noise} posit that the feature vector $\xB$ follows a Gaussian distribution and that the linear model $y = \langle  \xB ,\wB^*\rangle + \epsilon$ is well-specified, which are standard conditions in the analysis of linear regression.
Assumptions~\ref{assump:power-law} and~\ref{assump:simple:param} assume that both the covariance of $\xB$ and the prior on the true parameter $\wB^*$ have power-law spectra and share the same eigenspace. In particular, the true parameter $\wB^*$ follows an isotropic prior when $a=b$.
These conditions are common in theoretical analysis of scaling laws~\citep{bordelon2024dynamical,lin2024scaling,paquette20244+}, and the power-law spectrum in Assumption~\ref{assump:power-law} is also empirically observed in real-world data, such as the frequency distribution of words in natural languages~\citep{piantadosi2014zipf}.
We further note that Assumption~\ref{assump:simple} matches the conditions of Theorem 4.2 in~\citet{lin2024scaling}, which established scaling laws for one-pass SGD under the same setup.
This alignment allows a direct comparison between our results and those in~\citet{lin2024scaling}, highlighting the benefits of data reuse in certain data-constrained regimes.
Finally,  we define the number of effective steps $\stepeff\defn \lfloor\step/\log\step\rfloor$.

\begin{theorem}[Error bounds for multi-pass SGD]\label{thm:scaling-law}
Suppose~\Cref{assump:simple} holds. Consider an $M$-dimensional linear predictor trained by~\eqref{eq:multi-sgd} on $N$ samples. 
Recall the decomposition in~\eqref{eq:approx-excess-decomp}. 
Assume the initial stepsize $\gamma_0 = \min\{\gamma,1/[4\max_{i}\|\SB\xB_i\|_2^2]\}$ for some $\gamma\lesssim 1/\log N$ and the number of effective steps $\stepeff\lesssim N^{a}/\gamma$. Then with probability at least $1-e^{-\Omega(M)}$  over $\SB$
\begin{enumerate}[leftmargin=*]
\item $\irreducible =\risk(\wB^*)=\sigma^2$.
\item $
    \E_{\wB^*}[\approximation] \eqsim M^{1-b}.$
\item 
Suppose $\sigma^2\gtrsim1$. 
The expected excess risk 
($\excessRisk$) admits a decomposition into a bias term ($\bias$) and a variance term ($\variance$), namely,
\begin{align*}
\Ebb[\excessRisk] \eqsim \bias + \sigma^2 \variance,
\end{align*}
where the expectation is over the randomness of $\wB^*$, $(\xB_i,y_i)_{i=1}^N$ and $(i_t)_{t=1}^\step$. 
 Moreover, when  $a>b-1$, 
 $\bias$ and $\variance$ satisfy
\begin{align*}
  \bias &\lesssim \max\{M^{1-b},(\stepeff\gamma)^{(1-b)/a}\} ,\\
 \bias &\gtrsim (\stepeff \gamma)^{(1-b)/a}\text { when } (\stepeff \gamma)^{1/a}\leq  M/c \text{~~for some constant }c>0, \\
\variance 
&\eqsim {\min\big\{ M, \ (\stepeff \gamma)^{1/a}\big\}}/{N}.
\end{align*}
\item  Suppose $\sigma^2\eqsim1$ and $\stepeff\lesssim N^{(1-\varepsilon)a}/\gamma$ for some  $\varepsilon\in(0,1]$. The expected fluctuation error $\E[\fluctuationerr]$ satisfies
\begin{align*}
\E[\fluctuationerr]
&\lesssim
\gamma \log N\cdot 
\big[(\stepeff\gamma)^{1/  a-1}
+
\frac{(\stepeff\gamma)^{1/a}}{N}\big],~~~\text{and}
\\
\E[\fluctuationerr]
&
\gtrsim
\gamma (\stepeff\gamma)^{1/a-1}~~~~\text{when } \stepeff\lesssim N/\gamma ~\text{~and~}~ (\stepeff\gamma)^{1/a}\leq M/c \text{~~for some constant }c>0,
\end{align*}
where the expectation is over $\wB^*,(\xB_i,y_i)_{i=1}^N$ and $(i_t)_{t=1}^\step$.
\end{enumerate}
In the results, the hidden constants depend only on $(a,b)$ for part~1---3, and  on $(a,b,\varepsilon)$ for part~4.
\end{theorem}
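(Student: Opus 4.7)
The overall plan is to handle the four parts in order, reducing most of them to careful analysis of \eqref{eq:gd} with a power-law spectrum on the sketched sample covariance, and then treating Part~4 as a separate multi-pass fluctuation analysis. Part~1 is immediate: under \Cref{assump:simple:noise}, $\risk(\wB^*)=\sigma^2$ and $\wB^*\in\argmin\risk(\cdot)$. Part~2 reduces to computing $\min_{\vB\in\Rbb^M}\risk(\SB^\top\vB)$, which equals $\sigma^2 + \|\wB^* - \PB_\SB \wB^*\|_\HB^2$ where $\PB_\SB$ is the $\HB$-orthogonal projection onto the row span of $\SB$; combining the Gaussian sketch concentration with the power-law spectra in Assumptions~\ref{assump:power-law}--\ref{assump:simple:param} yields $\E_{\wB^*}[\approximation]\eqsim M^{1-b}$. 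Here I would import the sketching-plus-power-law estimates already developed in \citet{lin2024scaling}, since Assumption~\ref{assump:simple} matches theirs exactly.

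For Part~3, the strategy is to analyze \eqref{eq:gd} conditionally on $(\SB,\XB)$ and then take expectation. Let $\widetilde{\HB}:=\SB\XB^\top\XB\SB^\top/N$ and $\widetilde{\wB}^*$ denote the minimizer of $\vRisk$. The closed-form GD iterate gives
\begin{equation*}
\thetaB_\step - \widetilde{\wB}^* = \prod_{t=1}^{\step}(\IB - \gamma_t \widetilde{\HB})(\thetaB_0 - \widetilde{\wB}^*) + \prod_{t=1}^{\step}(\IB - \gamma_t \widetilde{\HB})\cdot(\text{noise part}),
\end{equation*}
which I would decompose into bias and variance in the standard way. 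The geometric stepsize schedule \eqref{eq:lr} ensures $\prod_t(\IB-\gamma_t\widetilde{\HB})$ behaves, in operator-monotone fashion, like $(\IB+\stepeff\gamma\widetilde{\HB})^{-1}$ up to constants, so the problem reduces to a ridge-type analysis with regularization $\lambda \eqsim 1/(\stepeff\gamma)$. The bias then becomes $\|(\IB+\stepeff\gamma\widetilde{\HB})^{-1}\widetilde{\wB}^*\|_{\widetilde{\HB}}^2$, and the variance becomes $\sigma^2\cdot \mathrm{tr}[\widetilde{\HB}^2(\IB+\stepeff\gamma\widetilde{\HB})^{-2}]/N$. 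Invoking concentration of $\widetilde{\HB}$ around $\SB\HB\SB^\top$ (using Gaussianity and $\stepeff\gamma\lesssim N^a$) and then of $\SB\HB\SB^\top$ around a truncation of $\HB$ to its top-$M$ eigenspace, the power law $\lambda_i\eqsim i^{-a}$ converts these into the claimed $\max\{M^{1-b},(\stepeff\gamma)^{(1-b)/a}\}$ and $\min\{M,(\stepeff\gamma)^{1/a}\}/N$ bounds. The matching lower bound when $(\stepeff\gamma)^{1/a}\le M/c$ follows because the spectrum is effectively power-law on the entire relevant window.

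Part~4 is the most delicate and is where the multi-pass aspect genuinely enters. Writing $\bm{\xi}_t := \vB_t - \thetaB_t$, the SGD recursion yields $\bm{\xi}_t = (\IB - \gamma_t \SB\xB_{i_t}\xB_{i_t}^\top\SB^\top)\bm{\xi}_{t-1} + \gamma_t\bm{\zeta}_t$, where $\bm{\zeta}_t$ has zero conditional mean (conditional on $(\SB,\XB,\yB)$ and past iterates) with conditional covariance controlled by $\SB\hat\HB\SB^\top\cdot\|\SB\xB_{i_t}\|^2$ times the residual on the full dataset. The plan is to propagate the conditional covariance $\bm{C}_t := \E[\bm{\xi}_t\bm{\xi}_t^\top]$ through an operator recursion of the form $\bm{C}_t \preceq \mathcal{T}_t \bm{C}_{t-1} + \gamma_t^2\bm{\Sigma}_t$ and bound $\mathrm{tr}(\SB\HB\SB^\top\bm{C}_\step)$. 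With geometric stepsizes, the product of the $\mathcal{T}_t$'s once again behaves like a ridge shrinkage, and the accumulated noise trace splits into (i) a ``diagonal'' contribution proportional to the already-computed GD excess risk (producing the $(\stepeff\gamma)^{1/a-1}$ term after using the power law) and (ii) an ``off-diagonal'' contribution of order $(\stepeff\gamma)^{1/a}/N$ coming from correlations with the $N$ fixed samples. The $\gamma\log N$ prefactor comes from summing $\gamma_t$ across the $\log\step$ geometric phases and applying a union bound for the high-probability event on $\max_i\|\SB\xB_i\|^2$; the restriction $\stepeff\lesssim N^{(1-\varepsilon)a}/\gamma$ is used to ensure the fluctuation recursion remains contractive on average. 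The lower bound in the regime $\stepeff\lesssim N/\gamma$ will come from keeping only the leading diagonal noise term and showing no cancellation occurs.

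The main obstacle, I expect, will be the upper bound in Part~4, specifically isolating the correct $\gamma\log N$ prefactor and the additive split between the $(\stepeff\gamma)^{1/a-1}$ and $(\stepeff\gamma)^{1/a}/N$ terms. Unlike one-pass SGD, in multi-pass SGD the per-step noise covariance depends on the residual against a fixed finite dataset, so the standard ``stochastic $\preceq$ deterministic'' comparison from \citet{zou2022risk} has to be combined with the GD bias/variance bounds from Part~3 in a self-consistent way, and the geometric stepsize schedule must be exploited phase-by-phase to avoid losing polynomial factors in $\step$.
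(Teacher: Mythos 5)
Your treatment of Parts 1--3 is essentially the paper's route: reduce Part~2 to the projection formula and import the sketching-plus-power-law estimates from \citet{lin2024scaling}, reduce Part~3 to a ridge-type analysis with regularization $\lambda \eqsim 1/(\stepeff\gamma)$, and transfer between the population-covariance norm $\|\cdot\|_{\SB\HB\SB^\top}$ and the empirical-covariance norm via a concentration/replacement lemma. (Two small slips worth flagging: the bias is measured in the $\SB\HB\SB^\top$-norm, not the $\widetilde{\HB}$-norm, and your stated variance expression $\tr[\widetilde{\HB}^2(\IB+\stepeff\gamma\widetilde{\HB})^{-2}]/N$ is off by a factor of $(\stepeff\gamma)^2$ from the correct ridge degrees-of-freedom $\tr[\widetilde{\HB}^2(\widetilde{\HB}+\lambda\IB)^{-2}]/N$. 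Also the bias lower bound requires a genuine argument that $\E_\XB[\prod_t(\IB-\gamma_t\widetilde\HB)]$ commutes with $\SigmaB^{1/2}$, via a symmetry/diagonalization argument in the Gaussian randomness; ``the spectrum is power-law on the relevant window'' does not by itself get you a lower bound after the conjugation by $(\SB\HB\SB^\top)^{-1}$.)

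The real gap is in Part~4. Whether you propagate the fluctuation covariance $\bm{C}_t$ as you propose or use the paper's telescoping semi-stochastic expansion à la Pillaud--Vivien, you must control the per-step noise amplitude $\samnoi^2 \eqsim \max_{i\in[N],\,t\le\step}\big[(\xB_i^\top\SB^\top(\thetaB_{t-1}-\vB^*))^2 + \tilde\epsilon_i^2\big]$, and the inner GD iterate $\thetaB_{t-1}$ is a function of the same data $(\xB_j)_{j\le N}$. So $\xB_i^\top\SB^\top\thetaB_{t-1}$ is \emph{not} a centered Gaussian in $\xB_i$ given $\thetaB_{t-1}$, and the naive union bound over $i,t$ gives a bound that is polynomially too large. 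The paper resolves this with a leave-one-out GD process $\thetaB^{\noi}_t$ (Lemma~\ref{lm:fluctuation_previous_last_2}): one replaces $\thetaB_t$ by $\thetaB^{\noi}_t$ (which is independent of $\xB_i$, hence $\xB_i^\top\SB^\top\thetaB^{\noi}_t$ is conditionally Gaussian with variance $\thetaB^{\noi\top}_t\SigmaB\thetaB^{\noi}_t$) plus a correction $\thetaB_t - \thetaB^{\noi}_t$ that is bounded explicitly by a rank-one perturbation expansion. Your ``self-consistent combination of the stochastic $\preceq$ deterministic comparison with the GD bounds'' gestures at the problem but does not supply the decoupling device; without it the $(\stepeff\gamma)^{1/a-1}$ upper bound does not close. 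Relatedly, the $\gamma\log N$ prefactor in the paper's bound comes from maxima of $N$ (and $N\step$) Gaussians entering through $\samnoi^2$, not primarily from the geometric-phase count, and the fluctuation lower bound (a genuinely new ingredient here) requires a separate argument showing the martingale variance accumulates without cancellation, which ``keep the leading diagonal term'' leaves unexplained.
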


See the proof of Theorem~\ref{thm:scaling-law} in~\Cref{sec:pf_scaling-law}. A few comments on Theorem~\ref{thm:scaling-law} are in order.

\paragraph{Comparison with~\citet{lin2024scaling}.}
The results in Theorem~\ref{thm:scaling-law} are more general than those in Theorem~4.1~and~4.2 of~\citet{lin2024scaling}.
Specifically, we derive matching upper and lower bounds for each term in the decomposition~\eqref{eq:approx-excess-decomp} 
for multi-pass SGD with an arbitrary number of steps $\step\lesssim N^{a}$, except for the lower bound on the fluctuation error, which requires $\step\lesssim N$.
In contrast,~\citet{lin2024scaling} only considered one-pass SGD where $\step=N$.
% More specifically, they considered one-pass SGD with $\vB^o_0 = \zeroB$, 
% % $\gamma_t$ in~\eqref{eq:lr}, 
% $\vB^o_t = \vB^o_{t-1} - \gamma_t \SB\xB_t(\xB_t^\top\SB^\top\vB^o_{t-1} - y_t)$ for $t\in[N]$, and showed that ($\Neff\defn N/\log N$)
% \begin{align}
% \Ebb[\risk_M(\vB^o_N)] = \sigma^2 + \underbrace{\Theta\bigg( \frac{1}{M^{b-1}} \bigg) + \Theta\bigg(\frac{1}{(\Neff \gamma)^{(b-1)/a}}\bigg)}_{\approximation + \bias}
% +\underbrace{ \Theta\bigg(\frac{\min\{M,(\Neff \gamma)^{1/a}\}}{\Neff}\bigg)}_{\variance}.
% \end{align}
When $a\geq b$ and $\step=N$, our bounds match those for one-pass SGD given in Theorems~4.1~and~4.2 of~\citet{lin2024scaling} 
up to logarithmic factors (see Section~\ref{sec:scaling-law-multi-pass-sgd} for more details).

\paragraph{The fluctuation error.} From  part~4 of Theorem~\ref{thm:scaling-law}, we see that the fluctuation error term $\E[\fluctuationerr]$ vanishes as the stepsize $\gamma$ goes to zero. This is intuitive: when $\gamma$ is small, the noise from random sampling becomes negligible and multi-pass SGD closely approximates gradient descent. Moreover, when $a \geq b$ and $\stepeff\lesssim N^{a/b}$, it can be verified that for any $\gamma\lesssim 1/\log N$, the fluctuation error is dominated by the sum of the approximation error and excess risk of~\eqref{eq:gd}, i.e., $\E[\fluctuationerr] \lesssim \E_{\wB^*}[\approximation] + \E[\excessRisk]$. 

\paragraph{Choice of the stepsize.}
The assumption $\gamma\lesssim 1/\log N$ ensures that the initial stepsize $\gamma_0 = \gamma$  with high probability. However, to guarantee the convergence of GD iterates, it suffices to have $\gamma_0 \approx \gamma \leq 1/\|\SB\XB^\top\XB\SB^\top/N\|_2 \overset{(i)}{\eqsim} 1$.\footnote{Step~(i) follows from e.g., Theorem 4~and~5 in~\citet{koltchinskii2017concentration}.} The additional $\log N$ factor and the assumption $\gamma_0=\min\{\gamma, 1/[4\max_{i}\|\SB\xB_i\|_2^2]\}$ are technical conditions needed for analyzing the fluctuation error term. We leave the problem of relaxing these assumptions to future work.

\paragraph{Constant-stepsize SGD with iterate averaging.}
Similar to~\citet{lin2024scaling}, the results in Theorem~\ref{thm:scaling-law} also hold for the average of the iterates of multi-pass SGD with a constant stepsize, with the only modification being that $\stepeff$ is replaced by $\step$ in the bounds. 
We provide simulations supporting this claim in Section~\ref{sec:experiments}.

\paragraph{Relaxation of Assumption~\ref{assump:simple}.} The Gaussian design in Assumption~\ref{assump:simple:data} can be relaxed to a sub-Gaussian design when establishing the upper bounds for $\bias,\variance,\approximation$ in Theorem~\ref{thm:scaling-law} and the upper bounds in subsequent corollaries. Moreover, the exact alignment of the eigenvectors of the prior and data covariance in Assumption~\ref{assump:simple:param} can be relaxed. We refer to Appendix~\ref{sec:relax_assump} for more details.

Next, we discuss some implications of the error bounds in Theorem~\ref{thm:scaling-law}.

\subsection{Scaling laws for GD}
To begin with, we present matching upper and lower bounds for the expected test error of the last iterate of~\eqref{eq:gd} (denoted by $\Ebb[\risk_M(\thetaB_\step)]$). We note that the GD iterates have strictly smaller test error than the corresponding multi-pass SGD iterates when $\gamma>0$, since the GD iterates $(\thetaB_t)_{t=1}^\step$ are the expectation of the multi-pass SGD iterates $(\vB_t)_{t=1}^\step$, conditioned on the sketching matrix $\SB$ and the dataset $(\xB_i, y_i)_{i=1}^N$. By combining part~1---3 of Theorem~\ref{thm:scaling-law}, we have
\begin{corollary}[Scaling laws for GD]\label{cor:scaling-law-gd}
 Let Assumption~\ref{assump:simple} hold and $a>b-1$. Consider an $M$-dimensional linear predictor trained by~\eqref{eq:gd} on $N$ samples with stepsizes  $\gamma_0 = \min\{\gamma,1/[4\tr(\SB\XB^\top\XB\SB^\top/N)]\}$ for some $\gamma\lesssim 1$. Suppose $\sigma^2\eqsim 1$ and $\stepeff\lesssim N^a/\gamma$. 
 With probability at least $1-e^{-\Omega(M)}$ over $\SB$, the expected risk of  $\thetaB_\step$ satisfies
\begin{align*}
  \Ebb[\risk_M(\thetaB_\step)] = \sigma^2 + \underbrace{\Theta\bigg( \frac{1}{M^{b-1}} \bigg) + \Theta\bigg(\frac{1}{(\stepeff \gamma)^{(b-1)/a}}\bigg)}_{\approximation + \bias}
  +\underbrace{ \Theta\bigg(\frac{\min\{M,(\stepeff \gamma)^{1/a}\}}{N}\bigg)}_{\variance}.
\end{align*}
Here, $\Theta(\cdot)$ hides constants that only depend on $(a,b)$. 
% Thus, choosing the optimal number of steps $\step\eqsim N^{a/b}$ yields
% \begin{align*}
%   \Ebb[\risk(\thetaB_\step)] = \sigma^2 + \Theta\bigg( \frac{1}{M^{b-1}} \bigg) + \Theta\bigg(\frac{1}{(\stepeff \gamma)^{(b-1)/a}}\bigg).
% \end{align*}
\end{corollary}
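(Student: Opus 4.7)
The plan is to apply the risk decomposition~\eqref{eq:approx-excess-decomp} directly to $\thetaB_\step$ and then plug in the individual bounds from parts~1--3 of \Cref{thm:scaling-law}. Since $\thetaB_\step$ is deterministic given $\SB$ and $(\xB_i,y_i)_{i=1}^N$, there is no fluctuation term when we evaluate $\risk_M(\thetaB_\step)$ in place of $\risk_M(\vB_\step)$, so the decomposition reduces to
\begin{align*}
\risk_M(\thetaB_\step) = \irreducible + \approximation + \excessRisk.
\end{align*}
Taking expectation and invoking parts~1--3 of \Cref{thm:scaling-law} yields $\irreducible = \sigma^2$, $\Ebb[\approximation] \eqsim M^{1-b}$, and $\Ebb[\excessRisk] \eqsim \bias + \sigma^2 \variance$ with $\sigma^2 \variance \eqsim \min\{M,(\stepeff\gamma)^{1/a}\}/N$ (using $\sigma^2 \eqsim 1$), so it remains to collapse the approximation and bias into a single $\Theta$-expression.

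For that merging step I would split on whether $(\stepeff\gamma)^{1/a} \leq M/c$, where $c$ is the constant from part~3 of \Cref{thm:scaling-law}. In this regime the matching upper and lower bounds on the bias both apply, giving $\bias \eqsim (\stepeff\gamma)^{(1-b)/a}$; moreover, raising the threshold condition to the negative power $(1-b)/a$ yields $(\stepeff\gamma)^{(1-b)/a} \gtrsim M^{1-b}$, so $\Ebb[\approximation] + \bias \eqsim (\stepeff\gamma)^{(1-b)/a} \eqsim M^{1-b} + (\stepeff\gamma)^{(1-b)/a}$. In the opposite regime $(\stepeff\gamma)^{1/a} > M/c$, we instead have $(\stepeff\gamma)^{(1-b)/a} \lesssim M^{1-b}$, so the bias upper bound collapses to $\bias \lesssim M^{1-b}$ while $\Ebb[\approximation] \eqsim M^{1-b}$, and once again $\Ebb[\approximation] + \bias \eqsim M^{1-b} \eqsim M^{1-b} + (\stepeff\gamma)^{(1-b)/a}$. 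Summing the three $\Theta$-terms then produces the stated scaling law.

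The one subtlety is that \Cref{thm:scaling-law} imposes the stricter stepsize condition $\gamma_0 \leq 1/[4\max_i\|\SB\xB_i\|_2^2]$ with $\gamma \lesssim 1/\log N$, whereas \Cref{cor:scaling-law-gd} only requires $\gamma_0 \leq 1/[4\tr(\SB\XB^\top\XB\SB^\top/N)]$ with $\gamma \lesssim 1$. This gap is harmless: the $\log N$ factor and the $\max_i$-form are needed only for the fluctuation-error analysis in part~4, while parts~1--3 concern the deterministic GD trajectory and only require $\gamma_0 \lesssim 1/\|\SB\XB^\top\XB\SB^\top/N\|$ for convergence, which is implied by the trace bound via $\|\cdot\| \leq \tr(\cdot)$ on PSD matrices. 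I expect this bookkeeping, rather than the algebra of the case split, to be the most delicate part of writing the proof, since one must trace through the arguments behind parts~1--3 of \Cref{thm:scaling-law} to confirm that the tighter stepsize is not invoked there.
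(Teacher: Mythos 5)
Your overall route matches the paper's: collapse the risk decomposition (no fluctuation term for the deterministic GD iterate), quote parts~1--3 of \Cref{thm:scaling-law}, and do the case-split algebra to merge $\approximation+\bias$ into a single $\Theta$-expression. That algebra is correct, and you have rightly identified the stepsize mismatch between \Cref{thm:scaling-law} and \Cref{cor:scaling-law-gd} as the one genuine subtlety.

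However, your proposed justification for why the relaxation is harmless is too weak. You claim that parts~1--3 ``only require $\gamma_0 \lesssim 1/\|\SB\XB^\top\XB\SB^\top/N\|$ for convergence,'' which would indeed follow from $\|\cdot\|\le \tr(\cdot)$. But the lemmas behind parts~2 and~3 (Lemmas~\ref{lm:gd_bias_upper}, \ref{lm:gd_bias_lower}, \ref{lm:gd_var_upper}) lean on more than a descent-contraction bound. The paper isolates three properties: (i) $\IB-\gamma_t\SigmaBhat\succeq\zeroB$; (ii) the tail estimate $\Pbb(\gamma_0<\gamma/t)\le N^{-ct}$ for all $t\ge1$ (condition~(4) of Assumption~\ref{assump:stepsize_condition}, used in the bias and variance upper bounds after a truncation argument); and (iii) a symmetry property of the map $\Gamma\mapsto\gamma_0(\Gamma)$ under coordinate sign flips, used to prove claim~\eqref{eq:gd_bias_lower_pf_claim_1} in the bias lower bound (it is what makes $\E_{\XB}[\biasMaFullroot]$ diagonal). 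Your argument covers only~(i). Under the alternative choice $\gamma_0=\min\{\gamma,\,1/[4\tr(\SB\XB^\top\XB\SB^\top/N)]\}$, property~(ii) is re-established via the Hanson--Wright inequality applied to $\tr(\SigmaBhat)$, and property~(iii) follows because $\tr(\Gamma\Gamma^\top/N)$, like $\max_j\|\Gamma_{\cdot,j}\|_2^2$, is invariant under flipping the sign of any single entry of $\Gamma$. Without checking~(ii) and~(iii), the claim that the relaxed stepsize ``is not invoked there'' is not established, so this is a genuine gap in the proposal rather than mere bookkeeping.
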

See the proof of Corollary~\ref{cor:scaling-law-gd} in~\Cref{sec:pf_cor_scaling-law-gd}. 
From Corollary~\ref{cor:scaling-law-gd}, we see that the variance error of GD is dominated by the sum of the approximation error and the bias error (i.e. $\variance \lesssim \approximation + \bias$) when $\stepeff\gamma\lesssim N^{a/b}$.
To achieve the optimal expected test error, we may choose $\gamma \eqsim 1$ and the number of effective steps $\stepeff \eqsim \min\{N^{a/b},M^a\}/\gamma\lesssim N^{a}$. Under this choice, we have
\begin{align*}
  \Ebb[\risk_M(\thetaB_\step)] -\sigma^2 
  =
  \begin{cases}
    \Theta\Big( \frac{1}{N^{(b-1)/b}}\Big), & \text{if } N\lesssim M^b, \\
    \Theta\Big( \frac{1}{M^{b-1}} \Big), & \text{if }  N\gtrsim M^b.
  \end{cases}
\end{align*}
It is worth mentioning a decreasing stepsize schedule as in~\eqref{eq:lr} is not necessary for our analysis. In fact, Corollary~\ref{cor:scaling-law-gd} remains valid for the last iterate of constant-stepsize GD (i.e., $\gamma_t \equiv \gamma$) when replacing $\stepeff$ with $\step$ in the bounds.
In addition, the GD  iterate $\thetaB_\step$ achieves the same expected risk (up to logarithmic factors) as one-pass SGD when $\step\eqsim N$, where the performance of one-pass SGD is characterized in Theorem~4.2~of~\citet{lin2024scaling}. 

However,
 the computational cost of GD is substantially larger than that of one-pass SGD, since each update requires computing gradients from all samples, resulting in a complexity of $\bigOtil(M N^2)$ compared to $\bigOtil(MN)$ for one-pass SGD. 
 Nevertheless, the excess test error of GD serves as an always-valid lower bound for that of multi-pass SGD, and is also an upper bound (up to logarithmic factors)  in certain regimes where the fluctuation error is dominated by the sum of the approximation error and the excess risk of GD.

\subsection{Scaling laws for multi-pass SGD}\label{sec:scaling-law-multi-pass-sgd}
We now analyze the expected test error of the last iterate of~\eqref{eq:multi-sgd}. By Theorem~\ref{thm:scaling-law} and Corollary~\ref{cor:scaling-law-gd},
 we have 
\begin{corollary}[Scaling laws for multi-pass SGD when $a\geq b$]\label{cor:scaling-law}
  Suppose the assumptions in Theorem~\ref{thm:scaling-law} are in force, $\sigma^2\eqsim 1$, and 
 $a\geq b >1$. For any $\stepeff\lesssim N^{a/b}/\gamma$, we have
\begin{align*}
    \Ebb[\risk_M(\vB_\step)] = \sigma^2 + \Theta\bigg( \frac{1}{M^{b-1}} \bigg) + \Theta\bigg(\frac{1}{({\color{blue}{\stepeff}} \gamma)^{(b-1)/a}}\bigg)
\end{align*}
with probability at least $1-e^{-\Omega(M)}$. 
% \item[2.] When $b>a>1$, for any $\step\lesssim N$ we have
% \begin{align*}
%     \Ebb\vRisk(\vB_\step) = \sigma^2 + \Theta\bigg( \frac{1}{\min\{M,(\stepeff\gamma)^{1/a}\}^{b-1}} \bigg) + \Theta\bigg(\frac{\min\{M,(\stepeff\gamma)^{1/a}\}}{N}\bigg)
%     +\E[\fluctuationerr]
% \end{align*}
% with probability at least $1-e^{-\Omega(M)}$ over the randomness of the sketch matrix $\SB$, where the fluctuation error $\E[\fluctuationerr]\lesssim \gamma\log N\cdot \big(\stepeff\gamma\big)^{1/a-1}$ and $\E[\fluctuationerr]\gtrsim \gamma\big(\stepeff\gamma\big)^{1/a-1}$ when $(\stepeff\gamma)^{1/a}\lesssim M$.
% \end{itemize}
Here, all hidden constants depend only on $(a,b)$.
\end{corollary}
In contrast, Theorem~4.2 in~\citet{lin2024scaling} proved that 
 one-pass SGD with $\vB^o_0 = \zeroB$, 
$\vB^o_t = \vB^o_{t-1} - \gamma_t \SB\xB_t(\xB_t^\top\SB^\top\vB^o_{t-1} - y_t)$ for $t\in[N]$ satisfies
\begin{align*}
\Ebb[\risk_M(\vB^o_N)] = \sigma^2 + {\Theta\bigg( \frac{1}{M^{b-1}} \bigg) + \Theta\bigg(\frac{1}{({\color{red}{\Neff}} \gamma)^{(b-1)/a}}\bigg)}
\end{align*}
with probability at least $1-e^{-\Omega(M)}$, where $\Neff\defn N/\log N$.
Several remarks on Corollary~\ref{cor:scaling-law} are listed below.

\paragraph{Benefits of data reuse.}
When $a\geq b>1$, Corollary~\ref{cor:scaling-law} shows that multi-pass SGD achieves an excess test error of order $\Theta(M^{1-b} + (\stepeff\gamma)^{(1-b)/a})$ when the number of effective SGD steps $\stepeff\lesssim N^{a/b}$, while one-pass SGD achieves an excess test error of order $\Theta(M^{1-b} + (\Neff\gamma)^{(1-b)/a})$.
Therefore, the reused data across multiple passes (epochs) can be viewed as fresh data when the number of passes is smaller than $N^{a/b-1}$.
For example, when $\step = kN$ for some constant $k > 1$, the test error achieved by $k$-pass SGD matches that of one-pass SGD trained on $kN$ $i.i.d.$ samples despite the training data being reused---aligning with the empirical observations in~\citet{muennighoff2023scaling}.

Moreover, when the number of effective steps is chosen\footnote{Note that this choice of $\stepeff$ (and therefore $\step$) is optimal as it minimizes  $\E[\risk_M(\vB_\step)]-\sigma^2$ up to logarithmic factors  for $\stepeff\lesssim N^{a}$.} as $\stepeff \eqsim \min\{N^{a/b},M^a\}/\gamma$  and the learning rate $\gamma\eqsim 1/\log N$, the excess test error of multi-pass SGD satisfies
\[
\Ebb[\risk_M(\vB_\step)]-\sigma^2 \eqsim M^{1-b} + N^{\color{blue}{(1-b)/b}},
\]
while choosing $\gamma\eqsim 1$ for one-pass SGD yields 
\begin{align*}
  \Ebb[\risk_M(\vB^o_N)]-\sigma^2 \eqsim M^{1-b} + \Neff^{\color{red}{(1-b)/a}}.
\end{align*}
 Therefore, in the data-constrained regime where $N\ll M^{b}$, reusing data and running multi-pass SGD for $N^{a/b-1}$ epochs yields an improved rate of $\bigOtil(N^{(1-b)/b})$ compared to the one-pass SGD rate of $\bigOtil(N^{(1-b)/a})$ when $a>b$.

\paragraph{Optimal compute allocation.}
Given a total compute budget $C=\step \cdot M$, by Corollary~\ref{cor:scaling-law}, we can set  $\step = \bigO(C^{a/(a+1)})$ and $M =\bigO(C^{1/(a+1)})$  with stepsize  $\gamma \eqsim 1/\log \step$ to achieve the optimal rate  $\bigOtil(C^{(1-b)/(a+1)})$ for the excess test error $\Ebb[\risk_M(\vB_\step)]-\sigma^2$. 
This matches the optimal rate for one-pass SGD~\citep{lin2024scaling} given the same compute budget, but requires only $N = \bigO(C^{b/(a+1)})$ number of i.i.d. samples in contrast to $N = \bigO(C^{a/(a+1)})$ for one-pass SGD.

\paragraph{Minimax optimal rate.}
When $a>b> 2$ and $M\gg N^{1/b}$, the improved rate $\bigOtil(N^{(1-b)/b})$ achieved by multi-pass SGD matches the minimax optimal rate for a class of linear regression problems with similar  spectral conditions~\citep{pillaud2018statistical}, up to sub-polynomial factors.

When $a<b<a+1$, similarly, we have the following corollary from Theorem~\ref{thm:scaling-law}.
\begin{corollary}[Scaling laws for multi-pass SGD when $a<b<a+1$]\label{cor:scaling-law-multi-pass-sgd-b-gt-a}
  Suppose the assumptions in Theorem~\ref{thm:scaling-law} are in force and $\sigma^2\eqsim 1$. 
  When $a<b<a+1$, for any $\stepeff \lesssim N/\gamma$, we have
\begin{align*}
    \Ebb[\risk_M(\vB_\step)] = \sigma^2 + \Theta\bigg( \frac{1}{\min\{M,(\stepeff\gamma)^{1/a}\}^{b-1}} \bigg) + \Theta\bigg(\frac{\min\{M,(\stepeff\gamma)^{1/a}\}}{N}\bigg)+\E[\fluctuationerr]
\end{align*}
with probability at least $1-e^{-\Omega(M)}$, where the fluctuation error satisfies $\E[\fluctuationerr]\lesssim \gamma\log N\cdot \big(\stepeff\gamma\big)^{1/a-1}$, and $\E[\fluctuationerr]\gtrsim \gamma\big(\stepeff\gamma\big)^{1/a-1}$ when $(\stepeff\gamma)^{1/a}\lesssim M$.
\end{corollary}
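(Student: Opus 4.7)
The plan is to apply Theorem~\ref{thm:scaling-law} term-by-term to the risk decomposition~\eqref{eq:approx-excess-decomp} and then combine the four bounds algebraically. Since $a < b < a+1$ implies $a > b-1$, part~3 of the theorem applies. Combined with $a > 1$, the hypothesis $\stepeff \lesssim N/\gamma$ lets us pick $\varepsilon = 1 - 1/a \in (0,1)$, so $\stepeff \lesssim N^{(1-\varepsilon)a}/\gamma$ and part~4 applies as well; the other parts only need $\stepeff \lesssim N^a/\gamma$, which is weaker. All hidden constants then depend only on $(a,b)$.

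The first task is to merge $\E[\approximation]$ and $\bias$ into $\Theta(\min\{M,(\stepeff\gamma)^{1/a}\}^{1-b})$. Split on whether $(\stepeff\gamma)^{1/a} \leq M/c$, with $c$ as in part~3. In this case $(\stepeff\gamma)^{(1-b)/a} \gtrsim M^{1-b}$ (since $1-b<0$), so the theorem's upper bound $\bias \lesssim \max\{M^{1-b},(\stepeff\gamma)^{(1-b)/a}\} \eqsim (\stepeff\gamma)^{(1-b)/a}$ together with the lower bound $\bias \gtrsim (\stepeff\gamma)^{(1-b)/a}$ gives $\E[\approximation]+\bias \eqsim (\stepeff\gamma)^{(1-b)/a}$. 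In the complementary regime $(\stepeff\gamma)^{1/a} > M/c$, the same upper bound collapses to $\bias \lesssim M^{1-b}$, and $\E[\approximation] \eqsim M^{1-b}$ supplies both directions, so $\E[\approximation]+\bias \eqsim M^{1-b}$. The identity $\max\{M^{1-b},(\stepeff\gamma)^{(1-b)/a}\} = \min\{M,(\stepeff\gamma)^{1/a}\}^{1-b}$ unifies the two cases into the claimed form.

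The variance term is immediate: $\sigma^2\variance \eqsim \min\{M,(\stepeff\gamma)^{1/a}\}/N$ from part~3 together with $\sigma^2\eqsim 1$. Finally, part~4 of the theorem gives
\[
    \E[\fluctuationerr] \lesssim \gamma \log N \cdot \bigl[(\stepeff\gamma)^{1/a-1} + (\stepeff\gamma)^{1/a}/N\bigr].
\]
The hypothesis $\stepeff\gamma \leq N$ lets us rewrite $(\stepeff\gamma)^{1/a}/N = (\stepeff\gamma)^{1/a-1}\cdot(\stepeff\gamma/N) \leq (\stepeff\gamma)^{1/a-1}$, absorbing the second summand into the first to yield the stated upper bound. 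The matching lower bound $\E[\fluctuationerr] \gtrsim \gamma(\stepeff\gamma)^{1/a-1}$ when $(\stepeff\gamma)^{1/a}\lesssim M$ is then immediate from part~4, whose auxiliary condition $\stepeff\lesssim N/\gamma$ is already in the corollary's hypothesis. There is no real obstacle here; the only mild care needed is in the case split for combining approximation and bias, and in verifying that the fixed choice $\varepsilon = 1 - 1/a$ keeps all hidden constants depending only on $(a,b)$.
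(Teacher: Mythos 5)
Your proof is correct and takes essentially the same approach as the paper, which simply states that the corollary ``follows immediately from combining part~1---4 of Theorem~\ref{thm:scaling-law} and some basic algebra.'' You have filled in the algebra the paper leaves implicit: the case split on $(\stepeff\gamma)^{1/a}$ versus $M/c$ to merge $\E[\approximation]$ with $\bias$, the check that $\varepsilon = 1-1/a$ makes part~4 applicable under $\stepeff\lesssim N/\gamma$ with constants depending only on $(a,b)$, and the absorption of $(\stepeff\gamma)^{1/a}/N$ into $(\stepeff\gamma)^{1/a-1}$.
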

Therefore, in the data-constrained regime where $N\ll M^{b}$, we have $\Ebb[\risk_M(\vB_\step)]-\sigma^2 = \Thetatil((\stepeff\gamma)^{(1-b)/a}+\gamma (\stepeff\gamma)^{1/a-1})$. Choosing $\stepeff \eqsim N$ and 
the optimal learning rate $\gamma\eqsim \stepeff^{a/b-1}$ that balances the excess test error of GD and the fluctuation error, 
we obtain a rate of $\Thetatil(N^{(1-b)/b})$. This matches the bound for one-pass SGD in~\citet{lin2024scaling}  (up to logarithmic factors) when $a<b<a+1$.

\section{Experiments}\label{sec:experiments}
We also perform simulations to validate our theoretical findings. Namely, we train $M$-dimensional sketched linear predictors~\eqref{eq:linear-predictor} via  one-pass SGD and multi-pass SGD following the setup in Section~\ref{sec:preliminaries}~and~\ref{sec:scaling_law_examples}, and analyze how their excess test errors scale with the number of samples $N$ and the model size $M$.  In each simulation, we generate $N$ i.i.d.\ samples $(\xB_i, y_i)_{i=1}^N$ from a linear model 
$y_i = \la \xB_i,\wB^*\ra + \epsilon_i$, 
where $\wB^*\in\R^d$ is an unknown parameter vector and $\epsilon_i\sim\Ncal(0,\sigma^2)$ are i.i.d. Gaussian noise. The covariates $\xB_i$ are drawn from $\Ncal(0,\HB)$, and the true parameter vector $\wB^*$ is sampled from a Gaussian prior $\Ncal(0,\wCov)$, where $\HB \defn \diag\{1, 2^{-a}, \ldots, d^{-a}\}/\sum_{i=1}^d i^{-a}$ and $\wCov \defn \diag\{1, 2^{a-b}, \ldots, d^{a-b}\}$ for some $a,b>1$. We set the  dimension $d$ to be sufficiently large relative to $M$ so that Assumption~\ref{assump:power-law}~and~\ref{assump:simple:param} are approximately satisfied. For simplicity, we implement multi-pass SGD by reusing samples sequentially without replacement in each epoch, rather than sampling i.i.d. from the empirical distribution. In all experiments, we 
set $d=10000,\sigma^2=1$ and $(a,b)=(2,1.5)$.

Figure~\ref{fig:one_pass_vs_multi_pass}(a) compares the excess test error  of one-pass SGD and multi-pass SGD with the number of steps $\step \eqsim N^{a/b-1}$.
We observe that multi-pass SGD achieves better scaling in the sample size $N$ compared to one-pass SGD when $N$ is relatively small (i.e., $N \ll M^b$).
Moreover, the fitted exponents are close to the theoretical predictions in Corollary~\ref{cor:scaling-law} (i.e., $\frac{1-b}{a} = -0.25$ and $\frac{1-b}{b} = -0.33$) .
Similar results hold for the average of the iterates of constant-stepsize SGD, as shown in Figure~\ref{fig:one_pass_vs_multi_pass}(b).
On the other hand, when $N \gg M^b$, Figure~\ref{fig:one_pass_vs_multi_pass}(c) shows that one-pass SGD and multi-pass SGD achieve the same scaling in the model size $M$ with the exponent $k \approx   1-b$, consistent with Corollary~\ref{cor:scaling-law}. 
In addition, Figure~\ref{fig:one_pass_vs_multi_pass}(d) illustrates that multi-pass SGD achieves the same excess test error as one-pass SGD on fresh data when the number of passes is below a certain threshold. Overall, the empirical observations align closely with our theoretical predictions.

\begin{figure}[htp]
  \centering
  \begin{subfigure}[b]{0.25\linewidth}
    \centering
    \includegraphics[width=\linewidth]{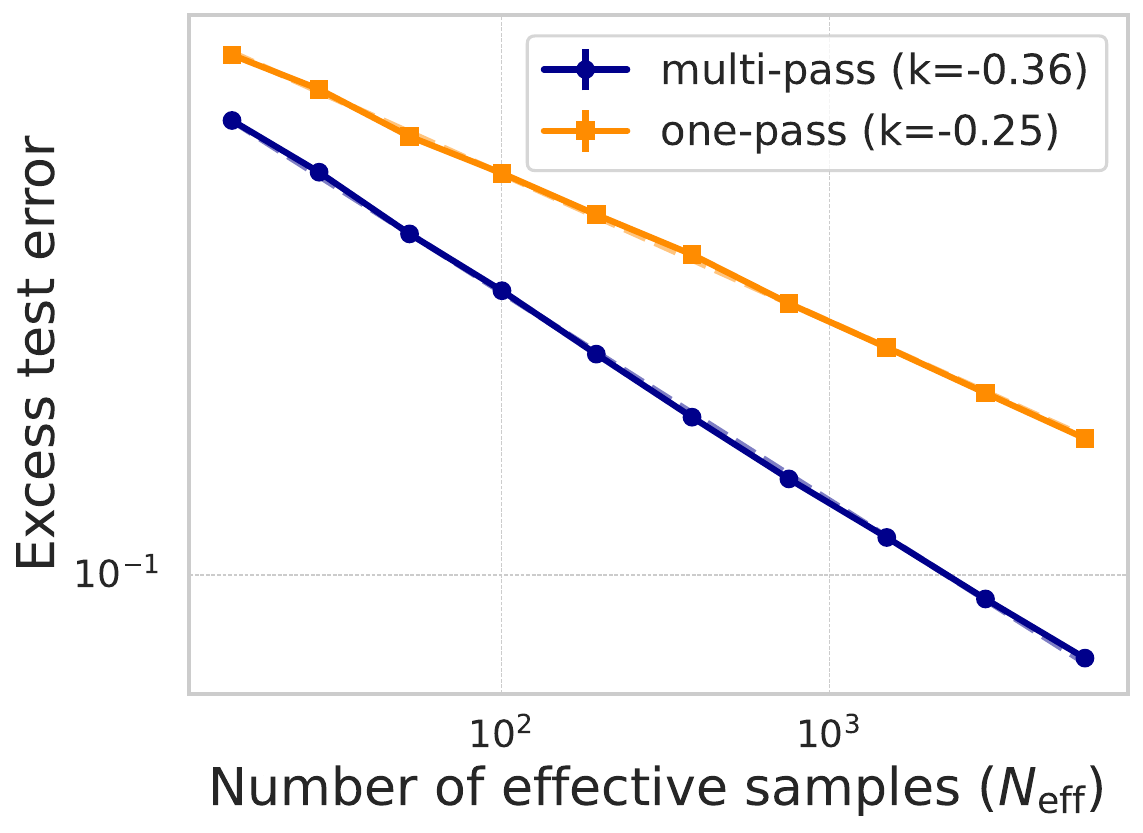}
    \caption{$\gamma = 0.1$}
  \end{subfigure}
  \hspace{-0.012\linewidth}
  \begin{subfigure}[b]{0.25\linewidth}
    \centering
    \includegraphics[width=\linewidth]{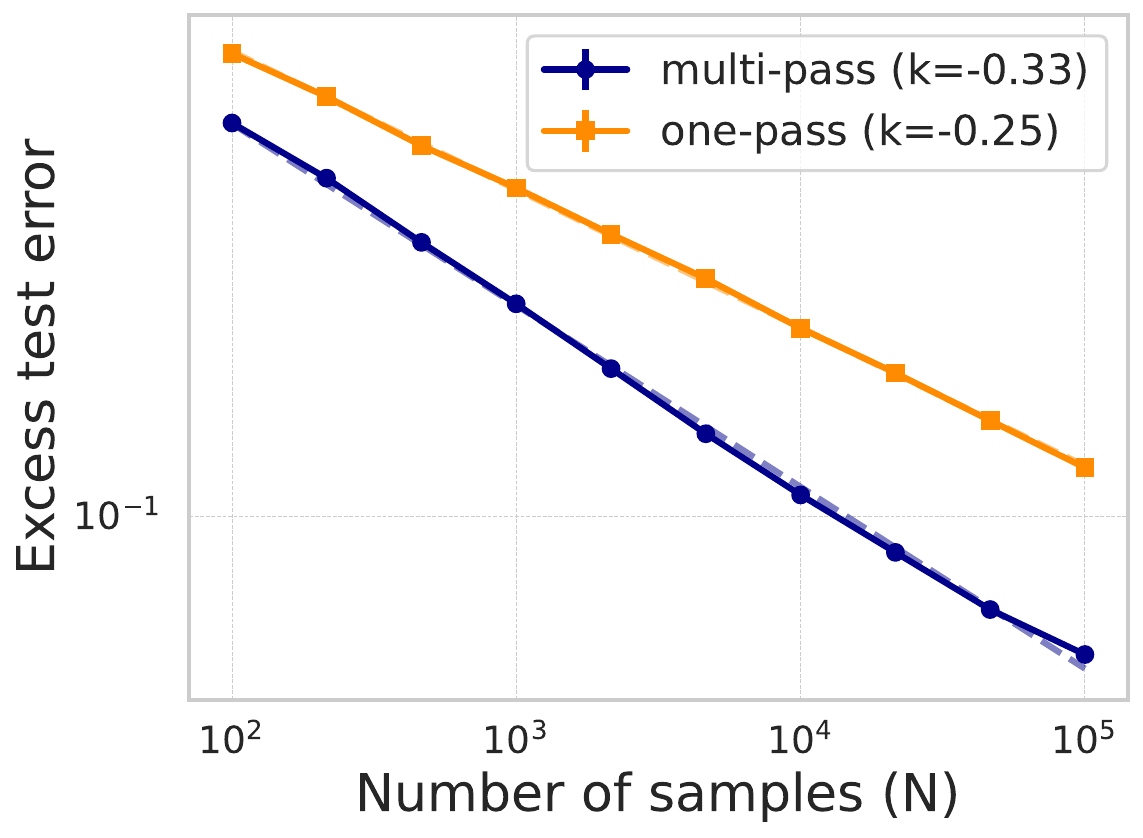}
    \caption{$\gamma = 0.1$}
  \end{subfigure}
  \hspace{-0.012\linewidth}
  \begin{subfigure}[b]{0.25\linewidth}
    \centering
    \includegraphics[width=\linewidth]{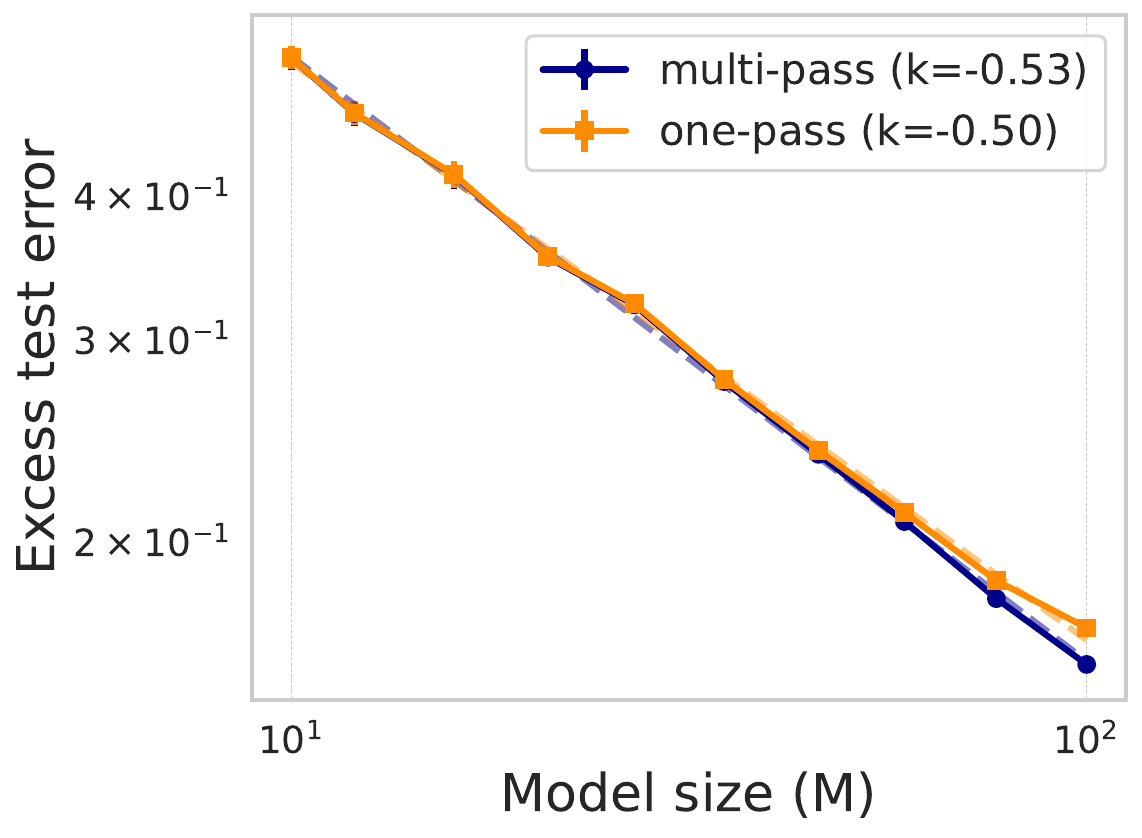}
    \caption{$\gamma = 0.5$}
  \end{subfigure}
  \hspace{-0.012\linewidth}
  \begin{subfigure}[b]{0.25\linewidth}
    \centering
    \includegraphics[width=\linewidth]{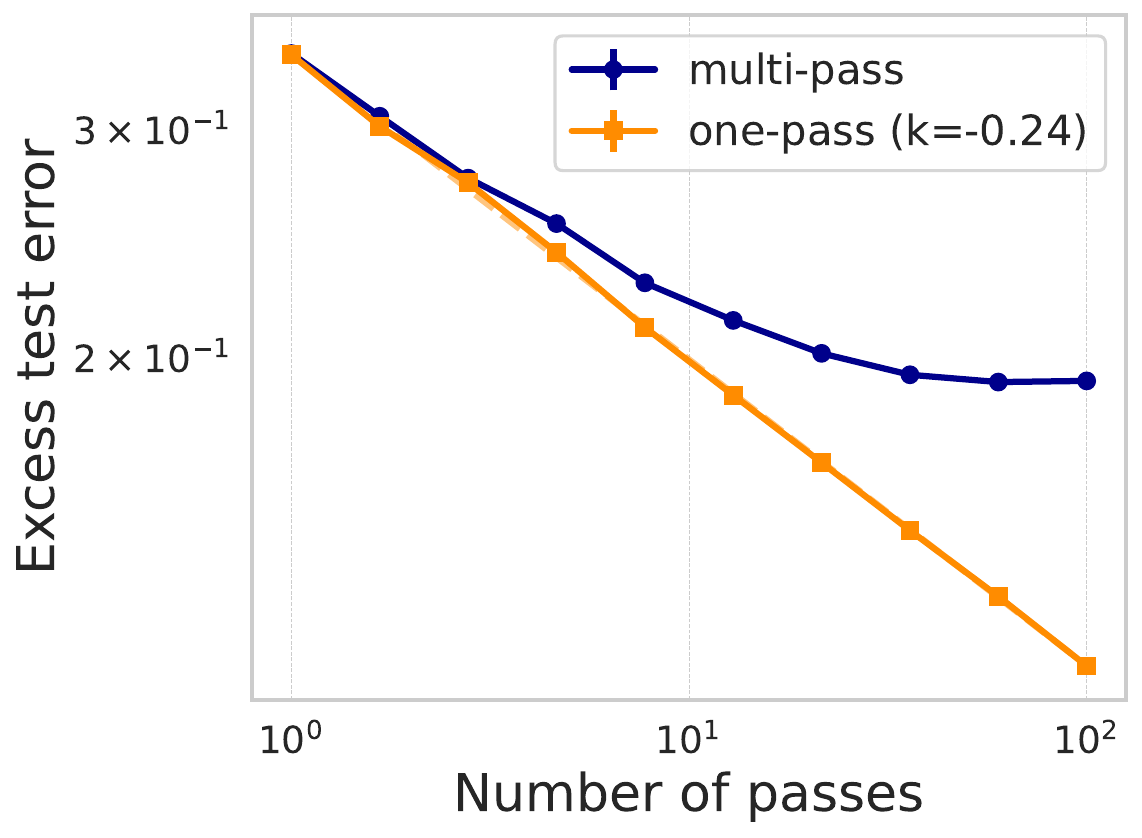}
    \caption{$\gamma = 0.5,N=1000$}
  \end{subfigure}

  \caption{Multi-pass SGD  versus one-pass SGD. In (a)---(c), multi-pass SGD is ran for $\step \eqsim N^{a/b}$ steps.
  (a),~(b),~(d): SGD with geometrically decaying stepsizes; (c): SGD with constant stepsizes.
  We use linear functions to fit the excess test error in log-log scale. The fitted exponents ($k$) are close to the theoretical predictions in Corollary~\ref{cor:scaling-law}.
  The errorbars denote the $\pm 1$ standard deviation of the expected excess test error over $100$ i.i.d. samples of $(\SB, \wB^*)$.
  Parameters: $\sigma^2=1$, $d=10000$, $(a,b)=(2,1.5)$.
  (a), (b), (d): $M = 1000$; (c): $N = 10^5$.
  }
  \label{fig:one_pass_vs_multi_pass}
\end{figure}

\section{Proof Overview}\label{sec:pf_overview}
We provide an overview of the proof of Theorem~\ref{thm:scaling-law} in this section. 
A full proof can be found in~\Cref{sec:pf_scaling-law}. 
Let $\vB^*=(\SB\HB\SB^\top)^{-1}\SB\HB\wB^*$ and adopt the shorthand  notations
\begin{align*}
\SigmaB\defn\SB\HB\SB^\top,~~~\SigmaBhat\defn\frac{\SB\XB^\top\XB\SB^\top}{N}.
\end{align*} 
It can be verified by some basic algebra that
\begin{align*}
     \Ebb\vRisk(\vB_\step) 
     &=
      \Ebb\Big[\big(\la \xB,\SB^\top\vB_\step\ra - y\big)^2\Big]
        &=
   \underbrace{\sigma^2}_{\irreducible}
+
 \E\underbrace{\|\SB^\top\vB^*-\wB^*\|_\HB^2}_{\approximation}
  + \Ebb
  \underbrace{\|\thetaB_\step-\vB^*\|_{\SigmaB}^2}_{\excessRisk}
  + \Ebb
    \underbrace{\| \vB_\step-\thetaB_\step
    \|^2_{\SigmaB}}_{\fluctuationerr},
\end{align*}
where the expectations on the R.H.S. are over $\wB^*,(\xB_i,y_i)_{i=1}^N$ and $(i_t)_{t=1}^\step$, and
 we recall  $\vB_\step$ in~\eqref{eq:multi-sgd} and $\thetaB_\step$ in~\eqref{eq:gd}.
%  From the above decomposition, we immediately have
% \paragraph{Irreducible error:} $\irreducible=\risk(\wB^*)=\sigma^2$.
% \paragraph{Approximation error:} 
% $\E_{\wB^*}[\approximation] = \E_{\wB^*}\|\SB^\top\vB^*-\wB^*\|_\HB^2\eqsim M^{1-b}$ with probability at least $1-e^{-\Omega(M)}$ over $\SB$ by Lemma~C.5 in~\citet{lin2024scaling} (see also Lemma~\ref{lm:power_law_source_apprx_match}).
 From the above decomposition, we immediately have
\begin{enumerate}[leftmargin=15pt]
 \item $\irreducible=\risk(\wB^*)=\sigma^2$.
\item
$\E_{\wB^*}[\approximation] = \E_{\wB^*}\|\SB^\top\vB^*-\wB^*\|_\HB^2\eqsim M^{1-b}$ with probability at least $1-e^{-\Omega(M)}$ over $\SB$ by Lemma~C.5 in~\citet{lin2024scaling} (see also Lemma~\ref{lm:power_law_source_apprx_match}).
\end{enumerate}

The excess risk of~\eqref{eq:gd} can be further decomposed into the sum of bias and variance, namely,
\begin{align*}
    \E[\excessRisk] = \bias + \sigma^2 \variance,
\end{align*}
where
     \begin{align*}
    \bias
   &\defn   \E\Big\| \prod_{t=1}^{\step}\Big(\IB-{\gamma_t}\SigmaBhat\Big) \vB^*\Big\|_{\SigmaB}^2,~~
     \variance
     \defn
\E[\tr(\XB\SB^\top\varterm(\SigmaBhat)\SigmaB\varterm(\SigmaBhat)\SB\XB^\top)]
  \end{align*}
  with $
  \varterm(\SigmaBhat) 
  \defn
  \frac{1}{N}[{\IB-\prod_{t=1}^\step(\IB-\gamma_t\SigmaBhat)}]/\SigmaBhat.$ 
  The bounds on the bias ($\bias$) and variance ($\variance $) then follow immediately from Lemma~\ref{lm:bias_source_condition}~and~\ref{lm:var_source_condition}, respectively. Lastly, the bounds on the fluctuation error follows from Lemma~\ref{lm:fluctuation_error_source}.

The main technical difficulty of proving Theorem~\ref{thm:scaling-law} lies in
 bounding the bias, variance, and fluctuation error terms. 
 For bias and variance upper bounds, due to the non-commutativity of the population covariance $\SigmaB$ and the empirical covariance $\SigmaBhat$, we apply a covariance replacement trick (Lemma~\ref{lm:ridge_cov_concen}; see also Lemma 7 in~\citet{pillaud2018statistical}) to replace the population covariance with the empirical covariance in the expressions of bias and variance, as well as  concentration properties of sub-Gaussian covariance to simplify their expressions. For the lower bounds, we show that a specific function of the empirical covariance commutes with the population covariance in expectation, and apply Von Neumann's trace inequality.

For the fluctuation error, we follow the standard practice as in~\citet{pillaud2018statistical}~and~\citet{aguech2000perturbation}  to express the difference between the multi-pass SGD and GD trajectories, \( \vB_t - {\thetaB}_t \) as a stochastic process (Eq.~\ref{eq:fluctuation_process_1}). We then bound the fluctuation error \( \E[\|{\SigmaB}^{1/2}(\vB_L - {\thetaB}_L)\|^2] \) through controlling the accumulated error of the stochastic process using Lemma~\ref{lm:fluctuation_previous_last_1}~and~\ref{lm:fluctuation_previous_last_2}, which involves a novel leave-one-out argument to control the model parameters. 
 % For the fluctuation error, we  mainly follow the proof procedure of Lemma~5 in~\citet{pillaud2018statistical}, while also developing a leave-one-out argument (Lemma~\ref{lm:fluctuation_previous_last_2}) to obtain precise bounds under our setting.
Although several upper bounds on the fluctuation error  have been established for infinite-dimensional linear models~\citep{pillaud2018statistical,zou2022risk}, the interaction between the sketching matrix $\SB$ and the samples $(\xB_i,y_i)_{i=1}^N$ in our setup introduces additional technical challenges~\citep{lin2024scaling}.
 Moreover, we derive a novel lower bound  on the fluctuation error that matches the upper bound up to a logarithmic factor in certain regimes by carefully controlling the accumulated variance from random sampling (i.e. the accumulated variance induced by the random indices $(i_t)_{t=1}^\step$).

\section{Related Works}\label{sec:related}

\paragraph{Empirical scaling laws.}
Scaling laws have been extensively studied in recent years as a way to understand and predict how model performance improves with increasing model size and data size~\citep{hestness2017deep, rosenfeld2019constructive, kaplan2020scaling, henighan2020scaling, hoffmann2022training, zhai2022scaling, muennighoff2023scaling}. 
The seminal work by~\citet{kaplan2020scaling} introduced the concept of \emph{neural scaling laws}, demonstrating empirically that the test error of large transformer models decreases predictably following a power law with respect to the model size and data size. Subsequent works refined and extended these observations by proposing more accurate scaling formulas~\citep{henighan2020scaling, hoffmann2022training, alabdulmohsin2022revisiting, caballero2022broken, muennighoff2023scaling} and extending them  to other settings~\citep{kumar2024scaling,busbridge2025distillation}. 
In particular,~\citet{hoffmann2022training} proposed the \emph{Chinchilla scaling law}, which advocates scaling the model and data size proportionally as compute budget increases.~\citet{muennighoff2023scaling} investigated the effect of data reuse and multiple training epochs, introducing an empirically refined scaling formula that accounts for the number of training epochs. They demonstrated that reused data can be approximately viewed as fresh data when the number of epochs is small.

\paragraph{Theoretical studies of scaling laws.}
Although scaling laws have been observed across diverse settings, their theoretical understanding remains relatively limited.
A number of recent works have attempted to formalize and explain the observed scaling behaviors in simplified settings~\citep{sharma2020neural, bahri2021explaining, maloney2022solvable, hutter2021learning, michaud2024quantization, bordelon2024dynamical, atanasov2024scaling,dohmatob2024tale,paquette20244+,lin2024scaling,bordelon2024feature,ren2025emergence}.
% For example, \citet{sharma2020neural} analyzed regression problems on manifolds and derived a $N^{-4/d}$ scaling law for the test error, where $d$ is the intrinsic data dimension. \citet{hutter2021learning} studied a stylized classification task where nontrivial exponents in $N$ emerge.
For example,~\citet{bahri2021explaining} considered a linear teacher-student model with power-law spectrum and showed that the test error of the ordinary least squares estimator scales following a power law in  $N$ (or $M$) when the other parameter goes to infinity.
\citet{bordelon2024dynamical} analyzed the test error of the solution found by gradient flow in a linear random feature model and established power-law scaling in one of $N, M$ and $T$ (training time) while the other two parameters go to infinity. The results in these works are derived based on statistical physics heuristics and characterize scaling in only one variable in the asymptotic regime. More recently,~\citet{lin2024scaling} analyzed the test error of the last iterate of one-pass SGD in a sketched linear model and showed that the test error scales as $\Theta(\sigma^2 + M^{1-b} + N^{(1-b)/a})$ under the source condition (Assumption~\ref{assump:simple}). 
This is the first work to establish a finite-sample joint scaling law (in  $M$ and $N$) for linear models that aligns with empirical observations~\citep{kaplan2020scaling,hoffmann2022training}. Similarly,~\citet{ren2025emergence} analyzed the complexity of one-pass SGD for learning two-layer neural networks in a teacher-student setup, and derived joint scaling laws for the test error under power-law  assumptions on the teacher network. While previous works study the scaling behavior of the one-pass (online) SGD solutions, our work complements them by analyzing the effect of data reuse (i.e., multi-pass SGD) in data-constrained regimes.

\paragraph{Risk bounds for SGD.}
The generalization behavior of stochastic gradient descent (SGD), particularly in linear regression, has been extensively studied across both classical and high-dimensional regimes~\citep{polyak1992acceleration, defossez2015averaged, dieuleveut2017harder, jain2017markov, jain2017parallelizing,pillaud2018statistical,ge2019step, DieuleveutB15, berthier2020tight, zou2023benign, zou2021benefits,zou2022risk,wu2022iterate, wu2022power, varre2021last}.
For one-pass SGD, several works have developed tight test error bounds in overparameterized linear models~\citep{zou2023benign,wu2022last,wu2022power}.
% , with~\citet{zou2021benefits} highlighting how SGD can match or even outperform ridge regression due to implicit regularization in certain cases.
 For multi-pass SGD,  early works~\citep{lin2017optimal,pillaud2018statistical,mucke2019beating,zou2022risk} have established test error bounds for the average of its iterates in linear regression. 
Compared with prior works, our main technical contribution is to precisely control the effect of random sketching and to refine the characterization of fluctuation error (see $\fluctuationerr$ in Eq.~\ref{eq:approx-excess-decomp}) in the multi-pass setting. Under comparable regimes where the approximation error is zero, our test error bounds match those derived in~\citet{pillaud2018statistical}, which are minimax optimal for a specific class of linear regression problems  in certain cases.

% Other note:
% Similar results are established in random design. Tight (up to constant factor) risk bounds for ridge are developed in \citep{bartlett2020benign,tsigler2023benign}, while slightly less sharp bounds are obtained in \citep{hsu2014random}. Similar tight risk bounds for one-pass SGD are developed in \citep{zou2023benign,wu2022last,wu2022power}. 
% A risk comparison between ridge and one-pass SGD is done in \citep{zou2021benefits}. For batch GD or multi-pass SGD, partial results are obtained in \citep{zou2022risk}.
% If additionally assuming a Gaussian prior, the risk of batch GD is known to be no larger than that of ridge by a constant factor \citep{ali2019continuous}.
% Finally, for hard problems where the signal decays slowly, one-pass SGD (even with tail averaging) is strictly worse than batch GD \citep{pillaud2018statistical}. However, a complete analysis of batch GD for least squares is still open. 

% \citep{parmar2024reuse}

\section{Conclusion}

In this work, we provide a theoretical analysis of multi-pass stochastic gradient descent (multi-pass SGD) in a sketched linear regression problem and establish refined scaling laws that characterize how the test error scales with the model size $M$, sample size $N$, and number of optimization steps $\step$.  Our results show that, under suitable power-law  conditions on the true parameter and data distribution, data reuse via multi-pass SGD can improve model performance when the number of samples is limited. This offers a theoretical explanation for the empirical benefits of multiple passes in modern large-scale training.

Our analysis has several limitations. One limitation is the assumption that the eigenvectors of the prior and data covariance are aligned (implied by Assumption~\ref{assump:simple:param}). While this assumption cannot be fully removed without affecting the error rate, it would be interesting to investigate what alternative rates are achieved when the eigenvectors are not aligned. Another limitation is that our lower bound results require Gaussian design of the covariates (i.e., Assumption~\ref{assump:simple:data}); a next step is to  extend them to non-Gaussian design.

Many other directions remain open for future research. First, our analysis focuses on multi-pass SGD with batch size one; it would be interesting to understand how the test error scales with the batch size and to develop corresponding batch size  scaling laws~\cite[see][]{jain2017parallelizing}.
Another important direction is to study how data reuse interacts with other optimization algorithms, such as SGD with momentum or $\ell_2$-regularization and Adam.
In addition, it would be valuable to extend our analysis to non-linear settings and classification problems, such as logistic regression, kernel methods, and neural networks. Notably, modern large language model pretraining is based on minimizing the cross-entropy loss for next-word prediction.
Understanding the scaling behavior in logistic regression—the simplest classification model—thus represents an important step toward unraveling the mysteries of LLM  scaling.

\section*{Acknowledgements}

% Looks good. Thanks! \LL{@Peter}
We gratefully acknowledge the NSF's support of FODSI through grant DMS-2023505
and of the NSF and the Simons Foundation for the Collaboration on the Theoretical Foundations of Deep Learning through awards DMS-2031883 and \#814639
and of the ONR through MURI award N000142112431.

\bibliography{ref}

\begin{thebibliography}{49}
\providecommand{\natexlab}[1]{#1}
\providecommand{\url}[1]{\texttt{#1}}
\expandafter\ifx\csname urlstyle\endcsname\relax
  \providecommand{\doi}[1]{doi: #1}\else
  \providecommand{\doi}{doi: \begingroup \urlstyle{rm}\Url}\fi

\bibitem[Aguech et~al.(2000)Aguech, Moulines, and
  Priouret]{aguech2000perturbation}
Rafik Aguech, Eric Moulines, and Pierre Priouret.
\newblock On a perturbation approach for the analysis of stochastic tracking
  algorithms.
\newblock \emph{SIAM Journal on Control and Optimization}, 39\penalty0
  (3):\penalty0 872--899, 2000.

\bibitem[Alabdulmohsin et~al.(2022)Alabdulmohsin, Neyshabur, and
  Zhai]{alabdulmohsin2022revisiting}
Ibrahim~M Alabdulmohsin, Behnam Neyshabur, and Xiaohua Zhai.
\newblock Revisiting neural scaling laws in language and vision.
\newblock \emph{Advances in Neural Information Processing Systems},
  35:\penalty0 22300--22312, 2022.

\bibitem[Atanasov et~al.(2024)Atanasov, Zavatone-Veth, and
  Pehlevan]{atanasov2024scaling}
Alexander~B Atanasov, Jacob~A Zavatone-Veth, and Cengiz Pehlevan.
\newblock Scaling and renormalization in high-dimensional regression.
\newblock \emph{arXiv preprint arXiv:2405.00592}, 2024.

\bibitem[Bahri et~al.(2021)Bahri, Dyer, Kaplan, Lee, and
  Sharma]{bahri2021explaining}
Yasaman Bahri, Ethan Dyer, Jared Kaplan, Jaehoon Lee, and Utkarsh Sharma.
\newblock Explaining neural scaling laws.
\newblock \emph{arXiv preprint arXiv:2102.06701}, 2021.

\bibitem[Bartlett et~al.(2020)Bartlett, Long, Lugosi, and
  Tsigler]{bartlett2020benign}
Peter~L Bartlett, Philip~M Long, G{\'a}bor Lugosi, and Alexander Tsigler.
\newblock Benign overfitting in linear regression.
\newblock \emph{Proceedings of the National Academy of Sciences}, 2020.

\bibitem[Berthier et~al.(2020)Berthier, Bach, and Gaillard]{berthier2020tight}
Rapha{\"e}l Berthier, Francis Bach, and Pierre Gaillard.
\newblock Tight nonparametric convergence rates for stochastic gradient descent
  under the noiseless linear model.
\newblock \emph{Advances in Neural Information Processing Systems},
  33:\penalty0 2576--2586, 2020.

\bibitem[Besiroglu et~al.(2024)Besiroglu, Erdil, Barnett, and
  You]{besiroglu2024chinchilla}
Tamay Besiroglu, Ege Erdil, Matthew Barnett, and Josh You.
\newblock Chinchilla scaling: A replication attempt.
\newblock \emph{arXiv preprint arXiv:2404.10102}, 2024.

\bibitem[Bordelon et~al.(2024{\natexlab{a}})Bordelon, Atanasov, and
  Pehlevan]{bordelon2024dynamical}
Blake Bordelon, Alexander Atanasov, and Cengiz Pehlevan.
\newblock A dynamical model of neural scaling laws.
\newblock \emph{arXiv preprint arXiv:2402.01092}, 2024{\natexlab{a}}.

\bibitem[Bordelon et~al.(2024{\natexlab{b}})Bordelon, Atanasov, and
  Pehlevan]{bordelon2024feature}
Blake Bordelon, Alexander Atanasov, and Cengiz Pehlevan.
\newblock How feature learning can improve neural scaling laws.
\newblock \emph{arXiv preprint arXiv:2409.17858}, 2024{\natexlab{b}}.

\bibitem[Busbridge et~al.(2025)Busbridge, Shidani, Weers, Ramapuram, Littwin,
  and Webb]{busbridge2025distillation}
Dan Busbridge, Amitis Shidani, Floris Weers, Jason Ramapuram, Etai Littwin, and
  Russ Webb.
\newblock Distillation scaling laws.
\newblock \emph{arXiv preprint arXiv:2502.08606}, 2025.

\bibitem[Caballero et~al.(2022)Caballero, Gupta, Rish, and
  Krueger]{caballero2022broken}
Ethan Caballero, Kshitij Gupta, Irina Rish, and David Krueger.
\newblock Broken neural scaling laws.
\newblock \emph{arXiv preprint arXiv:2210.14891}, 2022.

\bibitem[D{\'e}fossez and Bach(2015)]{defossez2015averaged}
Alexandre D{\'e}fossez and Francis Bach.
\newblock Averaged least-mean-squares: Bias-variance trade-offs and optimal
  sampling distributions.
\newblock In \emph{Artificial Intelligence and Statistics}, pages 205--213,
  2015.

\bibitem[Dieuleveut and Bach(2015)]{DieuleveutB15}
Aymeric Dieuleveut and Francis~R. Bach.
\newblock Non-parametric stochastic approximation with large step sizes.
\newblock \emph{The Annals of Statistics}, 2015.

\bibitem[Dieuleveut et~al.(2017)Dieuleveut, Flammarion, and
  Bach]{dieuleveut2017harder}
Aymeric Dieuleveut, Nicolas Flammarion, and Francis Bach.
\newblock Harder, better, faster, stronger convergence rates for least-squares
  regression.
\newblock \emph{The Journal of Machine Learning Research}, 18\penalty0
  (1):\penalty0 3520--3570, 2017.

\bibitem[Dohmatob et~al.(2024)Dohmatob, Feng, Yang, Charton, and
  Kempe]{dohmatob2024tale}
Elvis Dohmatob, Yunzhen Feng, Pu~Yang, Francois Charton, and Julia Kempe.
\newblock A tale of tails: Model collapse as a change of scaling laws.
\newblock \emph{arXiv preprint arXiv:2402.07043}, 2024.

\bibitem[Ge et~al.(2019)Ge, Kakade, Kidambi, and Netrapalli]{ge2019step}
Rong Ge, Sham~M Kakade, Rahul Kidambi, and Praneeth Netrapalli.
\newblock The step decay schedule: A near optimal, geometrically decaying
  learning rate procedure for least squares.
\newblock \emph{Advances in neural information processing systems}, 32, 2019.

\bibitem[Henighan et~al.(2020)Henighan, Kaplan, Katz, Chen, Hesse, Jackson,
  Jun, Brown, Dhariwal, Gray, et~al.]{henighan2020scaling}
Tom Henighan, Jared Kaplan, Mor Katz, Mark Chen, Christopher Hesse, Jacob
  Jackson, Heewoo Jun, Tom~B Brown, Prafulla Dhariwal, Scott Gray, et~al.
\newblock Scaling laws for autoregressive generative modeling.
\newblock \emph{arXiv preprint arXiv:2010.14701}, 2020.

\bibitem[Hestness et~al.(2017)Hestness, Narang, Ardalani, Diamos, Jun,
  Kianinejad, Patwary, Yang, and Zhou]{hestness2017deep}
Joel Hestness, Sharan Narang, Newsha Ardalani, Gregory Diamos, Heewoo Jun,
  Hassan Kianinejad, Md~Mostofa~Ali Patwary, Yang Yang, and Yanqi Zhou.
\newblock Deep learning scaling is predictable, empirically.
\newblock \emph{arXiv preprint arXiv:1712.00409}, 2017.

\bibitem[Hoffmann et~al.(2022)Hoffmann, Borgeaud, Mensch, Buchatskaya, Cai,
  Rutherford, Casas, Hendricks, Welbl, Clark, et~al.]{hoffmann2022training}
Jordan Hoffmann, Sebastian Borgeaud, Arthur Mensch, Elena Buchatskaya, Trevor
  Cai, Eliza Rutherford, Diego de~Las Casas, Lisa~Anne Hendricks, Johannes
  Welbl, Aidan Clark, et~al.
\newblock Training compute-optimal large language models.
\newblock \emph{arXiv preprint arXiv:2203.15556}, 2022.

\bibitem[Hsu et~al.(2012)Hsu, Kakade, and Zhang]{hsu2012tail}
Daniel Hsu, Sham Kakade, and Tong Zhang.
\newblock {A tail inequality for quadratic forms of subgaussian random
  vectors}.
\newblock \emph{Electronic Communications in Probability}, 17\penalty0
  (none):\penalty0 1 -- 6, 2012.
\newblock \doi{10.1214/ECP.v17-2079}.
\newblock URL \url{https://doi.org/10.1214/ECP.v17-2079}.

\bibitem[Hutter(2021)]{hutter2021learning}
Marcus Hutter.
\newblock Learning curve theory.
\newblock \emph{arXiv preprint arXiv:2102.04074}, 2021.

\bibitem[Jain et~al.(2017)Jain, Netrapalli, Kakade, Kidambi, and
  Sidford]{jain2017parallelizing}
Prateek Jain, Praneeth Netrapalli, Sham~M Kakade, Rahul Kidambi, and Aaron
  Sidford.
\newblock Parallelizing stochastic gradient descent for least squares
  regression: mini-batching, averaging, and model misspecification.
\newblock \emph{The Journal of Machine Learning Research}, 18\penalty0
  (1):\penalty0 8258--8299, 2017.

\bibitem[Jain et~al.(2018)Jain, Kakade, Kidambi, Netrapalli, Pillutla, and
  Sidford]{jain2017markov}
Prateek Jain, Sham~M. Kakade, Rahul Kidambi, Praneeth Netrapalli,
  Venkata~Krishna Pillutla, and Aaron Sidford.
\newblock {A Markov Chain Theory Approach to Characterizing the Minimax
  Optimality of Stochastic Gradient Descent (for Least Squares)}.
\newblock In \emph{37th IARCS Annual Conference on Foundations of Software
  Technology and Theoretical Computer Science (FSTTCS 2017)}, 2018.

\bibitem[Kaplan et~al.(2020)Kaplan, McCandlish, Henighan, Brown, Chess, Child,
  Gray, Radford, Wu, and Amodei]{kaplan2020scaling}
Jared Kaplan, Sam McCandlish, Tom Henighan, Tom~B Brown, Benjamin Chess, Rewon
  Child, Scott Gray, Alec Radford, Jeffrey Wu, and Dario Amodei.
\newblock Scaling laws for neural language models.
\newblock \emph{arXiv preprint arXiv:2001.08361}, 2020.

\bibitem[Koltchinskii and Lounici(2017)]{koltchinskii2017concentration}
Vladimir Koltchinskii and Karim Lounici.
\newblock Concentration inequalities and moment bounds for sample covariance
  operators.
\newblock \emph{Bernoulli}, pages 110--133, 2017.

\bibitem[Kumar et~al.(2024)Kumar, Ankner, Spector, Bordelon, Muennighoff, Paul,
  Pehlevan, R{\'e}, and Raghunathan]{kumar2024scaling}
Tanishq Kumar, Zachary Ankner, Benjamin~F Spector, Blake Bordelon, Niklas
  Muennighoff, Mansheej Paul, Cengiz Pehlevan, Christopher R{\'e}, and Aditi
  Raghunathan.
\newblock Scaling laws for precision.
\newblock \emph{arXiv preprint arXiv:2411.04330}, 2024.

\bibitem[Lin and Rosasco(2017)]{lin2017optimal}
Junhong Lin and Lorenzo Rosasco.
\newblock Optimal rates for multi-pass stochastic gradient methods.
\newblock \emph{The Journal of Machine Learning Research}, 18\penalty0
  (1):\penalty0 3375--3421, 2017.

\bibitem[Lin et~al.(2024)Lin, Wu, Kakade, Bartlett, and Lee]{lin2024scaling}
Licong Lin, Jingfeng Wu, Sham~M Kakade, Peter~L Bartlett, and Jason~D Lee.
\newblock Scaling laws in linear regression: Compute, parameters, and data.
\newblock \emph{arXiv preprint arXiv:2406.08466}, 2024.

\bibitem[Maloney et~al.(2022)Maloney, Roberts, and Sully]{maloney2022solvable}
Alexander Maloney, Daniel~A Roberts, and James Sully.
\newblock A solvable model of neural scaling laws.
\newblock \emph{arXiv preprint arXiv:2210.16859}, 2022.

\bibitem[Michaud et~al.(2024)Michaud, Liu, Girit, and
  Tegmark]{michaud2024quantization}
Eric Michaud, Ziming Liu, Uzay Girit, and Max Tegmark.
\newblock The quantization model of neural scaling.
\newblock \emph{Advances in Neural Information Processing Systems}, 36, 2024.

\bibitem[M{\"u}cke et~al.(2019)M{\"u}cke, Neu, and Rosasco]{mucke2019beating}
Nicole M{\"u}cke, Gergely Neu, and Lorenzo Rosasco.
\newblock Beating sgd saturation with tail-averaging and minibatching.
\newblock \emph{Advances in Neural Information Processing Systems}, 32, 2019.

\bibitem[Muennighoff et~al.(2023)Muennighoff, Rush, Barak, Scao, Piktus, Tazi,
  Pyysalo, Wolf, and Raffel]{muennighoff2023scaling}
Niklas Muennighoff, Alexander~M Rush, Boaz Barak, Teven~Le Scao, Aleksandra
  Piktus, Nouamane Tazi, Sampo Pyysalo, Thomas Wolf, and Colin Raffel.
\newblock Scaling data-constrained language models.
\newblock \emph{arXiv preprint arXiv:2305.16264}, 2023.

\bibitem[Paquette et~al.(2024)Paquette, Paquette, Xiao, and
  Pennington]{paquette20244+}
Elliot Paquette, Courtney Paquette, Lechao Xiao, and Jeffrey Pennington.
\newblock 4+ 3 phases of compute-optimal neural scaling laws.
\newblock \emph{arXiv preprint arXiv:2405.15074}, 2024.

\bibitem[Piantadosi(2014)]{piantadosi2014zipf}
Steven~T Piantadosi.
\newblock Zipf’s word frequency law in natural language: A critical review
  and future directions.
\newblock \emph{Psychonomic bulletin \& review}, 21:\penalty0 1112--1130, 2014.

\bibitem[Pillaud-Vivien et~al.(2018)Pillaud-Vivien, Rudi, and
  Bach]{pillaud2018statistical}
Loucas Pillaud-Vivien, Alessandro Rudi, and Francis Bach.
\newblock Statistical optimality of stochastic gradient descent on hard
  learning problems through multiple passes.
\newblock \emph{Advances in Neural Information Processing Systems}, 31, 2018.

\bibitem[Polyak and Juditsky(1992)]{polyak1992acceleration}
Boris~T Polyak and Anatoli~B Juditsky.
\newblock Acceleration of stochastic approximation by averaging.
\newblock \emph{SIAM journal on control and optimization}, 30\penalty0
  (4):\penalty0 838--855, 1992.

\bibitem[Ren et~al.(2025)Ren, Nichani, Wu, and Lee]{ren2025emergence}
Yunwei Ren, Eshaan Nichani, Denny Wu, and Jason~D Lee.
\newblock Emergence and scaling laws in sgd learning of shallow neural
  networks.
\newblock \emph{arXiv preprint arXiv:2504.19983}, 2025.

\bibitem[Rosenfeld et~al.(2019)Rosenfeld, Rosenfeld, Belinkov, and
  Shavit]{rosenfeld2019constructive}
Jonathan~S Rosenfeld, Amir Rosenfeld, Yonatan Belinkov, and Nir Shavit.
\newblock A constructive prediction of the generalization error across scales.
\newblock \emph{arXiv preprint arXiv:1909.12673}, 2019.

\bibitem[Sharma and Kaplan(2020)]{sharma2020neural}
Utkarsh Sharma and Jared Kaplan.
\newblock A neural scaling law from the dimension of the data manifold.
\newblock \emph{arXiv preprint arXiv:2004.10802}, 2020.

\bibitem[Varre et~al.(2021)Varre, Pillaud-Vivien, and
  Flammarion]{varre2021last}
Aditya~Vardhan Varre, Loucas Pillaud-Vivien, and Nicolas Flammarion.
\newblock Last iterate convergence of {SGD} for least-squares in the
  interpolation regime.
\newblock In \emph{Advances in Neural Information Processing Systems}, 2021.

\bibitem[Wainwright(2019)]{wainwright2019high}
Martin~J Wainwright.
\newblock \emph{High-dimensional statistics: A non-asymptotic viewpoint},
  volume~48.
\newblock Cambridge University Press, 2019.

\bibitem[Wu et~al.(2022{\natexlab{a}})Wu, Zou, Braverman, Gu, and
  Kakade]{wu2022last}
Jingfeng Wu, Difan Zou, Vladimir Braverman, Quanquan Gu, and Sham Kakade.
\newblock Last iterate risk bounds of sgd with decaying stepsize for
  overparameterized linear regression.
\newblock In \emph{International Conference on Machine Learning}, pages
  24280--24314. PMLR, 2022{\natexlab{a}}.

\bibitem[Wu et~al.(2022{\natexlab{b}})Wu, Zou, Braverman, Gu, and
  Kakade]{wu2022iterate}
Jingfeng Wu, Difan Zou, Vladimir Braverman, Quanquan Gu, and Sham~M. Kakade.
\newblock Last iterate risk bounds of sgd with decaying stepsize for
  overparameterized linear regression.
\newblock \emph{The 39th International Conference on Machine Learning},
  2022{\natexlab{b}}.

\bibitem[Wu et~al.(2022{\natexlab{c}})Wu, Zou, Braverman, Gu, and
  Kakade]{wu2022power}
Jingfeng Wu, Difan Zou, Vladimir Braverman, Quanquan Gu, and Sham~M. Kakade.
\newblock The power and limitation of pretraining-finetuning for linear
  regression under covariate shift.
\newblock \emph{The 36th Conference on Neural Information Processing Systems},
  2022{\natexlab{c}}.

\bibitem[Zhai et~al.(2022)Zhai, Kolesnikov, Houlsby, and
  Beyer]{zhai2022scaling}
Xiaohua Zhai, Alexander Kolesnikov, Neil Houlsby, and Lucas Beyer.
\newblock Scaling vision transformers.
\newblock In \emph{Proceedings of the IEEE/CVF conference on computer vision
  and pattern recognition}, pages 12104--12113, 2022.

\bibitem[Zhivotovskiy(2024)]{zhivotovskiy2024dimension}
Nikita Zhivotovskiy.
\newblock Dimension-free bounds for sums of independent matrices and simple
  tensors via the variational principle.
\newblock \emph{Electronic Journal of Probability}, 29:\penalty0 1--28, 2024.

\bibitem[Zou et~al.(2021)Zou, Wu, Braverman, Gu, Foster, and
  Kakade]{zou2021benefits}
Difan Zou, Jingfeng Wu, Vladimir Braverman, Quanquan Gu, Dean~P Foster, and
  Sham Kakade.
\newblock The benefits of implicit regularization from sgd in least squares
  problems.
\newblock \emph{Advances in Neural Information Processing Systems},
  34:\penalty0 5456--5468, 2021.

\bibitem[Zou et~al.(2022)Zou, Wu, Braverman, Gu, and Kakade]{zou2022risk}
Difan Zou, Jingfeng Wu, Vladimir Braverman, Quanquan Gu, and Sham Kakade.
\newblock Risk bounds of multi-pass sgd for least squares in the interpolation
  regime.
\newblock \emph{Advances in Neural Information Processing Systems},
  35:\penalty0 12909--12920, 2022.

\bibitem[Zou et~al.(2023)Zou, Wu, Braverman, Gu, and Kakade]{zou2023benign}
Difan Zou, Jingfeng Wu, Vladimir Braverman, Quanquan Gu, and Sham~M Kakade.
\newblock Benign overfitting of constant-stepsize sgd for linear regression.
\newblock \emph{Journal of Machine Learning Research}, 24\penalty0
  (326):\penalty0 1--58, 2023.

\end{thebibliography}

\newpage 

\appendix
\addcontentsline{toc}{section}{Appendix} % Add the appendix text to the document TOC
\part{Appendix} % Start the appendix part
\parttoc{} % Insert the appendix TOC

\section{Preliminary}

\subsection{Comments and additional notation}\label{sec:addit_notation}

\paragraph{Comments on Assumption~\ref{assump:simple:param}.}

Throughout the appendix (except for Appendix~\ref{sec:relax_assump}), we assume without loss of generality that the covariance matrix $\HB$ is diagonal, with diagonal entries given by the eigenvalues $(\lambda_i)_{i \geq 1}$ in non-increasing order.  
This reduction is justified by the rotational invariance of the Gaussian sketching matrix $\SB$.  
Under this diagonalization, Assumption~\ref{assump:simple:param} can be restated more explicitly as follows:

\begin{assumption}[Source condition]\label{assump:source-prime}
Suppose $\HB = (h_{ij})_{i,j \geq 1}$ is a diagonal matrix with non-increasing diagonal entries.  
Assume that the true parameter $\wB^*$ satisfies:
\[
\text{for all } i \neq j,\quad \E[\wB^*_i \wB^*_j] = 0; \qquad
\text{and for all } i > 0,\quad \E[\lambda_i \wB_i^{*2}] \eqsim i^{-b}, \quad \text{for some } b > 1.
\]
\end{assumption}

Given that $\HB$ is diagonal, we adopt the following notation.  
For integers $0 \leq k^* \leq k^\dagger$ (allowing $k^\dagger = \infty$), define
\[
\HB_{k^*:k^\dagger} := \diag\{\lambda_{k^*+1}, \dots, \lambda_{k^\dagger}\} \in \Rbb^{(k^\dagger - k^*) \times (k^\dagger - k^*)}.
\]
For example,
\[
\HB_{0:k} = \diag\{\lambda_1, \dots, \lambda_k\}, \quad \HB_{k:\infty} = \diag\{\lambda_{k+1}, \lambda_{k+2}, \dots\}.
\]

Similarly, for any vector $\wB \in \Hbb$, define 
\[
\wB_{k^*:k^\dagger} := (\wB_{k^*+1}, \dots, \wB_{k^\dagger})^\top \in \Rbb^{k^\dagger - k^*}.
\]
In addition, we define $\SB_{k^*:k^\dagger}$ to be the submatrix of the sketching matrix $\SB$ consisting of the $k^*+1$-th through $k^\dagger$-th columns.

\subsubsection{Assumptions on the stepsize}\label{sec:stepsize_assumption}
In the proofs of the general upper and lower bounds on the bias, variance, and fluctuation error,
% (Lemma~\ref{lm:gd_bias_upper},~\ref{lm:gd_bias_lower},~\ref{lm:gd_var_upper},~\ref{lm:fluctuation_error_upper}~and~\ref{lm:fluctuation_error_lower})
we will require that the stepsize $\gamma_0,\gamma$ satisfy certain conditions, which are summarized in the following assumption.
\begin{assumption}[Stepsize conditions]\label{assump:stepsize_condition}
  Under the notations in Theorem~\ref{thm:scaling-law} and its proof, assume that, with probability at least $1-\exp(-\Omega(M))$ over the randomness of $\SB$,
  \begin{enumerate}[leftmargin=*]
    \item $\gamma\leq \min\{c/\log N,c/[\tr(\SigmaB)]\}$; 
    \item $\tr(\SigmaB^2)\lesssim1$;
    \item  $\sum_{i=1}^M \frac{\mu_i(\SigmaB)}{\mu_i(\SigmaB)+1/(\stepeff\gamma)}\leq N/4$; 
    \item the initial stepsize
  $\gamma_0 = \min\{1/[4\max_i \|\SB\xB_i\|_2^2],\gamma\}$ satisfies $\Pbb(\gamma_0< \gamma/t)\leq N^{- c t}$ for all $t\geq1$.
  \end{enumerate}
\end{assumption}

We will show that Assumption~\ref{assump:stepsize_condition} holds   when the conditions in Theorem~\ref{thm:scaling-law} are satisfied.

\subsection{Proof of Theorem~\ref{thm:scaling-law} and the  corollaries}\label{sec:pf_corollaries}

\subsubsection{Proof of Theorem~\ref{thm:scaling-law}}\label{sec:pf_scaling-law}

\begin{proof}[Proof of~\Cref{thm:scaling-law}]
Let $\vB^*=(\SB\HB\SB^\top)^{-1}\SB\HB\wB^*$ and adopt the shorthand 
\begin{align*}
\SigmaB\defn\SB\HB\SB^\top,~~~\SigmaBhat\defn\frac{\SB\XB^\top\XB\SB^\top}{N}.
\end{align*} 
Also let $\dset\defn(\xB_i,y_i)_{i=1}^N$ denote the set of training samples.
Then we have the decomposition
\begin{align*}
     \Ebb\vRisk(\vB_\step) 
     &=
      \Ebb\Big[\big(\la \xB,\SB^\top\vB_\step\ra - y\big)^2\Big]
   =
     \Ebb\Big[\big(\la \xB,\SB^\top\vB_\step-\wB^*\ra - 
     \epsilon\big)^2\Big]
=\sigma^2
+
  \Ebb\big[\la \xB,\SB^\top\vB_\step-\wB^*\ra^2\big]\\
  &=
  \sigma^2
+
  \Ebb\big[\la \xB,\SB^\top(\vB_\step-\vB^*)+\SB^\top\vB^*-\wB^*\ra^2\big]\\
  &\overset{(i)}{=}
   \sigma^2
+
  \Ebb\big[\la \xB,\SB^\top\vB^*-\wB^*\ra^2\big]
  +
  \Ebb\big[\la \xB, \SB^\top(\vB_\step-\vB^*)\ra^2\big]
  \\
    &\overset{(ii)}{=}
{\sigma^2}
+
  {\Ebb\big[\la \xB,\SB^\top\vB^*-\wB^*\ra^2\big]}
  +
 {\Ebb\big[\la \xB, \SB^\top(\thetaB_\step-\vB^*)\ra^2\big]}
  +
   {\Ebb\big[\la \xB, \SB^\top(\vB_\step-\thetaB_\step)\ra^2\big]}
    \\
        &=
   \underbrace{\sigma^2}_{\irreducible}
+
 \E \underbrace{\|\SB^\top\vB^*-\wB^*\|_\HB^2}_{\approximation}
  + \Ebb
  \underbrace{\|\thetaB_\step-\vB^*\|_{\SigmaB}^2}_{\excessRisk}
  + \Ebb
    \underbrace{\| \vB_\step-\thetaB_\step
    \|^2_{\SigmaB}}_{\fluctuationerr},
\end{align*}
where step~(i) uses the fact that $\E[\SB\xB\xB^\top(\SB^\top\vB^*-\wB^*)]=\E[\SB\HB\SB^\top\vB^*-\SB\HB\wB^*]=0$, and step~(ii) uses the fact that $\E[\vB_\step|\SB,\wB^*,\dset]=\thetaB_\step$.

\paragraph{Irreducible error.} From the above decomposition, we have $\irreducible=\risk(\wB^*)=\sigma^2$.

\paragraph{Approximation error.} We have from Lemma~C.5 in~\citet{lin2024scaling} that
$\E_{\wB^*}\approximation = \E_{\wB^*}\|\SB^\top\vB^*-\wB^*\|_\HB^2\eqsim M^{1-b}$ with probability at least $1-e^{-\Omega(M)}$ over $\SB.$

\paragraph{Excess risk of~\eqref{eq:gd}.} 
Let $\tilde\epsilon_i=y_i-\xB_i^\top\SB^\top\vB^*$ for $i\in[N]$ and write $\tilde\epsilonB=(\tilde\epsilon_1,\ldots,\tilde\epsilon_N)^\top$. It can be verified that, conditioned on $(\SB,\wB^*)$,  $\E[\tilde\epsilon_i]=0$ and $\tilde\epsilon_i$ is independent of $\SB\xB_i$. Moreover,
\begin{align*}
   \sigma^2\leq\tilde\sigma^2
   &\defn 
   \E[\tilde\epsilon_i^2]=\sigma^2 + \E_{\wB^*}\|\wB^*-\SB^\top\vB^*\|^2_{\HB}\\
   &
   =\sigma^2 +\E_{\wB^*}[\wB^{*\top}\HB^{1/2}(\IB-\HB^{1/2}\SB^\top(\SB\HB\SB^\top)^{-1}\SB\HB^{1/2})\HB^{1/2}\wB^*]
   \\
   &\leq
   \sigma^2 +\E_{\wB^*}\|\wB^*\|_{\HB}^2\lesssim\sigma^2.
\end{align*}
Note that by definition of~\eqref{eq:gd}, we have
\begin{align*}
    \thetaB_{t} - \vB^* &=   \thetaB_{t} - \vB^* - \frac{\gamma_t}{N} \SB\XB^\top(\XB\SB^\top\thetaB_{t-1}-\yB) = 
     \thetaB_{t-1} - \vB^* - \frac{\gamma_t}{N} \SB\XB^\top(\XB\SB^\top(\thetaB_{t-1}-\vB^*)-\tilde\epsilonB)\\
     & = 
      \Big(\IB-{\gamma_t}\SigmaBhat\Big)( \thetaB_{t-1} - \vB^*) + \frac{\gamma_t}{N}\cdot{\SB\XB^\top\tilde\epsilonB},
\end{align*}
and therefore
\begin{align}
    \thetaB_\step-\vB^* = 
      \prod_{t=1}^{\step}\Big(\IB-{\gamma_t}\SigmaBhat\Big)( \thetaB_{0} - \vB^*)
      +
 \varterm(\SigmaBhat)\SB\XB^\top\tilde\epsilonB,
  \label{eq:gd_decomp_1}
\end{align}
where
\begin{align*}
     \varterm(\SigmaBhat) 
     \defn
    \frac{1}{N} \sum_{t=1}^\step \gamma_t\cdot\prod_{i=t+1}^\step (\IB-\gamma_i\SigmaBhat)
    =
    \frac{\IB-\prod_{t=1}^\step(\IB-\gamma_t\SigmaBhat)}{N\SigmaBhat}.
\end{align*}
As a result, the excess risk of ~\eqref{eq:gd} satifies
  \begin{align*}
    \E[\excessRisk] 
        &=
     \E\|\thetaB_\step-\vB^*\|_{\SigmaB}^2
     \overset{(iii)}{=}
      \E\Big\| \prod_{t=1}^{\step}\Big(\IB-{\gamma_t}\SigmaBhat\Big) \vB^*\Big\|_{\SigmaB}^2
      +
     \E\|\varterm(\SigmaBhat)\SB\XB^\top\tilde\epsilonB\|^2_{\SigmaB}
     \\
     &\eqsim
     \bias +\sigma^2\variance,
     \end{align*}
     where $\bias \defn\E_{\wB^*}[\bias(\wB^*)]$ and 
     \begin{align*}
    \bias(\wB^*) 
   &\defn   \E_{\XB}\Big\| \prod_{t=1}^{\step}\Big(\IB-{\gamma_t}\SigmaBhat\Big) \vB^*\Big\|_{\SigmaB}^2,~~~~~~~~~
     \variance\defn
     \E[\tr(\XB\SB^\top\varterm(\SigmaBhat)\SigmaB\varterm(\SigmaBhat)\SB\XB^\top)],
  \end{align*}
  and step~(iii) follows from the fact that $\SB\XB^\top$ is independent of $\tilde\epsilonB$ conditioned on $\SB.$
  The bounds on the bias and variance follows immediately from Lemma~\ref{lm:bias_source_condition}~and~\ref{lm:var_source_condition}.

\paragraph{Fluctuation error.}
It follows from Lemma~\ref{lm:fluctuation_error_source} and the assumption $\gamma\leq c/\log N$ that 
\begin{align*}
\E[\fluctuationerr]\lesssim  
\gamma \log N\cdot 
[(\stepeff\gamma)^{1/  a-1}
+
\frac{(\stepeff\gamma)^{1/a}}{N}]
\end{align*}
with probability at least $1-e^{-\Omega(M)}$ over the randomness of $\SB$. The lower bound on $\E[\fluctuationerr]$ also follows from Lemma~\ref{lm:fluctuation_error_source}.

\end{proof}

\subsubsection{Proof of Corollary~\ref{cor:scaling-law-gd}}\label{sec:pf_cor_scaling-law-gd}
The proof follows immediately by combining parts~1--3 of Theorem~\ref{thm:scaling-law}, although we make a different assumption on the initial stepsize $\gamma_0$. In Theorem~\ref{thm:scaling-law}, we assume $\gamma_0 = \min\{\gamma, 1/[4\max_{i}\|\SB\xB_i\|_2^2]\}$ for some $\gamma \lesssim 1/\log N$, while in Corollary~\ref{cor:scaling-law-gd}, we assume $\gamma_0 = \min\{\gamma, 1/[4\tr(\SB\XB^\top\XB\SB^\top/N)]\}$ for some $\gamma \lesssim 1$. This modification is valid because Lemmas~\ref{lm:gd_bias_upper},~\ref{lm:gd_bias_lower}, and~\ref{lm:gd_var_upper}, used in the proof of parts~1--3 of Theorem~\ref{thm:scaling-law}, continue to hold under the alternative choice of stepsize.

Specifically, their proofs mainly rely on three properties: (1) $\IB - \gamma_t \SB\XB^\top\XB\SB^\top/N \succeq \zeroB$, (2) $\Pbb(\gamma_0 < \gamma/t) \leq N^{-c t}$ for all $t\geq 1$ and (3) claim~\eqref{eq:gd_bias_lower_pf_claim_1} holds. Under the choice $\gamma_0=\min\{\gamma, 1/[4\tr(\SB\XB^\top\XB\SB^\top/N)]\}$, the first two properties are satisfied by definition and by the Hanson--Wright inequality (see, e.g., exercise~2.17 in~\citet{wainwright2019high}). The third property follows from a similar symmetry property for $\gamma_0(\Gamma)\defn \min\{1/[4\tr(\Gamma\Gamma^\top/N)],\gamma\}$ as used in the proof of claim~\eqref{eq:gd_bias_lower_pf_claim_1}.

\subsubsection{Proof of Corollary~\ref{cor:scaling-law}~and~\ref{cor:scaling-law-multi-pass-sgd-b-gt-a}}
These two corollaries follow immediately from combining part~1---4 of Theorem~\ref{thm:scaling-law} and some basic algebra.

\subsection{Relaxation of Assumption~\ref{assump:simple}}\label{sec:relax_assump}
In this section, we show that some conditions in Assumption~\ref{assump:simple} can be further relaxed. Concretely, we have
\begin{itemize}
    \item[(a).] The exact alignment of the eigenvectors of the prior and data covariance in Assumption~\ref{assump:simple:param} is not necessary. All results in Section~\ref{sec:scaling_law_examples} remain valid if Assumption~\ref{assump:simple:param} is replaced by
\begin{myassumption}{1D'}[Approximate source condition]\label{assump:relax_source}
Let $(\lambda_i, \vB_i)_{i> 0}$ be the eigenvalues and eigenvectors of $\HB$ and let $\wCov=\E[\wB^*\wB^{*\top}]$.
Assume  $c\widetilde\HB^{\wB}\preceq\wCov\preceq c'\widetilde\HB^{\wB}$ for some absolute constants $c'\geq c>0$ and  $\widetilde\HB^{\wB}\succeq0$ such that
\[
\
\text{for $i\neq j$},\ \ \E [\vB_i^\top \widetilde\HB^{\wB}\vB_j] =0;\ \  
\text{and for $i> 0$},\ \ 
\E [\lambda_i\vB_i^\top \widetilde\HB^{\wB}\vB_i] \eqsim i^{-b},\ \text{ for some $b>1$}.\]
\end{myassumption}
\item[(b).] 
 To establish the upper bounds for $\bias,\variance,\approximation$ in Theorem~\ref{thm:scaling-law} and the upper bounds in Corollary~\ref{cor:scaling-law-gd}---\ref{cor:scaling-law-multi-pass-sgd-b-gt-a},
 Assumption~\ref{assump:simple:data} can be relaxed to 
\begin{myassumption}{1A'}[sub-Gaussian design]\label{assump:relax_data}
     $\xB= \mathbf{H}^{1/2}\widetilde\xB$, where $\E[{\widetilde \xB}{\widetilde \xB}^\top]=\IB$, and the vector ${\widetilde \xB}$ is symmetric and  1-sub-Gaussian, i.e.,  ${\widetilde \xB}\overset{d}{=}-{\widetilde \xB}$ and $\E[e^{\lambda\langle \vB,{\widetilde \xB}\rangle}]\leq e^{\lambda^2/2}$ for any unit vector $\vB$ and all $\lambda\in\R.$
\end{myassumption}
\end{itemize}
We provide some justification of the two relaxations below.
\paragraph{Justification of~(a).} By checking the proof of Theorem~\ref{thm:scaling-law} and its corollaries, it can be seen that Assumption~\ref{assump:simple:param} is used to (1) give matching upper and lower bounds on $\E_{\wB^*}[\|\wB_{\ka}^*\|^2_2], \E_{\wB^*}[\|\wB_{\kb}^*\|^2_{\HB_\kb}],\mu_i(\SB\HB\wCov\HB\SB^\top)$ for any $k\geq 0$ and $i\in[M]$ when controlling the approximation and bias error (see Lemma~C.5 in~\citet{lin2024scaling} and Lemma~\ref{lm:bias_source_condition}); (2) give matching upper and lower bounds on $\E[\|\wB^*\|^2_{\HB}]$ when controlling the fluctuation error (see Lemma~\ref{lm:fluctuation_error_source}). Under  the alternative Assumption~\ref{assump:relax_source}, it is readily verified that the same bounds on these quantities can be established up to constant factors. Concretely, suppose there exists some parameter $\widetilde\wB^*$ with prior $\E[\widetilde\wB^*\widetilde\wB^{*\top}]=\widetilde\HB^{\wB}$. Then 
$\widetilde\wB^{*}$ satisfies Assumption~\ref{assump:simple:param} and
\begin{align*}
   \E_{\wB^*}[\|\wB_{\ka}^*\|^2_2]
   =\tr(\wCov_\ka) &\eqsim \tr(\widetilde\HB^{\wB}_{\ka}) 
   =
   \E_{\widetilde\wB^*}[\|\widetilde\wB_{\ka}^*\|^2_2],\\
   \E_{\wB^*}[\|\wB_{\kb}^*\|^2_{\HB_\kb}]
   =\tr(\HB_\kb^{1/2}\wCov_\kb \HB_\kb^{1/2}) &\eqsim \tr(\HB_\kb^{1/2}\widetilde\HB^{\wB}_{\kb}\HB_\kb^{1/2}) 
   =
   \E_{\widetilde\wB^*}[\|\widetilde\wB_{\kb}^*\|^2_{\HB_\kb}],\\
 \mu_i(\SB\HB\wCov\HB\SB^\top)
 &\overset{(i)}{\eqsim}  \mu_i(\SB\HB\widetilde\HB^\wB\HB\SB^\top),\\
 \E[\|\wB^*\|^2_{\HB}]
 = \tr(\HB^{1/2}\wCov \HB^{1/2}) &\eqsim \tr(\HB^{1/2}\widetilde\HB^{\wB}\HB^{1/2}) 
   =
   \E_{\widetilde\wB^*}[\|\widetilde\wB^*\|^2_{\HB}],
\end{align*}
where step~(i) follows from the fact that $\mu_i(\AB)\leq\mu_i(\BB)$ for all $i$ and any $\zeroB\preceq \AB\preceq \BB$. Therefore, the proof of Theorem~\ref{thm:scaling-law} and its corollaries goes through under the alternative Assumption~\ref{assump:relax_source}.

\paragraph{Justification of~(b).}
In short, the relaxation can be made since the Gaussian assumption is mainly used to establish certain concentration bounds (e.g., Bernstein’s inequality), which also hold for sub-Gaussian vectors.
More specifically,
the Gaussian design in Assumption~\ref{assump:relax_data} is used in our proof mainly in three ways: (1) to establish concentration bounds on the sample covariance (e.g., Eq.~\ref{eq:extend_1}); (2) to allow the use of technical lemmas in Appendix~\ref{sec:additional_lemmas} (e.g., Lemma~\ref{lemma:sketch:tail}~and~\ref{lemma:sketch:head}); (3) to control the norm of sketched samples  (e.g., to control $\sambd$ in Eq.~\ref{eq:fluctuation_claim_1}). 

Correspondingly, when $\xB$ satisfies the alternative Assumption~\ref{assump:relax_data}, we can show that (1) the  same concentration bounds hold on the sub-Gaussian sample covariance by e.g., Theorem 6.5 in~\citet{wainwright2019high}; (2) all technical lemmas in Appendix~\ref{sec:additional_lemmas} hold when the Gaussian sketching $\mathbf{S}$ is replaced by a row-wise sub-Gaussian matrix by concentration bounds on quadratic forms of sub-Gaussian vectors (e.g., Theorem 1 in~\citet{hsu2012tail}), and on sub-Gaussian covariance matrices (e.g., Example 1.5 in~\citet{zhivotovskiy2024dimension}); (3) the norm of sketched samples satisfy the same concentration bounds by e.g., Theorem 6.5 in~\citet{wainwright2019high}. Moreover, the symmetry assumption $\widetilde\xB\overset{d}{=}-\widetilde\xB$ is made to ensure that the proof of the commutativity result in Eq.~\eqref{eq:gd_bias_lower_pf_claim_1} remains valid.

On the other hand, for the lower bounds, the  Gaussian assumption is still required in order to establish the conditional independence of $\tilde\epsilon_i=y_i-\xB_i^\top\SB^\top\vB^*$ and $\SB\xB_i$ given $(\SB,\wB^*)$ in the proof of Theorem~\ref{thm:scaling-law} and Lemma~\ref{lm:fluctuation_error_lower}.

In addition, we also conduct experiments to check our jusification of~(b). 
We generate data $\xB=(x_1,\ldots,x_d)^\top$ from the distribution where $x_i$ are independent and 
\begin{align*}\mathbb{P}(x_i=1)=\mathbb{P}(x_i=-1)=  i^{-a}/c_0, ~~~\mathbb{P}(x_i=0)=1-2\cdot\mathbb{P}(x_i=1),
\end{align*} with $a=2,b=1.5$ and $c_0 =  {2\sum_{i=1}^d i^{-a}}$. 
Note that $\xB$ satisfies Assumption~\ref{assump:relax_data} but not Assumption~\ref{assump:simple:data} when   $d=\infty$. 
We run the experiment under the same setting and choice of hyperparameters as in Figure~\ref{fig:one_pass_vs_multi_pass}(a). 
Similar to the Gaussian case, in Figure~\ref{fig:one_pass_vs_multi_pass_relax},  we observe that the excess test error of one-pass SGD and multi-pass SGD both exhibit power-law scaling in the number of effective samples $\Neff$. Moreover, the fitted slopes are both close to the theoretical prediction in Corollary~\ref{cor:scaling-law} ($0.34\approx0.33=(1-b)/b$ for multi-pass SGD and $0.26\approx0.25=(1-b)/a$ for one-pass SGD).

\begin{figure}
    \centering
    \includegraphics[width=0.4\linewidth]{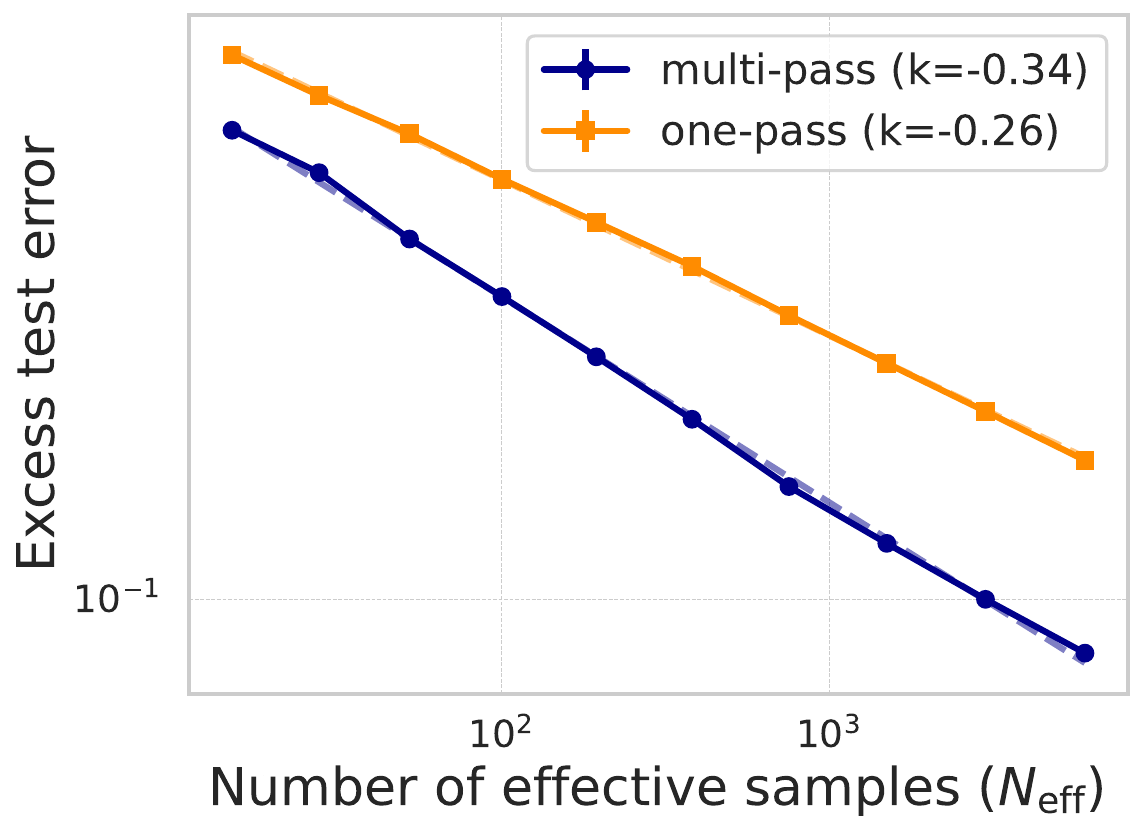}
    \caption{Multi-pass SGD  versus one-pass SGD for non-Gaussian design. Multi-pass SGD is ran for $\step \eqsim N^{a/b}$ steps.
  We use linear functions to fit the excess test error in log-log scale. The fitted exponents ($k$) are close to the theoretical predictions in Corollary~\ref{cor:scaling-law}.
  The errorbars denote the $\pm 1$ standard deviation of the expected excess test error over $100$ i.i.d. samples of $(\SB, \wB^*)$.
  Parameters: $\sigma^2=1$, $d=10000$, $M = 1000$, $
  \gamma=0.1.$}
    \label{fig:one_pass_vs_multi_pass_relax}
\end{figure}

\newpage

\section{Bias error}

\subsection{An upper bound}
\begin{lemma}[An upper bound on the GD bias term]\label{lm:gd_bias_upper}
Suppose $\stepeff\lesssim N^{a}/\gamma$ and Assumption~\ref{assump:simple:data}~and~\ref{assump:stepsize_condition} hold. 
Under the notation in Theorem~\ref{thm:scaling-law} and its proof,
% assume that with probability at least $1-\exp(-\Omega(M))$ over the randomness of $\SB$, we have (1) $\gamma\leq \min\{c/\log N,c/[\tr(\SigmaB)]\}$; (2) $\tr(\SigmaB^2)\lesssim1$;
% (3)  $\sum_{i=1}^M \frac{\mu_i(\SigmaB)}{\mu_i(\SigmaB)+1/(\stepeff\gamma)}\leq N/4$; (4) the initial stepsize
% $\gamma_0 = \min\{1/[4\max_i \|\SB\xB_i\|_2^2],\gamma\}$ satisfies $\Pbb(\gamma_0< \gamma/t)\leq N^{- c t}$ for all $t\geq1$. 
for any $\wB^*\in\Hbb$ and $k\leq M/3$ such that $r(\HB)\geq k+M$,
the bias term
\begin{align*}
   \bias(\wB^*) 
& =  \E_{\XB}\Big\| \prod_{t=1}^{\step}\Big(\IB-{\gamma_t}\SigmaBhat\Big) \vB^*\Big\|_{\SigmaB}^2
\\
&\leq\frac{c\|{\wB^*_{\ka}}\|_2^2}{\stepeff\gamma}\cdot \Bigg[\frac{\mu_{M/2}(\AB_k)}{\mu_M(\AB_k)}\Bigg]^2
+
  \biastailbound\cdot
\|{\wB^*_{\kb}}\|_{\HB_\kb}^2
\end{align*}
with probability at least $1-e^{-\Omega(M)}$ over the randomness of $\SB,$ where $\AB_k=\SB_\kb\HB_\kb\SB_\kb$ and 
\begin{align*}
    \biastailbound
    \defn
     c\Bigg(1+[(\stepeff\gamma)^2+1]\Big(\frac{\tr^2(\SigmaB_\kbn)}{N^2}
   +\|\SigmaB_\kbn\|^2
   +{\frac{\tr(\SigmaB^2_\kbn)}{N}
   }+\sqrt{\frac{\tr(\SigmaB^4_\kbn)}{N}
   }\Big)
   \Bigg)
\end{align*}
for some constant $c>0$ and $\khalf=\lfloor N/2\rfloor.$\footnote{If $\khalf>M$ then $\SigmaB_\kbn\defn\zeroB.$} 
\end{lemma}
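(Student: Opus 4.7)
The bias takes the form $\E_\XB\|\Pi_\step \vB^*\|_\SigmaB^2$ with $\Pi_\step := \prod_{t=1}^\step(\IB-\gamma_t\SigmaBhat)$, and the central difficulty is that $\SigmaB$ and $\SigmaBhat$ do not commute, so one cannot directly invoke functional calculus in the $\SigmaB$-norm. The plan is in three stages: split $\wB^*$ into a head and tail part, apply the covariance replacement trick of Lemma~\ref{lm:ridge_cov_concen} to reduce each piece to a spectral problem in $\SigmaBhat$ alone, and then exploit the contraction of the geometrically decaying stepsize schedule.

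First I would write $\wB^* = \wB^*_\ka + \wB^*_\kb$, which induces $\vB^* = \vB^*_\ka + \vB^*_\kb$ with $\vB^*_\ka := \SigmaB^{-1}\SB_\ka\HB_\ka\wB^*_\ka$ (and analogously for $\vB^*_\kb$), and split $\bias(\wB^*)$ via $\|a+b\|^2 \leq 2\|a\|^2 + 2\|b\|^2$. For the head piece, covariance replacement turns the bound into a functional calculus expression in $\SigmaBhat$, after which the standard geometric-decay estimate $\sup_{\lambda\geq 0}\lambda\prod_t(1-\gamma_t\lambda)^2 \lesssim 1/(\stepeff\gamma)$ (valid under Assumption~\ref{assump:stepsize_condition}) gives $\|\Pi_\step\vB^*_\ka\|_\SigmaB^2 \lesssim \|\vB^*_\ka\|_2^2/(\stepeff\gamma)$. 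To convert $\|\vB^*_\ka\|_2$ back into $\|\wB^*_\ka\|_2$, I would use $\SigmaB \succeq \AB_k$ together with Gaussian sketch singular-value concentration applied to $\SB_\ka\HB_\ka^{1/2}$; $\mu_M(\AB_k)$ controls $\|\SigmaB^{-1}\|$ while $\mu_{M/2}(\AB_k)$ controls the overlap with the range of the head sketch, producing the $[\mu_{M/2}(\AB_k)/\mu_M(\AB_k)]^2$ ratio.

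For the tail piece I would start from $\|\vB^*_\kb\|_\SigmaB^2 \leq \|\wB^*_\kb\|_{\HB_\kb}^2$ (using $\SB_\kb\HB_\kb\SB_\kb^\top \preceq \SigmaB$) and then rewrite $\|\Pi_\step\vB^*_\kb\|_\SigmaB^2$ as $\|\vB^*_\kb\|_\SigmaB^2$ plus a correction that measures the deviation of $\SigmaBhat$ from $\SigmaB$. Note that the cutoff in the deviation is at $\khalf=\lfloor N/2\rfloor$ rather than at $k$: the top $\khalf$ eigendirections admit multiplicative concentration because $N/\khalf \gtrsim 1$ samples are available per direction, while matrix Bernstein and trace-moment bounds applied to the residual block $\SigmaB_\kbn$ produce the four terms $\tr^2(\SigmaB_\kbn)/N^2$, $\|\SigmaB_\kbn\|^2$, $\tr(\SigmaB_\kbn^2)/N$, and $\sqrt{\tr(\SigmaB_\kbn^4)/N}$ that define $\biastailbound$. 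The $(\stepeff\gamma)^2+1$ prefactor is bookkeeping, tracking the extra $\gamma_t\SigmaBhat$ factors generated when one expands $\Pi_\step\SigmaB\Pi_\step$ against $\Pi_\step\SigmaBhat\Pi_\step$ and uses $\sup_\lambda (\stepeff\gamma)\lambda\prod_t(1-\gamma_t\lambda)^2 \lesssim 1$.

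The main obstacle is obtaining the tail bound in precisely the form of $\biastailbound$: the sketching matrix $\SB$ couples $\SigmaB$ and $\SigmaBhat$, so the concentration estimates of~\citet{pillaud2018statistical} cannot be applied verbatim. The key technical step is to condition on $\SB$ first, apply matrix concentration to the conditionally $M\times M$ quantity $\SigmaBhat-\SigmaB$, and then re-introduce the sketch-level concentration at the end; carrying this out without losing either the $1/(\stepeff\gamma)$ decay rate in the head or the polynomial-in-$N$ sharpness in $\biastailbound$, while keeping the two spectrum cutoffs (one at $k$ for $\wB^*$, one at $\khalf$ for $\SigmaBhat$) separate, is where the delicate estimates live.
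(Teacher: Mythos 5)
Your proposal follows essentially the same route as the paper's proof: split $\wB^*$ into head and tail pieces, bound $\Term_1$ by chaining $\|\biasMa\|_2\lesssim 1/(\stepeff\gamma)$ (via the covariance replacement of Lemma~\ref{lm:ridge_cov_concen}) with the bound $\|(\SB\HB\SB^\top)^{-1}\SB_\ka\HB_\ka\|_2\lesssim\mu_{M/2}(\AB_k)/\mu_M(\AB_k)$, and bound $\Term_2$ by first extracting $\|\wB^*_\kb\|_{\HB_\kb}^2$ via $\AB_k\preceq\SigmaB$ and then controlling the operator $\E_\XB[\SigmaB^{-1/2}\Pi_\step\SigmaB\Pi_\step\SigmaB^{-1/2}]$ through concentration of the $N\times N$ empirical covariance with a cutoff at $\khalf=\lfloor N/2\rfloor$, exactly as in the paper. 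The one place where your sketch glosses over a real choice is the tail: the paper does not analyze $\SigmaBhat-\SigmaB$ directly but instead writes $\Pi_\step=\IB-\CB$ with $\CB=N^{-1}\SB\XB^\top\bigl[\sum_i\gamma_{i+1}\prod_{t\le i}(\IB-\gamma_t\emcovtr)\bigr]\XB\SB^\top$ (so everything reduces to $\ZB$-concentration in the $N$-dimensional sample space), which is what makes the four specific terms in $\biastailbound$ and the $(\stepeff\gamma)^2+1$ prefactor fall out cleanly — but this is a presentational detail rather than a different argument.
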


\begin{proof}[Proof of Lemma~\ref{lm:gd_bias_upper}]
 Without loss of generality, we assume  the covariance matrix $\HB=\diag\{\lambda_1,\lambda_2,\ldots,\lambda_d\}$ where $\lambda_i\geq\lambda_j$ for any $i\geq j$. Let $(\tilde\lambda_1,\tilde\lambda_2,\ldots,\tilde\lambda_M)$ denote the eigenvalues of $\SigmaB$ in non-increasing order. Moreover, we introduce $\zB_1,\ldots\zB_N\simiid \Ncal(0,\IB_M/N)$ and write $\ZB=(\zB_1,\ldots,\zB_N)^\top$. It can be verified that $\XB\SB^\top/\sqrt{N}\overset{d}{=} \ZB \SigmaB^{1/2}$ conditioned on $\SB$. Throughout the proof, by a union bound argument, we  w.l.o.g. assume the conditions~(1),~(2),~(3)~and~(4)~in Assumption~\ref{assump:stepsize_condition} always hold. 
 % Also,  by Theorem~4~and~5 in~\citet{koltchinskii2017concentration}, it can be verified that $\gamma_0 =\gamma$ with probability at least $1-e^{-\Omega(N)}$ and $\Pbb(\gamma_0< \gamma/t)\leq e^{-c N t}$ for some constant $c>0$ and all
 % $t\geq1.$ \LL{need to check}
 % Also, for simplicity of the presentation, we assume the upper bound $\gamma\leq c/\tr(\XB^\top\XB/N)$ always hold, as the actually (adaptive) stepsize $\widetilde\gamma$ satisfies $\gamma/\widetilde\gamma$ begin sub-Gaussian, and all the results in the proof hold with $\widetilde\gamma$ replacing $\gamma$ by some concentration \LL{Need to check if we want to be more rigorous about the stepsize.}
 
%    Since
%   $
% \|\gamma_t\SigmaBhat\|_2\leq \gamma_t\tr(\SigmaBhat)\leq 1
%   $
%   by the  assumption on the stepsize,
%    it follows that $\IB_M-\gamma_t\SB\HB\SB^\top\succ \bzero_M$ for all $t\in[N]$. 

Define $\biasMa\defn \prod_{t=1}^\step(\IB-\gamma_t\SigmaBhat)\SigmaB\prod_{t=1}^\step(\IB-\gamma_t\SigmaBhat)$ and 
recall that ${\vB^*}=\big(\SB\HB\SB^\top\big)^{-1} \SB \HB {\wB^*}$. 
Substituting 
\[\SB\HB=\begin{pmatrix}
    \SB_\ka\HB_\ka &\SB_\kb\HB_\kb
\end{pmatrix}\]
into ${\vB^*}$, we have
\begin{align*}
\bias(\wB^*)
&=
\E_{\XB}[\vB^{*\top}\biasMa\vB^*]
 \\
 &=\E_{\XB} [{\wB^*}^\top\HB\SB^\top(\SB\HB\SB^\top)^{-1} \biasMa (\SB\HB\SB^\top)^{-1}\SB\HB{\wB^*}]\\
 &\leq 2\Term_1+2\Term_2,
\end{align*}
where 
\begin{align}
 \Term_1&:=
 \E_{\XB} [(\wB^*_{\ka})^\top\HB_\ka\SB^\top_\ka
(\SB\HB\SB^\top)^{-1} \biasMa (\SB\HB\SB^\top)^{-1} \SB_\ka\HB_\ka {\wB^*_{\ka}}],\label{eq:bias_term_1}\\
 \Term_2&:=
\E_{\XB} [ (\wB^*_{\kb})^\top\HB_\kb\SB^\top_\kb
(\SB\HB\SB^\top)^{-1} \biasMa (\SB\HB\SB^\top)^{-1} \SB_\kb\HB_\kb {\wB^*_{\kb}}].\label{eq:bias_term_2}
\end{align}

We will show the following results at the end of the proof. 
\begin{subequations}
First, with probability $1-e^{-\Omega(M)}$
\begin{align}
 \Term_1&\leq 
\frac{c\|{\wB^*_{\ka}}\|_2^2}{\stepeff\gamma}\cdot \Bigg[\frac{\mu_{M/2}(\AB_k)}{\mu_M(\AB_k)}\Bigg]^2\label{eq:bias_claim_1}
\end{align} for some constant $c>0$. Moreover,
\begin{align}
    \Term_2&\leq 
    \biastailbound\cdot
\|{\wB^*_{\kb}}\|_{\HB_\kb}^2.\label{eq:bias_claim_2}
\end{align}
\end{subequations}
Combining Eq.~\eqref{eq:bias_claim_1}~and~\eqref{eq:bias_claim_2} gives  Lemma~\ref{lm:gd_bias_upper}.

\paragraph{Proof of claim~\eqref{eq:bias_claim_1}.}
By definition of $\Term_1$, we have
\begin{align*}
    \Term_1
    % &= \la\biasMa,(\SB\HB\SB^\top)^{-1} \SB_{\ka}\HB_\ka{\wB^*_{\ka}}(\wB^*_{\ka})^\top\HB_\ka\SB^\top_{\ka}(\SB\HB\SB^\top)^{-1} \ra\\
    %  &\leq \sigma^2_{\max}(\biasMa^{1/2}(\SB\HB\SB^\top)^{-1} \SB_{\ka}\HB_\ka)\cdot\|{\wB^*_{\ka}}\|_2^2\\
     &\le \|  \HB_\ka\SB^\top_{\ka}(\SB\HB\SB^\top)^{-1}\biasMa^{}(\SB\HB\SB^\top)^{-1} \SB_{\ka}\HB_\ka
   \|_2\cdot \|{\wB^*_{\ka}}\|_2^2\\
   &\leq
      \|\biasMa\|_2\cdot \|(\SB\HB\SB^\top)^{-1} \SB_{\ka}\HB_\ka\|_2^2\cdot \|{\wB^*_{\ka}}\|_2^2.
\end{align*} for some constant $c>0$.
By Eq.~(23) in the proof of Lemma~D.1 in~\citet{lin2024scaling}, we have
\begin{align}
\|(\SB\HB\SB^\top)^{-1} \SB_{\ka}\HB_\ka\|_2\leq c\cdot\frac{\mu_{M/2}(\AB_k)}{\mu_M(\AB_k)}\label{eq:bias_important}
\end{align}
for some constant $ c>0$ with probability at least $1-e^{-\Omega(M)}$. 
Thus, it remains to show
\begin{align}
    \E_{\XB}[\|\biasMa\|_2]
    \leq 
    \frac{c}{\stepeff\gamma}\label{eq:bias_ubound_new}
\end{align}
for some constant $ c>0$.

Let $\thres > 0$ be a fixed value to be specified later.
Note that
\begin{align*}
   \|\biasMa\|_2 = \Big\|\prod_{t=1}^\step(\IB-\gamma_t\SigmaBhat)\SigmaB^{1/2}\Big\|_2^2
  &=
    \Big\|\prod_{t=1}^\step(\IB-\gamma_t\SigmaBhat)(\SigmaBhat+\thres\IB_{M})^{1/2}(\SigmaBhat+\thres\IB_{M})^{-1/2}\SigmaB^{1/2}\Big\|_2^2
    \\
    &\leq
      \Big\|\prod_{t=1}^\step(\IB-\gamma_t\SigmaBhat)(\SigmaBhat+\thres\IB_{M})^{1/2}\Big\|_2^2\cdot \|(\SigmaBhat+\thres\IB_{M})^{-1/2}\SigmaB^{1/2}\|_2^2\\
      &\overset{(i)}{\leq}
  \Big(\Big\|\prod_{t=1}^\step(\IB-\gamma_t\SigmaBhat)^2\SigmaBhat\Big\|_2
  +\thres 
  \Big)\cdot 
  \|(\SigmaBhat+\thres\IB_{M})^{-1/2}\SigmaB^{1/2}\|_2^2\\
    &\overset{(ii)}{\leq}
  \Big(\frac{c}{\stepeff \gamma_0 }
  +\thres 
  \Big)\cdot 
  \|(\SigmaBhat+\thres\IB_{M})^{-1/2}\SigmaB^{1/2}\|_2^2
  ,
\end{align*}
where step~(i) uses the fact that $\|\IB-\gamma_t\SigmaBhat\|_2\leq 1$ by the stepsize assumption and step~(ii) follows from the stepsize assumption~\eqref{eq:lr} that 
 $\gamma_t=\gamma_0$ for $t\in[\stepeff]$, combined with the fact that
  $\sup_{x\in[0,1/\gamma_0]}x (1-\gamma_0 x)^{2\stepeff}\leq c/(\gamma_0\stepeff)$ for some constant $c>0.$ 
 
 Recall that we assume $\Pbb(\gamma_0< \gamma/t)\leq N^{-c t}$ for some constant $c>0$ and all
 $t\geq1.$
  Thus, Eq.~\eqref{eq:bias_ubound_new} follows immediately from choosing $\thres=1/(\stepeff\gamma)$ in the last display, applying Cauchy-Schwartz inequality and
  Lemma~\ref{lm:ridge_cov_concen}, and noting that $(1+\stepeff^2\gamma^2\exp(-cN))\lesssim 1$ for $\stepeff\lesssim N^{a}/\gamma$.

\paragraph{Proof of claim~\eqref{eq:bias_claim_2}.}
Let $\biastail = \E_{\XB}\Big
[\SigmaB^{-1/2}\prod_{t=1}^\step(\IB-\gamma_t\SigmaBhat)\SigmaB
\prod_{t=1}^\step(\IB-\gamma_t\SigmaBhat)\SigmaB^{-1/2}
\Big]$.
By definition of $\Term_2$ in Eq.~\eqref{eq:bias_term_2}, we have
\begin{align*}
    \Term_2
&=
{\wB^*_{\kb}}^\top\HB_\kb\SB^\top_\kb
\SigmaB^{-1/2}\biastail
\SigmaB^{-1/2} \SB_\kb\HB_\kb {\wB^*_{\kb}}\\
&\leq
\|\biastail\|_2
\cdot
\|\HB_\kb^{1/2}\SB_\kb^\top
\SigmaB^{-1}  \SB_\kb\HB^{1/2}_\kb\|\cdot\|{\wB^*_{\kb}}\|^2_{\HB_\kb}\\
&\leq
\|\biastail\|_2\cdot \|{\wB^*_{\kb}}\|^2_{\HB_\kb},
\end{align*}
where the last line follows since
\begin{align*}
   \|\HB_\kb^{1/2}\SB_\kb^\top
\SigmaB^{-1}  \SB_\kb\HB^{1/2}_\kb \|_2&=
   \|\HB_\kb^{1/2}\SB_\kb^\top
(\SB_\ka\HB_\ka\SB_\ka^\top+\SB_\kb\HB_\kb\SB_\kb^\top)^{-1} 
\SB_\kb\HB^{1/2}_\kb \|_2\\
&\leq  \|\HB_\kb^{1/2}\SB_\kb^\top
\AB_k^{-1}  \SB_\kb\HB^{1/2}_\kb \|_2 \leq 1.
\end{align*}
In the following, we will show that $\|\biastail\|_2\leq \biastailbound$, which immediately yields claim~\eqref{eq:bias_claim_2}.\\

Let $\emcovtr=\XB\SB^\top\SB\XB^\top/N$.
To compute $\|\biastail\|_2$, note that
\begin{align*}
  \prod_{t=1}^\step(\IB-\gamma_t\SigmaBhat) 
  &= 
   \prod_{t=1}^{\step-1} (\IB-\gamma_t\SigmaBhat)-
   \gamma_\step
   \SigmaBhat \prod_{t=1}^{\step-1} (\IB-\gamma_t\SigmaBhat)
   \overset{(iii)}{=}
    \prod_{t=1}^{\step-1} (\IB-\gamma_t\SigmaBhat)-
     \frac{1}{N}\SB\XB^\top  \Big[\gamma_\step\prod_{t=1}^{\step-1} (\IB-\gamma_t\emcovtr)\Big]\XB\SB^\top\\
     &= \IB - 
      \frac{1}{N}\SB\XB^\top\Big[\sum_{i=0}^{\step-1}\gamma_{i+1}\prod_{t=1}^i(\IB-\gamma_t\emcovtr)\Big]\XB\SB^\top
     %  =
     % \IB - 
     %  \frac{1}{N}\SB\XB^\top\Big[\frac{\IB-(\IB-\gamma_0\emcovtr)^{\step}}{\emcovtr}\Big]\XB\SB^\top
      \revdef
      \IB-\CB ,
\end{align*}
where step~(iii) uses $\XB\SB^\top(\IB-\gamma_\step\SigmaBhat)=(\IB-\gamma_\step\emcovtr) \XB\SB^\top.$ Recall that $\XB\SB^\top/\sqrt{N}\overset{d}{=}\ZB\SigmaB^{1/2}$ conditioned on $\SB$, we can thus rewrite
\begin{align*}
    \biastail &=  \E_{\ZB}[\SigmaB^{-1/2}(\IB - 
     \CB)
      \SigmaB 
      (\IB - 
     \CB)\SigmaB^{-1/2}]
     \preceq 
     2 
    \IB
     +2 \E[\SigmaB^{-1/2}\CB\SigmaB\CB\SigmaB^{-1/2}]\\
     &=
     2 
    \IB
     +2 \E_{\ZB}
     \Big[\ZB^\top
     \Big[\sum_{i=0}^{\step-1}\gamma_{i+1}\prod_{t=1}^i(\IB-\gamma_t\emcovtr)\Big]
     \ZB\SigmaB^2\ZB^\top
     \Big[\sum_{i=0}^{\step-1}\gamma_{i+1}\prod_{t=1}^i(\IB-\gamma_t\emcovtr)\Big]  \ZB  \Big]
     ,~~~~\text{where~~~}\emcovtr=\ZB\SigmaB\ZB^\top.
\end{align*}
Introduce the shorthand $\shortma_1 =\sum_{i=0}^{\step-1}\gamma_{i+1}\prod_{t=1}^i(\IB-\gamma_t\emcovtr)$ 
and
$\shortma_1(k)\defn(\IB-(\IB-\gamma_{k\stepeff+1}\emcovtr)^\stepeff)/{\emcovtr}$ for $k\in[0,\lfloor\log \stepeff\rfloor-1]$.
Note that $\|\shortma_1(k)\|_2\leq \stepeff\cdot\gamma_{k\stepeff+1} $ since 
 $\sup_{x\in[0,1/\gamma_{k\stepeff+1}]} [(1-(1-\gamma_{k\stepeff+1} x)^\stepeff)/x] = \stepeff\cdot\gamma_{k\stepeff+1}.$
Therefore
\begin{align*}
   \|\shortma_1\|_2
   =
   \Big\|
   \sum_{i=0}^{\step-1}\gamma_{i+1}\prod_{t=1}^i(\IB-\gamma_t\emcovtr)
   \Big\|_2 
   \leq
  \Big\|
\sum_{k=0}^{\lfloor\log \stepeff\rfloor-1} \shortma_1(k)
   \Big\|_2 \leq 
   \sum_{k=0}^{\lfloor\log \stepeff\rfloor-1}\stepeff\cdot\gamma_{k\stepeff+1}\leq 2\stepeff\gamma,
\end{align*}
where the last inequality follows from~\eqref{eq:lr} and the definition of $\gamma_0$.

We consider two cases.\\

\myunder{Case 1: $M\leq N/2$.} In this case, we have 
\begin{align}
    \ZB\SigmaB^2\ZB^\top \preceq  5\cdot (\ZB\SigmaB\ZB^\top)^2\label{eq:extend_1}
\end{align}
with probability at least $1-e^{-\Omega(N)}$ since $\Pbb(\ZB^\top\ZB\succeq \IB_M/5)\geq 1-e^{-\Omega(N)} $ by concentration of Guassian covariance matrix (see e.g.~Theorem~6.1 in~\citet{wainwright2019high}). Moreover, since $\tr(\SigmaB^2)\lesssim1$,
\begin{align*}
    \ZB^\top \shortma_1\ZB\SigmaB^2\ZB^\top\shortma_1\ZB
    &\preceq 
    c\cdot    \ZB^\top \shortma_1\ZB\ZB^\top\shortma_1\ZB
    \preceq     c  \|\ZB^\top \shortma_1\ZB\|_2^2 \cdot \IB_M\\
    &\preceq c(\stepeff\gamma)^2 \|\ZB^\top\ZB\|_2^2 \cdot \IB_M.
\end{align*}
Therefore,
\begin{align}
 \E_{\ZB} [ \ZB^\top \shortma_1\ZB\SigmaB^2\ZB^\top\shortma_1\ZB]  
& \preceq 
c  \E_{\ZB} [ \ZB^\top \shortma_1\emcovtr^2\shortma_1\ZB]  
 +
  c \E[(\stepeff\gamma)^2\|\ZB^\top\ZB\|_2^2 \bone_{\{\ZB^\top\ZB\nsucceq \IB_M/5\}}]\cdot \IB_M
  \label{eq:case1_pf1}
  \\
  &\preceq
  c \E_{\ZB} [ \ZB^\top \ZB] + c(\stepeff\gamma)^2 \exp(-c' N)\cdot \IB_M\preceq c\IB_M
 \notag
\end{align}
for some constant $c,c'>0,$
where the second line uses the fact that $ \|\shortma_1\emcovtr\|_2
 =
 \|\IB-\prod_{t=1}^\step (\IB-\gamma_t\emcovtr)\|_2\leq 1$, concentration properties of  the empirical covariance matrix $\ZB^\top\ZB$, and $\E_{\ZB}[\ZB^\top\ZB]=\IB_M.$ As a result,  $\|\biastail\|_2\leq 2 + 2 \| \E_{\ZB} [ \ZB^\top \shortma_1\ZB\SigmaB^2\ZB^\top\shortma_1\ZB]  \|_2\lesssim1.$\\

\myunder{Case2: $M> N/2$.} Let $\khalf=N/2$. W.l.o.g., we assume $\SigmaB$ is a diagonal matrix with eigenvalues $\tilde\lambda_1,\ldots,\tilde\lambda_M$ in non-increasing order. With probability at least $1-e^{-\Omega(N)}$, we have the decomposition
\begin{align*}
     \ZB^\top \shortma_1\ZB\SigmaB^2\ZB^\top\shortma_1\ZB
    &\preceq 
    2 \ZB^\top \shortma_1(\ZB_\kan\SigmaB_\kan^2\ZB_\kan^\top)\shortma_1\ZB
    +
        2 \ZB^\top \shortma_1(\ZB_\kbn\SigmaB_\kbn^2\ZB_\kbn^\top)\shortma_1\ZB\\
        &\preceq       
            c \ZB^\top \shortma_1(\ZB_\kan\SigmaB_\kan\ZB_\kan^\top)^2\shortma_1\ZB
    +
        2 \ZB^\top \shortma_1(\ZB_\kbn\SigmaB_\kbn^2\ZB_\kbn^\top)\shortma_1\ZB \\
       & \preceq 
       c(\|\shortma_1(\ZB_\kan\SigmaB_\kan\ZB_\kan^\top)\|^2_2 + \|\ZB_\kbn\SigmaB_\kbn^2\ZB_\kbn^\top\|_2)\ZB^\top\ZB\\
       &
       \preceq 
        c(\|\shortma_1(\ZB_\kbn\SigmaB_\kbn\ZB_\kbn^\top)\|^2_2 
        + 
        \|\shortma_1\emcovtr\|^2_2+
        \|\ZB_\kbn\SigmaB_\kbn^2\ZB_\kbn^\top\|_2)\ZB^\top\ZB,
\end{align*}
where the second line use $\ZB_\kan^\top\ZB_{\kan}\succeq \IB_k/5$ with probability at least $1-e^{-\Omega(N)}$, the last line follows from a triangle inequality. Since
 $\|\shortma_1\|_2\leq \stepeff\gamma$ and $ \|\shortma_1\emcovtr\|_2
 =
 \|\IB-\prod_{t=1}^\step (\IB-\gamma_t\emcovtr)\|_2\leq 1$, continuing the calculation, we obtain
 \begin{align}
   \ZB^\top \shortma_1\ZB\SigmaB^2\ZB^\top\shortma_1\ZB
    &\preceq    
      c\big((\stepeff\gamma)^2\|\ZB_\kbn\SigmaB_\kbn\ZB_\kbn^\top\|^2_2 
        + 
       1+
 \|\ZB_\kbn\SigmaB_\kbn^2\ZB_\kbn^\top\|_2\big)\ZB^\top\ZB.\label{eq:case2_pf1}
 \end{align}
Since we have by Lemma~\ref{lemma:sketch:tail} that
 \begin{align}
   \|\ZB_\kbn\SigmaB_\kbn\ZB_\kbn^\top\|_2  &\leq 
   c\Bigg(\frac{\tr(\SigmaB_\kbn)}{N}
   +\|\SigmaB_\kbn\|
   +\sqrt{\frac{\tr(\SigmaB^2_\kbn)}{N}
   }
   \Bigg)+c\Bigg(\frac{\|\SigmaB_\kbn\|}{N}\log\frac{1}{\delta}+\frac{\sqrt{\tr(\SigmaB^2_\kbn)\log({1}/{\delta})}}{N}\Bigg)\label{eq:case2_pf2}
 \end{align} with probability at least $1-\delta$, and $\tr(\SigmaB_\kbn^2)\leq\tr(\SigmaB^2)\lesssim1$,
it can be verified by a standard truncation argument that
 \begin{align*}
     \E[ \|\ZB_\kbn\SigmaB_\kbn\ZB_\kbn^\top\|_2^2\cdot\ZB^\top\ZB]
     \preceq c\Bigg(\frac{\tr^2(\SigmaB_\kbn)}{N^2}
   +\|\SigmaB_\kbn\|^2
   +{\frac{\tr(\SigmaB^2_\kbn)}{N}
   }
   \Bigg)\cdot \IB_M.
 \end{align*} A similar bound can be established for $\E[  \|\ZB_\kbn\SigmaB_\kbn^2\ZB_\kbn^\top\|_2]$. Finally, substituting the bounds on the expectations into Eq.~\eqref{eq:case2_pf1}, we obtain
 \begin{align*}
     \|\biastail\|_2
     &\leq 2 + 2\|\E[ \ZB^\top \shortma_1\ZB\SigmaB^2\ZB^\top\shortma_1\ZB]\|_2\\
     &\leq
      c\Bigg(1+[(\stepeff\gamma)^2+1]\Big(\frac{\tr^2(\SigmaB_\kbn)}{N^2}
   +\|\SigmaB_\kbn\|^2
   +{\frac{\tr(\SigmaB^2_\kbn)}{N}
   }+\sqrt{\frac{\tr(\SigmaB^4_\kbn)}{N}
   }\Big)
   \Bigg).
 \end{align*}

\end{proof}

\subsection{A lower bound}

\begin{lemma}[A lower bound on the GD bias term]\label{lm:gd_bias_lower}
  Let Assumption~\ref{assump:simple:data}~and~\ref{assump:stepsize_condition} hold.
  Define $\wCov\defn\E [\wB^*\wB^{*\top}]$ and
  $\SigmaB_\wB\defn \SB\HB\wCov\HB\SB^\top$.
Under the notation in Theorem~\ref{thm:scaling-law} and its proof,
% assume that with probability at least $1-\exp(-\Omega(M))$ over the randomness of $\SB$, we have (1) $\gamma\leq \min\{c/\log N,c/\tr(\SigmaB)\}$; (2) $\tr(\SigmaB^2)\lesssim1$;
% (3)  $\sum_{i=1}^M \frac{\mu_i(\SigmaB)}{\mu_i(\SigmaB)+1/(\stepeff\gamma)}\leq N/4$;\footnote{\LL{This condition is not needed for the proof of the lower bound.}} (4) the initial stepsize $\gamma_0 = \min\{1/[4\max_i\|\SB\xB_i\|_2^2],\gamma\}$ satisfies $\Pbb(\gamma_0< \gamma/t)\leq N^{-ct}$ for all $t\geq1$. 
% suppose the initial point $\vB_0=\zeroB$ and the stepsize $\gamma_t$ is non-increasing, 
 the bias term satisfies
\begin{align*}
    \E_{{\wB^*}} [\bias(\wB^*)]
& = \E_{{\wB^*}}  \E\Big\| \prod_{t=1}^{\step}\Big(\IB-{\gamma_t}\SigmaBhat\Big) \vB^*\Big\|_{\SigmaB}^2\\
    &\gtrsim
    \sum_{i=2\expkthr+1}^M \frac{ \mu_{3i} (\SigmaB_\wB)}{\mu_i(\SigmaB)}
\end{align*}
with probability at least $1-e^{-\Omega(M)}$, where 
$\expkthr\defn\E_{\XB}[\#\{i\in[M]:\widehat \lambda_i\stepeff \gamma_0> 1/4\}]$ and $(\widehat\lambda_i)_{i=1}^M$ are the eigenvalues of $\SigmaBhat$.

\end{lemma}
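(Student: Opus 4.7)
The plan is to reduce the expected bias to a trace, localize to the ``undamped'' eigenspace of $\SigmaBhat$, swap $\SigmaBhat$ for $\SigmaB$ via a ridge-level covariance concentration, and finally extract the claimed sum via Poincar\'e separation and a PSD rearrangement inequality.

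First, using $\vB^*=\SigmaB^{-1}\SB\HB\wB^*$ together with $\E[\wB^*\wB^{*\top}]=\wCov$, so that $\E_{\wB^*}[\vB^*\vB^{*\top}]=\SigmaB^{-1}\SigmaB_\wB\SigmaB^{-1}$, I would take the $\wB^*$-expectation inside the trace. Setting $\mathbf{Q}:=\prod_{t=1}^\step(\IB-\gamma_t\SigmaBhat)$, cycling the trace yields
\begin{align*}
\E_{\wB^*}[\bias(\wB^*)]
&= \E_{\XB}\tr\!\bigl(\mathbf{Q}\,\SigmaB\,\mathbf{Q}\cdot \SigmaB^{-1}\SigmaB_\wB\SigmaB^{-1}\bigr)
= \E_{\XB}\bigl\|\SigmaB^{1/2}\mathbf{Q}\,\SigmaB^{-1}\SigmaB_\wB^{1/2}\bigr\|_F^2.
\end{align*}
Writing the bias as a Frobenius norm is useful because Frobenius norms are monotone under zeroing out components, whereas the naive ``drop-a-projector'' inequality $\mathbf{M}\succeq\mathbf{P}\mathbf{M}\mathbf{P}$ fails for generic PSD $\mathbf{M}$, which would otherwise be needed since $\SigmaB$, $\SigmaBhat$, and $\SigmaB_\wB$ do not pairwise commute.

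Next, let $\widehat{\mathbf{P}}$ denote the spectral projector of $\SigmaBhat$ onto its eigenvalues $\widehat\lambda_i$ with $\widehat\lambda_i\stepeff\gamma_0\le 1/4$. The geometric-decay schedule gives $\sum_t\gamma_t\lesssim \stepeff\gamma_0$, so on $\mathrm{range}(\widehat{\mathbf{P}})$ each eigenvalue of $\mathbf{Q}$ is bounded below by an absolute constant $c_0>0$ (using $1-x\ge e^{-2x}$ for $x\le 1/2$); since $\mathbf{Q}$ commutes with $\widehat{\mathbf{P}}$, this gives $\widehat{\mathbf{P}}\mathbf{Q}^2\widehat{\mathbf{P}}\succeq c_0^2\widehat{\mathbf{P}}$. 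Markov's inequality applied to the definition of $\expkthr$ yields $r(\widehat{\mathbf{P}})\ge M-2\expkthr$ on an event of probability at least $1/2$ over $\XB$, which is enough by union-bounding with the event of Assumption~\ref{assump:stepsize_condition}. Using Lemma~\ref{lm:ridge_cov_concen} to compare $\SigmaBhat$ and $\SigmaB$ at ridge scale $1/(\stepeff\gamma)$, I can then transfer $\widehat{\mathbf{P}}$ to a spectral projector $\mathbf{P}$ of $\SigmaB$ onto its small-eigenvalue directions, at the cost of another $O(\expkthr)$ mismatched directions.

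After these reductions the claim is reduced to a lower bound on $\tr(\mathbf{P}\,\SigmaB^{-1}\mathbf{P}\,\SigmaB_\wB)$ with $r(\mathbf{P})\ge M-2\expkthr$. In the eigenbasis of $\SigmaB$, $\mu_j(\mathbf{P}\SigmaB^{-1}\mathbf{P})=1/\mu_{M-j+1}(\SigmaB)$; Poincar\'e separation yields $\mu_k(\mathbf{P}\SigmaB_\wB\mathbf{P})\ge \mu_{k+O(\expkthr)}(\SigmaB_\wB)$; and the PSD rearrangement inequality $\tr(\mathbf M\mathbf N)\ge\sum_j \mu_j(\mathbf M)\mu_{n-j+1}(\mathbf N)$ pairs large eigenvalues of $\mathbf{P}\SigmaB^{-1}\mathbf{P}$ (i.e.\ small $\SigmaB$-index) with small eigenvalues of $\mathbf{P}\SigmaB_\wB\mathbf{P}$ (high $\SigmaB_\wB$-index). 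Reindexing and combining the three cumulative shifts -- projection loss, Poincar\'e, and opposite pairing in the rearrangement -- produces exactly $\sum_{i=2\expkthr+1}^M \mu_{3i}(\SigmaB_\wB)/\mu_i(\SigmaB)$; the multiplicative $3i$ factor arises because the opposite pairing places $\SigmaB_\wB$-indices of order $M-j+1$ against $\SigmaB$-indices of order $j$, which, after Poincar\'e and reindexing, yields a $3\times$ gap rather than an additive shift.

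The main obstacle will be the covariance-replacement step in the presence of the inverse $\SigmaB^{-1}$ and the precise index bookkeeping. Since PSD ordering is lost under conjugation by $\SigmaB^{-1/2}$, localization onto $\mathrm{range}(\widehat{\mathbf{P}})$ cannot be done by a blunt operator restriction; one must instead work with the Frobenius form and control the singular values of $\SigmaB^{1/2}\mathbf{Q}\SigmaB^{-1/2}$ through the ridge-scale concentration of $\SigmaB^{-1/2}\SigmaBhat\SigmaB^{-1/2}$ around $\IB$. Threading this through the Poincar\'e/rearrangement combination while keeping the index shifts sharp is the technical core of the argument.
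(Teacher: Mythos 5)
Your proposal starts with the same reduction as the paper (take the $\wB^*$-expectation to get a trace against $\wCov$), and your instincts about where to find the factor $\sum_{i>2\expkthr}\mu_{3i}(\SigmaB_\wB)/\mu_i(\SigmaB)$ are right: the multiplicative index shift does come from $\mu_{i+j+1}(XY)\leq\mu_{i+1}(X)\mu_{j+1}(Y)$. But there is a genuine gap in the middle of your argument, and it is exactly the step you flag as ``the technical core.''

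\textbf{The localization step fails.} You write the expected bias as $\E_{\XB}\|\SigmaB^{1/2}\mathbf{Q}\SigmaB^{-1}\SigmaB_\wB^{1/2}\|_F^2$ with $\mathbf{Q}=\prod_t(\IB-\gamma_t\SigmaBhat)$, and you want to localize onto $\mathrm{range}(\widehat{\mathbf{P}})$ using $\widehat{\mathbf{P}}\mathbf{Q}^2\widehat{\mathbf{P}}\succeq c_0^2\widehat{\mathbf{P}}$. But $\widehat{\mathbf{P}}$ sits \emph{between} $\SigmaB^{1/2}$ and $\mathbf{Q}$ in this expression, and ``Frobenius monotone under zeroing out components'' only applies when a projector sits at the outer edge. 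The inequality $\|A B\|_F\geq\|A\widehat{\mathbf{P}}B\|_F$ for a projector $\widehat{\mathbf{P}}$ in the middle is false in general (take $A=(1,1)$, $B=(1,-1)^\top$, $\widehat{\mathbf{P}}=e_1 e_1^\top$: then $\|AB\|_F=0<1=\|A\widehat{\mathbf{P}}B\|_F$). Commuting $\widehat{\mathbf{P}}$ through $\mathbf{Q}$ doesn't help, since $\widehat{\mathbf{P}}$ still does not commute with $\SigmaB^{1/2}$. Moreover, the subsequent ``transfer $\widehat{\mathbf{P}}$ to a spectral projector $\mathbf{P}$ of $\SigmaB$'' via Lemma~\ref{lm:ridge_cov_concen} is not available: that lemma controls $\|(\SigmaBhat+\thres\IB)^{-1/2}(\SigmaB+\thres\IB)^{1/2}\|_2$, which is a two-sided eigenvalue comparison and says nothing about alignment of eigenvectors; without an assumed eigenvalue gap there is no Davis--Kahan-type statement that would let you relate $\widehat{\mathbf{P}}$ to a spectral projector of $\SigmaB$.

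\textbf{What the paper does instead.} The paper avoids all pathwise projector manipulation with two steps you do not take. First, it applies the Jensen-type inequality $\E[\YB]\E[\YB]^\top\preceq\E[\YB\YB^\top]$ (with $\YB=\biasMaFullroot\SigmaB^{1/2}$) to replace the random $\mathbf{Q}\SigmaB\mathbf{Q}$ by the deterministic $\E_\XB[\mathbf{Q}]\,\SigmaB\,\E_\XB[\mathbf{Q}]$. Second — and this is the key missing idea — it proves claim~\eqref{eq:gd_bias_lower_pf_claim_1}, that $\E_\XB[\mathbf{Q}]$ \emph{commutes} with $\SigmaB$: in the eigenbasis of $\SigmaB$ the off-diagonal entries of each moment $\E_\ZB[\gamma_0(\cdot)^K(\UcalB^\top\UcalB)^K]$ vanish by the sign-symmetry of the Gaussian design (a given off-diagonal entry contains some Gaussian coordinate an odd number of times). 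This collapses the conjugation $\SigmaB^{-1/2}\E_\XB[\mathbf{Q}]\SigmaB\E_\XB[\mathbf{Q}]\SigmaB^{-1/2}$ to $\E_\XB[\mathbf{Q}]^2$, so that a lower bound on $\mu_{M-i+1}(\E_\XB[\mathbf{Q}])$ for $i>2\expkthr$ (proved as in your sketch via the $\stepeff\gamma_0$ thresholding plus Markov/$\tr\E[\widehat\UB_\kathr\widehat\UB_\kathr^\top]=\expkthr$) feeds directly into a Von Neumann trace bound. Your approach, by contrast, needs to handle the $\SigmaB^{\pm1/2}$ conjugations of a \emph{random} $\mathbf{Q}$ on a per-sample basis, which is the unresolved obstruction. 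If you adopt the two observations above, the rest of your eigenvalue/Poincar\'e bookkeeping lands on the paper's calculation.
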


\begin{proof}[Proof of Lemma~\ref{lm:gd_bias_lower}]
Similar to the proof of Lemma~\ref{lm:gd_bias_upper}, w.l.o.g., we assume  the covariance matrix $\HB=\diag\{\lambda_1,\lambda_2,\ldots\\,\lambda_d\}$ where $\lambda_i\geq\lambda_j$ for any $i\geq j$. Let $(\tilde\lambda_1,\tilde\lambda_2,\ldots,\tilde\lambda_M)$ denote the eigenvalues of $\SigmaB$ in non-increasing order. Moreover, we introduce $\zB_1,\ldots\zB_N\simiid \Ncal(0,\IB_M/N)$ and write $\ZB=(\zB_1,\ldots,\zB_N)^\top$. It can be shown that $\XB\SB^\top/\sqrt{N}\overset{d}{=} \ZB \SigmaB^{1/2}$ conditioned on $\SB$.
% Therefore, we can replace $\XB$ by $\ZB$ in $\SigmaBhat,$  i.e., we write $\SigmaBhat=\SigmaB^{1/2}\ZB^\top\ZB\SigmaB^{1/2}$.  

Let  $\biasMaFullroot:=\prod_{t=1}^\step (\IB - \gamma_t \SigmaBhat)$.
 By definition,
 \begin{align}
   \bias(\wB^*)
   &=
  \E_{\XB} \bigg\|\prod_{t=1}^\step \big(\IB - \gamma_t\SigmaBhat\big)  {\vB^*} \bigg\|^2_{\SigmaB}  
   =\vB^{*\top}\E_{\XB}[\biasMaFullroot\SigmaB\biasMaFullroot]\vB^*.
  \label{eq:bias_pf_eq1_lowerbound} 
   \end{align}
Adopt the shorthand $\SigmaB_\wB$ for $ \SB \HB\wCov\HB\SB^\top.$    Substituting the definition of ${\vB^*}$  into the expression and noting that $\E[\wB^{*}\wB^{*\top}]=\wCov$, we have
\begin{align}
   \E_{{\wB^*}}[\bias(\wB^*)]
   &=
\E_{{\wB^*}}[\vB^{*\top}\E_{\XB}[\biasMaFullroot\SigmaB\biasMaFullroot]\vB^*]\notag
\\  
%\E_{{\wB^*}} \la\E_{\XB}[\biasMaFullroot\SigmaB\biasMaFullroot],((\SB\HB\SB^\top)^{-1} \SB \HB {\wB^*})^{\otimes2}\ra\notag\\
   &= 
   \tr(\HB\SB ^\top(\SB\HB\SB^\top)^{-1} \E_{\XB}[\biasMaFullroot\SigmaB\biasMaFullroot](\SB\HB\SB^\top)^{-1} \SB \HB\wCov)\notag\\
    &\overset{(i)}{\geq} 
    \tr(\HB\SB ^\top(\SB\HB\SB^\top)^{-1} \E_{\XB}[\biasMaFullroot]\SigmaB\E_{\XB}[\biasMaFullroot^\top](\SB\HB\SB^\top)^{-1} \SB \HB\wCov)\notag\\
   &= 
   \E_{\XB} [\tr(\SigmaB^{-1/2} \E_{\XB}[\biasMaFullroot]\SigmaB\E_{\XB}[\biasMaFullroot^\top]\SigmaB^{-1/2}\SigmaB^{-1/2} \SigmaB_\wB\SigmaB^{-1/2})]\notag,
\end{align}
where step~(i) uses the fact that $\E[\YB]\E[\YB^\top]\preceq\E[\YB\YB^\top]$ for any random matrix $\YB.$
We claim that 
\begin{subequations}
\begin{align}
    \label{eq:gd_bias_lower_pf_claim_1}
   \SigmaB^{-1/2} \E_{\XB}[\biasMaFullroot]\SigmaB^{1/2} 
   &=
    \E_{\XB}[\biasMaFullroot], \text{~~~and}
   \\
   \label{eq:gd_bias_lower_pf_claim_2}
    \mu_{M-i+1}( \E_{\XB}[\biasMaFullroot])&\geq\frac{1}{2e}
\end{align} 
for all $ i\in [2\expkthr+1, M] $, where $\expkthr\defn\E_{\XB}[\#\{i\in[M]:\widehat \lambda_i\step \gamma_0> 1/4\}].$
\end{subequations}

The proof of these two claims will be given momentarily. Continuing the calculation using the claims and 
 Von Neumann's trace inequality,
 we obtain
 \begin{align*}
      \E_{{\wB^*}}\bias(\wB^*)
      &\geq
     \E_{\XB} [\tr( \E_{\XB}[\biasMaFullroot]^2\SigmaB^{-1/2} \SigmaB_\wB\SigmaB^{-1/2})]  \\
     &\geq 
      \sum_{i=1}^M \mu^2_{M-i+1}(\E_{\XB}[\biasMaFullroot])\cdot \mu_{i} (\SigmaB^{-1/2}\SigmaB_\wB\SigmaB^{-1/2})\\
      &\geq
       \sum_{i=2\expkthr+1}^M \mu^2_{M-i+1}(\E_{\XB}[\biasMaFullroot])\cdot \mu_{i} (\SigmaB^{-1/2}\SigmaB_\wB\SigmaB^{-1/2})
       \gtrsim
       \sum_{i=2\expkthr+1}^M \mu_{i} (\SigmaB^{-1/2}\SigmaB_\wB\SigmaB^{-1/2}).
 \end{align*}
Since $\mu_{i+j+1}(XY)\leq \mu_{i+1}(X)\mu_{j+1}(Y)$ for all $i,j$ and any matrices $X,Y$ of matching dimensions, we have
\begin{align*}
     \mu_{i} (\SigmaB^{-1/2}\SigmaB_\wB\SigmaB^{-1/2})\geq \frac{ \mu_{2i-1} (\SigmaB_\wB\SigmaB^{-1/2})}{\mu_i(\SigmaB^{1/2})}
     \geq
     \frac{ \mu_{3i-2}( \SigmaB_\wB)}{\mu^2_i(\SigmaB^{1/2})}\geq    \frac{ \mu_{3i} (\SigmaB_\wB)}{\mu_i(\SigmaB)}.
\end{align*}
Combining the last two displays yields the desired result.

\paragraph{Proof of claim~\eqref{eq:gd_bias_lower_pf_claim_1}.}
Define the learning rate 
$\gamma_0(\Gamma)
=
\min\{1/[4\max_j \|
\Gamma_{\cdot,j}
\|_2^2],\gamma\}
$ 
for any  matrix $\Gamma\in\R^{M\times N}$ and define $\gamma_t(\Gamma)$ for all $t\in[\step]$ according to~\eqref{eq:lr}.
Let 
$\SigmaB=\UB\tilde\GammaB\UB^\top$ be the singular value decomposition with $\UB\UB^\top=\IB_M$ and $\tilde\GammaB=\diag\{\tilde\lambda_1,\ldots,\tilde\lambda_M\}$ being a diagonal matrix with $\tilde\lambda_1\geq \tilde\lambda_2\geq\ldots \tilde\lambda_M\geq0$. Note that 
$\SB\XB^\top/\sqrt{N}\overset{d}{=}\UB\tilde\GammaB^{1/2}\ZB^\top$ 
conditioned on $\SB$ and 
$\UB^\top\SigmaBhat\UB
\overset{d}{=}
\tilde\GammaB^{1/2}\ZB^\top\ZB  \tilde\GammaB^{1/2} $.
Therefore
\begin{align*}
 \E_{\XB}[\biasMaFullroot] 
 &=
 \E_{\XB}\big[ \prod_{t=1}^\step(\IB-\gamma_t(\SB\XB^\top)\SigmaBhat ) \big]
  =\UB \E_{\XB}\big[\prod_{t=1}^\step (\IB-\gamma_t(\sqrt{N}\UB\tilde\GammaB^{1/2}\ZB^\top) \UB^\top\SigmaBhat \UB )\big]\UB^\top\\
  &=
  \UB \E_{\ZB}\big[ \prod_{t=1}^\step(\IB-\gamma_t(\sqrt{N}\tilde\GammaB^{1/2}\ZB^\top) \tilde\GammaB^{1/2}\ZB^\top\ZB \tilde\GammaB^{1/2} )\big]\UB^\top.
\end{align*}
 Adopt the shorthand notation $\UcalB=\ZB\tilde\GammaB^{1/2}$ and write $\UcalB=(\muB_1,\ldots,\muB_N)^\top$. 
It suffices to show (note that $\gamma_k$ is equal to $\gamma_0$ up to some $k$-dependent constant factor)
\begin{align*}
\momentterm\defn\E_{\ZB}[ \gamma_0(\sqrt{N}\UcalB^\top)^{K}\cdot(\UcalB^\top \UcalB)^{K}]
\end{align*} is a diagonal matrix  for any $K\geq 0.$
Consider the $kl$-entry $\momentterm_{kl}$. It can be written as the sum of terms of the form $\mu_{i_1,j_1}\mu_{i_2,j_2}\cdots \mu_{i_{2K},j_{2K}}$ with $j_1=k,j_{2K}=l,j_{2m}=j_{2m+1},m\in[K-1]$. When $k\neq l$, there exists some $i\in[N]$ such that $\mu_{i,k}$ appears odd number of times in the product. Since flipping the sign of $\mu_{i,k}$ does not change $\gamma_0(\sqrt{N}\UcalB^\top)$, and $\mu_{i,j}$ are independent symmetric Gaussian variables, it follows that $\E_{\ZB}[\gamma_0(\sqrt{N}\UcalB^\top)^{K}\mu_{i_1,j_1}\mu_{i_2,j_2}\cdots \mu_{i_{2K},j_{2K}}]=0$. Consequently, we conclude that $\momentterm_{kl}=0$ for $k\neq l$ and $\momentterm$ is a diagonal matrix.

\paragraph{Proof of claim~\eqref{eq:bias_claim_2}.}
Let $\SigmaBhat=\widehat\UB\widehat\GammaB\widehat\UB^\top$ be the singular value decomposition with $\widehat\GammaB=\diag\{\widehat \lambda_1,\ldots,\widehat \lambda_M\}$ and $\widehat \lambda_1\geq\ldots\geq \widehat \lambda_M$. Then we have 
\begin{align*}
\prod_{t=1}^\step(\IB-\gamma_t\SigmaBhat )
\succeq
(\IB-2\gamma_0\SigmaBhat )^{\stepeff}
=
\widehat\UB(\IB-2\gamma_0\widehat\GammaB )^{\stepeff}\widehat\UB^\top
\succeq
\frac{ (\IB_{M} - \widehat\UB_\kathr\widehat\UB_\kathr^\top)}{e},
\end{align*}
where $\kthr\defn\#\{i\in[M]:\widehat \lambda_i\stepeff \gamma_0> 1/4\}$. Here, 
 the first inequality follows from $\prod_{k=0}^{\lfloor\log\stepeff\rfloor-1}(\IB-\gamma_{k\stepeff+1}\SigmaBhat ) \succeq  \IB- 2\gamma_0\SigmaBhat $ since $(1-t_1)(1-t_2)\geq 1-t_1-t_2$ for all $t_1,t_2\in[0,1]$; the second inequality uses $(1-x)^\stepeff\geq \exp(-2\stepeff x)\geq e^{-1} $ for $x\in[0,1/(2\stepeff)]$.
 Therefore, 
\begin{align*}
    \E_{\XB}[\biasMaFullroot] =
    \E_{\XB}(\IB-\gamma_0\SigmaBhat )^{\step}
    \succeq 
    \frac{1}{e}\IB_M-\frac{1}{e}\E[\widehat\UB_\kathr\widehat\UB_\kathr^\top].
\end{align*} Since $\tr(\E[\widehat\UB_\kathr\widehat\UB_\kathr^\top])=\E[\kthr]=\expkthr$, it follows that 
$\E[\widehat\UB_\kathr\widehat\UB_\kathr^\top]$ has at most $2\E[\kthr]$ eigenvalues greater than $1/2$. Since $X\succeq Y\succeq\zeroB_M$ implies $\mu_i(X) \geq \mu_i(Y)$ for all $i\in[M],X,Y\in\R^{M\times M}$ by Weyl's inequality,
it follows that
\begin{align*}
    \mu_{M-i+1}( \E_{\XB}[\biasMaFullroot])\geq\frac{1}{2e}
\end{align*} for all $i\geq 2\E_{\XB}[\#\{i\in[M]:\widehat \lambda_i\step \gamma_0> 1/4\}] +1 $.

\end{proof}

\subsection{Bias error under the source condition}\label{sec:bias_source_condition}

\begin{lemma}[Bias bounds under the source condition]\label{lm:bias_source_condition}
  Let Assumption~\ref{assump:simple} hold,   $a>b-1$, and assume $\stepeff\lesssim N^{a}/\gamma$.
  Under the  notation in Theorem~\ref{thm:scaling-law} and its proof,
there exist some $(a,b)$-dependent constants
$c,c'>0$ such that when $\gamma\leq c/\log N$, 
  \begin{align*}
  \E_{\wB^*}[\bias(\wB^*)]
  &\lesssim
  \max\{(\stepeff\gamma)^{(1-b)/a},M^{1-b}\},
  \\
  \E_{\wB^*}[\bias(\wB^*)]
  &\gtrsim
  (\stepeff\gamma)^{(1-b)/a}~~\text{when~~} (\stepeff\gamma)^{1/a}\leq M/c'.
  \end{align*}
with probability at least $1-\exp(-\Omega(M))$ over the randomness of $\SB$.
\end{lemma}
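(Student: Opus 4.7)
The plan is to combine the algorithm-level bounds in Lemma~\ref{lm:gd_bias_upper} and Lemma~\ref{lm:gd_bias_lower} with the explicit moment and eigenvalue estimates that the power-law Assumptions~\ref{assump:power-law}~and~\ref{assump:simple:param} make available.

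For the upper bound, I will apply Lemma~\ref{lm:gd_bias_upper} with the cutoff
$k^\star := \min\{\lfloor (\stepeff\gamma)^{1/a}\rfloor,\, \lfloor M/3\rfloor\}$, chosen to balance the head and tail contributions, and then take expectation over $\wB^*$. Under Assumption~\ref{assump:source-prime} the head moment satisfies
\begin{align*}
\E_{\wB^*}\|\wB^*_{0:k^\star}\|_2^2 = \sum_{i=1}^{k^\star} \lambda_i^{-1}\,\E[\lambda_i(\wB^*_i)^2] \eqsim \sum_{i=1}^{k^\star} i^{a-b} \eqsim (k^\star)^{a-b+1},
\end{align*}
where the last step uses $a > b-1$, and the tail moment is
\begin{align*}
\E_{\wB^*}\|\wB^*_{k^\star:\infty}\|_{\HB_{k^\star:\infty}}^2 = \sum_{i > k^\star}\E[\lambda_i(\wB^*_i)^2] \eqsim \sum_{i > k^\star} i^{-b} \eqsim (k^\star)^{1-b},
\end{align*}
using $b > 1$. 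The head eigenvalue ratio $[\mu_{M/2}(\AB_{k^\star})/\mu_M(\AB_{k^\star})]^2$ is $\bigO(1)$ with probability $1 - e^{-\Omega(M)}$ by standard concentration for the Gaussian sketch $\SB_{0:k^\star}$ (cf.\ Lemma~D.1 of~\citet{lin2024scaling}), and the tail factor $\biastailbound$ is also $\bigO(1)$ once I substitute the power-law tail estimates $\tr(\SigmaB_{\khalf:\infty}) \lesssim N^{1-a}$, $\|\SigmaB_{\khalf:\infty}\| \lesssim N^{-a}$, and $\tr(\SigmaB^2_{\khalf:\infty}) \lesssim N^{1-2a}$; the condition $\stepeff\gamma \lesssim N^a$ is what keeps the $(\stepeff\gamma)^2$ prefactor in $\biastailbound$ bounded. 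The head contribution is then $\lesssim (k^\star)^{a-b+1}/(\stepeff\gamma)$ and the tail is $\lesssim (k^\star)^{1-b}$; on both branches of the definition of $k^\star$ their sum collapses to $\max\{(\stepeff\gamma)^{(1-b)/a},\, M^{1-b}\}$, which is the claimed upper bound.

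For the lower bound I apply Lemma~\ref{lm:gd_bias_lower} in the regime $(\stepeff\gamma)^{1/a} \leq M/c'$. Assumption~\ref{assump:simple:param} makes $\wCov$ diagonal with entries $\eqsim i^{a-b}$, so $\HB\wCov\HB$ is diagonal with entries $\eqsim i^{-a-b}$. Sharp eigenvalue concentration for Gaussian sketches of power-law diagonal matrices yields $\mu_i(\SigmaB) \eqsim i^{-a}$ and $\mu_i(\SigmaB_\wB) \eqsim i^{-a-b}$ for $i \lesssim M$, both with probability $1 - e^{-\Omega(M)}$ over $\SB$. An analogous concentration argument for $\SigmaBhat$, together with the fact that $\gamma_0 \eqsim \gamma$ with high probability (Assumption~\ref{assump:stepsize_condition}), gives $\expkthr \eqsim (\stepeff\gamma)^{1/a}$. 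Substituting these into Lemma~\ref{lm:gd_bias_lower},
\begin{align*}
\E_{\wB^*}[\bias(\wB^*)] \gtrsim \sum_{i = 2\expkthr+1}^{M}\frac{\mu_{3i}(\SigmaB_\wB)}{\mu_i(\SigmaB)} \eqsim \sum_{i \eqsim (\stepeff\gamma)^{1/a}}^{M} i^{-b} \eqsim (\stepeff\gamma)^{(1-b)/a},
\end{align*}
using $b > 1$ and the hypothesis $(\stepeff\gamma)^{1/a} \leq M/c'$ to keep the sum non-degenerate.

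The main obstacle I anticipate is controlling $\biastailbound$ uniformly across the entire range $\stepeff \lesssim N^a/\gamma$: this factor is a rational expression in several traces and operator norms of $\SigmaB_{\khalf:\infty}$, and the potentially large $(\stepeff\gamma)^2$ prefactor must be absorbed by the power-law decay of the tail eigenvalues of $\SigmaB$. The other ingredients---the $\wB^*$ moment calculations, the sketched eigenvalue concentration, and the $\expkthr$ estimate---are standard power-law manipulations that follow the template of~\citet{lin2024scaling}.
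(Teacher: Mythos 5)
Your proposal follows the same route as the paper: upper bound via Lemma~\ref{lm:gd_bias_upper} with the cutoff $k^\star = \min\{(\stepeff\gamma)^{1/a}, M/3\}$, computing the head and tail $\wB^*$-moments under the source condition and showing $\biastailbound \lesssim 1$; lower bound via Lemma~\ref{lm:gd_bias_lower}, with $\expkthr \lesssim (\stepeff\gamma)^{1/a}$ and the power-law bound on $\mu_{3i}(\SigmaB_\wB)/\mu_i(\SigmaB)$. Two small points: (i) the paper's proof begins by explicitly verifying that conditions (1)--(4) of Assumption~\ref{assump:stepsize_condition} hold under Assumption~\ref{assump:simple} (in particular condition (4), $\Pbb(\gamma_0 < \gamma/t) \leq N^{-ct}$, via a Hanson--Wright argument, and condition (3), $\sum_i \mu_i(\SigmaB)/(\mu_i(\SigmaB)+1/(\stepeff\gamma)) \leq N/4$, which relies on $\stepeff \lesssim N^a/\gamma$); you invoke the assumption but do not verify it, and both Lemma~\ref{lm:gd_bias_upper} and Lemma~\ref{lm:gd_bias_lower} genuinely depend on it. (ii) The condition-number ratio $\mu_{M/2}(\AB_{k^\star})/\mu_M(\AB_{k^\star})$ concerns the \emph{tail} sketch $\AB_{k^\star} = \SB_{k^\star:\infty}\HB_{k^\star:\infty}\SB_{k^\star:\infty}^\top$, not the head $\SB_{0:k^\star}$ as you write; the correct reference is Lemma~\ref{lm:power_condition_num}. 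Neither issue changes the structure of the argument, so the proposal is essentially the paper's proof.
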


\begin{proof}[Proof of Lemma~\ref{lm:bias_source_condition}]\label{sec:pf_bias_source_condition}
The proof follows from applying Lemma~\ref{lm:gd_bias_upper} and Lemma~\ref{lm:gd_bias_lower} under Assumption~\ref{assump:simple}.
We begin by verifying the conditions required in these two lemmas.

\paragraph{Verification of  conditions~(1)--(4) in Assumption~\ref{assump:stepsize_condition}.}
First, by Lemma~\ref{lm:power_decay_1}, we have $\mu_j(\SigmaB)\eqsim j^{-a}$ with probability at least $1-\exp(-\Omega(M))$ over the randomness of $\SB$. Since $a>1$, it follows that 
$\gamma\lesssim 1 \lesssim \min\{1,c/\tr(\SigmaB)\}$ and $\tr(\SigmaB^2)\lesssim1$. Thus, conditions~(1)~and~(2) in Assumption~\ref{assump:stepsize_condition} are satisfied. Moreover, when $\step\lesssim N^{a}$, we have
\begin{align*}
  \sum_{i=1}^M \frac{\mu_i(\SigmaB)}{\mu_i(\SigmaB)+1/(\stepeff\gamma)}
  &\leq
  \#\{i\in[M]:\mu_i(\SigmaB)\geq 1/(\stepeff\gamma)\}
  +
  (\stepeff\gamma)\cdot\sum_{i:\mu_i(\SigmaB)< 1/(\stepeff\gamma)} {\mu_i(\SigmaB)}
  \\
&\lesssim
(\stepeff\gamma)^{1/a}+ (\stepeff\gamma)\cdot \sum_{i:i \gtrsim (\stepeff\gamma)^{1/a}} {\mu_i(\SigmaB)}\lesssim (\stepeff\gamma)^{1/a}
+
(\stepeff\gamma)\cdot \sum_{i:i \gtrsim (\stepeff\gamma)^{1/a}} {i^{-a}} \\
&\lesssim
(\stepeff\gamma)^{1/a} \leq N/4,
\end{align*}
where the last inequality follows since we may assume $\stepeff\leq \tilde c N^a/\gamma$ for some constant $\tilde c>0$ sufficiently small.
 Thus, condition~(3) in Assumption~\ref{assump:stepsize_condition} is satisfied.
 %Let $(\tilde\lambda_1,\tilde\lambda_2,\ldots,\tilde\lambda_M)$ denote the eigenvalues of $\SigmaB$ in non-increasing order.

To verify condition~(4) in Assumption~\ref{assump:stepsize_condition}, we 
 introduce $\zB_1,\ldots\zB_N\simiid \Ncal(0,\IB_M/N)$ and write $\ZB=(\zB_1,\ldots,\zB_N)^\top$. It can be shown that $\XB\SB^\top/\sqrt{N}\overset{d}{=} \ZB \SigmaB^{1/2}$ conditioned on $\SB$. 
 Therefore, we 
 have 
 $\tr(\SigmaBhat) \overset{d}{=} \tr(\ZB\SigmaB\ZB^\top),
$
where ${\ZB}\in\R^{N\times M}$ is a Gaussian sketching matrix. 
When $\mu_i(\SigmaB)\eqsim i^{-a}$ for all $i\in[M]$ (which happens with probability at least $1-\exp(-\Omega(M))$ over  $\SB$ by Lemma~\ref{lm:power_decay_1}),
 we have by 
Hansen-Wright inequality (see e.g., exercise~2.17 in~\citet{wainwright2019high}) and a union bound that
\begin{align*}
N \max_{i\in[N]}\ZB_{i,\cdot}\SigmaB\ZB_{i,\cdot}^\top
\lesssim \tr(\SigmaB)
+ \|\SigmaB\|\cdot{\log(N/\delta)}
+\|\SigmaB\|_F\cdot{\sqrt{\log(N/\delta)}}
\lesssim 1+\log(N/\delta).
\end{align*}
with probability at least $1-\delta$ over the randomness of $\widebar{\ZB}$. Thus, there exist some $a$-dependent constants $c,c'>0$ such that when $\gamma\leq c/\log N$, 
\begin{align*}
  \Pbb(\gamma_0< \gamma/t)=
\Pbb(t/(4\gamma)<\max_{i}\|\SB\xB_i\|_2^2)
=
\Pbb(t/(4\gamma)<N\max_i \ZB_{i,\cdot}\SigmaB\ZB_{i,\cdot}^\top)
\leq N^{-c't}
\end{align*}
for all $t\geq1$. Therefore, condition~(4) is also satisfied.

\paragraph{The upper bound.} 
By Lemma~\ref{lm:power_decay_1}, we have $\biastailbound$ in Lemma~\ref{lm:gd_bias_upper} satisfies
\begin{align*}
  \biastailbound\leq 
  c\cdot 
  \big(1+ \stepeff^2\cdot ( N^{-2a} + N^{-2a}+N^{-2a}+ N^{-2a})\big)\leq 
c\cdot (1+ N^{2a-2a}) \lesssim 1
\end{align*}
with probability at least $1-\exp(-\Omega(M))$, where the second inequality uses $\stepeff\lesssim N^{a}$. Moreover, we have by Lemma~\ref{lm:power_condition_num} that 
$\frac{\mu_{M/2}(\AB_k)}{\mu_M(\AB_k)}\lesssim 1$ with probability at least $1-\exp(-\Omega(M))$.

Now, choosing $k = \min\{M/3,(\stepeff\gamma)^{1/a}\}$ in Lemma~\ref{lm:gd_bias_upper}, using Assumption~\ref{assump:simple:param}, and taking expectation over $\wB^*$ yields
\begin{align*}
  \E_{\wB^*}[\bias(\wB^*)]
  &\lesssim
  \frac{\max\{1,k^{a-b+1}\}}{\stepeff\gamma} + k^{1-b}
  \lesssim 
  \max\{(\stepeff\gamma)^{(1-b)/a},M^{1-b}\} 
\end{align*}
with probability at least $1-\exp(-\Omega(M))$ over the randomness of $\SB$.

\paragraph{The lower bound.}
By Lemma~\ref{lm:gd_bias_lower}, we have
\begin{align*}
  \E_{{\wB^*}} \bias(\wB^*)
  &\gtrsim
  \sum_{i=2\expkthr+1}^M \frac{ \mu_{3i} (\SigmaB_\wB)}{\mu_i(\SigmaB)},
\end{align*}
with probability at least $1-e^{-\Omega(M)}$, where 
$\expkthr=\E_{\XB}[\#\{i\in[M]:\widehat \lambda_i\stepeff \gamma_0> 1/4\}]$ and $\{\widehat\lambda_i,i\in[M]\}$ are the eigenvalues of $\SigmaBhat.$
Since $\SigmaBhat\overset{d}{=} \ZB\SigmaB\ZB^\top$ conditioned on $\SB$ by Lemma~\ref{lemma:sketch:tail}, when $\mu_i(\SigmaB)\eqsim i^{-a}$ for all $i\in[M]$ (which happens with probability at least $1-\exp(-\Omega(M))$ over  $\SB$ by Lemma~\ref{lm:power_decay_1}),
we have by combining Lemma~\ref{lemma:sketch:head}~and~\ref{lemma:sketch:tail} with $k= N/c$ that
\begin{align}
  \widehat\lambda_{2j-1}
  &\overset{d}{=}\mu_{2j-1}(\ZB\SigmaB\ZB^\top)
  \notag\\
  &\leq \mu_{j}(\ZB_\ka\SigmaB_\ka\ZB_\ka^\top) + \mu_{j}(\ZB_\kb\SigmaB_\kb\ZB_\kb^\top)
  \notag\\
  &\lesssim  
  \Big(1+\sqrt{\frac{k+\log(1/\delta)}{N}}\Big)
  \cdot
  \mu_j(\SigmaB)
  +
\Big(
  N^{-a}
+\frac{N+\log(1/\delta)}{N^{a+1}} + \frac{\sqrt{N^{2-2a}+N^{1-2a}\log(1/\delta)}}{N}
\Big)
\notag\\
&\lesssim
\Big(1
+
\sqrt{\frac{\log(1/\delta)}{N}}\Big)
\cdot
j^{-a}
+
N^{-a}\Big(1+\frac{\log(1/\delta)}{N}+\sqrt{\frac{\log(1/\delta)}{N}}\Big)\label{eq:bias_eigen_concen_1}
\end{align}
for all $j\leq k$ with probability at least $1-\delta$ over the randomness of $\ZB$.
Therefore, it can be verified by a standard truncation argument that 
\begin{align*}
 \expkthr
&=
\E_{\XB}[\#\{i\in[M]:\widehat \lambda_i\stepeff \gamma_0> 1/4\}]\lesssim (\stepeff\gamma)^{1/a}.
\end{align*}
Thus, when $(\stepeff\gamma_0)^{1/a}\leq M/c$ for some sufficiently large constant $c>0$, we have
\begin{align*}
  \E_{{\wB^*}} \bias(\wB^*)
  &\gtrsim
  \sum_{i=2\expkthr+1}^M \frac{ \mu_{3i} (\SigmaB_\wB)}{\mu_i(\SigmaB)}
  \gtrsim
  \sum_{i=c'(\stepeff\gamma)^{1/a}}^M \frac{ \mu_{3i} (\SigmaB_\wB)}{\mu_i(\SigmaB)}
  \gtrsim
  \sum_{i=c'(\stepeff\gamma)^{1/a}}^M  \frac{i^{-a-b}}{i^{-a}}
  \gtrsim (\stepeff\gamma)^{(1-b)/a}
\end{align*}
with probability at least $1-\exp(-\Omega(M))$ over the randomness of $\SB$, where the third inequality uses Lemma~\ref{lm:power_decay_1} (with $\HB$ replaced by $\HB\wCov\HB$).

\end{proof}

\newpage

\section{Variance error}
\subsection{Upper and lower bounds}
\begin{lemma}[An upper bound on the GD variance term]\label{lm:gd_var_upper}
  Suppose Assumption~\ref{assump:simple:data}~and~\ref{assump:stepsize_condition} hold and $\stepeff\lesssim N^{a}/\gamma$.
Under the notation in Theorem~\ref{thm:scaling-law} and its proof,
% assume that with probability at least $1-\exp(-\Omega(M))$ over the randomness of $\SB$, we have (1) $\gamma\leq \min\{c/\log N,c/[\tr(\SigmaB)]\}$; (2) $\tr(\SigmaB^2)\lesssim1$;
% (3)  $\sum_{i=1}^M \frac{\mu_i(\SigmaB)}{\mu_i(\SigmaB)+1/(\stepeff\gamma)}\leq N/4$;
% (4) the initial step size  $\gamma_0= \min\{1/[4\max_i \|\SB\xB_i\|_2^2], \gamma\}$
% satisfies $\Pbb(\gamma_0< \gamma/t)\leq N^{-ct}$ for all $t\geq1$. 
the variance term
\begin{align*}
  \variance
& \defn
\E[\tr(\XB\SB^\top\varterm(\SigmaBhat)\SigmaB\varterm(\SigmaBhat)\SB\XB^\top)]
\lesssim 
\frac{\varshortupper}{N}, ~\text{~~and~~~}
 \variance
\gtrsim
\frac{\varshortlower}{N},
\end{align*}
where 
\begin{align*}
    \varshortupper &\defn   \E_\XB\Big[   \#\{i\in[M]:\widehat\lambda_i\stepeff\gamma_0>1/4\}
    +
    (\stepeff\gamma_0)\sum_{i:\widehat\lambda_i\stepeff\gamma_0\leq 1/4} \widehat\lambda_i\Big],
    \\
    \varshortlower
    &\defn
    \E_{\XB}\Bigg[(\stepeff\gamma_0)^2\cdot\sum_{i:\widehat\lambda_i  \stepeff\gamma_0 \leq 1/4 }\mu_{i}(\SigmaB)\cdot\mu_{i}(\SigmaBhat)
    +
    \frac{1}{5}\cdot\sum_{i:\widehat\lambda_i  \stepeff\gamma_0 > 1/4 }\frac{\mu_{i}(\SigmaB)}{\mu_{i}(\SigmaBhat)}\Bigg]
  ,
\end{align*}
and $(\widehat\lambda_i)_{i=1}^M$ are the eigenvalues of $\SigmaBhat$.
\end{lemma}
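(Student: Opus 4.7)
The plan is to reduce $\variance$ to a single trace involving only $\SigmaBhat$ and $\SigmaB$, derive pointwise bounds on the resulting spectral filter in two regimes, and then apply the covariance replacement lemma (Lemma~\ref{lm:ridge_cov_concen}) in a manner analogous to the bias proof in Lemma~\ref{lm:gd_bias_upper}. First, since $\varterm(\SigmaBhat)$ is a matrix function of $\SigmaBhat$, the two commute. Using $\SB\XB^\top\XB\SB^\top = N\SigmaBhat$ and the cyclic property of the trace, I would rewrite
\[
\variance \;=\; N\cdot\E_{\XB}\bigl[\tr\bigl(h(\SigmaBhat)\,\SigmaB\bigr)\bigr],\qquad h(x)\;:=\;\frac{\bigl[1-\prod_{t=1}^{\step}(1-\gamma_t x)\bigr]^2}{N^2\,x}.
\]
Next, using the geometric-decay schedule (so that $\sum_{t=1}^{\step}\gamma_t\eqsim \stepeff\gamma_0$) together with the elementary inequalities $1-(1-\alpha)^n\leq n\alpha$ and $1-(1-\alpha)^n\geq 1-e^{-n\alpha}$, I would establish the pointwise bounds (a) $h(x)\eqsim 1/(N^2 x)$ when $x\stepeff\gamma_0>1/4$, and (b) $h(x)\eqsim (\stepeff\gamma_0)^2 x/N^2$ when $x\stepeff\gamma_0\leq 1/4$.

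For the upper bound, I would apply Lemma~\ref{lm:ridge_cov_concen} with ridge parameter $\thres:=1/(\stepeff\gamma_0)$ to obtain $\SigmaB\preceq c_0(\SigmaBhat+\thres\,\IB)$ on a high-probability event over $\XB$. This gives
\[
\tr(h(\SigmaBhat)\SigmaB)\;\leq\; c_0\tr(h(\SigmaBhat)\SigmaBhat)+c_0\thres\tr(h(\SigmaBhat)),
\]
and each of the two terms splits according to whether $\widehat\lambda_i\stepeff\gamma_0>1/4$ or not. Plugging in the pointwise bounds on $h$ in each regime, I obtain a contribution of $\#\{i:\widehat\lambda_i\stepeff\gamma_0>1/4\}/N^2$ from the large-eigenvalue part and $(\stepeff\gamma_0)\sum_{i:\widehat\lambda_i\stepeff\gamma_0\leq 1/4}\widehat\lambda_i/N^2$ from the small-eigenvalue part, matching $\varshortupper/N^2$ after taking expectation and multiplying by $N$.

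For the lower bound, I would expand in the eigenbasis $\{\widehat\uB_i\}$ of $\SigmaBhat$,
\[
\tr(h(\SigmaBhat)\SigmaB)\;=\;\sum_i h(\widehat\lambda_i)\,\widehat\uB_i^\top\SigmaB\widehat\uB_i,
\]
split the sum by the threshold, apply the lower pointwise bounds on $h$, and then relate each $\widehat\uB_i^\top\SigmaB\widehat\uB_i$ to $\mu_i(\SigmaB)$ with sorted-index pairing. \textbf{Main obstacle.} The matched-pairing step in the lower bound is the main technical challenge: because $h$ is non-monotone in $\widehat\lambda_i$ (decreasing on the large regime, increasing on the small regime), a direct application of Von Neumann's trace inequality does not yield the correct pairing $\mu_i(\SigmaB)\,\mu_i(\SigmaBhat)$ versus $\mu_i(\SigmaB)/\mu_i(\SigmaBhat)$. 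To address this, I would split $h$ into its two monotone pieces along the threshold, apply the ridge concentration of Lemma~\ref{lm:ridge_cov_concen} on each piece separately (with a different choice of $\thres$ on each, namely $1/(\stepeff\gamma_0)$ for the large part and $\mu_{k+1}(\SigmaBhat)$-type thresholds for the small part), and thereby recover the pairings appearing in $\varshortlower$ from a rearrangement argument carried out within each monotonic regime.
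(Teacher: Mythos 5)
Your rewrite of $\variance$ as $N\,\E_{\XB}[\tr(h(\SigmaBhat)\SigmaB)]$ with $h(x)=[1-\prod_t(1-\gamma_t x)]^2/(N^2 x)$ is exactly the paper's starting point (they write it as $\frac1N\E_{\XB}\tr(\SigmaB\varMaFullroot\SigmaBhat^{-1}\varMaFullroot)$, the same thing), and your upper-bound argument is essentially theirs: replace $\SigmaB$ by a multiple of $\SigmaBhat+\thres\IB$ via Lemma~\ref{lm:ridge_cov_concen} with $\thres=1/(\stepeff\gamma_0)$, split by the threshold $\widehat\lambda_i\stepeff\gamma_0\lessgtr 1/4$, and collect terms. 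The one missing bookkeeping step is that the event in Lemma~\ref{lm:ridge_cov_concen} is only high-probability over $\XB$, so after you bound $\tr(\cdot)$ pointwise you still need to control the tail of $\E_{\XB}$; the paper does this by combining the expectation version of Lemma~\ref{lm:ridge_cov_concen} with the deterministic bound $\tr(\varMaFullroot^2+\thres\SigmaBhat^{-1}\varMaFullroot^2)\lesssim N$ (since $\varMaFullroot$ has at most $N$ nonzero eigenvalues). This is a minor omission.

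The lower bound is where the two arguments genuinely diverge, and where your sketch has a real gap. You correctly identify the obstacle---$h$ is non-monotone, so naive eigenvalue pairing is ambiguous---but the fix you propose does not resolve it. Even after restricting $h$ to one monotone piece (say the increasing piece on $\{\widehat\lambda_i\stepeff\gamma_0\le 1/4\}$ with spectral projector $\PB_\perp$), Von Neumann's \emph{lower} bound on $\tr(h(\SigmaBhat)\PB_\perp\SigmaB\PB_\perp)$ pairs the $j$-th largest eigenvalue of $h(\SigmaBhat)|_{\PB_\perp}$, which is $h(\widehat\lambda_{k^*+j})$, with the $j$-th \emph{smallest} eigenvalue of $\PB_\perp\SigmaB\PB_\perp$---i.e.\ with $\mu_{M-j+1}(\SigmaB)$ by interlacing, not $\mu_{k^*+j}(\SigmaB)$. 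So monotonicity alone does not produce the same-index pairing $\mu_i(\SigmaB)\mu_i(\SigmaBhat)$ and $\mu_i(\SigmaB)/\mu_i(\SigmaBhat)$ in $\varshortlower$. Moreover, Lemma~\ref{lm:ridge_cov_concen} gives one-sided domination $\SigmaB+\thres\IB\preceq c(\SigmaBhat+\thres\IB)$, which is the wrong direction for a lower bound on $\widehat\uB_i^\top\SigmaB\widehat\uB_i$; your ``$\mu_{k+1}(\SigmaBhat)$-type thresholds'' and the ``rearrangement argument'' are not spelled out enough to see how they supply the missing inequality.

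What the paper actually does for the lower bound is algebraic, not concentration-based: starting from the Von Neumann lower bound $\tr(\SigmaB\varMaFullroot^2\SigmaBhat^{-1})\ge\sum_i\mu_i(\SigmaB)\,\mu_{M-i+1}(\varMaFullroot^2\SigmaBhat^{-1})$, it applies the product inequality $\mu_{i+j+1}(XY)\le\mu_{i+1}(X)\mu_{j+1}(Y)$ to write $\mu_{M-i+1}(\varMaFullroot^2\SigmaBhat^{-1})\ge\mu_{2(M-i)+1}(\varMaFullroot^2\SigmaBhat^{-2})/\mu_{M-i+1}(\SigmaBhat^{-1})=\mu_i(\SigmaBhat)\,\mu_{2(M-i)+1}(\varMaFullroot^2\SigmaBhat^{-2})$, turning the opposite-index pairing from Von Neumann into a same-index pairing while replacing the non-monotone filter $h$ by the monotone-decreasing filter $f(x)=g(x)^2/x^2$. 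That Horn-type step is the key idea you are missing: neither ridge concentration nor a per-piece monotonicity argument substitutes for it, so your lower-bound sketch as written would not close.
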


\begin{proof}[Proof of Lemma~\ref{lm:gd_var_upper}]
Note that 
\begin{align*}
      \varterm(\SigmaBhat) 
     =
    \frac{1}{N} \sum_{t=1}^\step \gamma_t\cdot\prod_{i=t+1}^\step (\IB-\gamma_i\SigmaBhat)
    =\frac{(\IB-\prod_{t=1}^\step (\IB-\gamma_t\SigmaBhat))\SigmaBhat^{-1}}{N}.
\end{align*}
Adopt the shorthand $\varMaFullroot$ for $\IB- \prod_{t=1}^\step (\IB-\gamma_t\SigmaBhat)$.
   Reorganizing the terms, we have
    \begin{align*}
  \variance
  &=
N\cdot\E[\tr(\varterm(\SigmaBhat)\SigmaB\varterm(\SigmaBhat)\SigmaBhat)]
=
  \frac{1}{N}\cdot
  \E_{\XB}[\tr(\SigmaB\varMaFullroot\SigmaBhat^{-1}\varMaFullroot) ].
    \end{align*}
    Let $\widehat\lambda_1,\ldots,\widehat\lambda_M$ be the eigenvalues of $\SigmaBhat$ in non-increasing order,
   and let $\lambda>0$ be some  value which will be given later. We now derive an upper bound and a lower bound for the variance $\variance.$

\paragraph{An upper bound.}
   Comtinuiting the calclulation, we further have
    \begin{align*}
        \tr(\SigmaB\varMaFullroot\SigmaBhat^{-1}\varMaFullroot)
        &= 
         \tr(\varMaFullroot\SigmaBhat^{-1/2}(\SigmaBhat+\lambda\IB)^{1/2}[(\SigmaBhat+\lambda\IB)^{-1/2}\SigmaB(\SigmaBhat+\lambda\IB)^{-1/2}](\SigmaBhat+\lambda\IB)^{1/2}\SigmaBhat^{-1/2}\varMaFullroot)\\
         &\leq \|\SigmaB^{1/2}(\SigmaBhat+\lambda\IB)^{-1/2}\|^2
         \cdot 
         [\tr(\varMaFullroot^2+\lambda\SigmaBhat^{-1}\varMaFullroot^2)].
    \end{align*}
  
 Similar to the proof of claim~\eqref{eq:bias_claim_2} in Lemma~\ref{lm:gd_bias_lower}, it can be verified that
    $\varMaFullroot\preceq \IB-(\IB-2\gamma_0\SigmaBhat)^\stepeff$
under the condition $\gamma_0\leq 1/[4\max_i \|\SB\xB_i\|_2^2]\leq 1/[4\tr(\SigmaBhat)]$  and stepsize assumption~\eqref{eq:lr}.
    Since $(1-(1-\gamma_0 x)^\stepeff)^2\leq \min\{(x\stepeff\gamma_0)^2,1 \}$ for $x\in[0,1/(2\gamma_0)]$ by Bernoulli's inequality and  $\sup_{[0,1/\gamma_0]}[(1-(1-\gamma_0x)^\stepeff)/x]=\stepeff\gamma_0,$
 it follows that
    \begin{align*}
    \tr(\varMaFullroot^2+\lambda\SigmaBhat^{-1}\varMaFullroot^2) 
    &\leq
    \sum_{i=1}^M
\Big[{((1-(1-2\gamma_0\widehat\lambda_i)^\stepeff)^2}
+\frac{\lambda (1-(1-2\gamma_0\widehat\lambda_i)^\stepeff)^2 }{\widehat\lambda_i}
\Big]\\
&\lesssim
\sum_{i=1}^M
\Big[
(1+\lambda\stepeff\gamma_0)\cdot\bone_{\{\widehat\lambda_i\stepeff\gamma_0>1/4\}}
+(\lambda\stepeff\gamma_0 (\widehat\lambda_i\stepeff\gamma_0)+ (\widehat\lambda_i\stepeff\gamma_0)^2)\cdot\bone_{\{\widehat\lambda_i\stepeff\gamma_0\leq 1/4\}}
\Big].
    \end{align*} 
  Choosing $\lambda=1/(\stepeff\gamma)\leq 1/(\stepeff \gamma_0)$ yields
    \begin{align}
&\quad \tr(\varMaFullroot^2+\lambda\SigmaBhat^{-1}\varMaFullroot^2) \notag\\
&\lesssim
    \#\{i\in[M]:\widehat\lambda_i\stepeff\gamma_0>1/4\}
    + (\stepeff\gamma_0)^2\sum_{i:\widehat\lambda_i\stepeff\gamma_0\leq 1/4} \widehat\lambda_i^2
    +
(\stepeff\gamma_0)\sum_{i:\widehat\lambda_i\stepeff\gamma_0\leq 1/4} \widehat\lambda_i\revdef \varshortupperr\notag\\
&\lesssim
\#\{i\in[M]:\widehat\lambda_i\stepeff\gamma_0>1/4\}
+
(\stepeff\gamma_0)\sum_{i:\widehat\lambda_i\stepeff\gamma_0\leq 1/4} \widehat\lambda_i\revdef \varshortupperr
    \label{eq:var_upper_bound_pf_1}
    \end{align}
Applying Lemma~\ref{lm:ridge_cov_concen} and noting that $  \tr(\varMaFullroot^2+\lambda\SigmaBhat^{-1}\varMaFullroot^2) \lesssim N$ as $\varMaFullroot$ has at most $N$ non-zero eigenvalues, we obtain
\begin{align*}
    \E_{\XB}[\tr(\SigmaB\varMaFullroot\SigmaBhat^{-1}\varMaFullroot) ]
    \lesssim 
    \E_\XB\Big[   \#\{i\in[M]:\widehat\lambda_i\stepeff\gamma_0>1/4\}
    +
    (\stepeff\gamma_0)\sum_{i:\widehat\lambda_i\stepeff\gamma_0\leq 1/4} \widehat\lambda_i\Big].
\end{align*}

\paragraph{A lower bound.}
Similarly, by Von-Neumann's trace inequality, we have
\begin{align*}
  \tr(\SigmaB\varMaFullroot\SigmaBhat^{-1}\varMaFullroot)
    &\geq \sum_{i=1}^M \mu_{i}(\SigmaB)\mu_{M-i+1}(\varMaFullroot^2\SigmaBhat^{-1})
    \overset{(i)}{\geq}
     \sum_{i=1}^M \mu_{i}(\SigmaB)\frac{\mu_{2(M-i)+1}(\varMaFullroot^2\SigmaBhat^{-2})}{\mu_{M-i+1}(\SigmaBhat^{-1})}\\
     & \overset{(ii)}{=}
       \sum_{i=1}^M \mu_{i}(\SigmaB)\cdot\mu_{i}(\SigmaBhat)\cdot \mu_{2(M-i)+1}(\varMaFullroot^2\SigmaBhat^{-2}),
    %    \\ 
    % \geq
    %  \sum_{i:\widehat\lambda_i\step\gamma\leq 1/4} \mu_{i}(\SigmaB)\cdot\mu_{M-i+1}(\varMaFullroot^2\SigmaBhat^{-1})
\end{align*} 
where step~(i) uses $\mu_{i+j+1}(XY)\leq \mu_{i+1}(X)\mu_{j+1}(Y) $ for any matrices $X,Y$, and step~(ii) uses the fact that $\mu_{i}(\SigmaBhat)=1/\mu_{M-i+1}(\SigmaBhat^{-1}).$ 

Note that $\varMaFullroot\succeq \IB-(\IB-\gamma_0\SigmaBhat)^\stepeff$.
Since 
$f(x)\defn(1-(1-\gamma_0x)^{\stepeff})^2/x^2$ is a decreasing function on $[0,1/\gamma_0]$ and (1) $f(x)\geq (\stepeff\gamma_0)^2/4 $ when $\stepeff\gamma_0 x \leq 1/4$; (2) $f(x)\geq 1/(5x^2)$ when $\stepeff\gamma_0 x \geq 1/4$, 
we have
\begin{align*}
    \tr(\SigmaB\varMaFullroot\SigmaBhat^{-1}\varMaFullroot)
    &
    \geq 
      \frac{(\stepeff\gamma_0)^2}{4}\cdot\sum_{i:\widehat\lambda_i  \stepeff\gamma_0 \leq 1/4 }\mu_{i}(\SigmaB)\cdot\mu_{i}(\SigmaBhat)
      +
      \frac{1}{5}\cdot\sum_{i:\widehat\lambda_i  \stepeff\gamma_0 > 1/4 }\frac{\mu_{i}(\SigmaB)}{\mu_{i}(\SigmaBhat)}.
\end{align*} Taking expectation over $\XB$ yields the desired result.

\end{proof}

\subsection{Variance error under the source condition}
\begin{lemma}[Variance bounds under the source condition]\label{lm:var_source_condition}
  Let Assumption~\ref{assump:simple} hold and assume $\stepeff\lesssim N^{a}/\gamma$.
  Under the notation in Theorem~\ref{thm:scaling-law} and its proof,
there exists some $(a,b)$-dependent constant
$c>0$ such that when $\gamma\leq c/\log N$, 
  \begin{align*}
\variance
  & \eqsim
\frac{\min\{M,(\stepeff\gamma)^{1/a}\}}{N} 
% \text{~~and~~}
%  \variance
%   \gtrsim
%   \frac{(\stepeff\gamma)^{1/a}}{N}~~\text{when~~} (\stepeff\gamma)^{1/a}\leq M/c'
  \end{align*}
with probability at least $1-\exp(-\Omega(M))$ over the randomness of $\SB$. 

\end{lemma}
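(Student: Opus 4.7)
The strategy is to start from the matching upper and lower bounds on $\variance$ given in Lemma~\ref{lm:gd_var_upper} and then control the empirical spectrum $(\widehat\lambda_i)_{i=1}^M$ of $\SigmaBhat$ under the power-law assumption. The stepsize conditions in Assumption~\ref{assump:stepsize_condition} are verified exactly as in the proof of Lemma~\ref{lm:bias_source_condition}, so I will take them for granted below, and I will condition on the high-probability event (over $\SB$) that $\mu_i(\SigmaB) \eqsim i^{-a}$ for $i \in [M]$ (Lemma~\ref{lm:power_decay_1}).

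\textbf{Controlling the empirical spectrum.} Using $\SigmaBhat \overset{d}{=} \ZB \SigmaB \ZB^\top$ with $\ZB \in \R^{N\times M}$ Gaussian, I would combine Lemma~\ref{lemma:sketch:head} and Lemma~\ref{lemma:sketch:tail} (exactly as in Eq.~\eqref{eq:bias_eigen_concen_1} in the proof of Lemma~\ref{lm:bias_source_condition}) to show, with probability $1-\exp(-\Omega(M))$ over $\SB$, that in expectation over $\XB$ one has $\widehat\lambda_i \eqsim i^{-a}$ for $i \lesssim N$ (the head block), and $\widehat\lambda_i \lesssim N^{-a}$ uniformly (the tail block). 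In particular, the threshold $\{i : \widehat\lambda_i \stepeff\gamma_0 > 1/4\}$ has size $\eqsim \min\{M,(\stepeff\gamma)^{1/a}\}$ in expectation, since $\gamma_0 \eqsim \gamma$ with high probability under the stepsize condition (Condition~(4) of Assumption~\ref{assump:stepsize_condition}). Denote this threshold index by $k^\star \defn \min\{M,(\stepeff\gamma)^{1/a}\}$.

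\textbf{Upper bound.} For the first summand in $\varshortupper$, by the above, $\E_\XB[\#\{i:\widehat\lambda_i\stepeff\gamma_0>1/4\}] \lesssim k^\star$. For the second summand, I bound
\[
(\stepeff\gamma_0)\sum_{i\,:\,\widehat\lambda_i\stepeff\gamma_0\leq 1/4} \widehat\lambda_i
\lesssim (\stepeff\gamma) \sum_{i \gtrsim k^\star} i^{-a}
\lesssim (\stepeff\gamma) \cdot (k^\star)^{1-a}
\lesssim k^\star,
\]
where the last step uses $k^\star \leq (\stepeff\gamma)^{1/a}$. Substituting into Lemma~\ref{lm:gd_var_upper} gives $\variance \lesssim k^\star/N$.

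\textbf{Lower bound.} I split into the two cases implicit in $k^\star$. When $(\stepeff\gamma)^{1/a} \leq M$, I use the first summand in $\varshortlower$: for $i$ just above $k^\star$ (say $i \in [2k^\star, c k^\star]$ with $c$ a large constant), one has $\widehat\lambda_i \stepeff\gamma_0 \leq 1/4$ while $\mu_i(\SigmaB) \mu_i(\SigmaBhat) \eqsim i^{-2a}$, so
\[
(\stepeff\gamma_0)^2 \sum_{i\,:\,\widehat\lambda_i\stepeff\gamma_0\leq 1/4} \mu_i(\SigmaB)\mu_i(\SigmaBhat) \gtrsim (\stepeff\gamma)^2 \cdot (k^\star)^{1-2a} \gtrsim k^\star.
\]
When $M \leq (\stepeff\gamma)^{1/a}$, all or nearly all indices $i \in [M]$ satisfy $\widehat\lambda_i \stepeff\gamma_0 > 1/4$, so I use the second summand in $\varshortlower$ and get $\sum_{i\leq M} \mu_i(\SigmaB)/\mu_i(\SigmaBhat) \eqsim M$, using that both $\mu_i(\SigmaB)$ and $\mu_i(\SigmaBhat)$ are $\eqsim i^{-a}$ for $i\leq M \leq N^{1-\varepsilon}$. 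Combining both cases gives $\variance \gtrsim k^\star/N$.

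\textbf{Main obstacle.} The cleanest-looking step---translating the probabilistic two-sided bounds on $\widehat\lambda_i$ (which involve additive error terms of order $N^{-a}$) into sharp control of the indicator $\{\widehat\lambda_i \stepeff\gamma_0 > 1/4\}$ and of $\mu_i(\SigmaB)/\mu_i(\SigmaBhat)$ simultaneously at the transition index $i \approx (\stepeff\gamma)^{1/a}$---is where the delicacy lies, since it has to be done in expectation (not just with high probability) because $\varshortupper$ and $\varshortlower$ are expectations. I would handle this via a standard truncation argument: on a high-probability event in $\XB$ I use the deterministic estimate above, and on the complement I bound the integrand crudely using $\widehat\lambda_i \leq \tr(\SigmaBhat) \lesssim 1$ and use $\Pbb(\gamma_0 < \gamma/t) \leq N^{-ct}$ to absorb any polynomially large factors, mirroring the truncation done in the proof of Lemma~\ref{lm:gd_bias_upper}.
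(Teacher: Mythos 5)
Your plan follows essentially the same route as the paper's proof: invoke the matching bounds from Lemma~\ref{lm:gd_var_upper}, control the empirical spectrum $\widehat\lambda_i$ via Lemmas~\ref{lm:power_decay_1}, \ref{lemma:sketch:head}, \ref{lemma:sketch:tail}, and \ref{lm:power_decay_2}, and close the expectation via a truncation argument exploiting $\Pbb(\gamma_0<\gamma/t)\leq N^{-ct}$. The upper bound and the first case of the lower bound ($(\stepeff\gamma)^{1/a}\leq M/c$) are handled the same way the paper does.

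The one step that does not go through as written is the second lower-bound case. You justify $\mu_i(\SigmaBhat)\eqsim i^{-a}$ for all $i\leq M$ by writing ``$M\leq N^{1-\varepsilon}$,'' but the condition $\stepeff\lesssim N^{(1-\varepsilon)a}/\gamma$ appears only in part~4 of Theorem~\ref{thm:scaling-law} (the fluctuation-error bound); the variance statement, and this lemma, only assume $\stepeff\lesssim N^a/\gamma$. Combined with the case assumption $M/c\leq(\stepeff\gamma)^{1/a}$ this yields only $M\lesssim N$, with a constant that need not be small enough to guarantee $M\leq N/\tilde c$, which is the range Lemma~\ref{lm:power_decay_2} actually covers with probability $1-e^{-\Omega(N)}$. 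The paper resolves this by using a weaker guarantee for $\mu_j(\SigmaBhat)$---two-sided $\eqsim j^{-a}$ on all of $[M]$ holding with probability at least $1/2$ (not $1-e^{-\Omega(N)}$)---which is enough because the target is a lower bound on an expectation over $\XB$. You would need to make this ``probability $1/2$'' step explicit (or argue that the hidden constant in $\stepeff\lesssim N^a/\gamma$ can be taken small enough to force $M\leq N/\tilde c$) to close this case.
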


\begin{proof}[Proof of Lemma~\ref{lm:var_source_condition}]
Similar to the proof of Lemma~\ref{lm:bias_source_condition}, we can verify that conditions (1)--(4) in Assumption~\ref{assump:stepsize_condition} are satisfied with probability at least $1-\exp(-\Omega(M))$ over the randomness of $\SB$. 
From the expression of $\varshortupper$, it is straightforward to see that $\varshortupper\leq M$.
Moreover, applying Lemma~\ref{lm:var_source_condition}, Eq.~\eqref{eq:bias_eigen_concen_1} in the proof of Lemma~\ref{lm:bias_source_condition} and a truncation argument, we can show that
\begin{align*}
\varshortupper 
&=
\E_\XB\Big[   \#\{i\in[M]:\widehat\lambda_i\stepeff\gamma_0>1/4\}
+
(\stepeff\gamma_0)\sum_{i:\widehat\lambda_i\stepeff\gamma_0\leq 1/4} \widehat\lambda_i\Big]
\\
&\lesssim
(\stepeff\gamma)^{1/a} + 
 \E_\XB
 \Big[
(\stepeff\gamma)\cdot \sum_{i:i\gtrsim (\stepeff\gamma)^{1/a}} \widehat\lambda_i\Big]
\lesssim
(\stepeff\gamma)^{1/a}
\end{align*}
with probability at least $1-\exp(-\Omega(M))$ over the randomness of $\SB$.  Thus, we have obtained $\varshortupper\lesssim \min\{M,(\stepeff\gamma)^{1/a}\}$.

For the lower bound, when $(\stepeff\gamma)^{1/a}\leq M/c$ for some sufficiently large constant $c>0$, conditioned on $\SB$ such that $\mu_j(\SigmaB)\eqsim j^{-a}$ for $j\in[M]$ (which holds with probability at least $1-e^{-\Omega(M)}$ by Lemma~\ref{lm:power_decay_1}), we have by Lemma~\ref{lm:power_decay_2} that $\mu_j(\SigmaBhat)\eqsim j^{-a}$ for $ j\leq \min\{M,N\}/\tilde c$ with probability at least $1-e^{-\Omega(M)}$ for some $\tilde c>0$. Therefore,
\begin{align*}
\varshortlower
&\geq
\E_{\XB}\Bigg[(\stepeff\gamma_0)^2\cdot\sum_{i:\widehat\lambda_i  \stepeff\gamma_0 \leq 1/4 }\mu_{i}(\SigmaB)\cdot\mu_{i}(\SigmaBhat)\Bigg]
\\
&\gtrsim
(\stepeff\gamma)^2\cdot\sum_{i:i\gtrsim (\stepeff\gamma)^{1/a},i\leq \min\{M,N\}/\tilde c}i^{-a}\cdot i^{-a}\gtrsim (\stepeff\gamma)^{1/a},
\end{align*}
where the last line follows since we assume $(\stepeff\gamma)^{1/a}\leq M/c$ for some sufficiently large constant $c>0$ and $(\stepeff\gamma)^{1/a}\lesssim N\lesssim N/c$. 

On the other hand, similarly, when $(\stepeff\gamma)^{1/a}\geq M/c$ for some sufficiently constant $c>0$, conditioned on $\SB$ such that $\mu_j(\SigmaB)\eqsim j^{-a}$ for $j\in[M]$ (which holds with probability at least $1-e^{-\Omega(M)}$ by Lemma~\ref{lm:power_decay_1}), we have by Lemma~\ref{lm:power_decay_2} that $\mu_j(\SigmaBhat)\eqsim j^{-a}$ for $ j\leq M$ with probability at least $1/2$. Therefore,
\begin{align*}
  \varshortlower
  &\geq 
  \E_{\XB}\Bigg[\frac{1}{5}\cdot\sum_{i:\widehat\lambda_i  \stepeff\gamma_0 > 1/4 }\frac{\mu_{i}(\SigmaB)}{\mu_{i}(\SigmaBhat)}\Bigg]
  \gtrsim 
  \E_{\XB}\Bigg[\frac{1}{5}\cdot\sum_{i: i\leq M/c}\frac{\mu_{i}(\SigmaB)}{\mu_{i}(\SigmaBhat)}\Bigg]
  \gtrsim
 M.
\end{align*} Putting pieces together yields the desired lower bound.

\end{proof}

\section{Fluctuation error}\label{sec:fluctuation_error}
\subsection{An upper bound}
\begin{lemma}[An upper bound on the fluctuation error]\label{lm:fluctuation_error_upper}
  For each $i\in[N]$, define the leave-one-out GD process
  %name the label with LOO-GD
  \begin{align}
    \label{eq:loo-gd}\tag{LOO-GD}
    \thetaB^{\noi}_t = (\IB- \gamma_t\SigmaBhat^{\noi}) \thetaB^{\noi}_{t-1}
  + \gamma_t(\SB\XB^\top \yB)^{\noi},~~\text{with~~}
  \thetaB^{\noi}_0=\zeroB,
  \end{align} 
  where $\SigmaBhat^{\noi}\defn \sum_{j\neq i}\SB\xB_j\xB_j^\top\SB^\top/N$ and  $(\SB\XB^\top \yB)^{\noi}\defn \sum_{j\neq i}\SB\xB_j y_j/N$. 

Let Assumption~\ref{assump:simple:data},~\ref{assump:simple:noise},~\ref{assump:stepsize_condition} hold and assume $\stepeff\lesssim N^{a}/\gamma$.
  Under the notation in Theorem~\ref{thm:scaling-law} and its proof, 
% assume that with probability at least $1-\exp(-\Omega(M))$ over the randomness of $\SB$, we have (1) $\gamma\leq \min\{c/\log N,c/[\tr(\SigmaB)]\}$; (2) $\tr(\SigmaB^2)\lesssim1$;
% (3)  $\sum_{i=1}^M \frac{\mu_i(\SigmaB)}{\mu_i(\SigmaB)+1/(\stepeff\gamma)}\leq N/4$;
% (4) the initial step size  $\gamma_0= \min\{1/[4\max_i \|\SB\xB_i\|_2^2], \gamma\}$
% satisfies $\Pbb(\gamma_0< \gamma/t)\leq N^{-ct}$ for all $t\geq1$.
for any $s\in[0,1],\alpha>1$, there exists some $(s,\alpha)$-dependent constant $c>0$ such that the fluctuation error satisfies
  \begin{align*}
    \E[\fluctuationerr]
   & =\E_{\wB^*,(\xB_i,y_i)_{i=1}^N, i_t,t\in[\step]}[\|\SigmaB^{1/2}(\vB_\step-\thetaB_\step)\|_2^2]\\
   &
  \leq
    c\cdot 
    \E[\fluctuationerrupper\cdot   \tr(\SigmaBhat^{1/\alpha})]\cdot
    \gamma^{1/\alpha} \stepeff^{1/\alpha-1},
  \end{align*}
  with probability at least $1-\exp(-\Omega(M))$ over the randomness of $\SB$, 
  where 
  \begin{align*}
   \fluctuationerrupper
   &\defn
   \ridgeratioupper
   \Big(\max_{i\in[N]}(\xB_i^\top\SB^\top\vB^*)^2
   + \max_{i\in[N]}\tilde\epsilon_i^2
   +
   \max_{i\in[N],t\in[\step]} 
    (\xB_i^\top\SB^\top\thetaB^\noi_{t})^2
    +
    \max_{i\in[N]} 
    \|\xB_i^\top\SB^\top\|^2_{\SigmaB^{-s}}
    \cdot
   \bounddiff
   \Big),
   \\
    \bounddiff
    &\defn
    \myvalmax^2\cdot
    \max_{i\in[N]} \|\SB\xB_i\|_2^2
    \cdot
 \ridgeratioupper
    \cdot 
    \frac{(\stepeff\gamma)^{2-s}}{N^2},
    ~~~\text{and}
    \\
    \myvalmax
    &\defn \max_{i\in[N],t\in[\step]}|y_i+ \xB_i^\top \SB^\top\thetaB^\noi_t|,~~~~\thres\defn \frac{1}{\stepeff\gamma},~~~~
    \ridgeratioupper\defn \|(\SigmaBhat+\thres\IB)^{-1/2}(\SigmaB+\thres\IB)^{1/2}\|_2^2.
    \end{align*}
 Moreover, if $\mu_j(\SigmaBhat)\eqsim j^{-a}$ for $j\leq r(\SigmaBhat)$ for some $a>1$, then 
  \begin{align*}
 \E_{i_t,t\in[\step]}\|\SigmaB^{1/2}(\vB_\step-\thetaB_\step)\|_2^2
    &\leq 
    c' \E[\fluctuationerrupper]\cdot\gamma^{1/a}\stepeff^{1/a-1}
  \end{align*}
for some $a$-dependent constant $c'>0$.
\end{lemma}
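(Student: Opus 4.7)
The plan is to express $\Delta_t := \vB_t - \thetaB_t$ as the terminal value of a linear recursion driven by a martingale difference sequence, and then to control its second moment via a leave-one-out decomposition of the per-step noise together with the ridge-covariance replacement of Lemma~\ref{lm:ridge_cov_concen}. Subtracting the recursions for~\eqref{eq:multi-sgd} and~\eqref{eq:gd} and introducing
$$\xi_t := \SB\xB_{i_t}(\xB_{i_t}^\top\SB^\top\vB_{t-1} - y_{i_t}) - \frac{1}{N}\SB\XB^\top(\XB\SB^\top\vB_{t-1} - \yB),$$
I get $\Delta_t = (\IB - \gamma_t\SigmaBhat)\Delta_{t-1} - \gamma_t\xi_t$, which unrolls (using $\Delta_0 = \zeroB$) to $\Delta_\step = -\sum_{t=1}^{\step}\gamma_t A_t \xi_t$ with $A_t := \prod_{s=t+1}^\step(\IB - \gamma_s\SigmaBhat)$. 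Since $(\xi_t)_{t\geq 1}$ is a martingale difference sequence with respect to the SGD-index filtration (conditional on $\SB$ and the data), the cross-terms vanish under $\E_{i_1,\ldots,i_\step}$, giving
$$\E[\fluctuationerr] = \sum_{t=1}^\step \gamma_t^2\, \E\!\left[\tr\!\left(\SigmaB\, A_t\, \E_{i_t}[\xi_t\xi_t^\top]\, A_t\right)\right].$$

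The next step is to bound the conditional covariance by $\E_{i_t}[\xi_t\xi_t^\top] \preceq \frac{1}{N}\sum_{i=1}^N r_{i,t-1}^2\,\SB\xB_i\xB_i^\top\SB^\top$, with $r_{i,t-1} := \xB_i^\top\SB^\top\vB_{t-1} - y_i$, and to decouple $r_{i,t-1}$ from the $i$-th sample via the leave-one-out iterate. Writing $r_{i,t-1} = \xB_i^\top\SB^\top(\vB_{t-1} - \thetaB^{\noi}_{t-1}) + \xB_i^\top\SB^\top(\thetaB^{\noi}_{t-1} - \vB^*) - \tilde\epsilon_i$ and applying $(\sum_k a_k)^2 \lesssim \sum_k a_k^2$ with a bounded number of terms, I obtain
$$r_{i,t-1}^2 \lesssim (\xB_i^\top\SB^\top\vB^*)^2 + \tilde\epsilon_i^2 + (\xB_i^\top\SB^\top\thetaB^{\noi}_{t-1})^2 + \bigl[\xB_i^\top\SB^\top(\vB_{t-1} - \thetaB^{\noi}_{t-1})\bigr]^2.$$
The first three terms map directly to the first three summands inside $\fluctuationerrupper$ after maximizing over $i$ (and $t$). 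For the last term, Cauchy--Schwarz gives $[\xB_i^\top\SB^\top(\vB_{t-1}-\thetaB^{\noi}_{t-1})]^2 \leq \|\xB_i^\top\SB^\top\|_{\SigmaB^{-s}}^2 \cdot \|\SigmaB^{s/2}(\vB_{t-1}-\thetaB^{\noi}_{t-1})\|_2^2$, and I would invoke Lemma~\ref{lm:fluctuation_previous_last_2} to bound the latter uniformly in $t$ by $\bounddiff$, yielding the fourth summand of $\fluctuationerrupper$.

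For the time integral, I would pull the sample-dependent factor $\max_t r_{i,t-1}^2 \lesssim \fluctuationerrupper$ out of the sum, leaving $\sum_{t=1}^\step \gamma_t^2\, \xB_i^\top\SB^\top A_t\SigmaB A_t\SB\xB_i$. Using Lemma~\ref{lm:ridge_cov_concen} with $\thres = 1/(\stepeff\gamma)$ to replace $\SigmaB \preceq \ridgeratioupper(\SigmaBhat + \thres\IB)$, and exploiting the fact that $A_t$ commutes with $\SigmaBhat$, I can diagonalize in the eigenbasis of $\SigmaBhat$ and apply eigenvalue-wise the interpolation inequality
$$x^{1/\alpha}(1-\gamma x)^{2(\step-t)} \lesssim \bigl[\gamma(\step-t)\bigr]^{-1/\alpha},\qquad x\in[0,1/\gamma],$$
which, combined with the geometric-decay structure of the $\gamma_t$, bounds the time integral by $\tr(\SigmaBhat^{1/\alpha})\cdot \gamma^{1/\alpha}\stepeff^{1/\alpha-1}$ up to a constant depending on $(s,\alpha)$. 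This yields the first assertion. The power-law consequence follows by choosing $\alpha$ slightly less than $a$ (absorbing the resulting logarithm into the $a$-dependent constant) and using $\mu_j(\SigmaBhat)\eqsim j^{-a}$ to bound $\tr(\SigmaBhat^{1/\alpha})\lesssim 1$ while $\gamma^{1/\alpha}\stepeff^{1/\alpha-1}\eqsim \gamma^{1/a}\stepeff^{1/a-1}$.

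The main obstacle I anticipate is the uniform-in-$t$ leave-one-out stability bound $\|\SigmaB^{s/2}(\vB_{t-1}-\thetaB^{\noi}_{t-1})\|_2^2 \lesssim \bounddiff$ (Lemma~\ref{lm:fluctuation_previous_last_2}), which must simultaneously track the stochastic SGD fluctuation on the full dataset and the deterministic perturbation from removing the $i$-th sample, while producing the $\myvalmax^2\cdot\max_i\|\SB\xB_i\|_2^2\cdot\ridgeratioupper\cdot(\stepeff\gamma)^{2-s}/N^2$ scaling without losing essential powers of $\stepeff$ or $\gamma$. A secondary bookkeeping difficulty is aligning the decay of the ridge-smoothed kernel $A_t^\top(\SigmaBhat + \thres\IB)A_t$ with the geometrically-decaying $\gamma_t$-schedule so that the final time integral produces the sharp rate $\gamma^{1/\alpha}\stepeff^{1/\alpha-1}$ rather than suboptimal alternatives such as $\gamma$ or $\stepeff\gamma^2$.
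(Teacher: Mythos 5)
Your recursion $\Delta_t=(\IB-\gamma_t\SigmaBhat)\Delta_{t-1}-\gamma_t\xi_t$ is algebraically correct, but it places the stochastic-index dependence in the wrong spot, and this creates a genuine circularity. Your $\xi_t$ involves $r_{i,t-1}=\xB_i^\top\SB^\top\vB_{t-1}-y_i$ where $\vB_{t-1}$ is the \emph{SGD} iterate, and your leave-one-out split of $r_{i,t-1}$ exposes the term $[\xB_i^\top\SB^\top(\vB_{t-1}-\thetaB^{\noi}_{t-1})]^2$. This is a difference between an SGD iterate and a deterministic LOO-GD iterate; it still carries the full sampling fluctuation, which is precisely the quantity you are trying to bound. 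Lemma~\ref{lm:fluctuation_previous_last_2} (as stated and proved) controls $\thetaB_t-\thetaB^{\noi}_t$, the difference of two \emph{deterministic} (given $\SB,\dset$) GD-type iterates, via the explicit identity $\thetaB_t-\thetaB^{\noi}_t=\varMaAda{i,t}\SB\xB_i$; nothing in that argument covers $\vB_t-\thetaB^{\noi}_t$. So you cannot invoke it at the step where you propose to, and without it the bound on $\max_{i,t}r_{i,t-1}^2$ refers back to the fluctuation you set out to estimate. A secondary issue is that even granting that bound, you pull $\max_{t\leq T}r_{i,t-1}^2$ (a path-wise maximum over the SGD trajectory) out of the expectation over $(i_t)$, which is not the same as bounding each $\E_{i_t}[\xi_t\xi_t^\top]$ term individually and needs a separate (and not obviously lossless) uniform argument.

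The paper sidesteps this by choosing the complementary decomposition: $\diff_t=(\IB-\gamma_t\SB\xB_{i_t}\xB_{i_t}^\top\SB^\top)\diff_{t-1}+\gamma_t(\noiseterm_{1,t}+\noiseterm_{2,t})$, where $\noiseterm_{1,t}=-[\SB\xB_{i_t}\xB_{i_t}^\top\SB^\top-\SigmaBhat](\thetaB_{t-1}-\vB^*)$ and $\noiseterm_{2,t}=\SB\xB_{i_t}\tilde\epsilon_{i_t}-\SB\XB^\top\tilde\epsilonB/N$ depend only on the deterministic GD residual $\thetaB_{t-1}-\vB^*$, never on $\vB_{t-1}$. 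This is what makes the noise level $\samnoi^2=2\max_{i,t}[(\xB_i^\top\SB^\top(\thetaB_{t-1}-\vB^*))^2+\tilde\epsilon_i^2]$ a purely GD-based quantity, so that Lemma~\ref{lm:fluctuation_previous_last_2} applies directly to bound $\max_{i,t}(\xB_i^\top\SB^\top\thetaB_t)^2$ by its LOO counterpart plus $\bounddiff$. The price is that the propagation operator $\IB-\gamma_t\SB\xB_{i_t}\xB_{i_t}^\top\SB^\top$ is now random, which rules out the simple MDS cross-cancellation you rely on; the paper instead handles it with Lemma~\ref{lm:fluctuation_previous_last_1}, a modified version of Proposition~1 of \citet{pillaud2018statistical}, via the chain-of-processes decomposition in Lemmas~\ref{lm:previous_lm1}--\ref{lm:previous_lm3} with the condition $\gamma_0\sambd^2\leq 1/4$. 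If you want to repair your route, you would need a stability result comparing $\vB_t$ to $\thetaB^{\noi}_t$, which is strictly stronger than Lemma~\ref{lm:fluctuation_previous_last_2} and is not established anywhere in the paper.
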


\begin{proof}[Proof of Lemma~\ref{lm:fluctuation_error_upper}]

The proof of this lemma follows from similar ideas as in the proof of Lemma~5 in~\citet{pillaud2018statistical}, but with a more precise characterization on the magnitude of GD outputs.
We start with an overview of the proof. At a high level, to bound the fluctuation error, we express the difference between the multi-pass SGD and GD trajectories, \( \vB_t - {\thetaB}_t \), as a stochastic process (Eq.~\ref{eq:fluctuation_process_1}) that fits into the framework of Lemma~\ref{lm:fluctuation_previous_last_1}, which provides an upper bound on the fluctuation error \( \mathbb{E}[\|{\SigmaB}^{1/2}(\vB_t - \thetaB_t)\|^2] \) under certain conditions, up to a mismatch between \( \widehat{{\SigmaB}} \) and \( {\SigmaB} \). We verify that the required conditions hold with appropriate choices of parameters (Eq.~\ref{eq:fluctuation_claim_1}), which are further bounded using a leave-one-out argument (Lemma~\ref{lm:fluctuation_previous_last_2}). Applying Lemma~\ref{lm:fluctuation_previous_last_1} with these parameters and a covariance replacement trick (Eq.~\ref{eq:covar_replace_trick}) yields the desired bounds.

We now proceed to the proof. 
Define $\diff_t\defn\vB_t-\thetaB_t$ for $t\in[0,\step]$.
Recall that $\vB^*= (\SB\HB\SB^\top)^{-1}\SB\HB\wB^*$ and we have $y_i = (\SB\xB_i)^\top \vB^* +\tilde\epsilon_i$ for all $i\in[N]$ with $\tilde\epsilon_i$ independent of $\SB\xB_i$ conditioned on $(\SB,\wB^*)$ under the Gaussian assumption in Assumption~\ref{assump:simple:data}. Moreover, 
$\E[\tilde\epsilon_i|\SB,\wB^*]=0$ and $\E[\tilde\epsilon_i^2|\SB,\wB^*]\leq \sigma^2+\|\wB^*\|^2_{\HB}
\revdef \tilde\sigma^2(\wB^*)$.
By the definition of $\vB_t$ and $\thetaB_t$ in~\eqref{eq:multi-sgd}~and~\eqref{eq:gd}, we have
\begin{align}
 \diff_t
  &=
 (\IB-\gamma_t \SB\xB_{i_t}\xB_{i_t}^\top\SB^\top)\diff_{t-1}
 +
 \gamma_t\cdot(
 \noiseterm_{1,t}
 +\noiseterm_{2,t}
 ),\label{eq:fluctuation_process_1}
\end{align}
where 
\begin{align*}
  \diff_0= \zeroB,~~
\noiseterm_{1,t}
&\defn
-\Big[ \SB\xB_{i_t}\xB_{i_t}^\top\SB^\top-\SigmaBhat\Big](\thetaB_{t-1}-\vB^*)
,~~~\text{and}~~~
\noiseterm_{2,t}
\defn 
\SB\xB_{i_t}\tilde\epsilon_{i_t} -\SB\XB^\top\tilde\epsilonB/N,~~t \in[\step].
\end{align*}
Note that conditioned on $\wB^*$, $\SB$ and the dataset $\dset = (\xB_i,y_i)_{i=1}^N$, the noise terms $\E[\noiseterm_{1,t}|\SB,\wB^*,\dset]=\E[\noiseterm_{2,t}|\SB,\wB^*,\dset]=0$. 
Next, we present the following two results.

\begin{lemma}[A modified  Proposition~1 of~\citet{pillaud2018statistical} for the last iterate]\label{lm:fluctuation_previous_last_1}
 Consider any recursion of the form
  \begin{align}
    \process_t = (\IB-\gamma_t\cdot \sam_t\sam_t^\top)\process_{t-1} + \gamma_t\cdot \noiseterm_{t},~~~\process_0=\zeroB, ~~~t\in[\step],\label{eq:process_previous_sgd}
  \end{align}
  where the learning rates $(\gamma_t)_{t=1}^{\step}$ are as defined in Theorem~\ref{thm:scaling-law}~and Eq.~\eqref{eq:lr}, 
$(\sam_t,\noiseterm_t)_{t=1}^{\step}\in\R^M\times\R^M$
   are  independent random vectors. 
   Assume that $\E[\sam_t\sam_t^\top]=\samcov$, $\E[\noiseterm_t]=\zeroB$, $\E[\sam_t\sam_t^\top\sam_t\sam_t^\top]\preceq\sambd^2\samcov$, $\E[\noiseterm_t\noiseterm_t^\top]\preceq \samnoi^2\samcov$, and 
  $\gamma_0\sambd^2\leq 1/4$.
  Then for any $u\in[0,1]$, we have
  \begin{align*}
    \E[\|\samcov^{u/2}\process_{\step}\|_2^2]
    \leq c \samnoi^2\cdot \gamma\tr(\samcov^{1/\alpha})(\stepeff\gamma)^{1/\alpha-u}
  \end{align*}
  fot any $\alpha>1$ and some $\alpha$-dependent constant $c>0$. Moreover, there exists some $a$-dependent constant $c',\tilde c>1$
  such that when $\mu_j(\samcov)\eqsim j^{-a}$ for $j\leq \min\{M,N/\tilde c\}$, we have
  \begin{align*}
    \E[\|\samcov^{u/2}\process_{\step}\|_2^2]
    \leq c'
    \samnoi^2 \cdot  \gamma (\stepeff\gamma)^{1/a-u}
  \end{align*}
  for any $u\in[0,1]$ and some $a$-dependent constant $c'>0$.
\end{lemma}
See the proof of Lemma~\ref{lm:fluctuation_previous_last_1} in Section~\ref{sec:pf_fluctuation_previous_last_1}.

\begin{lemma}[A leave-one-out bound on GD iterates]\label{lm:fluctuation_previous_last_2}
Under the assumptions and notation in Lemma~\ref{lm:fluctuation_error_upper},
for any $s\in[0,1]$, there exists some $s$-dependent constant $c>0$ such that the~\eqref{eq:gd} updates $(\thetaB_t)_{t=1}^{\step}$ satisfies
\begin{align*}
\max_{i\in[N],t\in[\step]}(\xB_i^\top\SB^\top\thetaB_{t})^2
  &\leq c
  \cdot
 \Big[\max_{i\in[N],t\in[\step]} 
 (\xB_i^\top\SB^\top\thetaB^\noi_{t})^2
 +
 \max_{i\in[N]} 
 \|\xB_i^\top\SB^\top\|^2_{\SigmaB^{-s}}
 \cdot
\bounddiff
 \Big].
\end{align*}
\end{lemma}

See the proof of Lemma~\ref{lm:fluctuation_previous_last_2} in Section~\ref{sec:pf_fluctuation_previous_last_2}.

Let $\sam_t = \SB\xB_{i_t}$, $\noiseterm_{t}=\noiseterm_{1,t}
+\noiseterm_{2,t}.
$ We claim that $(\sam_t,\noiseterm_t)$ satisfies the conditions in Lemma~\ref{lm:fluctuation_previous_last_1} with  
\begin{align}
  \samcov = \SigmaBhat,~~ \sambd = \max_{i\in[N]}\|\SB\xB_i\|_2,~~
  \samnoi^2  = 
  2\max_{i\in[N],t\in[\step]}[(\xB_i^\top\SB^\top(\thetaB_{t-1}-\vB^*))^2+\tilde\epsilon_i^2].
  \label{eq:fluctuation_claim_1}
\end{align}
Thus applying Lemma~\ref{lm:fluctuation_previous_last_1} with $u=0,1$ to the stochastic process in~\eqref{eq:fluctuation_process_1} and letting $\thres = \frac{1}{\stepeff\gamma}$ yields
\begin{align}
  \E_{i_t,t\in[\step]}\|\SigmaB^{1/2}(\vB_\step-\thetaB_\step)\|_2^2
  &\lesssim
  \|(\SigmaBhat+\thres\IB)^{-1/2}(\SigmaB+\thres\IB)^{1/2}\|^2
  \cdot
  \E_{i_t,t\in[\step]}\|(\SigmaBhat+\thres\IB)^{1/2}(\vB_\step-\thetaB_\step)\|_2^2\label{eq:covar_replace_trick}\\
  &
  \lesssim
  \samnoi^2\cdot   \|(\SigmaBhat+\thres\IB)^{-1/2}(\SigmaB+\thres\IB)^{1/2}\|^2\cdot
\gamma\tr(\SigmaBhat^{1/\alpha})(\stepeff\gamma)^{1/\alpha-1}.\notag
\end{align}
Moreover, 
\begin{align*}
  \samnoi^2
  &\lesssim
  \max_{i\in[N]}(\xB_i^\top\SB^\top\vB^*)^2
  + \max_{i\in[N]}\tilde\epsilon_i^2
  +
\max_{i\in[N],t\in[\step]} (\xB_i^\top\SB^\top\thetaB_{t})^2\\
&
\lesssim
\max_{i\in[N]}(\xB_i^\top\SB^\top\vB^*)^2
+ \max_{i\in[N]}\tilde\epsilon_i^2
+
\max_{i\in[N],t\in[\step]} 
 (\xB_i^\top\SB^\top\thetaB^\noi_{t})^2
 +
 \max_{i\in[N]} 
 \|\xB_i^\top\SB^\top\|^2_{\SigmaB^{-s}}
 \cdot
\bounddiff,
\end{align*}
where the second line follows from Lemma~\ref{lm:fluctuation_previous_last_2}.
Putting the last two displays together and taking expectation over $(\xB_i,y_i)_{i=1}^N,\wB^*$ yields the first part of Lemma~\ref{lm:fluctuation_error_upper}. The second part of Lemma~\ref{lm:fluctuation_error_upper} follows from the same argument by applying the second part of Lemma~\ref{lm:fluctuation_previous_last_1}.

\paragraph{Proof of claim~\eqref{eq:fluctuation_claim_1}.}
Conditioned on $\SB$ and $(\xB_i,y_i)_{i=1}^N$, when 
choosing $\sam_t=\SB\xB_{i_t}$, we have
$\E[\sam_t\sam_t^\top]
=
\E[\SB\xB_{i_t}\xB_{i_t}^\top\SB^\top]=\SigmaBhat$, and 
$
\E[\sam_t\sam_t^\top\sam_t\sam_t^\top]
\leq 
\E[\max_{i\in[N]}\|\sam_i\|^2\sam_i\sam_i^\top]
\preceq
 \max_{i\in[N]}\|\SB\xB_i\|_2^2\cdot\SigmaBhat
 =
 \sambd^2\cdot\SigmaBhat
$. Thus we may let $\samcov=\SigmaBhat$ and $\sambd=\max_{i\in[N]}\|\SB\xB_i\|_2$.
In this case, we have $\gamma_0\sambd^2\leq 1/4$ by the assumption that $\gamma_0\leq 1/[4\max_{i\in[N]}\|\SB\xB_i\|_2^2]$.
It remains to bound $\samnoi^2$ in~\eqref{eq:fluctuation_claim_1}.
Note that 
\begin{align*}
  \E[\noiseterm_t\noiseterm_t^\top]
  &\preceq
  2 \E[\noiseterm_{1,t}\noiseterm_{1,t}^\top]
  +
  2 \E[\noiseterm_{2,t}\noiseterm_{2,t}^\top]
  \\
  &\preceq
  2 \E[\sam_i\sam_i^\top (\thetaB_{t-1}-\vB^*) (\thetaB_{t-1}-\vB^*)^\top \sam_i\sam_i^\top]
  +
  2 
  \E[\SB\xB_{i_t}^\top\tilde\epsilon_{i_t} (\SB\xB_{i_t}^\top\tilde\epsilon_{i_t})^\top]\\
  &\preceq
 2 \max_{i\in[N]}(\xB_i^\top\SB^\top(\thetaB_{t-1}-\vB^*))^2\cdot\samcov
 + 2\max_{i\in[N]}\tilde\epsilon_i^2 \cdot \samcov,
\end{align*}
where the second line uses Jensen's inequality. Therefore, we can set 
\begin{align*}\samnoi^2 =  2\max_{i\in[N],t\in[\step]}[(\xB_i^\top\SB^\top(\thetaB_{t-1}-\vB^*))^2+\tilde\epsilon_i^2]
\end{align*} and the conditions required by Lemma~\ref{lm:fluctuation_previous_last_1} are satisfied.

\end{proof}

\subsection{A lower bound}\label{sec:lower_bound_fluctuation_error}

\begin{lemma}[A lower bound on the fluctuation error]\label{lm:fluctuation_error_lower}
Let Assumption~\ref{assump:simple:data},~\ref{assump:simple:noise},~\ref{assump:stepsize_condition} hold and assume $\stepeff\lesssim N^{a}/\gamma$.
  Under the notation in Theorem~\ref{thm:scaling-law} and its proof, 
% assume that with probability at least $1-\exp(-\Omega(M))$ over the randomness of $\SB$, we have (1) $\gamma\leq \min\{c/\log N,c/[\tr(\SigmaB)]\}$; (2) $\tr(\SigmaB^2)\lesssim1$;
% (3)  $\sum_{i=1}^M \frac{\mu_i(\SigmaB)}{\mu_i(\SigmaB)+1/(\stepeff\gamma)}\leq N/4$;
% (4) the initial step size  $\gamma_0= \min\{1/[4\max_i \|\SB\xB_i\|_2^2], \gamma\}$
% satisfies $\Pbb(\gamma_0< \gamma/t)\leq N^{-ct}$ for all $t\geq1$. 
with probability at least $1-\exp(-\Omega(M))$ over the randomness of $\SB$, 
  \begin{align*}
 \E_{(\xB_i,y_i)_{i\in[N]},i_t,t\in[\step]}[\fluctuationerr]
 &=
 \E_{(\xB_i,y_i)_{i\in[N]},i_t,t\in[\step]}[\|\SigmaB^{1/2}(\vB_\step-\thetaB_\step)\|_2^2]
\\
&
  \gtrsim
  (\sigma^2+\|\wB^*\|^2_{\HB})
  \cdot \E_{(\xB_i)_{i\in[N]}}\Big[\frac{\noiselb\gamma_0\stepeff\gamma_0}{10}\cdot\sum_{i>\fluthr} \mu_i(\SigmaBhat) \cdot \mu_i(\SigmaB)\Big],
  \end{align*}
  where $\noiselb\defn \max\{ \frac{N-1}{N}- 
  \frac{4\gamma_0 \stepeff}{N} 
  \sambd^2,0\}
  $, $\sambd\defn \max_{i\in[N]}\|\SB\xB_i\|_2$ and $\fluthr\defn\#\{i\in[M]: \widehat\lambda_i\stepeff\gamma_0> 1/8\}$, 
  and $(\widehat\lambda_i)_{i=1}^M$ are the eigenvalues of $\SigmaBhat$.
\end{lemma}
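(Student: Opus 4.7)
My plan follows the SGD-vs-GD recursion analysis of Lemma~\ref{lm:fluctuation_error_upper} but aims at a matching lower bound. Unrolling the recursion for $\diff_t := \vB_t - \thetaB_t$ gives
\begin{align*}
\diff_\step \;=\; \sum_{t=1}^{\step}\gamma_t\,\prod_{s=t+1}^{\step}(\IB - \gamma_s\SB\xB_{i_s}\xB_{i_s}^\top\SB^\top)\,\noiseterm_t,\qquad \noiseterm_t := \noiseterm_{1,t}+\noiseterm_{2,t}.
\end{align*}
The key observation, also used in the upper-bound proof, is that each $\noiseterm_t$ depends only on $i_t$ given $(\SB,\wB^*,\dset)$, so the family $(\noiseterm_t)_{t=1}^\step$ is conditionally i.i.d., mean-zero, and each $\noiseterm_t$ is independent of the running product over $s > t$. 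Standard cross-term cancellation then produces the exact ``variance of SGD'' identity
\begin{align*}
\E_{(i_t)}[\diff_\step\diff_\step^\top] \;=\; \sum_{t=1}^{\step}\gamma_t^2\,\E\!\left[\Pi_t\,\noiseterm_t\noiseterm_t^\top\,\Pi_t^\top\right],\quad \Pi_t := \prod_{s=t+1}^{\step}(\IB-\gamma_s\SB\xB_{i_s}\xB_{i_s}^\top\SB^\top).
\end{align*}

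The second step applies the PSD Jensen inequality $\E[XAX^\top]\succeq \E[X]\,A\,\E[X]^\top$ (valid for PSD $A$ independent of the random matrix $X$), conditionally on $\noiseterm_t$, which replaces $\Pi_t$ by its deterministic expectation $\E[\Pi_t] = \prod_{s=t+1}^\step(\IB-\gamma_s\SigmaBhat)$. Restricting the outer sum to $t\in[1,\stepeff]$ (where $\gamma_t=\gamma_0$) and using $1-x\geq e^{-2x}$ on $[0,1/2]$ together with the geometric step-size schedule~\eqref{eq:lr}, a direct calculation shows that on the eigenspace of $\SigmaBhat$ with $\widehat\lambda_i\stepeff\gamma_0\leq 1/8$ (i.e.\ indices $i>\fluthr$), the scalar $\prod_{s=t+1}^\step(1-\gamma_s\widehat\lambda_i)^2$ stays bounded below by an absolute constant uniformly in $t\in[1,\stepeff]$.

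The crux is to lower bound $\E[\noiseterm_t\noiseterm_t^\top]$. I will decompose $\noiseterm_t = \mathbf{a}_t + \mathbf{b}_t$ with $\mathbf{a}_t := \SB\xB_{i_t}y_{i_t} - \tfrac{1}{N}\SB\XB^\top\yB$ (history-independent) and $\mathbf{b}_t := -[\SB\xB_{i_t}\xB_{i_t}^\top\SB^\top-\SigmaBhat]\thetaB_{t-1}$, and apply the matrix inequality $(u+v)(u+v)^\top \succeq \tfrac{1}{2}uu^\top - vv^\top$. For $\mathbf{a}_t$, computing $\E_{i_t}[\mathbf{a}_t\mathbf{a}_t^\top] = \tfrac{1}{N}\sum_i y_i^2\SB\xB_i\xB_i^\top\SB^\top - \tfrac{1}{N^2}\SB\XB^\top\yB\yB^\top\XB\SB^\top$ exactly, averaging over $y$ via $\E[y_iy_j]=(\xB_i^\top\wB^*)(\xB_j^\top\wB^*)+\sigma^2\delta_{ij}$, and then invoking the Isserlis identity for the Gaussian $\xB$ yields $\E[\mathbf{a}_t\mathbf{a}_t^\top]\succeq \tfrac{N-1}{N}(\sigma^2+\|\wB^*\|_\HB^2)\,\SigmaBhat$ after discarding a manifestly PSD remainder and using concentration of $\SigmaBhat$ around $\SigmaB$. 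For $\mathbf{b}_t$, the operator bound $\|\mathbf{b}_t\|_2\lesssim \sambd^2\|\thetaB_{t-1}\|_2$ combined with a uniform control $\|\thetaB_{t-1}\|_2^2 \lesssim (\stepeff\gamma_0/N)(\sigma^2+\|\wB^*\|_\HB^2)$ (read off the closed-form GD expression~\eqref{eq:gd_decomp_1}) produces exactly the $(4\gamma_0\stepeff/N)\sambd^2$ correction inside $\noiselb$.

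Assembling the three steps and taking trace against $\SigmaB$, the proof reduces to showing
\begin{align*}
\sum_{t=1}^{\stepeff}\tr\!\Bigl(\SigmaB\bigl[\prod_{s=t+1}^\step(\IB-\gamma_s\SigmaBhat)\bigr]^2\,\SigmaBhat\Bigr) \;\gtrsim\; \stepeff\sum_{i>\fluthr}\mu_i(\SigmaB)\,\mu_i(\SigmaBhat).
\end{align*}
Since the GD product commutes with $\SigmaBhat$, in the eigenbasis $(\widehat\uB_i)$ of $\SigmaBhat$ the left side becomes $\stepeff\sum_{i>\fluthr}\widehat\lambda_i(\widehat\uB_i^\top\SigmaB\widehat\uB_i)$ up to an absolute constant. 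The main technical obstacle I anticipate is precisely this last step -- lower bounding $\widehat\uB_i^\top\SigmaB\widehat\uB_i$ by $\mu_i(\SigmaB)$ on the tail subspace $i>\fluthr$, since von Neumann's trace inequality only gives an \emph{upper} bound in the wrong direction. I would address this by invoking the covariance replacement Lemma~\ref{lm:ridge_cov_concen} (already used for the bias and variance) to align the eigenbases of $\SigmaBhat$ and $\SigmaB$ on the small-eigenvalue subspace, combined with a Lidskii-type rearrangement on the tail.
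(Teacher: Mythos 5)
Your high-level plan (unroll the SGD--GD difference, obtain the exact variance identity, lower bound both the noise covariance and the propagator products, then compute a trace against $\SigmaB$) is the right structure and matches the paper's. The Jensen step replacing the random product $\Pi_t$ by its conditional expectation $\E[\Pi_t]=\prod_{s>t}(\IB-\gamma_s\SigmaBhat)$ is valid and is a clean alternative to the paper's per-step peeling via $\E_{i_s}[(\IB-\gamma_s A_s)P(\SigmaBhat)(\IB-\gamma_s A_s)]\succeq P(\SigmaBhat)(\IB-2\gamma_s\SigmaBhat)$; the two give comparable conclusions. Your uniform lower bound on $\prod_{s>t}(1-\gamma_s\widehat\lambda_i)^2$ on the tail $i>\fluthr$ for $t\le\stepeff$ is also correct, and your anticipation of the final rearrangement obstacle is real (the paper handles it with a three-factor Weyl chain $\mu_{3j-2}(G)\le\mu_j(G\SigmaBhat\SigmaB)\mu_j(\SigmaBhat^{-1})\mu_j(\SigmaB^{-1})$, not covariance replacement).

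The genuine gap is in your lower bound on $\E[\noiseterm_t\noiseterm_t^\top]$, and it would cause the argument to fail. You decompose $\noiseterm_t=\mathbf{a}_t+\mathbf{b}_t$ with $\mathbf{a}_t=\SB\xB_{i_t}y_{i_t}-\SB\XB^\top\yB/N$, $\mathbf{b}_t=-D_t\thetaB_{t-1}$ where $D_t=\SB\xB_{i_t}\xB_{i_t}^\top\SB^\top-\SigmaBhat$, and then invoke $(u+v)(u+v)^\top\succeq\tfrac12 uu^\top-vv^\top$. But writing $y_i=\xB_i^\top\SB^\top\vB^*+\tilde\epsilon_i$ you find $\mathbf{a}_t=D_t\vB^*+\noiseterm_{2,t}$, and since $\thetaB_{t-1}\to\vB^*$ (up to a noise term) you also have $\mathbf{b}_t\to -D_t\vB^*$. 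The $D_t\vB^*$ pieces dominate both $\mathbf{a}_t$ and $\mathbf{b}_t$ when $\|\wB^*\|_\HB\eqsim 1$ and $N$ is large, and they cancel in $\mathbf{a}_t+\mathbf{b}_t$; the crude inequality throws away this cancellation, yielding $\tfrac12\E[\mathbf{a}_t\mathbf{a}_t^\top]-\E[\mathbf{b}_t\mathbf{b}_t^\top]\prec 0$ rather than $\gtrsim\tilde\sigma^2\SigmaBhat$. The paper avoids this precisely by first conditioning on $\tilde\epsilonB$ and splitting $\noiseterm_t=\noiseterm^s_t+D_t\Phi_{t-1}\vB^*$, where $\noiseterm^s_t$ is linear in $\tilde\epsilonB$; since $\tilde\epsilonB$ is zero-mean and independent of $(\SB,\xB,\wB^*)$, the cross term vanishes exactly and the PSD $\vB^*$-part can be dropped at no cost, after which $\E_{\tilde\epsilon}\E_{i_t}[\noiseterm^s_t\noiseterm^{s\top}_t]$ is computed in closed form as $\tilde\sigma^2\big(\tfrac{N-1}{N}\SigmaBhat - D_t\tfrac{\IB-\prod(\IB-\gamma_j\SigmaBhat)^2}{N\SigmaBhat}D_t\big)$, leading directly to $\noiselb$ via $\E_{i_t}[D_t^2]\preceq\sambd^2\SigmaBhat$. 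Separately, your claimed uniform bound $\|\thetaB_{t-1}\|_2^2\lesssim(\stepeff\gamma_0/N)(\sigma^2+\|\wB^*\|_\HB^2)$ cannot hold: $\thetaB_t\to\vB^*$ with $\|\vB^*\|_2$ of constant order while the right side vanishes as $N$ grows. Finally, the plan to average over $y$ and then "invoke Isserlis for the Gaussian $\xB$" pushes $\E_{\xB}$ inside prematurely; the lemma's conclusion keeps $\E_{(\xB_i)}$ outside because $\SigmaBhat$, $\noiselb$ and $\fluthr$ on the right are $\xB$-dependent and correlated with the product terms.
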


\begin{proof}[Proof of Lemma~\ref{lm:fluctuation_error_lower}]
  Define $\diff_t\defn\vB_t-\thetaB_t$ for $t\in[\step]$.
Similar to the proof of Lemma~\ref{lm:fluctuation_error_upper}, conditioned on $\SB$ and $\wB^*$, we have 
% $y_i = (\SB\xB_i)^\top \vB^* +\tilde\epsilon_i$ for all $i\in[N]$ with $\tilde\epsilon_i$ independent of $\SB\xB_i$ conditioned on $(\SB,\wB^*)$ under the Gaussian assumption in Assumption~\ref{assump:simple:data}. Moreover, 
%   $\E[\tilde\epsilon_i|\SB,\wB^*]=0$ and $\E[\tilde\epsilon_i^2|\SB,\wB^*]\leq \sigma^2+\|\wB^*\|^2_{\HB}
%   \revdef \tilde\sigma^2(\wB^*)$.
%   By the definition of $\vB_t$ and $\thetaB_t$ in~\eqref{eq:multi-sgd}~and~\eqref{eq:gd}, we have
  \begin{align}
   \diff_t
    &=
   (\IB-\gamma_t \SB\xB_{i_t}\xB_{i_t}^\top\SB^\top)\diff_{t-1}
   +
   \gamma_t\cdot(
   \noiseterm_{1,t}
   +\noiseterm_{2,t}
   )\notag\\
   &=
   \sum_{i=1}^\step \gamma_i\cdot\sum_{j=i+1}^\step (\IB-\gamma_j\SB\xB_{i_j}\xB_{i_j}^\top\SB^\top)^\top (\noiseterm_{1,i}+\noiseterm_{2,i})
   \label{eq:fluctuation_process_1_lower}
  \end{align}
  where 
  \begin{align*}
    \diff_0= \zeroB,~~
  \noiseterm_{1,t}
  &\defn
  -\Big[ \SB\xB_{i_t}\xB_{i_t}^\top\SB^\top-\SigmaBhat\Big](\thetaB_{t-1}-\vB^*)
  ,~~~\text{and}~~~
  \noiseterm_{2,t}
  \defn 
  \SB\xB_{i_t}\tilde\epsilon_{i_t} -\SB\XB^\top\tilde\epsilonB/N,~~t \in[\step],
  \end{align*}
  and $\tilde\epsilonB_i$ are i.i.d $\Ncal(0,\tilde\sigma^2(\wB^*))$  independent of $\SB \xB_i$ conditioned on $\SB$ and $\wB^*$, where $\tilde\sigma^2(\wB^*)\defn \sigma^2+\|\wB^*\|^2_{\HB}$. Let $\sambd\defn \max_{i\in[N]}\|\SB\xB_i\|_2$. We claim that 
  \begin{align}
    \E_{(\tilde\epsilonB_i)_{i\in[N]}}\E_{i_t,t\in[\step]}[(\noiseterm_{1,i}+\noiseterm_{2,i})(\noiseterm_{1,i}+\noiseterm_{2,i})^\top]
    \succeq
    \tilde\sigma^2(\wB^*)\noiselb
    \cdot\SigmaBhat
    \label{eq:fluctuation_process_1_lower_bound_claim1},
  \end{align}
  and we have $\noiselb\geq 1/2$ when $(\stepeff\gamma)\sambd^2/N\leq 1/3$.
  The proof of this claim is deferred to the end of the proof.

  Since $\noiseterm_{1,t}$ and $\noiseterm_{2,t}$ are zero-mean noise, 
 conditioned on $\SB,\wB^*$ and $(\xB_i,y_i)_{i=1}^N$, we have
  \begin{align*}
  &\quad\E_{i_t,t\in[\step]}\E_{(\tilde\epsilonB_k)_{k\in[N]}}[\|\SigmaB^{1/2}(\vB_\step-\thetaB_\step)\|_2^2]
  \\
   &=
   \sum_{i=1}^\step \gamma_i^2\cdot
   \E_{(\tilde\epsilonB_k)_{k\in[N]}}\E_{i_t,t\in[\step]}[
    \tr(\prod_{j=i+1}^\step (\IB-\gamma_j\SB\xB_{i_j}\xB_{i_j}^\top\SB^\top)\SigmaB \prod_{j=i+1}^\step (\IB-\gamma_j\SB\xB_{i_j}\xB_{i_j}^\top\SB^\top)^\top  (\noiseterm_{1,i}+\noiseterm_{2,i})(\noiseterm_{1,i}+\noiseterm_{2,i})^\top)
   ]\\
   &\succeq 
   \tilde\sigma^2(\wB^*)\noiselb
   \cdot
   \sum_{i=1}^\step \gamma_i^2
\E_{i_t,t\in[\step]}[
   \tr(\prod_{j=i+1}^\step (\IB-\gamma_j\SB\xB_{i_j}\xB_{i_j}^\top\SB^\top)\SigmaB \prod_{j=i+1}^\step (\IB-\gamma_j\SB\xB_{i_j}\xB_{i_j}^\top\SB^\top)^\top \SigmaBhat)
   ]\\
   &\succeq 
   \tilde\sigma^2(\wB^*)\noiselb
   \cdot
   \sum_{i=1}^\step \gamma_i^2
  \tr(\SigmaB  \SigmaBhat\prod_{j=i+1}^\step (\IB-2\gamma_j\SigmaBhat)),
  \end{align*}
  where the last line follows from the fact that $\E_{i_j}[(\IB-\gamma_j\SB\xB_{i_j}\xB_{i_j}^\top\SB^\top) P(\SigmaBhat )  (\IB-\gamma_j\SB\xB_{i_j}\xB_{i_j}^\top\SB^\top)]\succeq P(\SigmaBhat)(\IB-2\gamma_j\SigmaBhat)$ for any polynomial $P$. Continuing from the last line, we have
  \begin{align*}
    &\quad \E_{i_t,t\in[\step]}\E_{(\tilde\epsilonB_k)_{k\in[N]}}[\|\SigmaB^{1/2}(\vB_\step-\thetaB_\step)\|_2^2]
    \\
    & \succeq
    \tilde\sigma^2(\wB^*)\noiselb
    \cdot
    \tr \Big(\SigmaBhat\sum_{i=1}^\step \gamma_i^2  \prod_{j=i+1}^\step (\IB-2\gamma_j\SigmaBhat)
    \SigmaB \Big)\\
    & =
    \tilde\sigma^2(\wB^*)\noiselb
    \cdot
    \sum_{k=0}^{\lfloor \log\step-1\rfloor}
    \gamma_{\stepeff k +1}^2\cdot \tr \Big(\SigmaBhat \frac{\IB- (\IB-2\gamma_{\stepeff k +1}\SigmaBhat)^{\stepeff}}{2\gamma_{\stepeff k +1}\SigmaBhat}\prod_{j= k+1}^{\lfloor \log\step-1\rfloor} (\IB-2\gamma_{\stepeff j+1}\SigmaBhat)^{\stepeff}~\SigmaB \Big)  
    \\
    &\succeq
    \tilde\sigma^2(\wB^*)\noiselb
    \cdot
    \gamma_{0} \tr \Big( (\IB- (\IB-2\gamma_{0}\SigmaBhat)^{\stepeff})
    (\IB-4\gamma_{0}\SigmaBhat)^{\stepeff}~\SigmaB \Big),  
  \end{align*}
  where the last line uses $\prod_{j=0}^{\lfloor \log\step-1\rfloor}(1-2\gamma_{\stepeff j+1}x)^{\stepeff} \geq (1-4\gamma_{0}x)^{\stepeff}$ for $x\in[0,1/(2\gamma_{0})]$ by the stepsize definition~\eqref{eq:lr}. Since $\mu_{i+j+1}(XY)\leq \mu_{i+1}(X)\mu_{j+1}(Y)$ for any $i,j$ and matrices $X,Y$ of matching dimensions, it follows that for any $j\geq 1$
  \begin{align*}
 &\quad \mu_j((\IB- (\IB-2\gamma_{0}\SigmaBhat)^{\stepeff})
 (\IB-4\gamma_{0}\SigmaBhat)^{\stepeff}~\SigmaB)\\
 &\geq
\frac{\mu_{3j-2}((\IB- (\IB-2\gamma_{0}\SigmaBhat)^{\stepeff})
(\IB-4\gamma_{0}\SigmaBhat)^{\stepeff}/\SigmaBhat)}{\mu_j(\SigmaBhat^{-1})\mu_j(\SigmaB^{-1})}\\
&=
\mu_{M-j}(\SigmaBhat) \cdot \mu_{M-j}(\SigmaB)
\cdot 
\mu_{3j-2}\Big(\frac{(\IB- (\IB-2\gamma_{0}\SigmaBhat)^{\stepeff})(\IB-4\gamma_{0}\SigmaBhat)^{\stepeff}}{\SigmaBhat}\Big).
\end{align*}
Since $f(x) = (1-(1-2\gamma_0 x)^{\stepeff})(1-4\gamma_0 x)^\stepeff/x$ satisfies $f(x)\geq \stepeff\gamma_0/10$ for $x\in[0,1/(8\gamma_0 \stepeff)]$, it follows that $\frac{(\IB- (\IB-2\gamma_{0}\SigmaBhat)^{\stepeff})(\IB-4\gamma_{0}\SigmaBhat)^{\stepeff}}{\SigmaBhat}$ has at most $\fluthr=\#\{i\in[M]: \widehat\lambda_i\stepeff\gamma_0> 1/8\}$ eigenvalues that are less than 
$\stepeff\gamma_0/10$.
Therefore, we have
\begin{align*}
  \tr \Big( (\IB- (\IB-2\gamma_{0}\SigmaBhat)^{\stepeff})
    (\IB-4\gamma_{0}\SigmaBhat)^{\stepeff}~\SigmaB \Big)
    &=
    \sum_{j=1}^M \mu_j  \Big( (\IB- (\IB-2\gamma_{0}\SigmaBhat)^{\stepeff})
    (\IB-4\gamma_{0}\SigmaBhat)^{\stepeff}~\SigmaB \Big)\\
    &
\geq \frac{\stepeff\gamma_0}{10}\cdot
\sum_{i>\fluthr} \mu_i(\SigmaBhat) \cdot \mu_i(\SigmaB).
\end{align*}
Putting pieces together and taking expectation over $(\xB_i)_{i\in[N]}$, we obtain
\begin{align*}
 \E_{(\xB_i)_{i\in[N]}}\E_{(\tilde\epsilonB_k)_{k\in[N]}}[\|\SigmaB^{1/2}(\vB_\step-\thetaB_\step)\|_2^2]
 \gtrsim
\tilde\sigma^2(\wB^*)
\cdot \E_{(\xB_i)_{i\in[N]}}\Big[\frac{\noiselb\gamma_0\stepeff\gamma_0}{10}\cdot\sum_{i>\fluthr} \mu_i(\SigmaBhat) \cdot \mu_i(\SigmaB)\Big].
\end{align*}

\paragraph{Proof of claim~\eqref{eq:fluctuation_process_1_lower_bound_claim1}.}
By Eq.~\eqref{eq:gd_decomp_1} in the proof of Theorem~\ref{thm:scaling-law}, we have
\begin{align*}
  \thetaB_{t}-\vB
  &=
  -\prod_{i=1}^{t}\Big(\IB-{\gamma_i}\SigmaBhat\Big) \vB^*
  +
\varterm_t(\SigmaBhat)\SB\XB^\top\tilde\epsilonB,
\end{align*}
where 
\begin{align*}
  \varterm_t(\SigmaBhat) 
  \defn
  \frac{1}{N} \sum_{i=1}^t \gamma_i\cdot\prod_{j=i+1}^t (\IB-\gamma_j\SigmaBhat)
  =
  \frac{\IB-\prod_{i=1}^t (\IB-\gamma_i\SigmaBhat)}{N\SigmaBhat}.
\end{align*}
Let \begin{align*}
  \noiseterm^s_i 
  &\defn-(\SB\xB_{i_t}\xB_{i_t}^\top\SB^\top-\SigmaBhat)\varterm_{t-1}(\SigmaBhat)\SB\XB^\top\tilde\epsilonB+(\SB\xB_{i_t}\tilde\epsilon_{i_t}-\SB\XB^\top\tilde\epsilonB/N)\\
  &
  =\noiseterm_{1,i}+\noiseterm_{2,i}
  +
  (\SB\xB_{i_t}\xB_{i_t}^\top\SB^\top-\SigmaBhat)\prod_{i=1}^t (\IB-\gamma_i\SigmaBhat)\vB^*.
\end{align*}
Since 
% $\E_{\tilde\epsilonB}[(\SB\xB_{i_t}\xB_{i_t}^\top\SB^\top-\SigmaBhat)\prod_{i=1}^t (\IB-\gamma_i\SigmaBhat)\vB^*\tilde\epsilonB^\top]=\zeroB$, 
$\E_{i_t,t\in[\step]}[\noiseterm^s_i]=0$,
it can be verified that
\begin{align*}
  &\quad~\E_{\tilde\epsilonB}\E_{i_t,t\in[\step]}[(\noiseterm_{1,i}+\noiseterm_{2,i})(\noiseterm_{1,i}+\noiseterm_{2,i})^\top]\\
  &\geq 
  \E_{\tilde\epsilonB}\E_{i_t,t\in[\step]}[\noiseterm^s_i\noiseterm^{s\top}_i]\\
  &
  \succeq
  \tilde\sigma^2(\wB^*)\cdot\Big[
  \frac{N-1}{N}\SigmaBhat
  -
  (\SB\xB_{i_t}\xB_{i_t}^\top\SB^\top-\SigmaBhat)\Big[\frac{\IB-\prod_{i=1}^t (\IB-\gamma_i\SigmaBhat)^2}{N\SigmaBhat}\Big](\SB\xB_{i_t}\xB_{i_t}^\top\SB^\top-\SigmaBhat)^\top
  \Big]
  \\
  &
  \succeq
  \tilde\sigma^2(\wB^*)\cdot\Big[
  \frac{N-1}{N}\SigmaBhat
  -
  (\SB\xB_{i_t}\xB_{i_t}^\top\SB^\top-\SigmaBhat)
  \Big[\frac{\IB- (\IB-2\gamma_0\SigmaBhat)^{2\stepeff}}{N\SigmaBhat}\Big](\SB\xB_{i_t}\xB_{i_t}^\top\SB^\top-\SigmaBhat)^\top
  \Big]
  .
\end{align*}
Since $\sup_{x\in[0,1/(2\tilde \gamma)]} (1-(1- 2\tilde \gamma x)^{2\stepeff})/x \leq 4\tilde \gamma \stepeff$ and $\E_{i_t}[(\SB\xB_{i_t}\xB_{i_t}^\top\SB^\top-\SigmaBhat)^2]\preceq  \E_{i_t}[(\SB\xB_{i_t}\xB_{i_t}^\top\SB^\top)^2]\preceq \sambd^2\SigmaBhat$, we further have
\begin{align*}
  \E_{\tilde\epsilonB}\E_{i_t,t\in[\step]}[(\noiseterm_{1,i}+\noiseterm_{2,i})(\noiseterm_{1,i}+\noiseterm_{2,i})^\top]
  &\succeq
 \tilde\sigma^2(\wB^*)\cdot 
 \Big[
 \frac{N-1}{N}\SigmaBhat
  -
  \frac{4\gamma_0 \stepeff}{N} 
 \E_{i_t}[(\SB\xB_{i_t}\xB_{i_t}^\top\SB^\top-\SigmaBhat)^2]
 \Big]
 \\
  &
  \succeq
  \tilde\sigma^2(\wB^*)\cdot
  \Big(\frac{N-1}{N}- 
  \frac{4\gamma_0 \stepeff}{N} 
  \sambd^2
  \Big)\SigmaBhat
  \\
  &
  \succeq
  \tilde\sigma^2(\wB^*)\cdot\SigmaBhat/2
\end{align*}
when $(\stepeff\gamma)\sambd^2/N\leq 1/3$.

\end{proof}

\subsection{Fluctuation error under the source condition}\label{sec:fluctuation_error_source}
\begin{lemma}[Fluctuation error under the source condition]\label{lm:fluctuation_error_source}
  Under the notation and assumptions in Theorem~\ref{thm:scaling-law} and suppose that $\stepeff\lesssim N^{(1-\varepsilon)a}/\gamma$ for some small constant $\varepsilon\in(0,1]$.  For any $s\in[0,1-1/a)$, there exists some $(s, \varepsilon,a)$-dependent constant $c>0$ such that the~\eqref{eq:multi-sgd} 
   process satisfies
  \begin{align*}
\E[\fluctuationerr] \leq  
c  \gamma \log N
\cdot
\Big[1
+
\frac{\log^2 N(\stepeff\gamma)^{2-s}}{N^2}  
\Big](\stepeff\gamma)^{1/a-1}
.
  \end{align*}
  with probability at least $1-\exp(-\Omega(M))$ over the randomness of $\SB$. Consequently, choosing $s=1-1/(a(1-\varepsilon/2))$ yields
  \begin{align*}
\E[\fluctuationerr]
&\lesssim 
\gamma \log N \cdot
 \Big[1+
 \frac{\log^2 N(\stepeff\gamma)^{1/(a(1-\varepsilon/2))+1}}{N^2}
\Big] (\stepeff\gamma)^{1/  a-1}\\
&
\leq 
c'\cdot \gamma \log N \cdot \Big[(\stepeff\gamma)^{1/  a-1}
+
\frac{(\stepeff\gamma)^{1/a}}{N}
\Big]
  \end{align*}
  for some $(\varepsilon,a)$-dependent constant $c'>0$ with probability at least $1-\exp(-\Omega(M))$.

  Moreover, assume in addition that $\stepeff\lesssim N/\gamma$. Then with probability at least $1-\exp(-\Omega(M))$ over the randomness of $\SB$, we have
  \begin{align*}
   \E[\fluctuationerr]
    &\geq
    c''
    \gamma 
     (\stepeff\gamma)^{1/  a-1}
  \end{align*}
  for some $a$-dependent constant $c''>0$.

  \end{lemma}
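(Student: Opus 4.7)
The plan is to verify the upper bound via Lemma~\ref{lm:fluctuation_error_upper} in its power-law form, and the lower bound via Lemma~\ref{lm:fluctuation_error_lower}, using the spectral control we already have from Lemmas~\ref{lm:power_decay_1} and~\ref{lm:power_decay_2}. As a preliminary, I would repeat the verification in Lemma~\ref{lm:bias_source_condition} to confirm Assumption~\ref{assump:stepsize_condition} with probability $1-e^{-\Omega(M)}$ over $\SB$, which also gives $\mu_j(\SigmaB)\eqsim j^{-a}$ and then, by Lemma~\ref{lm:power_decay_2}, $\mu_j(\SigmaBhat)\eqsim j^{-a}$ for $j\le \min\{M,N\}/\tilde c$. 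This lets me apply the second (power-law) part of Lemma~\ref{lm:fluctuation_error_upper}, giving $\E_{i_t}\|\SigmaB^{1/2}(\vB_\step-\thetaB_\step)\|_2^2\lesssim \E[\fluctuationerrupper]\cdot\gamma^{1/a}\stepeff^{1/a-1}$.

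The bulk of the upper-bound work is estimating $\E[\fluctuationerrupper]$. I would bound each ingredient separately. The ridge ratio $\ridgeratioupper$ reduces to a constant by the covariance-replacement trick (Lemma~\ref{lm:ridge_cov_concen}) with $\thres=1/(\stepeff\gamma)$ and the power-law spectrum. The Gaussian scalars $\max_i(\xB_i^\top\SB^\top\vB^*)^2$ and $\max_i\tilde\epsilon_i^2$ are each $\lesssim \log N$ by sub-Gaussian tail bounds plus a union bound over $i\in[N]$. For $\max_{i,t}(\xB_i^\top\SB^\top\thetaB^\noi_t)^2$ I would exploit that $\xB_i$ is independent of $\thetaB^\noi_t$ in the leave-one-out construction: conditioning on $\thetaB^\noi_t$ gives a Gaussian with variance $\|\thetaB^\noi_t\|_\SigmaB^2$, and the supremum of $\|\thetaB^\noi_t\|_\SigmaB^2$ over $t\in[\step]$ is controlled by the same bias+variance estimates as in Lemmas~\ref{lm:gd_bias_upper} and~\ref{lm:gd_var_upper} (applied to $N-1$ samples). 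The quadratic form $\max_i\|\xB_i^\top\SB^\top\|_{\SigmaB^{-s}}^2$ is $\lesssim \tr(\SigmaB^{1-s})\lesssim 1$ by Hanson--Wright, which is finite precisely when $s<1-1/a$. Finally $\max_i\|\SB\xB_i\|_2^2\lesssim\log N$ and $\myvalmax^2\lesssim\log^2 N$ similarly. Collecting these yields the first displayed bound. For the simplification, I would set $s=1-1/(a(1-\varepsilon/2))$ so that $(1-s)=1/(a(1-\varepsilon/2))$; combined with $\stepeff\gamma\lesssim N^{(1-\varepsilon)a}<N^{a(1-\varepsilon/2)}$ this gives $(\stepeff\gamma)^{1-s}/N\lesssim 1$, and hence the $\log^2 N(\stepeff\gamma)^{2-s}/N^2$ term reduces to $(\stepeff\gamma)/N$ times polylogarithmic factors, matching the stated bound.

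For the lower bound, I would plug into Lemma~\ref{lm:fluctuation_error_lower}. Here the key observation is that the stepsize rule $\gamma_0\le 1/[4\max_i\|\SB\xB_i\|_2^2]$ forces $4\gamma_0\sambd^2\le 1$, so $4\gamma_0\stepeff\sambd^2/N\le\stepeff/N$. Hence when $\stepeff\lesssim N/\gamma$ (combined with the usual Assumption~\ref{assump:stepsize_condition} control that $\gamma_0\eqsim\gamma$ on a high-probability event), $\noiselb$ is bounded below by a fixed positive constant. Next, by $\mu_j(\SigmaBhat)\eqsim j^{-a}$ and a truncation argument (identical to the one in Lemma~\ref{lm:bias_source_condition} for $\expkthr$), I get $\fluthr\lesssim(\stepeff\gamma)^{1/a}$. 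Using the extra hypothesis $(\stepeff\gamma)^{1/a}\le M/c$ in Theorem~\ref{thm:scaling-law}(4), the tail sum can be estimated as
\begin{align*}
\sum_{i>\fluthr}\mu_i(\SigmaBhat)\mu_i(\SigmaB)\gtrsim\sum_{i\eqsim(\stepeff\gamma)^{1/a}}^{\min\{M,N\}/\tilde c}i^{-2a}\eqsim(\stepeff\gamma)^{1/a-2},
\end{align*}
since $a>1$. Multiplying by the factor $\gamma_0\stepeff\gamma_0\eqsim\gamma\cdot\stepeff\gamma$ out front and using $\sigma^2\eqsim 1$ gives $\E[\fluctuationerr]\gtrsim\gamma(\stepeff\gamma)^{1/a-1}$, as claimed.

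The main technical obstacle I foresee is controlling the leave-one-out magnitude $\max_{i,t}(\xB_i^\top\SB^\top\thetaB^\noi_t)^2$ uniformly in $t$: this requires a bias--variance bound for every intermediate GD iterate $\thetaB^\noi_t$, not just the last, and must be combined with a union bound over $i\in[N]$ that does not blow up the log factors beyond what the statement allows. A secondary annoyance is matching the exponent bookkeeping when choosing $s$: the exponent $s$ must simultaneously satisfy $s<1-1/a$ (to keep $\tr(\SigmaB^{1-s})$ finite) and deliver the target rate $(\stepeff\gamma)^{1/a}/N$ after the $(\stepeff\gamma)^{2-s}/N^2$ term is converted, which is why the condition $\stepeff\lesssim N^{(1-\varepsilon)a}/\gamma$ with a strictly positive slack $\varepsilon$ is needed.
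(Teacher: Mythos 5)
Your proposal follows essentially the same route as the paper's proof. Both reduce the upper bound to the power-law form of Lemma~\ref{lm:fluctuation_error_upper} and the lower bound to Lemma~\ref{lm:fluctuation_error_lower}, verify Assumption~\ref{assump:stepsize_condition} via the power-law spectral lemmas, bound each ingredient of $\fluctuationerrupper$ separately (Gaussian scalars via union bound, the ridge ratio via Lemma~\ref{lm:ridge_cov_concen}, the leave-one-out term $\max_{i,t}(\xB_i^\top\SB^\top\thetaB^\noi_t)^2$ by conditioning on $\thetaB^\noi_t$ and invoking the bias/variance machinery of Lemmas~\ref{lm:gd_bias_upper} and~\ref{lm:gd_var_upper} on $N-1$ samples), and choose $s=1-1/(a(1-\varepsilon/2))$ to land the exponent; the lower bound uses the same control of $\noiselb$, $\fluthr$, and the tail sum of eigenvalue products.

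One minor imprecision: the union bound over $i\in[N]$ for the Hanson--Wright quadratic form $\|\xB_i^\top\SB^\top\|_{\SigmaB^{-s}}^2$ yields $\lesssim \log N$, not $\lesssim 1$ (the paper's Eq.~\eqref{eq:fluctuation_fact_3}); similarly, the magnitude of $\myvalmax^2$ is tied to the leave-one-out bias/variance envelope $T_\BB+T_\VB$ rather than being a fixed $\log^2 N$. These slips wash out under the $\gamma\log N$ prefactor already present in the statement, so they do not affect the final exponents, but you should carry the $\log$ factors explicitly to reproduce the stated bound verbatim.
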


\begin{proof}[Proof of Lemma~\ref{lm:fluctuation_error_source}]

%%%%look up
  % \begin{align*}
  %   &\max_{i\in[N]}(\xB_i^\top\SB^\top\vB^*)^2
  %   + \max_{i\in[N]}\tilde\epsilon_i^2
  %   +
  %   \max_{i\in[N],t\in[\step]} 
  %    (\xB_i^\top\SB^\top\thetaB^\noi_{t})^2
  %    +
  %    \max_{i\in[N]} 
  %    \|\xB_i^\top\SB^\top\|^2_{\SigmaB^{-s}}
  %    \cdot
  %   \bounddiff
  %   \end{align*}
  %   \begin{align*}
  %     \myvalmax
  %     &\defn \max_{i\in[N],t\in[\step]}|y_i+ \xB_i^\top \SB^\top\thetaB^\noi_t|,\\
  %     \bounddiff
  %     &\defn
  %     \myvalmax^2\cdot
  %     \max_{i\in[N]} \|\SB\xB_i\|_2^2
  %     \cdot
  %     \|(\SigmaBhat+\thres\IB)^{-1/2}(\SigmaB+\thres\IB)^{1/2}\|^2
  %     \cdot 
  %     \frac{(\stepeff\gamma)^{2-s}}{N^2},
  %   \end{align*} \LL{delete}

 The proof follows from instantiating Lemma~\ref{lm:fluctuation_error_upper}~and~\ref{lm:fluctuation_error_lower} under the source condtion. We start  by establishing concentration bounds on some quantities that appear in the bounds in Lemma~\ref{lm:fluctuation_error_upper}~and~\ref{lm:fluctuation_error_lower}.

First, note that we have for any $s\in[0,1-1/a)$, conditioned on $\SB$ and $\wB^*$, with probability at least $1-\delta$ over $(\xB_i,y_i)_{i=1}^N$,
  \begin{subequations}
  \begin{align}
    \max_{i\in[N]}(\xB_i^\top\SB^\top\vB^*)^2 
    &\lesssim \|\wB^*\|^2_{\HB}\log (N/\delta)
     \label{eq:fluctuation_fact_1},
    \\
  \max_{i\in[N]}\tilde\epsilon_i^2
  &\lesssim 
 ( \sigma^2
  +
 {\|\wB^*\|^2_{\HB}}
  )
  \log (N/\delta)
  \label{eq:fluctuation_fact_2},
  \\
  \max_{i\in[N]}\|\xB_i^\top\SB^\top\|_{\SigmaB^{-s}}^2
  &\lesssim
  \tr(\SigmaB^{1-s}) 
   + \log(N/\delta)+\sqrt{\tr(\SigmaB^{2-2s})\log(N/\delta)}
  \lesssim 
\log(N/\delta) \label{eq:fluctuation_fact_3},
  \end{align}
  where Eq.~\eqref{eq:fluctuation_fact_1}~and~\eqref{eq:fluctuation_fact_2} follow from a union bound on concentration inequalities for Gaussian random variables; 
   Eq.~\eqref{eq:fluctuation_fact_3} uses Hanson-Wright inequality and Lemma~\ref{lm:power_decay_1}.
  \end{subequations}

Moreover, we will show that
  conditioned on $\SB,\wB^*$ and $\thetaB^\noi_{t}$, $\xB_i^\top\SB^\top\thetaB^\noi_{t}$ is a  zero-mean random Gaussian variable with covariance
  \begin{align}
\E[(\xB_i^\top\SB^\top\thetaB^\noi_{t})^2\mid\SB,\wB^*,\thetaB^\noi_{t}]
=
\thetaB^{\noi\top}_t\SigmaB\thetaB^\noi_t
\lesssim 
(
T_\BB+T_\VB)\cdot (\sigma^2+\|\wB^*\|^2_{\HB})
    \label{eq:claim_example_fluc_1},
  \end{align}
  where 
  \begin{align*}
    T_\BB
    &\defn
    \begin{cases}
     1+\log(1/\delta)/N
    +
    {{\sf t}(\delta)}\cdot(\stepeff\gamma)^2\cdot 
    (1+\log(1/\delta)/N)^2
    ~~~\text{when}~~~ M\leq N/2,
    \\
    \BB_{\BB}\cdot(1+\log(1/\delta)/N)^2
    + 
    1
    ~~~\text{when}~~~ M>N/2,
     \end{cases}
    \\
    T_\VB
    &\defn
\max_{i\in[N]}
\|(\SigmaBhat^\noi+\thres\IB)^{-1/2}(\SigmaB+\thres\IB)^{1/2}\|^2 \cdot \max_{i\in[N]}\tilde\epsilon_i^2
\cdot \varshortupperr/N
    \end{align*}
    with $\BB_\BB$ defined in Lemma~\ref{lm:gd_bias_upper}, $\varshortupperr$ defined in Eq.~\eqref{eq:var_upper_bound_pf_1}, and ${{\sf t}(\delta)}\defn 1_{\{\log(1/\delta)\gtrsim N\}}$. 
  Thus, we have by a union bound that 
  \begin{align}
    \max_{i\in[N],t\in[\step]} (\xB_i^\top\SB^\top\thetaB^\noi_{t})^2
   & \lesssim
   \max_{i\in[N],t\in[\step]} \thetaB^{\noi\top}_t\SigmaB\thetaB^\noi_t \cdot
   \log(N\step/\delta)\notag
   \\
   &\lesssim
   (T_\BB+T_\VB)\cdot (\sigma^2+\|\wB^*\|^2_{\HB})\cdot\log(N/\delta)
    \label{eq:fluctuation_fact_4}
  \end{align}
  with probability at least $1-\delta$ over the randomness of $(\xB_i,y_i)_{i=1}^N$ conditioned on $\SB$ and $\wB^*$.
  Moreover, we note that
  $\mu_j(\SigmaB)\eqsim j^{-a}$ for $j\in[M]$ with probability at least $1-\exp(-\Omega(M))$
 by Lemma~\ref{lm:power_decay_1}, and conditioned on $\SB$ and $\wB^*$, we have $\mu_j(\SigmaBhat)\eqsim j^{-a}$ for $j\leq\min\{M,N/ c\}$  and $\mu_j(\SigmaBhat)\lesssim j^{-a}$ otherwise with probability at least $1-\exp(-\Omega(N))$ by Lemma~\ref{lm:power_decay_2}.

\paragraph{Proof of the upper bound.}
Therefore, substituting Eq.~\eqref{eq:fluctuation_fact_1}---\eqref{eq:fluctuation_fact_3}~and~\eqref{eq:fluctuation_fact_4} into the expression in $\myvalmax,\bounddiff$ and 
$\fluctuationerrupper$, 
 applying~Eq.~\eqref{eq:bias_eigen_concen_1} to bound $\varshortupperr$ (and $\tr(\SigmaBhat^{1/\alpha})$ for some $\alpha = a +\varepsilon>a$), 
 using part~2 of Lemma~\ref{lm:fluctuation_error_upper}\footnote{More specifically, we apply part~2 of Lemma~\ref{lm:fluctuation_error_upper} on the event with probability at least $1-\exp(-\Omega(N))$ where conditions on $\SigmaBhat$ specified in Lemma~\ref{lm:power_decay_2} hold, and apply part~1 of Lemma~\ref{lm:fluctuation_error_upper} for some $\alpha = a +\varepsilon>1$ otherwise.} 
 and taking expectation w.r.t. $(\xB_i,y_i)_{i=1}^N$ conditioned on $\SB$ and $\wB^*$, 
  it can be verified that 
\begin{align*}
  \E[\fluctuationerr\mid\SB,\wB^*]
  \lesssim
  (\sigma^2+\|\wB^*\|^2_{\HB})\cdot \gamma\log N
  \cdot
  \Big[1
  +
  \frac{\log^2 N(\stepeff\gamma)^{2-s}}{N^2}  
  \Big]\cdot 
  (\stepeff\gamma)^{1/a-1}.
\end{align*} Taking expectation w.r.t. $\wB^*$ yields the desired result.

\paragraph{Proof of the lower bound.} Setting $s=0$ in Eq.~\eqref{eq:fluctuation_fact_3}, we have $\noiselb \geq 1/2$ when $\gamma\stepeff\noiselb/N \geq 1/3$, which happens with probability at least $1-N^{-c_1/c_2}$ for some constant $c_1>0$ when $\gamma\leq c_2/\log N$ for some  $c_2>0$. Moreover, by the concentration properties on $\mu_j(\SigmaBhat)$  and $\mu_j(\SigmaB)$ in the previous discussion, the assumptions on $\gamma$, and a union bound, conditioned on $\SB$ such that  $\mu_j(\SigmaB)\eqsim j^{-a}$ for $j\in[M]$ (which happens with probability at least $1-e^{-\Omega(M)}$), we have with probability  at least $1/2$ over the randomness of $(\xB_i,y_i)_{i=1}^N$ that 
\begin{align*}
\mu_j(\SigmaBhat)&\eqsim j^{-a} ~~~~\text{for}~~~~ j\leq \min\{M,N/c\},\text{~~and}\\
\fluthr
&\lesssim (\stepeff\gamma)^{1/a},~~~\noiselb\geq 1/2,~~~ \gamma_0 = \gamma. 
\end{align*}
Thus, when $(\stepeff\gamma)^{1/a}\leq M/\tilde c\leq \min\{M,N/c\}$ for some $ \tilde c,c>0$ sufficiently large, we have by Lemma~\ref{lm:fluctuation_error_lower} that
\begin{align*}
  \E[\fluctuationerr]
  &\gtrsim
 \E_{(\xB_i)_{i\in[N]}}\Big[\frac{\noiselb\gamma_0\stepeff\gamma_0}{10}\cdot\sum_{i>\fluthr} \mu_i(\SigmaBhat) \cdot \mu_i(\SigmaB)\Big]\\
 &\gtrsim
{\stepeff\gamma^2}
\cdot
\sum_{i=\kthr+1}^{\min\{M,N/c\}} i^{-2a}\\
&\gtrsim
{\stepeff\gamma^2}
 \cdot
 \kthr^{1-2a} \gtrsim
 \gamma (\stepeff\gamma)^{1/a-1}
\end{align*}
with probability at least $1-e^{-\Omega(M)}$ over the randomness of $\SB$.

\paragraph{Proof of claim~\eqref{eq:claim_example_fluc_1}.}
Note that 
  \begin{align*}
\thetaB^{\noi\top}_t\SigmaB\thetaB^\noi_t
&\lesssim
\vB^{*\top}\SigmaB\vB^*
+
(\thetaB^\noi_t-\vB^*)^\top\SigmaB(\thetaB^\noi_t-\vB^*)\\
&
\overset{(i)}{\lesssim}
\|\vB^*\|^2_{\SigmaB}
+
\|\prod_{i=1}^t(\IB-\gamma_i\SigmaBhat^\noi)\vB^*\|^2_{\SigmaB}
+ 
\|\varterm(\SigmaBhat^\noi)(\SB\XB^\top\tilde\epsilonB)^\noi\|^2_{\SigmaB}.
 \end{align*}
where step~(i) follows from the decomposition
 \begin{align*}
     \thetaB^{\noi}_t-\vB^* 
     &= 
       \prod_{t=1}^{t}\Big(\IB-{\gamma_t}\SigmaBhat^\noi\Big)( \thetaB_{0} - \vB^*)
       +
 \varterm(\SigmaBhat^\noi)(\SB\XB^\top\tilde\epsilonB)^\noi
 \\
&= -\prod_{t=1}^{t}\Big(\IB-{\gamma_t}\SigmaBhat^\noi\Big)\vB^*
+
\varterm(\SigmaBhat^\noi)(\SB\XB^\top\tilde\epsilonB)^\noi
 \end{align*}
 as similar to Eq.~\eqref{eq:gd_decomp_1},
 where $(\SB\XB^\top\tilde\epsilonB)^\noi
 \defn 
 \sum_{j\neq i}\SB\xB_j\tilde\epsilon_j/N$ and 
 \begin{align*}
      \varterm(\SigmaBhat^\noi) 
      \defn
     \frac{1}{N} \sum_{i=1}^t \gamma_i\cdot\prod_{j=i+1}^t (\IB-\gamma_j\SigmaBhat^\noi)
     =
     \frac{\IB-\prod_{i=1}^t(\IB-\gamma_i\SigmaBhat^\noi)}{N\SigmaBhat^\noi}.
 \end{align*}
Let $\wnormalize\defn \SigmaB^{1/2}\vB^*$.
 Note that $(\thetaB^\noi_t)_{t=1}^\step$ can be viewed as a~\eqref{eq:gd} process on $(\xB_j,y_j)_{j\neq i}$ with stepsize $(N-1)\gamma_t/N$.

 Following the proof of Lemma~\ref{lm:gd_bias_upper} (Eq.~\ref{eq:case1_pf1}~and~\ref{eq:case2_pf1}), it can be verified that, conditioned on $\SB$ and $\wB^*$,
\begin{align*}
  \|\prod_{i=1}^t(\IB-\gamma_i\SigmaBhat^\noi)\vB^*\|^2_{\SigmaB}
 & \lesssim
 \begin{cases}
    B_{\sf F,1}\overset{d}{=}
\|\ZB\wnormalize\|_2^2
+ (\stepeff\gamma)^2\|\ZB^\top\ZB\|_2^2\oneB_{\{\ZB^\top\ZB\nsucceq \IB_M/5\}}\cdot \|\wnormalize\|_2^2
~~~\text{when}~~~ M\leq N/2,
\\
B_{\sf F,2}
\overset{d}{=}
B_{\sf F,3}\cdot
\|\ZB\wnormalize\|_2^2
+ 
\|\wnormalize\|_2^2
~~~\text{when}~~~ M>N/2,
 \end{cases}
 \end{align*}
where $\ZB\in\R^{(N-1)\times M}$ has i.i.d $\Ncal(0,1/N)$ entries and
\begin{align*}
  B_{\sf F,3}
  \defn
(\stepeff\gamma)^2\|\ZB_\kbn\SigmaB_\kbn\ZB_\kbn^\top\|^2_2 
  + 
 1+
  \|\ZB_\kbn\SigmaB_\kbn^2\ZB_\kbn^\top\|_2
\end{align*}
with $\khalf = N/2$. In addition, we have
$\|\wnormalize\|_2^2\leq \|\vB^*\|^2_{\SigmaB}\leq\|\wB^*\|^2_{\HB}$ 
and $\|\ZB\wnormalize\|_2^2\leq \|\vB^*\|^2_{\SigmaB}\cdot(2+\log(1/\delta)/N)$ with probability at least $1-\delta$ by concentration properties of chi-squared random variables. Therefore, putting pieces together, applying Lemma~\ref{lm:power_decay_1},~Eq.~\eqref{eq:case2_pf2} and concentration properties of Gaussian covariance matrices (see e.g., Theorem~6.1 in~\citet{wainwright2019high}), we obtain
with probability at least $1-\delta$ conditioned on $\SB$ and $\wB^*$,
\begin{align}
  \|\prod_{i=1}^t(\IB-\gamma_i\SigmaBhat^\noi)\vB^*\|^2_{\SigmaB}
 & \lesssim
 \begin{cases}
  \|\wB^*\|^2_{\HB}\cdot (2+\log(1/\delta)/N
+
{{\sf t}(\delta)}\cdot(\stepeff\gamma)^2\cdot 
(2+\log(1/\delta)/N)^2
 )
~~~\text{when}~~~ M\leq N/2,
\\
\|\wB^*\|^2_{\HB}
\cdot(
\BB_{\BB}\cdot(1+\log(1/\delta)/N)^2
+ 
1
)
~~~\text{when}~~~ M>N/2,
 \end{cases}\label{eq:bias_fluc_final}
 \end{align}
 where ${{\sf t}(\delta)}\defn 1_{\{\log(1/\delta)\gtrsim N\}}$ and $\BB_\BB$ is defined in Lemma~\ref{lm:gd_bias_upper}, with probability at least $1-e^{-\Omega(M)}$ over the randomness of $\SB$.

 Adopt the shorthand $\varMaFullrootb{t}^\noi$ for $\IB- \prod_{t=1}^\step (\IB-\gamma_t\SigmaBhat)$ and $\ratioupper_i$ for 
 $\|(\SigmaBhat^\noi+\thres\IB)^{-1/2}
 (\SigmaB^\noi+\thres\IB)^{1/2}\|^2$.

Similarly, following the proof of the upper bound in Lemma~\ref{lm:gd_var_upper}, we have (choosing $\thres = 1/(\stepeff\gamma)$)
\begin{align}
&\quad\|\varterm(\SigmaBhat^\noi)(\SB\XB^\top\tilde\epsilonB)^\noi\|^2_{\SigmaB}\notag
\\
&\lesssim
\frac{\max_{i\in[N]}\tilde\epsilon_i^2\cdot}{N}\cdot
\tr(\varMaFullrootb{t}^\noi\SigmaB\varMaFullrootb{t}^\noi\SigmaBhat^{-1})\notag\\
&\leq
\ratioupper_i
\cdot 
\frac{\max_{i\in[N]}\tilde\epsilon_i^2\cdot}{N}\cdot
\tr(\varMaFullrootb{t}^{\noi,2}+\thres\SigmaBhat^{-1}
\varMaFullrootb{t}^{\noi,2})
\notag\\
&\overset{(ii)}{\lesssim} 
\frac{\ratioupper_i}{N}
\cdot 
\max_{i\in[N]}\tilde\epsilon_i^2
\cdot 
\varshortupperr_i
\overset{(iii)}{\lesssim} 
\frac{\ratioupper_i}{N}
\cdot 
\max_{i\in[N]}\tilde\epsilon_i^2
\cdot \varshortupperr
\label{eq:var_fluc_final}
\end{align}
where $\varshortupperr$ is defined in Eq.~\eqref{eq:var_upper_bound_pf_1} and 
\begin{align*}
 \varshortupperr_i
 &\defn 
 \#\{j\in[M]:\widehat\lambda^\noi_j\stepeff\gamma_0>1/4\}
 +
(\stepeff\gamma_0)\sum_{j:\widehat\lambda^\noi_j\stepeff\gamma_0\leq 1/4} \widehat\lambda^\noi_j,
\end{align*}
and $\widehat\lambda^\noi_j$ is the $j$-th largest eigenvalue of $\SigmaBhat^\noi$. Here, step~(ii) uses Eq.~\eqref{eq:var_upper_bound_pf_1} and step~(iii) uses the fact that $\widehat\lambda^\noi_j\leq \widehat\lambda_j$ for all $j\in[M]$ since $\SigmaBhat^\noi\preceq \SigmaBhat$.

\end{proof}

\subsection{Proof of Lemma~\ref{lm:fluctuation_previous_last_1}}\label{sec:pf_fluctuation_previous_last_1}
The proof of Lemma~\ref{lm:fluctuation_previous_last_1} follows from similar ideas as in the proof of Proposition~1 of~\citet{pillaud2018statistical}. We first state a few lemmas  
 that contribute to the proof. These lemmas are modified versions of the lemmas in~\citet{pillaud2018statistical}, but we provide their proofs here for completeness.

\begin{lemma}[Semi-stochastic SGD; Lemma~1~in~\citet{pillaud2018statistical}]\label{lm:previous_lm1}
  Under the notation and assumptions in Lemma~\ref{lm:fluctuation_previous_last_1}, consider any stochastic process
$\smprocess_t = (\IB-\gamma_t\samcov)\smprocess_{t-1} + \gamma_t\cdot \smnoiseterm_{t}$ with $\smprocess_0=\zeroB,t\in[\step]$ and  $(\smnoiseterm_t)_{t=1}^{\step}$  such that $\E[\smnoiseterm_t]=\zeroB$ and $\E[\smnoiseterm_t\smnoiseterm_t^\top]\preceq\smsamnoi^2\samcov$. Then for any $u\in[0,1]$, we have
\begin{align*}
  \E[\|\samcov^{u/2}\smprocess_{\step}\|_2^2]
  \leq 
 c \cdot
  \smsamnoi^2\gamma_0\tr(\samcov^{1/\alpha}) 
  \cdot(\stepeff\gamma_0)^{1/\alpha-u}
\end{align*}
for any $\alpha>1$ and some $\alpha$-dependent constant $c>0$. Moreover, there exists some $a$-dependent constant $c',\tilde c>1$ such that when  $\mu_j(\samcov)\eqsim j^{-a}$ for $j\leq \min\{M,N/\tilde c\}$, we have
\begin{align*}
  \E[\|\samcov^{u/2}\smprocess_{\step}\|_2^2]
  \leq c'
  \smsamnoi^2 \cdot  \gamma_0 (\stepeff\gamma_0)^{1/a-u}
\end{align*}
for any $u\in[0,1]$.
\end{lemma}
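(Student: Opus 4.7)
The strategy is to unroll the linear recursion, use independence and mean-zeroness to reduce to a deterministic scalar problem per eigenvalue, and then exploit the geometric stepsize schedule to interpolate two bounds. Unrolling gives $\smprocess_\step = \sum_{i=1}^\step \gamma_i P_i \smnoiseterm_i$ with $P_i \defn \prod_{j=i+1}^\step(\IB-\gamma_j\samcov)$. Independence and zero mean, together with $\E[\smnoiseterm_i\smnoiseterm_i^\top]\preceq \smsamnoi^2\samcov$, give the clean estimate
\begin{equation*}
\E\bigl[\|\samcov^{u/2}\smprocess_\step\|_2^2\bigr] \leq \smsamnoi^2\sum_{i=1}^\step \gamma_i^2\,\tr\bigl(\samcov^{u+1}P_i^2\bigr) = \smsamnoi^2 \sum_j A\bigl(\mu_j(\samcov)\bigr),
\end{equation*}
where $A(\lambda)\defn \lambda^{u+1}\sum_{i=1}^\step\gamma_i^2\prod_{j>i}(1-\gamma_j\lambda)^2$. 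The task reduces to bounding $A(\lambda)$ uniformly across the spectrum (with $\gamma_0\lambda \leq 1/2$ as provided by the ambient context).

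The core technical work is establishing two per-eigenvalue bounds. \emph{(a) Coarse bound:} $A(\lambda)\leq 2\stepeff\gamma_0^2\lambda^{u+1}$ from $\prod(1-\gamma_j\lambda)^2\leq 1$ and $\sum_i\gamma_i^2\leq 2\stepeff\gamma_0^2$. \emph{(b) Refined bound:} $A(\lambda)\lesssim \lambda^{u-1}/\stepeff$, obtained by partitioning into $\log\step$ epochs $E_k$ of length $\stepeff$ with constant stepsize $\gamma_k=\gamma_0/2^k$. Using $1-(1-\gamma_k\lambda)^2\geq \gamma_k\lambda$, the per-epoch collapsing gives
\begin{equation*}
A(\lambda)\leq \lambda^u \sum_k \gamma_k\bigl(1-(1-\gamma_k\lambda)^{2\stepeff}\bigr) R_{k+1}, \qquad R_{k+1}\defn \prod_{k'>k}(1-\gamma_{k'}\lambda)^{2\stepeff}.
\end{equation*}
Since $\sum_{k'>k}\gamma_{k'}\leq \gamma_k$ for the geometric schedule, one has $R_{k+1}\leq \exp(-2x_k)$ with $x_k\defn \stepeff\gamma_k\lambda$. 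For $\lambda\geq 1/(\stepeff\gamma_0)$ I would locate the ``active'' epoch $k^*$ with $x_{k^*}\in[1/2,1]$: pre-active terms ($k<k^*$) contribute $\gamma_k\exp(-2x_k)$ and sum via $l\defn k^*-k$ to $\gamma_{k^*}\sum_l 2^l\exp(-2^l)\lesssim \gamma_{k^*}$ by doubly-exponential decay; post-active terms ($k>k^*$) contribute $\min(1,2x_k)\gamma_k\leq 2\stepeff\lambda\gamma_k^2$ and sum geometrically to $\lesssim \stepeff\lambda\gamma_{k^*}^2\leq \gamma_{k^*}$. Combining yields $A(\lambda)\lesssim \lambda^u\gamma_{k^*}\lesssim \lambda^{u-1}/\stepeff$.

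Applying $\min(A,B)\leq A^{1-\theta}B^\theta$ to (a) and (b) with $\theta=(u+1-1/\alpha)/2\in[0,1]$ (valid for any $\alpha>1$ and $u\in[0,1]$) yields $A(\lambda)\lesssim \gamma_0(\stepeff\gamma_0)^{1/\alpha-u}\lambda^{1/\alpha}$; summing over $\mu_j(\samcov)$ proves the first claim. For the power-law claim, I would directly split the spectrum at $j_\star\eqsim(\stepeff\gamma_0)^{1/a}$, apply (a) on the small-eigenvalue tail and (b) on the large-eigenvalue head, and verify both halves contribute $\gamma_0(\stepeff\gamma_0)^{1/a-u}$. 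The main obstacle is the refined bound (b): it cannot be deduced from (a) by any interpolation, yet it is necessary whenever $u>1/\alpha$ (resp.\ $u>1/a$), since the exponent $1/\alpha-u$ in the target bound then becomes negative. The delicacy lies in aligning the doubly-exponential pre-active decay with the geometric post-active sum so that both pieces coincide at $\gamma_{k^*}\sim 1/(\stepeff\lambda)$.
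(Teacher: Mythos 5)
Your proposal is correct and structurally parallel to the paper's proof: both unroll the recursion, use independence and mean-zeroness to reduce to a sum of trace quadratic forms, and group the $\lfloor\log\step\rfloor$ epochs of constant stepsize, exploiting the geometric decay. The difference is in how the per-epoch decay is exploited. The paper works at the trace level: it pulls $\samcov^u$ into the post-epoch decay factor via $\sup_{x\in[0,1/\gamma_0]} x^u(1-\gamma_0 x)^{2\stepeff}\leq (2\gamma_0\stepeff)^{-u}$, then applies the one-shot sub-exponential inequality $1-(1-\gamma_0 x)^{2\stepeff}\leq (2\gamma_0\stepeff x)^{1/\alpha}$ for Part~1, and for Part~2 replaces it with the Bernoulli bound $(1-(1-\tilde\gamma x)^{2\stepeff})/x\leq 2\tilde\gamma\stepeff$ together with a dyadic split of the spectrum to show $\tr(\samcov^{1+u}(\IB-\tilde\gamma\samcov/2)^{2\stepeff})\lesssim (\stepeff\tilde\gamma)^{1/a-(1+u)}$. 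Your proposal instead isolates a sharp per-eigenvalue $\min$ bound---coarse $\lesssim\stepeff\gamma_0^2\lambda^{u+1}$ and refined $\lesssim\lambda^{u-1}/\stepeff$, the latter via the active-epoch $k^*$ with the doubly-exponential pre-active and geometric post-active sums both landing at $\gamma_{k^*}\eqsim 1/(\stepeff\lambda)$---and then obtains Part~1 by the interpolation $\min(A,B)\leq A^{1-\theta}B^\theta$ with $\theta=(u+1-1/\alpha)/2\in(0,1]$, and Part~2 by splitting the spectrum at $j_\star\eqsim(\stepeff\gamma_0)^{1/a}$. The two routes are equivalent at heart (your interpolation of the coarse/refined regimes is exactly what the inequality $\min(1,ny)\leq(ny)^{1/\alpha}$ encodes), but your packaging is more modular: both parts drop out of a single per-eigenvalue estimate rather than requiring a separate trace computation for the power-law case, and it makes explicit why the geometric decay of the stepsize is needed (constant stepsize would only give $\lesssim\lambda^u\gamma_0$ in place of the refined bound). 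One point to make explicit when you write it out is the regime $\lambda\lesssim 1/(\stepeff\gamma_0)$ where no active epoch exists: there $\stepeff\gamma_0\lambda\lesssim 1$, so the coarse bound already implies the refined one up to constants, and the interpolation remains valid.
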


See the proof of Lemma~\ref{lm:previous_lm1} in Section~\ref{sec:pf_previous_lm1}.

Following the ideas in~\citep{pillaud2018statistical,aguech2000perturbation}, we introduce a sequence of stochastic processes $(\smprocess^k_t)_{t=0}^{\step}$ that connects the SGD process 
in~\eqref{eq:process_previous_sgd} to the semi-stochastic SGD in Lemma~\ref{lm:previous_lm1}. Namely, for $k\geq 0$, we define
\begin{align}
  \smprocess^k_t = (\IB-\gamma_t\samcov)\smprocess^k_{t-1}+ \gamma_t\cdot \noiseterm^k_{t},~~~\smprocess^k_0=\zeroB, 
  ~~~t\in[\step], 
  \label{eq:process_previous_sgd_k}
\end{align}
where $\noiseterm^0_{t}\defn\noiseterm_t$ and $\noiseterm^k_{t}\defn (\samcov-\sam_{t}\sam_{t}^\top)\smprocess^{k-1}_{t-1}$ for $k\geq 1$. It can be verified thet
\begin{align*}
  \process_t - \sum_{i=0}^k\smprocess^i_t
  =
  (\IB-\gamma_t\sam_t\sam_t^\top) \Big(\process_{t-1} - \sum_{i=0}^{k-1}\smprocess^i_{t-1}\Big) + \gamma_t\cdot \noiseterm^{k+1}_{t}.
\end{align*}

\begin{lemma}[Bounds on the covariance; Lemma~2~in~\citet{pillaud2018statistical}]\label{lm:previous_lm2}
  Under the notation and assumptions in Lemma~\ref{lm:fluctuation_previous_last_1} and its proof, for any $k\geq 0$, we have
  \begin{align*}
  \E[\noiseterm^k_{t}\noiseterm^{k\top}_{t}]\preceq \samnoi^2\gamma_0^{k}\sambd^{2k}\cdot\samcov~~~\text{and}~~~\E[\smprocess^k_{t}\smprocess^{k\top}_{t}]\preceq \samnoi^2\gamma_0^{k+1}\sambd^{2k}\cdot\IB.
  \end{align*}
\end{lemma}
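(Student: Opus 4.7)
The strategy is to prove both bounds simultaneously by induction on $k$. At $k=0$, the noise bound is simply the hypothesis $\E[\noiseterm_t\noiseterm_t^\top]\preceq \samnoi^2\samcov$, while the iterate bound follows by exactly the argument behind Lemma~\ref{lm:previous_lm1} with $u=0$: the zero-mean, independent noise sequence $(\noiseterm_t)$ forces the cross terms to vanish in the second-moment recursion for $\smprocess^0_t=(\IB-\gamma_t\samcov)\smprocess^0_{t-1}+\gamma_t\noiseterm_t$, producing
\begin{align*}
\E[\smprocess^0_t(\smprocess^0_t)^\top]\preceq \samnoi^2\sum_{i=1}^t\gamma_i^2\prod_{j=i+1}^t(\IB-\gamma_j\samcov)^2\samcov,
\end{align*}
and the scalar estimate $\sup_{x\in[0,1/\gamma_0]}\sum_{i\le t}\gamma_i^2\prod_{j>i}(1-\gamma_j x)^2 x\lesssim \gamma_0$---valid thanks to the geometric decay~\eqref{eq:lr} and the constraint $\gamma_0\sambd^2\leq 1/4$---upgrades this to $\preceq \samnoi^2\gamma_0\IB$.

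For the inductive step at level $k\geq 1$, I would first handle the noise bound. Since $\noiseterm^k_t=(\samcov-\sam_t\sam_t^\top)\smprocess^{k-1}_{t-1}$ and $\sam_t$ is independent of the filtration $\mathcal F_{t-1}$ carrying $\smprocess^{k-1}_{t-1}$, conditioning on $B=\smprocess^{k-1}_{t-1}(\smprocess^{k-1}_{t-1})^\top$ gives
\begin{align*}
\E\bigl[(\samcov-\sam_t\sam_t^\top)\,B\,(\samcov-\sam_t\sam_t^\top)\bigm|\mathcal F_{t-1}\bigr]
=-\samcov B\samcov+\E_{\sam_t}\bigl[\sam_t\sam_t^\top B\sam_t\sam_t^\top\bigr]
\preceq \E_{\sam_t}\bigl[\sam_t\sam_t^\top B\sam_t\sam_t^\top\bigr].
\end{align*}
The fourth-moment assumption $\E[\sam\sam^\top\sam\sam^\top]\preceq \sambd^2\samcov$, combined with the pointwise inequality $\sam^\top B\sam\leq \sambd^2\|B\|_{\mathrm{op}}$, yields $\E_{\sam}[\sam\sam^\top B\sam\sam^\top]\preceq \sambd^2\|B\|_{\mathrm{op}}\samcov$; taking the outer expectation and inserting the inductive iterate bound at level $k-1$ produces the noise bound at level $k$ with the required extra factor $\gamma_0\sambd^2$.

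The iterate bound at level $k$ then follows by reusing the base-case argument: the noise bound just established feeds into the telescoping sum, and the martingale relation $\E[\noiseterm^k_t\mid \mathcal F_{t-1}]=0$ (since $\smprocess^{k-1}_{t-1}\in\mathcal F_{t-1}$ while $\sam_t$ is independent of $\mathcal F_{t-1}$) kills the cross terms, so the same scalar resolvent identity delivers the extra $\gamma_0$ factor and yields $\E[\smprocess^k_t(\smprocess^k_t)^\top]\preceq \samnoi^2\gamma_0^{k+1}\sambd^{2k}\IB$. The main technical hurdle is the $\sam$-integration inequality $\E[\sam\sam^\top B\sam\sam^\top]\preceq \sambd^2\|B\|_{\mathrm{op}}\samcov$: one must exploit that $\sambd$ essentially bounds $\|\sam\|$ pointwise in order to extract $\|B\|_{\mathrm{op}}$, since replacing it by a weaker quantity such as $\tr(B)$ would incur a dimension-dependent factor. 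Tracking the nested filtration across the processes $\smprocess^0,\smprocess^1,\dots$ to justify the martingale cancellation, and verifying the scalar sum estimate for the piecewise-halved schedule~\eqref{eq:lr}, are the remaining bookkeeping points.
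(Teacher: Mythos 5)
There is a genuine gap in the inductive step for the noise bound. You condition on $B=\smprocess^{k-1}_{t-1}(\smprocess^{k-1}_{t-1})^\top$ and first integrate out $\sam_t$, arriving at the conditional bound $\E_{\sam}[\sam\sam^\top B\sam\sam^\top]\preceq \sambd^2\|B\|_{\mathrm{op}}\samcov$. You then say "taking the outer expectation and inserting the inductive iterate bound at level $k-1$" closes the argument, but the outer expectation produces $\sambd^2\,\E\big[\|B\|_{\mathrm{op}}\big]\samcov$, whereas the inductive hypothesis gives a bound on $\E[B]\preceq \samnoi^2\gamma_0^k\sambd^{2(k-1)}\IB$, i.e.\ on $\|\E[B]\|_{\mathrm{op}}$, not on $\E[\|B\|_{\mathrm{op}}]$. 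Since $B$ is rank one, $\|B\|_{\mathrm{op}}=\tr(B)=\|\smprocess^{k-1}_{t-1}\|_2^2$ and hence $\E[\|B\|_{\mathrm{op}}]=\tr(\E[B])$, which under $\E[B]\preceq c\,\IB$ can only be bounded by $c\cdot M$: your chain loses a dimension factor $M$, which kills the conclusion. (You flag exactly this danger for $\tr(B)$, but since $B$ is rank one the operator norm \emph{is} the trace, so replacing $\tr(B)$ by $\|B\|_{\mathrm{op}}$ does not escape it.)

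The repair is to integrate in the other order: write $\E[\sam\sam^\top B\sam\sam^\top]=\E_{\sam}\big[\sam\sam^\top\,\E[B]\,\sam\sam^\top\big]$ by independence and linearity in $B$, insert $\E[B]\preceq c\,\IB$ to get $\preceq c\,\E_{\sam}[\sam\sam^\top\sam\sam^\top]\preceq c\,\sambd^2\samcov$, and only then use the fourth-moment assumption. This is precisely what the paper does: it plugs in $\E[\smprocess^k_{t-1}\smprocess^{k\top}_{t-1}]\preceq \samnoi^2\gamma_0^{k+1}\sambd^{2k}\IB$ directly, uses the pointwise inequality $(\samcov-\sam\sam^\top)(c\IB)(\samcov-\sam\sam^\top)=c(\samcov-\sam\sam^\top)^2$, and finishes via $\E[(\samcov-\sam\sam^\top)^2]\preceq\E[\sam\sam^\top\sam\sam^\top]\preceq\sambd^2\samcov$. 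As a secondary point, this repaired argument also removes your reliance on a pointwise bound $\|\sam_t\|\le\sambd$, which is not part of the hypotheses of Lemma~\ref{lm:fluctuation_previous_last_1} (only the moment condition $\E[\sam_t\sam_t^\top\sam_t\sam_t^\top]\preceq\sambd^2\samcov$ is assumed); that pointwise bound happens to hold in the paper's instantiation but is not needed for the lemma. The base case and the iterate-bound step are fine.
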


See the proof of Lemma~\ref{lm:previous_lm2} in Section~\ref{sec:pf_previous_lm2}.

\begin{lemma}[SGD recursion; Lemma~3~in~\citet{pillaud2018statistical}]\label{lm:previous_lm3}
  Under the notation and assumptions in Lemma~\ref{lm:fluctuation_previous_last_1}, consider any stochastic process 
$
    \smprocessb_t = (\IB-\gamma_t\sam_t\sam_t^\top)\smprocessb_{t-1} + \gamma_t\cdot \smnoisetermb_{t}, 
$ 
with $\smprocessb_0=\zeroB,t\in[\step]$ and 
  $(\smnoisetermb_t)_{t=1}^{\step}$ such that $\E[\smnoisetermb_t]=\zeroB$ and $\E[\smnoisetermb_t\smnoisetermb_t^\top]\preceq\smsamnoib^2\samcov$. Then
  \begin{align*}
    \E[\|\samcov^{u/2}\smprocessb_\step\|_2^2]
    \leq
    2
    \smsamnoib^2 \cdot\gamma_0^2\sambd^{2u}\tr(\samcov)\stepeff.
  \end{align*}
  for any $u\in[0,1]$.
\end{lemma}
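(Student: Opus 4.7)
The plan is to reduce the $\samcov^u$-weighted bound to a plain $\ell_2$ bound on $\E\|\smprocessb_\step\|_2^2$, and then establish the latter via a one-step energy recursion. The reduction relies on the a priori operator bound $\samcov\preceq\sambd^2\IB$, which follows from the fourth-moment hypothesis $\E[\sam_t\sam_t^\top\sam_t\sam_t^\top]\preceq\sambd^2\samcov$ inherited from Lemma~\ref{lm:fluctuation_previous_last_1}: rewriting the left side as $\E[\|\sam_t\|_2^2\sam_t\sam_t^\top]$ and taking traces gives $\E[\|\sam_t\|_2^4]\leq\sambd^2\E[\|\sam_t\|_2^2]$, Cauchy--Schwarz then yields $\E[\|\sam_t\|_2^2]\leq\sambd^2$, and the rank-one bound $\sam_t\sam_t^\top\preceq\|\sam_t\|_2^2\IB$ produces $\samcov\preceq\sambd^2\IB$. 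Operator monotonicity of $x\mapsto x^u$ on $[0,\infty)$ for $u\in[0,1]$ then promotes this to $\samcov^u\preceq\sambd^{2u}\IB$, so
\begin{align*}
\E\|\samcov^{u/2}\smprocessb_\step\|_2^2
= \tr\bigl(\samcov^u\,\E[\smprocessb_\step\smprocessb_\step^\top]\bigr)
\leq \sambd^{2u}\,\E\|\smprocessb_\step\|_2^2.
\end{align*}

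Next I would derive a one-step bound for $\E\|\smprocessb_t\|_2^2$. Expanding the square of the update rule $\smprocessb_t=(\IB-\gamma_t\sam_t\sam_t^\top)\smprocessb_{t-1}+\gamma_t\smnoisetermb_t$, using $\E[\smnoisetermb_t]=\zeroB$ to kill the cross term, and then taking expectations over $\sam_t$ with $\E[\sam_t\sam_t^\top]=\samcov$, $\E[\|\sam_t\|_2^2\sam_t\sam_t^\top]\preceq\sambd^2\samcov$, together with $\E\|\smnoisetermb_t\|_2^2\leq\smsamnoib^2\tr(\samcov)$, I obtain
\begin{align*}
\E\bigl[\|\smprocessb_t\|_2^2 \bigm| \smprocessb_{t-1}\bigr]
\leq \|\smprocessb_{t-1}\|_2^2
- (2\gamma_t-\gamma_t^2\sambd^2)\,\smprocessb_{t-1}^\top\samcov\smprocessb_{t-1}
+ \gamma_t^2\smsamnoib^2\tr(\samcov).
\end{align*}
Since $\gamma_0\sambd^2\leq 1/4$ by hypothesis, $2\gamma_t-\gamma_t^2\sambd^2\geq 7\gamma_t/4>0$, so the drift term is nonnegative and may be dropped, leaving the telescoping inequality $\E\|\smprocessb_t\|_2^2\leq \E\|\smprocessb_{t-1}\|_2^2+\gamma_t^2\smsamnoib^2\tr(\samcov)$.

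Unrolling from $t=1$ to $\step$ and summing the geometric schedule from~\eqref{eq:lr}, each value $\gamma_0/2^\ell$ is used for at most $\stepeff$ consecutive steps, so $\sum_{t=1}^\step\gamma_t^2\leq \stepeff\gamma_0^2\sum_{\ell=0}^{\log\step-1} 4^{-\ell}\leq 2\stepeff\gamma_0^2$. Hence $\E\|\smprocessb_\step\|_2^2\leq 2\smsamnoib^2\gamma_0^2\tr(\samcov)\stepeff$, and combining with the first paragraph yields the claimed $\E\|\samcov^{u/2}\smprocessb_\step\|_2^2\leq 2\smsamnoib^2\gamma_0^2\sambd^{2u}\tr(\samcov)\stepeff$. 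The only subtle step is the weighted-to-unweighted reduction via $\samcov^u\preceq\sambd^{2u}\IB$; without it, propagating the $\samcov^u$ weighting through the stochastic matrix $\IB-\gamma_t\sam_t\sam_t^\top$ would leave a quadratic error term of the form $\gamma_t^2\,\E[(\sam_t^\top\samcov^u\sam_t)\,(\sam_t^\top\smprocessb_{t-1})^2]$ that cannot be cleanly dominated by the first-order drift under the modest stepsize condition $\gamma_0\sambd^2\leq 1/4$, and one would be forced into finer Hölder-type manipulations that merely recover the same final bound.
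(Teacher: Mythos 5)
Your overall strategy (factor out $\sambd^{2u}$ from the $\samcov^u$-weighting, then bound $\E\|\smprocessb_\step\|_2^2$ and sum $\sum_t\gamma_t^2\leq 2\stepeff\gamma_0^2$) is the same as the paper's, and your derivation of $\samcov\preceq\sambd^2\IB$, while slightly more roundabout than the paper's one-liner $\samcov^2\preceq\E[\sam_t\sam_t^\top\sam_t\sam_t^\top]\preceq\sambd^2\samcov$ via Jensen, is correct. The genuine difference is in how you bound $\E\|\smprocessb_\step\|_2^2$: you run a one-step energy recursion and telescope, whereas the paper unrolls $\smprocessb_\step=\sum_{t}\gamma_t P_t\smnoisetermb_t$ (with $P_t=\prod_{i=t+1}^\step(\IB-\gamma_i\sam_i\sam_i^\top)$) and evaluates the squared norm term by term, noting that off-diagonal terms vanish.

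The one-step route has a gap in the justification of the cross-term cancellation. You invoke $\E[\smnoisetermb_t]=\zeroB$ to kill $2\gamma_t\,\E[\smprocessb_{t-1}^\top(\IB-\gamma_t\sam_t\sam_t^\top)\smnoisetermb_t\mid\smprocessb_{t-1}]$, but this implicitly uses $\sam_t\perp\smnoisetermb_t$, which is not among the hypotheses of Lemma~\ref{lm:fluctuation_previous_last_1} (only that the \emph{pairs} $(\sam_t,\smnoisetermb_t)$ are independent across $t$) and is false where the lemma is applied: there $\smnoisetermb_t=\noiseterm^{k+1}_t=(\samcov-\sam_t\sam_t^\top)\smprocess^{k}_{t-1}$, which is a function of $\sam_t$. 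The correct conditional one-step bound therefore carries an extra $-2\gamma_t^2\smprocessb_{t-1}^\top\E[\sam_t\sam_t^\top\smnoisetermb_t]$, which is nonzero and sign-indefinite, so the inequality you write for $\E[\|\smprocessb_t\|_2^2\mid\smprocessb_{t-1}]$ is not actually true. The argument can be repaired: since $\smprocessb_{t-1}$ is $\sigma((\sam_j,\smnoisetermb_j)_{j<t})$-measurable and independent of the pair $(\sam_t,\smnoisetermb_t)$, the cross term factors as $-2\gamma_t^2\,\E[\smprocessb_{t-1}]^\top\E[\sam_t\sam_t^\top\smnoisetermb_t]$ in \emph{unconditional} expectation, and $\E[\smprocessb_{t-1}]=\zeroB$ by a trivial induction (each $\smnoisetermb_s$ is zero-mean and independent of the past). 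With that fix the telescoping goes through. The paper's unrolled argument avoids this subtlety entirely, because for $t<t'$ the factor $\smnoisetermb_t^\top$ sits at the far left of the cross term, is independent of all later randomness, and has zero mean, so the off-diagonal terms drop out without ever needing $\sam_t\perp\smnoisetermb_t$.
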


See the proof of Lemma~\ref{lm:previous_lm3} in Section~\ref{sec:pf_previous_lm3}.

With these lemmas at hand, we are ready to prove Lemma~\ref{lm:fluctuation_previous_last_1}. Performing a decomposition as in the proof of Proposition~1 in~\citet{pillaud2018statistical} and using Lemma~\ref{lm:previous_lm1} on $\smprocess^i_t$ for $i\in[0,k]$~and~\ref{lm:previous_lm3} on $\process_\step-\sum_{i=0}^k\smprocess^i_\step$, we find
\begin{align*}
  (\E[\|\samcov^{u/2}\process_\step\|_2^2])^{1/2}
  &\leq
  \sum_{i=0}^k(\E[\|\samcov^{u/2}\smprocess^i_\step\|_2^2])^{1/2}
  +
  (\E[\|\samcov^{u/2}(\process_\step-\sum_{i=0}^k\smprocess^i_\step)\|_2^2])^{1/2}\\
  &
  \lesssim
  \sum_{i=0}^k (\samnoi^2\gamma_0^i\sambd^{2i}\cdot \gamma_0\tr(\samcov^{1/\alpha})\stepeff^{1/\alpha-u})^{1/2}
  +
  (\samnoi^2\gamma_0^{k+1}\sambd^{2k+2+2u}\cdot \gamma_0^2\tr(\samcov)\stepeff)^{1/2}\\
  &\leq 
  (\samnoi^2\cdot \gamma_0\tr(\samcov^{1/\alpha})(\stepeff\gamma_0)^{1/\alpha-u})^{1/2}\cdot \sum_{i=0}^k(\gamma_0\sambd^{2})^{i/2}
  +
  (\samnoi^2\gamma_0^{k+3}\sambd^{2k+2+2u}\cdot\tr(\samcov)\stepeff)^{1/2}\\
  &\leq 
  2 (\samnoi^2\cdot \gamma_0\tr(\samcov^{1/\alpha})(\stepeff\gamma_0)^{1/\alpha-u})^{1/2} +   (\samnoi^2\gamma_0^{k+3}\sambd^{2k+2+2u}\cdot\tr(\samcov)\stepeff)^{1/2},
\end{align*}
where the last inequality follows as $\gamma_0\sambd^2\leq 1/4$ by the assumption in Lemma~\ref{lm:fluctuation_previous_last_1}. Finally, letting $k\to\infty$ and noting that $\samnoi^2\gamma_0^{k+3}\sambd^{2k+2+2u}\cdot\tr(\samcov)\stepeff\overset{k\to\infty}{\longrightarrow} 0$, we obtain the desired result. The second part of Lemma~\ref{lm:fluctuation_previous_last_1} follows from similar arguments and therefore we omit the proof.

\subsubsection{Proof of Lemma~\ref{lm:previous_lm1}}\label{sec:pf_previous_lm1}
% Note that $\samcov^2=\E[\sam_t\sam_t^\top]^2\preceq\E[\sam_t\sam_t^\top\sam_t\sam_t^\top]\preceq\sambd^2\samcov$ implies $\samcov\preceq\sambd^2$.
By definition of $\smprocess_t$, we have
\begin{align*}
  \smprocess_\step = \sum_{t=1}^\step \gamma_t\cdot \prod_{i=t+1}^\step(\IB-\gamma_i\samcov) \noiseterm_{t}.
\end{align*}
Thus, 
\begin{align}
  \E[\|\samcov^{u/2}\smprocess_\step\|_2^2]
  &=
  \E[\|\samcov^{u/2}\sum_{t=1}^\step \gamma_t\cdot \prod_{i=t+1}^\step(\IB-\gamma_i\samcov) \noiseterm_{t}\|_2^2]
  =
  \sum_{t=1}^\step \gamma_t^2\cdot \tr(\E[\samcov^u\prod_{i=t+1}^\step(\IB-\gamma_i\samcov)^2\noiseterm_{t}\noiseterm_{t}^\top])\notag\\
  &\leq
  \smsamnoi^2 \cdot 
  \sum_{t=1}^\step \gamma_t^2\cdot \tr(\prod_{i=t+1}^\step(\IB-\gamma_i\samcov)^2\samcov^{1+u})\notag\\
  &= 
  \smsamnoi^2 \cdot 
  \sum_{k=0}^{\lfloor\log\step\rfloor-1} \gamma_{\stepeff k +1 }^2\cdot 
   \tr\Bigg(\samcov^{1+u}
   \cdot
   \frac{\IB-(\IB-\gamma_{\stepeff k +1}\samcov)^{2\stepeff}}{2\gamma_{\stepeff k +1}\samcov-(\gamma_{\stepeff k +1}\samcov)^2 }
   \cdot \prod_{j=k+1}^{\lfloor\log\step\rfloor-1} (\IB-\gamma_{\stepeff j +1}\samcov)^{2\stepeff}\Bigg)\notag\\
   &\leq 
   \smsamnoi^2 \cdot 
   \sum_{k=0}^{\lfloor\log\step\rfloor-2} \gamma_{\stepeff k +1 }\tr(
    \samcov^{u}
    \cdot
    ({\IB-(\IB-\gamma_{\stepeff k +1}\samcov)^{2\stepeff}})(\IB-\gamma_{\stepeff (k+1) +1}\samcov)^{2\stepeff}) \notag\\
    &\qquad
    +\tr(\smsamnoi^2 \cdot 
    \frac{\gamma_0}{\step}  \samcov^u
    \cdot
    ({\IB-(\IB-\gamma_0\samcov/\step)^{2\stepeff}}))\label{eq:previous_lm1_1_pf_0},
\end{align}
where the first inequality uses $\E[\noiseterm_{t}\noiseterm_{t}^\top]\preceq \smsamnoi^2\samcov$.

\paragraph{Part 1 of Lemma~\ref{lm:previous_lm1}.}
Comtinuing the calculation in Eq.~\eqref{eq:previous_lm1_1_pf_0}, we have
\begin{align*}
  &\quad 
  \E[\|\samcov^{u/2}\smprocess_\step\|_2^2]\\
  &\leq
  \smsamnoi^2 \cdot
   \tr \Big[
    \sum_{k=0}^{\lfloor\log\step\rfloor-2} \frac{\gamma_{\stepeff k +1 }}{(2\gamma_{\stepeff (k+1) +1}\stepeff)^u}\cdot 
     ({\IB-(\IB-\gamma_{\stepeff k +1}\samcov)^{2\stepeff}})
     +
     \frac{\gamma_0}{\step}  \samcov
     \cdot
     ({\IB-(\IB-\gamma_0\samcov/\step)^{2\stepeff}})
     \Big]\\
    &\lesssim \smsamnoi^2 \cdot
  \Big[
  \sum_{k=0}^{\lfloor\log\step\rfloor-2} 
  \frac{\gamma_{\stepeff k +1 }^{1-u}}{\stepeff^u}\cdot \tr((\gamma_{\stepeff k +1 }\stepeff\samcov)^{1/\alpha})
  +
  \frac{\gamma_0}{\step} \cdot \tr((\gamma_0\samcov)^{1/\alpha})
  \Big]\\
  &\lesssim 
  \smsamnoi^2 \cdot\gamma_0^{1-u+1/\alpha}\tr(\samcov^{1/\alpha})\stepeff^{1/\alpha-u}
  \lesssim
  \smsamnoi^2 \cdot\gamma_0\tr(\samcov^{1/\alpha})(\stepeff\gamma_0)^{1/\alpha-u},
\end{align*}
where the first inequality uses $\sup_{x\in[0,1/\gamma_0]}x^u(1- \gamma_0 x)^{2\stepeff}\leq 1/[2\gamma_0\stepeff]^u$ for any $u\in[0,1]$, the second inequality follows from $1-(1-\gamma_0 x)^{2\stepeff}\leq (2\gamma_0 \stepeff x)^{1/\alpha}$ for any $\alpha>1$ and $x\in[0,1/\gamma_0]$ by Bernoulli's inequality, and the last inequality follows from the stepsize definition~\eqref{eq:lr}.
 This gives the first part of Lemma~\ref{lm:previous_lm1}.

 \paragraph{Part 2 of Lemma~\ref{lm:previous_lm1}.}

 Similarly, continuing the calculation in Eq.~\eqref{eq:previous_lm1_1_pf_0}   
 and noting that $\sup_{x\in[0,1/\gamma]}[1-(1- \gamma x)^{2\stepeff}]/x\leq 2\gamma\stepeff$,
  we obtain
\begin{align}
  \E[\|\samcov^{u/2}\smprocess_\step\|_2^2]
  &\leq
  \smsamnoi^2 \cdot
    \tr \Big[
     \sum_{k=0}^{\lfloor\log\step\rfloor-2} 2\stepeff\gamma^2_{\stepeff k +1 }\samcov^{1+u}\cdot 
     (\IB-\gamma_{\stepeff (k+1) +1}\samcov)^{2\stepeff}
      +
      \frac{2\gamma_0^2}{\step}  \samcov^{1+u}
      \Big].\label{eq:previous_lm1_1_pf_1}
\end{align}
Denote the eigenvalues of $\samcov$ by $\widehat\lambda_1\geq\widehat\lambda_2\geq\ldots\geq\widehat\lambda_N$ (let $\widehat\lambda_j=0$ for $j>M$).  Choose $\iota = (\stepeff\gamma)^{1/a}$.
When $M\leq N/\tilde c$, we have $\mu_j(\samcov)\eqsim j^{-a}$ for $j\leq M$ and  otherwise $0$.
When $M>N/\tilde c$, we have $\iota \leq N/\tilde c$, $\widehat\lambda_j\eqsim j^{-a}$ for $j\leq N/\tilde c$ and otherwise
$\widehat\lambda_j\lesssim j^{-a}$ by monotonicity of $\widehat\lambda_j$. In both cases, we have $\widehat\lambda_j\eqsim j^{-a}$ (or $\widehat\lambda_j=0$) for $j\leq \iota$ and $\widehat\lambda_j\lesssim j^{-a}$ for $j> \iota$.

Since $f(x)>f(0)$ for any $x\in[0,1/\tilde\gamma]$ and $f(x) = x^{1+u}(1-\tilde\gamma x/2)^{2\stepeff}$ for any $u\in[0,1]$,  to obtain an upper bound on $\tr( \samcov^{1+u}\cdot(\IB-\tilde\gamma\samcov/2)^{2\stepeff})$, we can w.l.o.g. assume $\widehat\lambda_j\eqsim j^{-a}$ for $j\leq \iota$ and $\widehat\lambda_j\lesssim j^{-a}$ for $j> \iota$.
Under this assumption, for any $\tilde\gamma\in[0,1/(4\widehat\lambda_1)]$,
\begin{align}
 \tr( \samcov^{1+u}\cdot(\IB-\tilde\gamma\samcov/2)^{2\stepeff})
  &\lesssim
  \sum_{j=1}^N \widehat\lambda_j^{1+u}\cdot(\IB-\tilde\gamma\widehat\lambda_j/2)^{2\stepeff}\notag
  \\
  &\lesssim
\sum_{j>\iota} \widehat\lambda_j^{1+u}\cdot(\IB-\tilde\gamma\widehat\lambda_j/2)^{2\stepeff}
+\sum_{k=0}^\infty \sum_{j\in[\iota/2^{k+1},\iota/2^{k})} \widehat\lambda_j^{1+u}\cdot(\IB-\tilde\gamma\widehat\lambda_j/2)^{2\stepeff}\notag 
 \\
 &\lesssim
 \iota^{1-(1+u)a} + \sum_{k=0}^{\lfloor\log\iota\rfloor+1} \sum_{j\in[\iota/2^{k+1},\iota/2^{k})} \widehat\lambda_j^{1+u}\cdot (1-\frac{2^{ka}}{2\stepeff})^{2\stepeff}\notag
 \\
 &\lesssim
 \iota^{1-(1+u)a} + \sum_{k=0}^\infty (\iota/2^{k})^{1-(1+u)a}\cdot e^{-2^{ka}}
 \lesssim \iota^{1-(1+u)a} \cdot (1+\sum_{k=0}^\infty  2^{k((1+u)a-1)}e^{-2^{ka}})\notag
 \\
&
 \lesssim \iota^{1-(1+u)a}=(\stepeff\tilde\gamma)^{1-(1+u)a},\label{eq:previous_lm1_1_pf_2}
\end{align}
and 
$
 \gamma_0^2 \tr(\samcov^{1+u})/\step \lesssim \gamma^2/\stepeff$. Substituting these into 
  Eq.~\eqref{eq:previous_lm1_1_pf_1} yields 
 \begin{align*}
  \E[\|\samcov^{u/2}\smprocess_\step\|_2^2]
  &\lesssim
  \smsamnoi^2\cdot \Big[
  \sum_{k=0}^{\lfloor\log\step\rfloor-2} \stepeff\gamma^2_{\stepeff k +1 }\cdot (\stepeff\gamma_{\stepeff k +1 })^{1/a-u-1}
  +
   \gamma_0^2/\stepeff
  \Big]\\
  &
  \lesssim
  \smsamnoi^2 \cdot \gamma_0 \cdot(\stepeff\gamma_0)^{1/a-u}
  .
 \end{align*}

\subsubsection{Proof of Lemma~\ref{lm:previous_lm2}}\label{sec:pf_previous_lm2}
We prove this lemma by induction. When $k=0$, we have
$
  \E[\noiseterm^0_{t}\noiseterm^{0\top}_{t}] = \E[\noiseterm_{t}\noiseterm_{t}^\top]\preceq \samnoi^2\samcov$ and
\begin{align*}
  \E[\smprocess^0_{t}\smprocess^{0\top}_{t}] 
  &=
   \sum_{i=1}^t \gamma_i^2 \prod_{j=i+1}^t (\IB-\gamma_j\samcov)\E[\noiseterm_{i}\noiseterm_{i}^\top]\prod_{j=i+1}^t (\IB-\gamma_j\samcov)\\
  &\preceq
   \samnoi^2\gamma_0 \sum_{i=1}^t \gamma_t\cdot\prod_{j=i+1}^t (\IB-\gamma_j\samcov)\samcov
  \preceq 
  \samnoi^2\gamma_0\Big(\IB-\prod_{i=1}^t (\IB-\gamma_i\samcov)\Big)\lesssim \samnoi^2\gamma_0\IB.
\end{align*}
Now, assume the lemma holds for some $k\geq 0$, we show that it also holds for $k+1$. For $\noiseterm^{k+1}_{t}$, we have
\begin{align*}
\E[\noiseterm^{k+1}_{t}\noiseterm^{k+1\top}_{t}]
&\preceq
\E[(\samcov-\sam_t\sam_t^\top)\E[\smprocess^k_{t-1}\smprocess^{k\top}_{t-1}](\samcov-\sam_t\sam_t^\top)]
\preceq
\samnoi^2\gamma_0^{k+1}\sambd^{2k}\cdot
\E[(\samcov-\sam_t\sam_t^\top)^2]
\\
&
\preceq 
\samnoi^2\gamma_0^{k+1}\sambd^{2k}\cdot
\E[\sam_t\sam_t^\top\sam_t\sam_t^\top]
\preceq
\samnoi^2\gamma_0^{k+1}\sambd^{2(k+1)}\cdot\samcov.
\end{align*} For $\smprocess^{k+1}_{t}$, we have  
\begin{align*}
  \E[\smprocess^{k+1}_{t}\smprocess^{k+1\top}_{t}]
  &=
 \sum_{i=1}^t \gamma_i^2 \prod_{j=i+1}^t (\IB-\gamma_j\samcov)\E[\noiseterm^{k+1}_{i}\noiseterm^{k+1\top}_{i}]\prod_{j=i+1}^t (\IB-\gamma_j\samcov)\\
 &\preceq
 \samnoi^2\gamma_0^{k+2}\sambd^{2(k+1)}\sum_{i=1}^t \gamma_t\cdot\prod_{j=i+1}^t (\IB-\gamma_j\samcov)\samcov
 \preceq
 \samnoi^2\gamma_0^{k+2}\sambd^{2(k+1)}\cdot\IB.
\end{align*}This completes the induction.

\subsubsection{Proof of Lemma~\ref{lm:previous_lm3}}\label{sec:pf_previous_lm3}
By definition of $\smprocessb_t$, we have
\begin{align*}
  \E[\|\samcov^{u/2}\smprocessb_\step\|_2^2]
  &=
  \E[\|\samcov^{u/2}\sum_{t=1}^\step \gamma_t\cdot \prod_{i=t+1}^\step(\IB-\gamma_i\sam_i\sam_i^\top) \smnoisetermb_{t}\|_2^2]\\
  &=
  \sum_{t=1}^\step \gamma_t^2\cdot \tr\Big(\E\Big[\samcov^u\prod_{i=t+1}^\step(\IB-\gamma_i\sam_i\sam_i^\top)\smnoisetermb_{t}\smnoisetermb_{t}^\top
  \prod_{i=t+1}^\step(\IB-\gamma_i\sam_i\sam_i^\top)
  \Big]\Big)\\
  &\leq
  \smsamnoib^2 \sum_{t=1}^\step \gamma_t^2\cdot \tr\Big(\samcov^u\prod_{i=t+1}^\step(\IB-\gamma_i\sam_i\sam_i^\top)\samcov\prod_{i=t+1}^\step(\IB-\gamma_i\sam_i\sam_i^\top)\Big)\\
  &\leq 
  \smsamnoib^2  \cdot \sum_{t=1}^\step \gamma_t^2\tr(\samcov) \|\samcov\|_2^u  \leq 2\smsamnoib^2\sambd^{2u} \cdot \gamma_0^2\tr(\samcov)\stepeff,
\end{align*}
where the last inequality follows since $\samcov^2\preceq\E[\sam_t\sam_t^\top\sam_t\sam_t^\top]\preceq\sambd^2\samcov$ and $\sum_{t=1}^\step \gamma_t^2\leq 2\stepeff\gamma_0^2$.

\subsection{Proof of Lemma~\ref{lm:fluctuation_previous_last_2}}\label{sec:pf_fluctuation_previous_last_2}

Let $\diff^\noi_t\defn \thetaB_t-\thetaB^{\noi}_t$. 
For any $i\in[N],t\in[\step]$, we have
\begin{align*}
  (\xB_i^\top\SB^\top\thetaB_{t}
  )^2
 &\lesssim
2(\xB_i^\top\SB^\top\thetaB^\noi_{t})^2
+
2(\xB_i^\top\SB^\top\diff^\noi_{t})^2
\leq
2(\xB_i^\top\SB^\top\thetaB^\noi_{t})^2
+
2\|\xB_i^\top\SB^\top\|^2_{\SigmaB^{-s}}
\cdot
\|\diff^\noi_{t}\|^2_{\SigmaB^{s}}
\\
&\lesssim
\max_{i\in[N],t\in[\step]} 
(\xB_i^\top\SB^\top\thetaB^\noi_{t})^2
+
\max_{i\in[N]} 
\|\xB_i^\top\SB^\top\|^2_{\SigmaB^{-s}}
\cdot
\max_{i\in[N],t\in[\step]}
\|\diff^\noi_{t}\|^2_{\SigmaB^{s}}
\end{align*}
It remains to bound $\max_{i\in[N],t\in[\step]}
\|\diff^\noi_{t}\|^2_{\SigmaB^{s}}$. Adopt the shorthand notation $\myval^\noi_t\defn y_i+\xB_i^\top\SB^\top\thetaB^\noi_{t-1}$
and recell $\myvalmax=\max_{i\in[N],t\in[\step]} |\myval^\noi_t|$. 
 By taking the difference between the~\eqref{eq:gd} process and the~\eqref{eq:loo-gd} process, we have
\begin{align*}
 \diff^\noi_t
 &=
 (\IB-\gamma_t\SigmaBhat)\diff^\noi_{t-1} +
 \frac{\gamma_t \myval^\noi_{t}}{N}\SB\xB_i 
 =
\underbrace{\Big[\sum_{i=1}^t \frac{\gamma_i \myval^\noi_{t}}{N}
 \cdot
 \prod_{j=i+1}^t(\IB-\gamma_j\SigmaBhat)\Big]}_{\defn \varMaAda{i,t}} 
 \SB\xB_i.
\end{align*}
 Therefore, for $\thres=1/(\stepeff\gamma),$
\begin{align*}
\|\diff^\noi_{t}\|^2_{\SigmaB^{s}}
&=
\tr(\xB_i^\top\SB^\top\varMaAda{i,t}\SigmaB^{s}\varMaAda{i,t}^\top\SB\xB_i)\\
&
=
\tr(\xB_i^\top\SB^\top\varMaAda{i,t}\SigmaB^{s}[\SigmaB+\thres\IB]^{1/2}[\SigmaB+\thres\IB]^{-1/2}\SigmaB^{s}[\SigmaB+\thres\IB]^{-1/2}[\SigmaB+\thres\IB]^{1/2}
\varMaAda{i,t}\SB\xB_i)
\\
&\leq
\sup_{x\geq 0} \frac{x^s}{x+\thres}\cdot
\tr(\xB_i^\top\SB^\top\varMaAda{i,t}(\SigmaB+\thres\IB)
\varMaAda{i,t}\SB\xB_i)\\
&
\lesssim
\thres^{s-1}\cdot
\|(\SigmaBhat+\thres\IB)^{-1/2}(\SigmaB+\thres\IB)^{1/2}\|^2
\cdot
\tr(\xB_i^\top\SB^\top\varMaAda{i,t}(\SigmaBhat+\thres\IB)
\varMaAda{i,t}\SB\xB_i)
.
\end{align*}
Note that 
\begin{align*}
  \varMaAda{i,t}(\SigmaBhat+\thres\IB)\varMaAda{i,t}\preceq \varMaAdamax(\SigmaBhat+\thres\IB)\varMaAdamax
  ,
  \end{align*}
  where
  \begin{align*}
    \varMaAdamax
  \defn 
  \sum_{i=1}^t \frac{\gamma_i \myvalmax}{N}
 \cdot
 \prod_{j=i+1}^t(\IB-\gamma_j\SigmaBhat)
 =
 \myvalmax\cdot\frac{\IB-\prod_{i=1}^t(\IB-\gamma_i\SigmaBhat)}{N\SigmaBhat}.
\end{align*}
Adopt the shorthand notation $\varMaFullrootb{t}= \IB-\prod_{i=1}^t(\IB-\gamma_i\SigmaBhat)$. 
Choosing $\thres=1/(\stepeff\gamma)$ in the last display and taking the supremum over $t\in[\step],i\in[N]$, 
we obtain
\begin{align*}
 \max_{i\in[N],t\in[\step]} \|\diff^\noi_{t}\|^2_{\SigmaB^{s}}
  &\lesssim
  \frac{ \|(\SigmaBhat+\thres\IB)^{-1/2}(\SigmaB+\thres\IB)^{1/2}\|^2}{(\stepeff\gamma)^{s-1}}
  \cdot
  \frac{\myvalmax^2}{N^2}
  \cdot 
\tr(
 \xB_i^\top\SB^\top \SigmaBhat^{-1}\varMaFullrootb{t}(\SigmaBhat+\thres\IB)
 \varMaFullrootb{t}
 \SigmaBhat^{-1}\SB\xB_i
  )
  \\
  &\lesssim
  \frac{ \|(\SigmaBhat+\thres\IB)^{-1/2}(\SigmaB+\thres\IB)^{1/2}\|^2}{(\stepeff\gamma)^{s-1}}
  \cdot
  \frac{\myvalmax^2}{N^2}
  \cdot 
  \max_{i\in[N]} \|\SB\xB_i\|_2^2
  \cdot
\| \SigmaBhat^{-1}\varMaFullrootb{t}(\SigmaBhat+\thres\IB)
 \varMaFullrootb{t}
 \SigmaBhat^{-1}
 \|_2.
\end{align*}
Moreover, we have
\begin{align*}
  \| \SigmaBhat^{-1}\varMaFullrootb{t}(\SigmaBhat+\thres\IB)
 \varMaFullrootb{t}
 \SigmaBhat^{-1}
 \|
 \leq  \|\varMaFullrootb{t}^2\SigmaBhat^{-1}\|
 + \thres\cdot \|\SigmaBhat^{-1}\varMaFullrootb{t}\|^2
 \overset{(i)}{\leq }
 \stepeff\gamma,
\end{align*}
where step~(i) follows from $\|\varMaFullrootb{t}\|\leq 1$ and $\sup_{x\in[0,1/\gamma]} (1-\prod_{i=1}^t(1-\gamma_i x))\lesssim \stepeff\gamma$ by the stepsize definition~\eqref{eq:lr}. Combining the last two displays, we find
\begin{align*}
  \max_{i\in[N],t\in[\step]} \|\diff^\noi_{t}\|^2_{\SigmaB^{s}}
  &\lesssim
  \myvalmax^2\cdot
  \max_{i\in[N]} \|\SB\xB_i\|_2^2  
\cdot
  \|(\SigmaBhat+\thres\IB)^{-1/2}(\SigmaB+\thres\IB)^{1/2}\|^2
  \cdot
\frac{(\stepeff\gamma)^{2-s}}{N^2}
  .
\end{align*} This completes the proof.

\newpage

\section{Auxiliary lemmas}\label{sec:additional_lemmas}
In this section, we provide some auxiliary lemmas that are used in the proofs.

\subsection{General concentration bounds}

\begin{lemma}\label{lm:ridge_cov_concen}
  Let $\nuB_1,\nuB_2,\ldots,\nuB_N$ be i.i.d. samples from $\Ncal(0,\Sigma)$ for some $\Sigma\in\R^{p\times p}$. Let $\Sigmahat = \sum_{i=1}^N \nuB_i\nuB_i^\top/N$. Assume that $ \sum_{i=1}^p \frac{\mu_i(\Sigma)}{\mu_i(\Sigma)+\thres}\leq N/4$. 
  Then
  with probability at least  $1-e^{-\Omega(N)}$
  \begin{align*}
    \|(\Sigmahat+\thres\IB_{p})^{-1/2}\Sigma^{1/2}\|_2  \leq
    \|(\Sigmahat+\thres\IB_{p})^{-1/2}(\Sigma+\thres\IB_p)^{1/2}\|_2\leq 3.
  \end{align*}  Moreover, the expectation $\E \|(\Sigmahat+\thres\IB_{p})^{-1/2}(\Sigma+\thres\IB_p)^{1/2}\|_2^4\leq 100+ \exp(-c N)\|\Sigma\|^2_2/\thres^2$ for some constant $c>0.$
\end{lemma}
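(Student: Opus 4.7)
The plan is to establish the multiplicative lower bound $\Sigmahat + \thres\IB_p \succeq \tfrac{1}{9}(\Sigma+\thres\IB_p)$ with probability $1-e^{-\Omega(N)}$, from which the stated operator-norm bound follows by inversion and taking square roots (this gives $3$, and $\|\Sigma^{1/2}\| \le \|(\Sigma+\thres\IB_p)^{1/2}\|$ takes care of the first inequality in the lemma). Sandwiching by $(\Sigma+\thres\IB_p)^{-1/2}$, set
\begin{align*}
A \defn (\Sigma+\thres\IB_p)^{-1/2}\Sigma(\Sigma+\thres\IB_p)^{-1/2},\qquad \tilde\nuB_i \defn (\Sigma+\thres\IB_p)^{-1/2}\nuB_i,
\end{align*}
so that $\tilde\nuB_i \simiid \Ncal(0,A)$, $\|A\|\le 1$, and the hypothesis reads exactly $\tr(A)\le N/4$. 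Using the identity $\thres(\Sigma+\thres\IB_p)^{-1}=\IB_p-A$, the target inequality reduces to
\begin{align*}
\widehat A - A \succeq -\tfrac{8}{9}\IB_p, \qquad \widehat A \defn \tfrac{1}{N}\sum_{i=1}^N \tilde\nuB_i\tilde\nuB_i^\top,
\end{align*}
which is implied by the operator-norm estimate $\|\widehat A-A\|\le 8/9$.

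To control $\|\widehat A-A\|$, I would invoke a standard Gaussian covariance concentration inequality (Koltchinskii--Lounici, or a matrix Bernstein argument on centered rank-one summands): for $\tilde\nuB_i\simiid\Ncal(0,A)$ with $\|A\|\le 1$, there is an absolute constant $c_1$ such that for all $s\ge 0$,
\begin{align*}
\Pbb\Big(\|\widehat A-A\| > c_1\max\big\{\sqrt{(\tr(A)+s)/N},\ (\tr(A)+s)/N\big\}\Big) \le 2e^{-s}.
\end{align*}
Taking $s = c_2 N$ for a sufficiently small $c_2$ and using $\tr(A)/N\le 1/4$ then yields $\|\widehat A-A\|\le 8/9$ with probability $1-e^{-\Omega(N)}$, establishing the high-probability claim.

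For the fourth-moment bound, split on the event $\mathcal{E} \defn \{\|(\Sigmahat+\thres\IB_p)^{-1/2}(\Sigma+\thres\IB_p)^{1/2}\| \le 3\}$. On $\mathcal{E}$ the fourth power contributes at most $81$, while on $\mathcal{E}^c$ (probability $\le e^{-\Omega(N)}$) the deterministic bounds $\|(\Sigmahat+\thres\IB_p)^{-1/2}\|^2\le 1/\thres$ and $\|(\Sigma+\thres\IB_p)^{1/2}\|^2 = \|\Sigma\|+\thres$ give an integrand of at most $(\|\Sigma\|+\thres)^2/\thres^2 \lesssim 1 + \|\Sigma\|^2/\thres^2$. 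Summing the two contributions yields $\Ebb\|\cdot\|^4 \le 100 + e^{-cN}\|\Sigma\|^2/\thres^2$ after absorbing constants.

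The main obstacle will be matching the specific numerical constant in the statement: a generic matrix-concentration inequality may deliver ``$\|\widehat A - A\|\le C$'' for some unspecified $C$ rather than the value $8/9$ I need under only the hypothesis $\tr(A)\le N/4$. To obtain a sharp enough constant, one likely needs a head--tail decomposition of $A$ by effective dimension, controlling the head by a direct Bernstein/Hanson--Wright estimate (in the spirit of Lemma~7 of~\citet{pillaud2018statistical}) and bounding the tail via the smallness of its operator norm; beyond this bookkeeping the argument is routine.
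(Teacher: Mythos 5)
Your proposal is correct and follows essentially the same route as the paper's proof. Both reduce by sandwiching with $(\Sigma+\lambda\IB_p)^{-1/2}$, identify $A = (\Sigma+\lambda\IB_p)^{-1/2}\Sigma(\Sigma+\lambda\IB_p)^{-1/2}$ with $\|A\|\le 1$ and $\tr(A)\le N/4$ (the paper writes $A = BB^\top$ with $B=\Sigma_\lambda^{-1/2}\Sigma^{1/2}$), invoke Koltchinskii--Lounici Gaussian covariance concentration for $\|\widehat A - A\|$, and obtain the expectation bound by splitting on the high-probability event together with the always-valid bound $\|\Sigmahat_\lambda^{-1/2}\Sigma_\lambda^{1/2}\|_2^2 \le (\lambda+\|\Sigma\|)/\lambda$. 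The only cosmetic difference is that the paper rewrites the quantity as $\|(\IB_p - (A-\widehat A))^{-1}\|$ and uses $\|A-\widehat A\|\le 2/3$, whereas you phrase it as the matrix inequality $\widehat A - A \succeq -\tfrac{8}{9}\IB_p$; these are interchangeable. Your final worry about the numerical constant is unfounded: the Koltchinskii--Lounici bound already gives, with probability $\ge 1-e^{-t}$, $\|\widehat A - A\|\lesssim \sqrt{\tr(A)/N}+\sqrt{t/N}$ when $\|A\|\le 1$; plugging in $\tr(A)\le N/4$ and $t = N/c'$ for a small constant $c'$ yields $\|\widehat A - A\|\le 2/3 < 8/9$ directly, with no head--tail decomposition required.
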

\begin{proof}[Proof of Lemma~\ref{lm:ridge_cov_concen}]
Adopt the shorthand notation
$
   \Sigma_\thres = \Sigma+\thres\IB_p, \Sigmahat_\thres = \Sigmahat+\thres\IB_p.$
By some basic algebra, we have
\begin{align}
 \|(\Sigmahat+\thres\IB_{p})^{-1/2}\Sigma^{1/2}\|_2^2
 &\leq
    \|(\Sigmahat+\thres\IB_{p})^{-1/2}(\Sigma+\thres\IB_p)^{1/2}\|_2^2
 =
   \|\Sigma_\thres^{1/2}\Sigmahat_\thres^{-1}\Sigma_\thres^{1/2}\|_2
 \notag \\
  &{=}
   \|(\IB_p
   -\Sigma_\thres^{-1/2}(\Sigma-\Sigmahat)\Sigma_\thres^{-1/2}
  )^{-1} \|_2.\label{eq:ridge_cov_pf_1}
\end{align}
Let $B=\Sigma_\thres^{-1/2}\Sigma^{1/2}$. Then we have $\|B\|_2\leq 1$ and 
$\tr(BB^\top)=\sum_{i=1}^p \frac{\mu_i(\Sigma)}{\mu_i(\Sigma)+\thres}\leq N/4$ by assumption.  Therefore, by Theorem~4~and~5 in~\citet{koltchinskii2017concentration}
\begin{align*}
    \|\Sigma_\thres^{-1/2}(\Sigma-\Sigmahat)\Sigma_\thres^{-1/2}\|_2
    &\leq \|B\|_2^2\cdot\max\Big\{\sqrt{\frac{{\tr(BB^\top)}}{N}
    },
    \frac{{\tr(BB^\top)}}{N}
    \Big\}
    +c \sqrt{\frac{t}{N}} \cdot \|B\|_2^2\\
    &\leq
    \sqrt{\frac{\tr(BB^\top)}{N}} +c \sqrt{\frac{t}{N}}
    \leq 
    \frac12+c \sqrt{\frac{t}{N}}.
\end{align*} with probability at least $1-e^{-t}$ for any $t\in[1,N]$. Choosing $t = N/c'$ for some sufficiently large constant $c'>0$ yields $  \|\Sigma_\thres^{-1/2}(\Sigma-\Sigmahat)\Sigma_\thres^{-1/2}\|_2\leq 2/3$ with probability at least $1-e^{-\Omega(N)}$. Combining this with Eq.~\eqref{eq:ridge_cov_pf_1} yields the first part of 
Lemma~\ref{lm:ridge_cov_concen}.

To establish the bound in expectation,
we first use Eq.~\eqref{eq:ridge_cov_pf_1} to obtain an always-valid  upper bound
\begin{align*}
   \|(\Sigmahat+\thres\IB_{p})^{-1/2}(\Sigma+\thres\IB_p)^{1/2}\|_2^2
 &\leq \frac{1}{\mu_{\min}(\IB_p-\Sigma_\thres^{-1/2}\Sigma\Sigma_\thres^{-1/2})}= \frac{\thres+\|\Sigma\|_2}{\thres}.
\end{align*}
Combining this with the first part of Lemma~\ref{lm:ridge_cov_concen}, we obtain
\begin{align*}
\E   \|(\Sigmahat+\thres\IB_{p})^{-1/2}(\Sigma+\thres\IB_p)^{1/2}\|_2^4  \leq 100 + \frac{\exp(-c N)}{\thres^2}\cdot \|\Sigma\|^2_2
\end{align*}
for some constant $c>0.$
\end{proof}
\quad
\\

In the next three lemmas, we let $(\lambda_i)_{i=1}^d$ denote the eigenvalues of $\HB$ in non-increasing order.  
\begin{lemma}[Lemma~G.1 in~\citet{lin2024scaling}]\label{lm:sketch}
  Let $\SB \in \Rbb^{M\times d}$ be a random sketching matrix with i.i.d. entries $\SB_{ij} \sim \Ncal(0, 1/M)$.\footnote{$d$ can be $+\infty$.} 
  Then there exists some absolute constant $c > 1$ such that for any $M \geq 1$ and $0 \leq k \leq M$, with probability at least $1 - e^{-\Omega(M)} - e^{-\Omega(k)}$, we have
  \[
  \text{for every } j \leq M, \quad
  \left| \tilde\lambda_j - \left( \lambda_j + \frac{\sum_{i>k} \lambda_i}{M} \right) \right| \leq c \left( \sqrt{\frac{k}{M}} \lambda_j + \lambda_{k+1} + \sqrt{\frac{\sum_{i>k} \lambda_i^2}{M}} \right).
  \]
  Consequently, if $k \leq M/c^2$, then
  \[
  \text{for every } j \leq M, \quad
  \left| \tilde\lambda_j - \left( \lambda_j + \frac{\sum_{i>k} \lambda_i}{M} \right) \right| \leq \frac{1}{2} \left( \lambda_j + \frac{\sum_{i>k} \lambda_i}{M} \right) + c_1 \lambda_{k+1},
  \]
  where $c_1 = c + 2c^2$.
  \end{lemma}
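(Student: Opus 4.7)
The plan is to decompose $\SigmaB = \SB\HB\SB^\top$ into a head term capturing the top-$k$ eigen-directions of $\HB$ and a tail term, and then combine matrix concentration bounds for each piece via Weyl's inequality. Concretely, I would write $\SigmaB = \AB + \BB$, where $\AB = \SB_{0:k}\HB_{0:k}\SB_{0:k}^\top$ is a rank-$k$ matrix (assuming $k \le M$) and $\BB = \SB_{k:\infty}\HB_{k:\infty}\SB_{k:\infty}^\top$ is the (larger-rank) tail contribution. Weyl's inequality applied to $\SigmaB = \AB + \BB$ yields, for every $j \le M$,
\begin{equation*}
\mu_j(\AB) + \mu_M(\BB) \ \le\ \tilde\lambda_j\ \le\ \mu_j(\AB) + \mu_1(\BB),
\end{equation*}
so matters reduce to controlling $\mu_j(\AB)$ and to showing $\BB$ is close to $\alpha\IB_M$ with $\alpha := \tr(\HB_{k:\infty})/M = (\sum_{i>k}\lambda_i)/M$.

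For the head, the nonzero eigenvalues of $\AB$ coincide with those of the $k\times k$ Wishart-type matrix $\HB_{0:k}^{1/2}(\SB_{0:k}^\top\SB_{0:k})\HB_{0:k}^{1/2}$. Since $\E[\SB_{0:k}^\top\SB_{0:k}] = \IB_k$ and $\SB_{0:k}$ has i.i.d.\ $\Ncal(0,1/M)$ entries, standard Gaussian covariance concentration (e.g.\ Theorem~6.1 in~\citet{wainwright2019high}) gives $\|\SB_{0:k}^\top\SB_{0:k} - \IB_k\|_2 \lesssim \sqrt{k/M}$ with probability $1 - e^{-\Omega(k)}$; hence $|\mu_j(\AB) - \lambda_j| \lesssim \lambda_j\sqrt{k/M}$ for $j \le k$, and $\mu_j(\AB) = 0$ for $j > k$. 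For the tail, I write $\BB = \alpha \IB_M + \EB$ and bound $\|\EB\|_2$ via the operator-norm concentration inequality of~\citet{koltchinskii2017concentration} (Theorems~4 and~5), which gives
\begin{equation*}
\|\EB\|_2 \ \lesssim\ \|\HB_{k:\infty}\|_2 + \sqrt{\tr(\HB_{k:\infty}^2)/M}\ =\ \lambda_{k+1} + \sqrt{\textstyle\sum_{i>k}\lambda_i^2/M}
\end{equation*}
with probability $1 - e^{-\Omega(M)}$. (The factor $e^{-\Omega(M)}$ rather than $e^{-\Omega(k)}$ here reflects that $\BB$ lives in the full $M$-dimensional ambient space.)

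Combining the two bounds via Weyl and using $\mu_1(\BB),\mu_M(\BB) = \alpha \pm \|\EB\|_2$ yields, simultaneously on the intersection of the two high-probability events,
\begin{equation*}
\bigl|\tilde\lambda_j - (\lambda_j + \alpha)\bigr|\ \le\ |\mu_j(\AB) - \lambda_j|+ \|\EB\|_2\ \lesssim\ \sqrt{k/M}\,\lambda_j + \lambda_{k+1} + \sqrt{\textstyle\sum_{i>k}\lambda_i^2/M},
\end{equation*}
where for $j > k$ we use $\mu_j(\AB) = 0$ and $\lambda_j \le \lambda_{k+1}$ to absorb the head term into $\lambda_{k+1}$. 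This matches the main claim. The corollary under $k \le M/c^2$ follows by noting $\sqrt{k/M}\,\lambda_j \le \frac12\lambda_j \le \frac12(\lambda_j + \alpha)$ and $\sqrt{\sum_{i>k}\lambda_i^2/M} \le \sqrt{\lambda_{k+1}\cdot\sum_{i>k}\lambda_i/M} \le \tfrac12(\lambda_{k+1} + \alpha)$ by AM-GM, absorbing the remaining terms into $\tfrac12(\lambda_j + \alpha)$ plus a constant multiple of $\lambda_{k+1}$. The main obstacle is obtaining the sharp tail bound on $\|\EB\|_2$: a naive Gaussian covariance bound would instead give a term scaling like $\alpha\sqrt{k/M}$, which could dominate in the overparameterized regime; the Koltchinskii–Lounici-type bound is essential because its leading term is $\|\HB_{k:\infty}\|_2 + \sqrt{\tr(\HB_{k:\infty}^2)/M}$ rather than something proportional to $\tr(\HB_{k:\infty})/M$.
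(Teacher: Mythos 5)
This lemma is imported verbatim from Lemma~G.1 of \citet{lin2024scaling}; the present paper does not reproduce its proof, but it does state the two ingredients that proof uses (Lemma~\ref{lemma:sketch:head} for the head block and Lemma~\ref{lemma:sketch:tail} for the tail block). Your overall plan---split $\SigmaB=\AB+\BB$ with $\AB=\SB_{0:k}\HB_{0:k}\SB_{0:k}^\top$ and $\BB=\SB_{k:\infty}\HB_{k:\infty}\SB_{k:\infty}^\top$, bound $\mu_j(\AB)$ and $\|\BB-\alpha\IB_M\|_2$ separately, then combine via Weyl---is exactly the intended architecture, and the head step (identifying the nonzero spectrum of $\AB$ with that of $\HB_{0:k}^{1/2}\SB_{0:k}^\top\SB_{0:k}\HB_{0:k}^{1/2}$ and invoking Gaussian covariance concentration at the $e^{-\Omega(k)}$ scale) is correct.

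The gap is in your tail step. Koltchinskii--Lounici Theorems~4--5 control \emph{sample covariance} concentration $\|\hat\Sigma_n-\Sigma\|_2$ for i.i.d.\ Gaussians in the ambient space; they do not directly say that the $M\times M$ \emph{Gram matrix} $\BB=\SB_\kb\HB_\kb\SB_\kb^\top$ concentrates around the scaled identity $\alpha\IB_M$ with $\alpha=\tr(\HB_\kb)/M$. Transferring K--L to $\BB$ via the dual matrix $\YB\YB^\top$ (where $\YB=\HB_\kb^{1/2}\SB_\kb^\top$), whose nonzero spectrum matches that of $\BB$, yields $|\mu_j(\BB)-\lambda_{k+j}|\lesssim\|\YB\YB^\top-\HB_\kb\|_2$, and then the triangle inequality contributes an extra $|\lambda_{k+j}-\alpha|\lesssim\alpha$ term. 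In regimes where $\alpha\gg\lambda_{k+1}$ (e.g.\ a flat tail $\lambda_i\equiv\lambda$ for $i\in(k,k+N]$ with $N\gg M$, so $\alpha=N\lambda/M\gg\lambda$), this $O(\alpha)$ term overwhelms the claimed bound $\lambda_{k+1}+\sqrt{\sum_{i>k}\lambda_i^2/M}$, which in that example is only $\Theta(\lambda\sqrt{N/M})\ll\alpha$. So the bound you write down is the right one, but it does not follow from the theorem you cite. The correct ingredient is Lemma~\ref{lemma:sketch:tail} (Lemma~26 of \citet{bartlett2020benign}, reproduced as Lemma~G.2 in \citet{lin2024scaling}), whose proof is not via covariance estimation but via a Hanson--Wright tail bound on the quadratic form $\vB^\top\BB\vB-\alpha$ for fixed $\vB$, followed by a union bound over an $\epsilon$-net of the unit sphere in $\Rbb^M$ (which is what produces the $e^{-\Omega(M)}$ failure probability). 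Your closing remark that a ``naive covariance bound would give $\alpha\sqrt{k/M}$'' correctly identifies the obstacle, but the fix you propose (K--L) still incurs an $\alpha$-sized error; the Hanson--Wright/net argument is what removes it.

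Two smaller points: (i) once the tail step is repaired by citing Lemma~\ref{lemma:sketch:tail}, your Weyl combination and the union bound over the two events give the stated $1-e^{-\Omega(M)}-e^{-\Omega(k)}$ probability, as required. (ii) Your derivation of the ``consequently'' part is loose on constants: under $k\le M/c^2$ one only gets $c\sqrt{k/M}\,\lambda_j\le\lambda_j$, not $\tfrac12\lambda_j$, and the AM--GM step for $\sqrt{\sum_{i>k}\lambda_i^2/M}$ should be done as a weighted Young inequality (splitting $\sqrt{\lambda_{k+1}\alpha}\le\tfrac{\varepsilon}{2}\alpha+\tfrac{1}{2\varepsilon}\lambda_{k+1}$ with $\varepsilon\asymp 1/c$) so that the $\alpha$-contribution lands below $\tfrac12\alpha$ and the $\lambda_{k+1}$-contribution is absorbed into $c_1\lambda_{k+1}$. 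This is bookkeeping, not a conceptual issue, but as written the inequality chain does not quite close.
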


  \begin{lemma}[Tail concentration; Lemma~G.2 in~\citet{lin2024scaling} and Lemma 26 in~\citet{bartlett2020benign}]\label{lemma:sketch:tail}
    Let $\SB \in \Rbb^{M \times d}$ be a random sketching matrix with i.i.d. entries $\SB_{ij} \sim \Ncal(0, 1/M)$. 
   For any $k \geq 0$, 
   with probability at least $1 - \delta$, we have
   \[
    \left\| \SB_\kb \HB_{k:\infty} \SB_\kb^\top - \frac{\sum_{i>k} \lambda_i}{M} \cdot \IB_M \right\|_2
    \lesssim \frac{1}{M} \left( \lambda_{k+1} \left( M + \log \frac{1}{\delta} \right) + \sqrt{ \sum_{i>k} \lambda_i^2 \left( M + \log \frac{1}{\delta} \right) } \right).
    \] 
   In particular, with probability at least $1 - e^{-\Omega(M)}$, we have
    \[
    \left\| \SB_\kb \HB_{k:\infty} \SB_\kb^\top - \frac{\sum_{i>k} \lambda_i}{M} \cdot \IB_M \right\|_2 \lesssim \lambda_{k+1} + \sqrt{\frac{\sum_{i>k} \lambda_i^2}{M}}.
    \]
    Furthermore, the minimum eigenvalue of $\SB_\kb \HB_{k:\infty} \SB_\kb^\top$ satisfies
    \[
    \mu_{\min}\left( \SB_\kb \HB_{k:\infty} \SB_\kb^\top \right) \gtrsim \lambda_{k+2M}
    \]
    with probability at least $1 - e^{-\Omega(M)}$.
    \end{lemma}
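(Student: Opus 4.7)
The plan is to prove the tail-concentration lemma by reducing the operator-norm deviation to a supremum of a weighted chi-squared quadratic form and combining Hanson--Wright with an $\varepsilon$-net on the unit sphere in $\R^M$. Write $\widetilde\SB_\kb := \sqrt{M}\,\SB_\kb$ so that $\widetilde\SB_\kb \in \R^{M\times (d-k)}$ has i.i.d.\ $\Ncal(0,1)$ entries, and observe that
\[
\SB_\kb \HB_{k:\infty} \SB_\kb^\top \;=\; \frac{1}{M}\, \widetilde\SB_\kb \HB_{k:\infty}\widetilde\SB_\kb^\top, \qquad \E\!\left[\SB_\kb \HB_{k:\infty} \SB_\kb^\top\right] \;=\; \frac{\sum_{i>k}\lambda_i}{M}\,\IB_M.
\]

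For the upper-norm bound, I would first fix a unit vector $v\in S^{M-1}$ and note that $\widetilde\SB_\kb^\top v \sim \Ncal(0,\IB_{d-k})$, so that
\[
\frac{1}{M}v^\top \widetilde\SB_\kb \HB_{k:\infty}\widetilde\SB_\kb^\top v \;=\; \frac{1}{M}\sum_{i>k}\lambda_i g_i^2,
\]
for i.i.d.\ standard normals $g_i$. Hanson--Wright (or the standard subexponential tail for weighted chi-squared sums; e.g.\ exercise 2.17 of \citet{wainwright2019high}) yields, for each fixed $v$,
\[
\Pbb\!\left(\Big|\tfrac{1}{M}v^\top \widetilde\SB_\kb \HB_{k:\infty}\widetilde\SB_\kb^\top v - \tfrac{\sum_{i>k}\lambda_i}{M}\Big| > t\right) \leq 2\exp\!\left(-c\,\min\!\left\{\tfrac{M^2 t^2}{\sum_{i>k}\lambda_i^2},\; \tfrac{Mt}{\lambda_{k+1}}\right\}\right).
\]
Then I would cover $S^{M-1}$ by a $1/4$-net $\Ncal_{1/4}$ of cardinality at most $9^M$ and use the standard comparison $\|A\|_2 \leq 2\sup_{v\in\Ncal_{1/4}}|v^\top A v|$ for the symmetric matrix $A := \SB_\kb \HB_{k:\infty} \SB_\kb^\top - \tfrac{\sum_{i>k}\lambda_i}{M}\IB_M$. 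A union bound over the net, choosing $t$ so that the exponent dominates both $M\log 9$ and $\log(1/\delta)$, gives
\[
\|A\|_2 \;\lesssim\; \frac{1}{M}\!\left(\lambda_{k+1}\bigl(M+\log(1/\delta)\bigr) + \sqrt{\sum_{i>k}\lambda_i^2\bigl(M+\log(1/\delta)\bigr)}\right),
\]
which is exactly the first claim; setting $\delta = e^{-cM}$ gives the in-particular statement.

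For the minimum-eigenvalue lower bound I would use the crude but effective domination
\[
\SB_\kb \HB_{k:\infty} \SB_\kb^\top \;\succeq\; \sum_{i=k+1}^{k+2M}\lambda_i \SB_{\cdot,i}\SB_{\cdot,i}^\top \;\succeq\; \lambda_{k+2M}\,\SB_{k:k+2M}\SB_{k:k+2M}^\top,
\]
and then invoke the standard Wishart / Gaussian rectangular-matrix minimum-singular-value bound (e.g.\ Corollary 5.35 of \citet{vershynin2010introduction} or Theorem 6.1 of \citet{wainwright2019high}): since $\sqrt{M}\,\SB_{k:k+2M} \in \R^{M\times 2M}$ has i.i.d.\ $\Ncal(0,1)$ entries, its smallest singular value is at least $(\sqrt{2M}-\sqrt{M}-c\sqrt{M}) \gtrsim \sqrt{M}$ with probability $1-e^{-\Omega(M)}$, so $\mu_{\min}(\SB_{k:k+2M}\SB_{k:k+2M}^\top) \gtrsim 1$, giving the claim.

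The only real technical nuisance will be making the Hanson--Wright tail match the stated subgaussian/subexponential mixed form cleanly enough that the union bound over the $\varepsilon$-net absorbs the $M\log 9$ term into $\lambda_{k+1}\cdot M$ and $\sqrt{\sum \lambda_i^2 \cdot M}$ without an extra factor; handling this requires splitting cases by which branch of the $\min$ is active when choosing $t$, but it is entirely routine. The rest is a textbook application of the net-and-concentration pattern.
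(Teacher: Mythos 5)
The paper does not prove this lemma; it is imported verbatim from Lemma~G.2 of \citet{lin2024scaling} (itself a repackaging of Lemma~26 of \citet{bartlett2020benign}), so there is no in-paper proof to compare against. Your argument is correct and is essentially the standard proof used in those references: the deviation bound by reducing $v^\top\!\left(\SB_\kb\HB_{k:\infty}\SB_\kb^\top - \tfrac{\sum_{i>k}\lambda_i}{M}\IB_M\right)\!v$ to a centered weighted chi-squared sum $\tfrac{1}{M}\sum_{i>k}\lambda_i(g_i^2-1)$, applying the sub-exponential Bernstein/Hanson--Wright tail with parameters $\bigl(\sum_{i>k}\lambda_i^2,\ \lambda_{k+1}\bigr)$, and then a $1/4$-net union bound over $S^{M-1}$ that contributes an additive $M\log 9$ inside the exponent, exactly accounting for the $M+\log(1/\delta)$ factor; and the minimum-eigenvalue bound by the PSD domination $\SB_\kb\HB_{k:\infty}\SB_\kb^\top \succeq \lambda_{k+2M}\,\SB_{k:k+2M}\SB_{k:k+2M}^\top$ followed by the Davidson--Szarek smallest-singular-value estimate for the $M\times 2M$ standard Gaussian matrix $\sqrt{M}\,\SB_{k:k+2M}$. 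All the intermediate computations ($\widetilde\SB_\kb^\top v\sim\Ncal(0,\IB_{d-k})$ for unit $v$; the identity $\SB_\kb\HB_{k:\infty}\SB_\kb^\top=\sum_{i>k}\lambda_i\SB_{\cdot,i}\SB_{\cdot,i}^\top$; $\|A\|_2\le 2\sup_{v\in\Ncal_{1/4}}|v^\top Av|$) are correct, and setting $\delta=e^{-cM}$ recovers the in-particular statement. No gaps.
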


    \begin{lemma}[Head concentration;  Lemma~G.3 in~\citet{lin2024scaling}]\label{lemma:sketch:head}
      Let $\SB\in \Rbb^{M \times d}$ be a random  sketching matrix  with i.i.d. entries $\SB_{ij} \sim \Ncal(0, 1/M)$. 
      For any $k\geq 1$, with probability at least $1 - \delta$, we have
      \[
        \text{for every } j \leq k, \quad \left| \mu_j\left( \SB_\ka \HB_{0:k} \SB_\ka^\top \right) - \lambda_j \right| \lesssim \sqrt{ \frac{k + \log(1/\delta)}{M} } \lambda_j.
        \]
      In particular, with probability at least $1 - e^{-\Omega(k)}$, 
      \[
      \text{for every } j \leq k, \quad \left| \mu_j\left( \SB_\ka \HB_{0:k} \SB_\ka^\top \right) - \lambda_j \right| \lesssim \sqrt{ \frac{k}{M} } \lambda_j.
      \]
      \end{lemma}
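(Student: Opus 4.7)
\medskip
\noindent\textbf{Proof sketch for Lemma~\ref{lemma:sketch:head}.}
My plan is to reduce the $M\times M$ matrix $\SB_\ka \HB_{0:k} \SB_\ka^\top$ to a $k\times k$ matrix sharing the same nonzero spectrum, and then use a multiplicative (rather than additive) perturbation bound to obtain estimates scaling with $\lambda_j$ rather than with $\lambda_1$. Since the non-zero eigenvalues of $\SB_\ka \HB_{0:k}\SB_\ka^\top$ agree with those of
\[
\AB := \HB_{0:k}^{1/2}\,\SB_\ka^\top \SB_\ka\,\HB_{0:k}^{1/2} \in \Rbb^{k\times k},
\]
and $\E[\SB_\ka^\top\SB_\ka] = \IB_k$ (each column of $\SB_\ka$ has i.i.d.\ $\Ncal(0,1/M)$ coordinates, summing to variance $1$), it is natural to expect $\AB\approx \HB_{0:k}$ in a relative sense.

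The key step is an application of Ostrowski's theorem (a.k.a.\ the multiplicative Weyl inequality): for any nonsingular $\CC$ and Hermitian $\HB$, every eigenvalue of $\CC^* \HB \CC$ equals $\theta_j \mu_j(\HB)$ for some $\theta_j$ lying in $[\mu_{\min}(\CC^*\CC),\mu_{\max}(\CC^*\CC)]$. Applying this with $\CC = \SB_\ka\HB_{0:k}^{1/2}$--or, equivalently, viewing $\AB$ as a congruence transform of $\HB_{0:k}$ via $\SB_\ka$--yields
\[
\mu_j\big(\SB_\ka\HB_{0:k}\SB_\ka^\top\big) \;=\; \theta_j\,\lambda_j, \qquad \theta_j \in \big[\mu_{\min}(\SB_\ka^\top\SB_\ka),\,\mu_{\max}(\SB_\ka^\top\SB_\ka)\big],
\]
and hence $\big|\mu_j(\SB_\ka\HB_{0:k}\SB_\ka^\top)-\lambda_j\big| \leq \|\SB_\ka^\top\SB_\ka-\IB_k\|_2\cdot\lambda_j$ for every $j\leq k$.

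It then suffices to bound the operator-norm deviation of the Wishart-type matrix $\SB_\ka^\top\SB_\ka = \ZB^\top\ZB/M$, where $\ZB\in\Rbb^{M\times k}$ has i.i.d.\ standard Gaussian entries. A standard concentration bound for Gaussian sample-covariance matrices (see e.g.\ Theorems~4 and~5 in~\citet{koltchinskii2017concentration}, or the argument used in the proof of Lemma~\ref{lemma:sketch:tail}) gives
\[
\big\|\SB_\ka^\top\SB_\ka-\IB_k\big\|_2 \;\lesssim\; \sqrt{\tfrac{k+\log(1/\delta)}{M}} + \tfrac{k+\log(1/\delta)}{M}
\]
with probability at least $1-\delta$. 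Under the implicit regime $k+\log(1/\delta)\lesssim M$ (otherwise the bound is vacuous), the square-root term dominates, producing the stated deviation $\sqrt{(k+\log(1/\delta))/M}\,\lambda_j$. Setting $\delta=e^{-ck}$ yields the ``in particular'' form.

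The only potential obstacle is confirming Ostrowski applies cleanly on the event where $\SB_\ka^\top\SB_\ka$ is invertible (equivalently $\SB_\ka$ has full column rank $k$); this holds almost surely for $k\leq M$ under the Gaussian sketch, and deterministically on the high-probability event where $\|\SB_\ka^\top\SB_\ka-\IB_k\|_2<1$, which is exactly the event we are conditioning on. Degenerate $\lambda_j=0$ eigenvalues pose no difficulty since both sides of the inequality vanish.
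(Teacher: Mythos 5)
Your strategy---reduce to the $k\times k$ matrix $\AB=\HB_{0:k}^{1/2}\SB_\ka^\top\SB_\ka\HB_{0:k}^{1/2}$, apply a multiplicative (relative) spectral perturbation bound, and finish with Wishart concentration---is correct and is the natural route for a bound of the form $|\mu_j-\lambda_j|\lesssim\varepsilon\lambda_j$, so it is almost certainly the same as the argument in the cited source. Two small points deserve tightening before this would read as a proof rather than a sketch.

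First, the invocation of Ostrowski is stated with a non-square transform: you write $\CC=\SB_\ka\HB_{0:k}^{1/2}\in\Rbb^{M\times k}$, to which Ostrowski does not directly apply (the theorem is for nonsingular square congruences). The clean route is exactly your ``equivalently'' clause, made precise: for $j\le k\le M$, $\mu_j(\SB_\ka\HB_{0:k}\SB_\ka^\top)=\mu_j(\AB)$; on the event that $\SB_\ka^\top\SB_\ka$ is positive definite write $\SB_\ka^\top\SB_\ka=\LB\LB^\top$ with $\LB\in\Rbb^{k\times k}$ nonsingular, so $\mu_j(\AB)=\mu_j(\LB^\top\HB_{0:k}\LB)$, and Ostrowski then gives $\mu_j(\LB^\top\HB_{0:k}\LB)=\theta_j\lambda_j$ with $\theta_j\in[\mu_{\min}(\LB\LB^\top),\mu_{\max}(\LB\LB^\top)]=[\mu_{\min}(\SB_\ka^\top\SB_\ka),\mu_{\max}(\SB_\ka^\top\SB_\ka)]$. (Alternatively one can bypass Ostrowski and get $\mu_{\min}(\SB_\ka^\top\SB_\ka)\lambda_j\le\mu_j(\AB)\le\mu_{\max}(\SB_\ka^\top\SB_\ka)\lambda_j$ directly from Courant--Fischer, since $\|\SB_\ka\HB_{0:k}^{1/2}v\|_2^2\in[\mu_{\min}(\SB_\ka^\top\SB_\ka),\mu_{\max}(\SB_\ka^\top\SB_\ka)]\cdot\|\HB_{0:k}^{1/2}v\|_2^2$ for every $v$.)

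Second, the Gaussian covariance concentration gives both a sub-gaussian and a sub-exponential term, $\|\SB_\ka^\top\SB_\ka-\IB_k\|_2\lesssim\sqrt{(k+\log(1/\delta))/M}+(k+\log(1/\delta))/M$. You correctly observe that the linear term is dominated when $k+\log(1/\delta)\lesssim M$; this restriction is implicit in the lemma (which is only meaningful for $k\le M$, and is only ever invoked in this regime and for $\delta=e^{-\Omega(k)}$ or $\delta=e^{-\Omega(M)}$), but it is worth stating explicitly rather than waving at ``vacuous otherwise,'' since outside that range the claimed bound would in fact fail (the Ostrowski upper bound is $\mu_{\max}(\SB_\ka^\top\SB_\ka)\lambda_j$, controlled by the linear, not the square-root, term). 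With these two clarifications the argument is complete.
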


\subsection{Concentration bounds under power-law spectrum}

\begin{lemma}[Eigenvalues of $\SB\HB\SB^\top$ under power-law spectrum; Lemma~G.4 in~\citet{lin2024scaling}]\label{lm:power_decay_1} Let Assumption~\ref{assump:power-law} hold.  There exist some $a$-dependent constants $c_2>c_1>0$ such that
\begin{align*}
    c_1 j^{-\Hpower}\leq\mu_j(\SB\HB\SB^\top)\leq c_2 j^{-\Hpower}
\end{align*}with probability at least $1-e^{-\Omega(M)}.$
\end{lemma}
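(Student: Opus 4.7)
The plan is to split $\SB\HB\SB^\top$ into head and tail pieces and combine the two existing concentration lemmas. Choose a small constant $\alpha\in(0,1/c^2)$ (with $c$ the constant from Lemma~\ref{lm:sketch}) and set $k=\lfloor\alpha M\rfloor$. Write
\[
\SB\HB\SB^\top = \underbrace{\SB_\ka\HB_{0:k}\SB_\ka^\top}_{\AB} + \underbrace{\SB_\kb\HB_{k:\infty}\SB_\kb^\top}_{\BB}.
\]
Both summands are PSD, so $\AB+\BB\succeq\AB$ and $\AB+\BB\succeq\BB$, and Weyl's inequality gives $\mu_j(\AB+\BB)\le \mu_j(\AB)+\|\BB\|_{2}$, with $\mu_j(\AB)=0$ for $j>k$ since $\AB$ has rank at most $k$.

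Under Assumption~\ref{assump:power-law}, direct estimates give $\sum_{i>k}\lambda_i\eqsim k^{1-a}$ and $\sum_{i>k}\lambda_i^2\eqsim k^{1-2a}$ for $a>1$; with $k\eqsim M$ this yields $\sum_{i>k}\lambda_i/M \eqsim M^{-a}$ and $\sqrt{\sum_{i>k}\lambda_i^2/M}\eqsim M^{-a}$, and also $\lambda_{k+1}\eqsim M^{-a}$. Applying Lemma~\ref{lemma:sketch:tail} with this $k$ (and its final $\mu_{\min}$ bound) then gives, with probability at least $1-e^{-\Omega(M)}$,
\[
\|\BB\|_2\lesssim M^{-a},\qquad \mu_{\min}(\BB)\;\gtrsim\;\lambda_{k+2M}\;\gtrsim\; M^{-a}.
\]
In parallel, Lemma~\ref{lemma:sketch:head} applied with the same $k$ yields, with probability at least $1-e^{-\Omega(M)}$, the two-sided bound $|\mu_j(\AB)-\lambda_j|\le c\sqrt{\alpha}\,\lambda_j\le \lambda_j/2$ for every $j\le k$, so $\lambda_j/2\le \mu_j(\AB)\le 3\lambda_j/2$ on the head.

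A union bound combines these two events. For the upper bound: when $j\le k$, $\mu_j(\SB\HB\SB^\top)\le \mu_j(\AB)+\|\BB\|_2\lesssim \lambda_j+M^{-a}\lesssim j^{-a}$ (since $j\le M$); when $j>k$, $\mu_j(\AB)=0$ and $j\eqsim M$, so $\mu_j(\SB\HB\SB^\top)\le \|\BB\|_2\lesssim M^{-a}\eqsim j^{-a}$. For the lower bound: when $j\le k$, $\mu_j(\SB\HB\SB^\top)\ge \mu_j(\AB)\ge \lambda_j/2\gtrsim j^{-a}$; when $j>k$, $\mu_j(\SB\HB\SB^\top)\ge \mu_M(\SB\HB\SB^\top)\ge \mu_{\min}(\BB)\gtrsim M^{-a}\eqsim j^{-a}$. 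The main subtlety is simply calibrating the single constant $\alpha$ so that the head error in Lemma~\ref{lemma:sketch:head} is absorbed into a factor of $1/2$ and the tail magnitudes from Lemma~\ref{lemma:sketch:tail} line up at scale $M^{-a}$; all other steps are routine Weyl-type bookkeeping.
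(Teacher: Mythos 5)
Your argument is correct and is essentially the natural reconstruction of the cited result. The paper itself gives no proof — Lemma~\ref{lm:power_decay_1} is imported verbatim as Lemma~G.4 of \citet{lin2024scaling} — so there is no in-paper proof to compare against line by line. However, the paper's own proof of the companion statement for the empirical covariance (Lemma~\ref{lm:power_decay_2}) invokes Lemma~\ref{lm:sketch}, which is precisely a pre-packaged version of the head/tail split you perform: it already merges the bounds from Lemma~\ref{lemma:sketch:head} and Lemma~\ref{lemma:sketch:tail} into a single two-sided estimate on $\tilde\lambda_j$. Your route of splitting $\SB\HB\SB^\top$ explicitly into the rank-$k$ head $\AB$ and the tail $\BB$, applying the two concentration lemmas separately, and stitching with Weyl's inequality is therefore the same decomposition, just unpackaged.

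One small point worth flagging as a benefit of your version: for the lower bound at indices $j>k$ you go through $\mu_{\min}(\BB)\gtrsim\lambda_{k+2M}$, which cleanly gives $\mu_j\gtrsim M^{-a}\eqsim j^{-a}$. If one instead tries to read off the lower bound directly from the two-sided estimate in Lemma~\ref{lm:sketch}, the error term $c_1\lambda_{k+1}$ there is of the same order $M^{-a}$ as the signal $\sum_{i>k}\lambda_i/M$, and shrinking $\alpha$ makes the ratio $\lambda_{k+1}/(\sum_{i>k}\lambda_i/M)\eqsim\alpha^{-1}$ worse, not better. So the combined lemma on its own does not immediately give a uniform lower bound across all $j\le M$; the separate $\mu_{\min}(\BB)$ bound from Lemma~\ref{lemma:sketch:tail}, which you use, is the right tool and is what makes the lower bound go through. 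Your calibration of $\alpha$ so that $c\sqrt{\alpha}\le 1/2$ and the rank accounting $\mu_j(\AB)=0$ for $j>k$ are both correct, as is the Weyl step $\mu_j(\AB+\BB)\le\mu_j(\AB)+\|\BB\|_2$ and the PSD monotonicity $\mu_M(\AB+\BB)\ge\mu_M(\BB)$.
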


\begin{lemma}[Ratio of eigenvalues of $\SB_{\kb}\HB_\kb\SB_\kb^\top$ under power-law spectrum; Lemma~G.5 in~\citet{lin2024scaling}]\label{lm:power_condition_num} 
Let Assumption~\ref{assump:power-law} hold. There exists some $a$-dependent constant $c>0$ such that for any $k\geq 0$, the ratio between the $M/2$-th  and $M$-th eigenvalues of $\SB_{\kb}\HB_\kb\SB_\kb^\top$ satisfies
\begin{align*}
    \frac{\mu_{M/2}(\SB_{\kb}\HB_\kb\SB^\top_\kb)}{\mu_{M}(\SB_{\kb}\HB_\kb\SB^\top_\kb)}\leq  c 
\end{align*}with probability at least $1-e^{-\Omega(M)}.$
    
\end{lemma}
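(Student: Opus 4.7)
The plan is to case-split on the relative size of $k$ and $M$, combining two existing concentration results. For $k \lesssim M$, both the $(M/2)$-th and $M$-th eigenvalues of $\SB_{\kb}\HB_\kb\SB_\kb^\top$ sit at the common scale $M^{-a}$; for $k \gtrsim M$, Lemma~\ref{lemma:sketch:tail} forces the whole matrix to be a small perturbation of a scaled identity, so the entire spectrum clusters at a common scale. In either regime the ratio stays bounded by an $a$-dependent constant.

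In the small-$k$ regime, fix a constant $c_0 \geq 1$ and assume $k \leq c_0 M$. The PSD inequality $\SB_{\kb}\HB_\kb\SB_\kb^\top \preceq \SB\HB\SB^\top$ combined with Lemma~\ref{lm:power_decay_1} yields
\[
\mu_{M/2}(\SB_{\kb}\HB_\kb\SB_\kb^\top) \leq \mu_{M/2}(\SB\HB\SB^\top) \lesssim M^{-a}
\]
with probability $1 - e^{-\Omega(M)}$. For the lower bound on $\mu_M$, the final display of Lemma~\ref{lemma:sketch:tail} gives $\mu_M(\SB_{\kb}\HB_\kb\SB_\kb^\top) \gtrsim \lambda_{k+2M} \eqsim (k+2M)^{-a} \gtrsim ((c_0+2)M)^{-a} \eqsim M^{-a}$. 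Dividing produces a bound depending only on $a$ and $c_0$.

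In the large-$k$ regime $k > c_0 M$, I use the quantitative form of Lemma~\ref{lemma:sketch:tail}: with probability $1 - e^{-\Omega(M)}$,
\[
\Big\| \SB_{\kb}\HB_\kb\SB_\kb^\top - \tfrac{1}{M}\sum_{i>k} \lambda_i \cdot \IB_M \Big\|_2 \lesssim \lambda_{k+1} + \sqrt{\tfrac{1}{M}\sum_{i>k} \lambda_i^2}.
\]
Plugging in $\lambda_i \eqsim i^{-a}$ with $a>1$, the ``mean'' is $\eqsim k^{1-a}/M$, while the perturbation is $\lesssim k^{-a} + k^{(1-2a)/2}/\sqrt{M} \eqsim (k^{1-a}/M)\cdot \sqrt{M/k}$. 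For $k > c_0 M$ with $c_0$ chosen sufficiently large, the perturbation is at most half the mean, so every eigenvalue of $\SB_{\kb}\HB_\kb\SB_\kb^\top$ lies in $[\tfrac{1}{2} k^{1-a}/M,\, \tfrac{3}{2} k^{1-a}/M]$ and the ratio is $O(1)$. The only real obstacle is picking $c_0$ consistently so the two regimes interlock: since the small-$k$ estimate holds for any fixed $c_0$ and the large-$k$ estimate only requires $c_0$ above an absolute threshold, a single choice works for all $k \geq 0$, and a union bound over the two cases concludes the proof.
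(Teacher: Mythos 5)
Your proof is correct. The paper does not reproduce a proof of this lemma---it is cited directly as Lemma~G.5 of \citet{lin2024scaling}---so there is nothing to compare against here; but the argument you give is sound. The two-regime split is the natural one: for $k \lesssim M$ the bound $\SB_{\kb}\HB_\kb\SB_\kb^\top \preceq \SB\HB\SB^\top$ together with Lemma~\ref{lm:power_decay_1} pins $\mu_{M/2}$ at $\lesssim M^{-a}$, while the ``Furthermore'' clause of Lemma~\ref{lemma:sketch:tail} gives $\mu_M \gtrsim \lambda_{k+2M} \gtrsim_{a,c_0} M^{-a}$, and the ratio closes. For $k \gtrsim M$ the first display of Lemma~\ref{lemma:sketch:tail} shows the matrix is a relative perturbation of $\frac{1}{M}\sum_{i>k}\lambda_i \cdot \IB_M$ of size $\eqsim \sqrt{M/k}$, which is below $1/2$ once $c_0$ exceeds an $a$-dependent threshold, so the entire spectrum sits in a constant-ratio band. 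One small presentational note: the final ``union bound over the two cases'' is not really a union bound across cases (for any fixed $k$, exactly one case applies); what you need is the union bound over the two high-probability events used \emph{within} the small-$k$ case, which costs nothing. The resulting constant $c$ depends on $a$ both through the power-law sums and through the choice of $c_0$, consistent with the lemma statement.
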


\begin{lemma}[Bounds on $\approximation$ under the source condition; Lemma~C.5 in~\citet{lin2024scaling}]\label{lm:power_law_source_apprx_match}
  Suppose Assumption~\ref{assump:simple} is in force. Then with probability at least $1-e^{-\Omega(M)}$ over  $\SB$,
  \begin{align*}
    M^{1-b}  \lesssim \E_{\wB^*}[\approximation]\lesssim M^{1-b}.
  \end{align*}
  Here, the hidden constants only depend on  $(a,b)$ in Assumption~\ref{assump:simple}.
\end{lemma}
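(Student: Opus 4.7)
The plan is to reinterpret $\E_{\wB^*}[\approximation]$ as the trace of a random rank-$M$ projection against a diagonal weight. With
\[
\PB := \HB^{1/2}\SB^\top(\SB\HB\SB^\top)^{-1}\SB\HB^{1/2}, \qquad \AB := \HB^{1/2}\wCov\HB^{1/2},
\]
the definition of $\vB^*$ gives $\approximation(\wB^*) = \|(\IB-\PB)\HB^{1/2}\wB^*\|_2^2$; diagonalizing in the $\HB$-eigenbasis (justified by rotational invariance of the Gaussian sketch), the source condition forces $\AB$ to be diagonal with $\mu_j(\AB) = \lambda_j \sigma_j^2 \eqsim j^{-b}$, so that
\[
\E_{\wB^*}[\approximation] \;=\; \tr\bigl((\IB - \PB)\AB\bigr).
\]

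The lower bound is deterministic. Because $\PB$ is an orthogonal projection of rank $M$, cyclicity of trace and Ky--Fan's maximum principle give $\tr(\PB\AB) = \tr(\PB\AB\PB) \le \sum_{j=1}^M \mu_j(\AB)$, whence
\[
\E_{\wB^*}[\approximation] \;\ge\; \sum_{j>M} \mu_j(\AB) \;\eqsim\; \sum_{j>M} j^{-b} \;\eqsim\; M^{1-b},
\]
regardless of $\SB$, so no probabilistic work is needed on this side.

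For the upper bound, since $\vB^*$ minimizes $\|\SB^\top\vB - \wB^*\|_\HB^2$ over $\vB$, any explicit surrogate $\vB(\wB^*)$ supplies $\approximation(\wB^*) \le \|\SB^\top\vB - \wB^*\|_\HB^2$. Fix $k := M/c_0$ for a sufficiently large constant $c_0$ and take the minimum-norm solution of the head system $\SB_\ka^\top\vB = \wB^*_\ka$, namely $\vB := \SB_\ka(\SB_\ka^\top\SB_\ka)^{-1}\wB^*_\ka$, which is well-defined on the high-probability event $\SB_\ka^\top\SB_\ka \approx \IB_k$ furnished by Lemma~\ref{lemma:sketch:head}. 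The head residual then vanishes and only the tail remains,
\[
\|\SB^\top\vB - \wB^*\|_\HB^2 \;=\; \sum_{i>k} \lambda_i\bigl(\SB_{\cdot,i}^\top\uB - \wB^*_i\bigr)^2,\qquad \uB := \SB_\ka(\SB_\ka^\top\SB_\ka)^{-1}\wB^*_\ka.
\]
Since $\SB_{\cdot,i}$ for $i > k$ is independent of $(\SB_\ka, \wB^*_\ka, \wB^*_i)$ with covariance $\IB_M/M$, integrating out $\SB_\kb$ and $\wB^*$ yields $\tfrac{\tr(\HB_\kb)}{M}\,\tr((\SB_\ka^\top\SB_\ka)^{-1}\wCov_\ka) + \sum_{i>k}\lambda_i\sigma_i^2 \eqsim \tfrac{k^{1-a}}{M}\sum_{i\le k} i^{a-b} + k^{1-b}$; taking $k = M/c_0$ balances both contributions at $M^{1-b}$ provided $a > b-1$, and a union bound over the two $1-e^{-\Omega(M)}$ sketching events delivers the stated probability guarantee.

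The main obstacle is the boundary regime $b \ge a + 1$, where $\sum_{i\le M} i^{a-b} = O(1)$ and the naive head-fit leaves an $M^{-a}$ leakage exceeding the target $M^{1-b}$. In this regime I would instead work directly with the optimal $\vB^*$ through the rank-one update identity $(\IB - \PB)_{ii} = 1/(1 + \lambda_i q_i)$ with $q_i := \SB_{\cdot,i}^\top(\SB_{-i}\HB_{-i}\SB_{-i}^\top)^{-1}\SB_{\cdot,i}$, concentrate $\lambda_i q_i \eqsim (M/i)^a$ using Lemma~\ref{lm:power_decay_1} on the leave-one-out covariance together with a Hanson--Wright-type bound, and sum the diagonal contributions $\sum_i \lambda_i\sigma_i^2/(1 + \lambda_i q_i)$ explicitly across the head and tail regimes to recover the matching $M^{1-b}$ estimate; the remaining technical care is uniform-in-$i$ concentration of the leverage scores and a truncation of the $\wB^*$-expectation on the negligible sketching failure events.
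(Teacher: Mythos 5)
Your reformulation $\E_{\wB^*}[\approximation] = \tr((\IB-\PB)\AB)$ with $\AB$ diagonal of rank-$b$ power-law entries is correct, and the Ky--Fan lower bound is appealingly deterministic (no concentration over $\SB$ is needed at all on that side). The surrogate upper bound with the head-fit $\vB=\SB_\ka(\SB_\ka^\top\SB_\ka)^{-1}\wB^*_\ka$ and $k\eqsim M$ is also essentially right. One small repair: the statement asserts the bound with high probability over $\SB$, not in expectation over $\SB_\kb$, so instead of ``integrating out $\SB_\kb$'' you should keep $\SB_\kb$ random and control the leakage term $\sum_{i>k}\lambda_i(\SB_{\cdot,i}^\top\uB)^2 = \tr\bigl(\uB\uB^\top\,\SB_\kb\HB_\kb\SB_\kb^\top\bigr)$ via the tail concentration Lemma~\ref{lemma:sketch:tail}, which gives $\SB_\kb\HB_\kb\SB_\kb^\top\approx \tfrac{\tr(\HB_\kb)}{M}\IB_M$ on the same $1-e^{-\Omega(M)}$ event. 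That lands you at the same expression, and the rest of the $a>b-1$ calculation goes through.

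The genuine gap is in the regime you flag as ``the main obstacle.'' Your rank-one identity $(\IB-\PB)_{ii}=1/(1+\lambda_i q_i)$ and the concentration $\lambda_i q_i\eqsim (M/i)^a$ (via $\tr(\BB^{-1})\eqsim M^{a+1}$) are correct, but carrying them through gives
\begin{align*}
\tr((\IB-\PB)\AB) \;\eqsim\; \sum_{i\le M}\Big(\frac{i}{M}\Big)^a\, i^{-b} \;+\; \sum_{i>M} i^{-b}
\;\eqsim\; M^{-a}\sum_{i\le M} i^{a-b} \;+\; M^{1-b},
\end{align*}
and when $b>a+1$ the first sum is $O(1)$, so the total is $\eqsim M^{-a}$, which is \emph{strictly larger} than $M^{1-b}$. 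Indeed the single $i=1$ diagonal entry already contributes $\eqsim M^{-a}$ to the trace, so no cleverer estimator can beat it; the leverage-score fallback therefore does not ``recover the matching $M^{1-b}$ estimate'' but instead proves that the claimed two-sided bound fails when $a\le b-1$. The correct reading is that the lemma implicitly requires $a>b-1$ (with a $\log M$ factor at $a=b-1$); this constraint is present, explicitly or via $a\ge b>1$ or $a<b<a+1$, in every corollary of Theorem~\ref{thm:scaling-law} that invokes it, so your main argument already covers what the paper actually uses. Your proposal should state $a>b-1$ as a hypothesis rather than treat the boundary regime as a technical hurdle to be cleared by a different argument.
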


\begin{lemma}[Eigenvalues of $\SigmaBhat$ under power-law spectrum]\label{lm:power_decay_2} Suppose 
  $\SigmaB=\SB\HB\SB^\top$ satisfies $\mu_j(\SigmaB)\eqsim j^{-\Hpower}$ for $j\in[M]$.
  Then for some $a$-dependent constants $c,c_1,c_2>0$, 
$\SigmaBhat=\frac{1}{N}\sum_{i=1}^N\SB\xB_i\xB_i^\top\SB^\top$ satisfies
  \begin{align*}
      c_1 j^{-\Hpower}\leq\mu_j(\SigmaBhat)\leq c_2 j^{-\Hpower} \text{ for all } j\leq \min\{M,N/c\},~~~
      \text{and}~~~\\
      \mu_j(\SigmaBhat)\leq c_2 j^{-\Hpower} \text{ for all } j\in(\min\{M,N/c\},\min\{M,N\}]
  \end{align*}
  with probability at least $1-e^{-\Omega(N)}$ over the randomness of $(\xB_i)_{i=1}^N$ conditioned on $\SB$.
  \end{lemma}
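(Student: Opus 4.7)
The key identity is that, conditional on $\SB$ and using the Gaussianity of $\xB_i \sim \Ncal(0,\HB)$, we have $\SigmaBhat \overset{d}{=} \ZB \SigmaB \ZB^\top$, where $\ZB \in \R^{N\times M}$ has i.i.d.\ $\Ncal(0,1/N)$ entries -- this is the same reduction used throughout the preceding proofs. Consequently the statement to be proved is structurally identical to Lemma~\ref{lm:power_decay_1}, but now with the roles of the two dimensions swapped: the ``sketching'' matrix $\ZB$ has $N$ rows instead of $M$, and the ``covariance'' $\SigmaB$ is $M$-dimensional and carries a power-law spectrum of degree $a$. The plan is therefore to replay the head/tail argument underlying Lemma~\ref{lm:power_decay_1} with these substitutions, keeping careful track of how the thresholds now depend on both $M$ and $N$.

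Concretely, pick a threshold $k = \lfloor \min\{M, N/c_0\} \rfloor$ for a sufficiently large $a$-dependent constant $c_0 > 0$, and decompose, in the eigenbasis of $\SigmaB$,
\begin{equation*}
    \ZB \SigmaB \ZB^\top \;=\; \ZB_\ka\, \SigmaB_\ka\, \ZB_\ka^\top \,+\, \ZB_\kb\, \SigmaB_\kb\, \ZB_\kb^\top,
\end{equation*}
where $\SigmaB_\ka$ and $\SigmaB_\kb$ collect the top $k$ and the remaining $M-k$ eigenvalues of $\SigmaB$, respectively. Applying Lemma~\ref{lemma:sketch:head} with $\ZB_\ka$ in the role of $\SB_\ka$ and $N$ in the role of $M$ yields, with probability at least $1 - e^{-\Omega(N)}$,
\begin{equation*}
    \bigl| \mu_j(\ZB_\ka \SigmaB_\ka \ZB_\ka^\top) - \mu_j(\SigmaB) \bigr| \;\lesssim\; \sqrt{k/N}\,\mu_j(\SigmaB) \qquad \text{for all } j \leq k,
\end{equation*}
which under $\mu_j(\SigmaB) \eqsim j^{-a}$ and $k \le N/c_0$ gives $\mu_j(\ZB_\ka \SigmaB_\ka \ZB_\ka^\top) \eqsim j^{-a}$. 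Applying Lemma~\ref{lemma:sketch:tail} to the tail and using $\tr(\SigmaB_\kb) \lesssim k^{1-a}$, $\tr(\SigmaB_\kb^2) \lesssim k^{1-2a}$, $a>1$, together with $k \le N$, gives
\begin{equation*}
    \|\ZB_\kb \SigmaB_\kb \ZB_\kb^\top\|_2 \;\lesssim\; \frac{\tr(\SigmaB_\kb)}{N} + \mu_{k+1}(\SigmaB) + \sqrt{\tfrac{\tr(\SigmaB_\kb^2)}{N}} \;\lesssim\; k^{-a}.
\end{equation*}

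Combining the two estimates via Weyl's inequality $|\mu_j(A+B) - \mu_j(A)| \le \|B\|_2$ shows that $\mu_j(\SigmaBhat) \eqsim j^{-a}$ for all $j$ for which the head signal $j^{-a}$ dominates the tail perturbation $k^{-a}$, i.e.\ for $j \le k/c_1$ with a suitable $a$-dependent constant $c_1$; redefining $c \defn c_0 c_1$ then yields the two-sided bound on the stated range $j \le \min\{M, N/c\}$. For $j \in (k, \min\{M,N\}]$, the matrix $\ZB_\ka \SigmaB_\ka \ZB_\ka^\top$ has rank at most $k$, so the Weyl-type interlacing $\mu_{k+\ell}(A+B) \le \mu_{k+1}(A) + \mu_\ell(B)$ immediately gives $\mu_j(\SigmaBhat) \le \|\ZB_\kb \SigmaB_\kb \ZB_\kb^\top\|_2 \lesssim k^{-a} \lesssim j^{-a}$, which is the one-sided upper bound. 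The main technical subtlety I expect is in the transition regime around $j \eqsim k$, where the tail perturbation $O(k^{-a})$ becomes comparable to $\mu_j(\SigmaB)$; handling this cleanly requires choosing the constant $c$ in the statement somewhat larger than the constant $c_0$ used to define the threshold $k$, so that the lower bound extends all the way to $j \le \min\{M, N/c\}$ rather than only to $j \le k/c_1$.
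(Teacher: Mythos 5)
Your proof is correct and takes essentially the same route as the paper: both pass to $\SigmaBhat \overset{d}{=} \ZB\SigmaB\ZB^\top$ and control the eigenvalues via a head/tail split at $k \eqsim \min\{M,N\}$, you by invoking Lemma~\ref{lemma:sketch:head} and Lemma~\ref{lemma:sketch:tail} separately and gluing them with Weyl's inequality, the paper by applying the pre-combined Lemma~\ref{lm:sketch} directly with $k=N/c$. For the residual range $j>k$ the paper uses eigenvalue monotonicity together with the bound already established at $j\eqsim k$, while you use a rank-plus-interlacing argument; these are equivalent since $j\le\min\{M,N\}\lesssim k$ throughout that range.
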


  \begin{proof}[Proof of Lemma~\ref{lm:power_decay_2}]
    Note that $\SB\XB^\top/\sqrt{N}\overset{d}{=}\SigmaB^{1/2}\ZB^\top$, where $\ZB\in\R^{M\times N}$ has i.i.d. entries $\ZB_{ij}\sim\Ncal(0,1/N)$ conditioned on $\SB$.
  Thus, $\mu_j(\SigmaBhat)=\mu_j(\ZB\SigmaB\ZB^\top)$ for $j\leq \min\{M,N\}$.
      Let $(\widehat\lambda_i)_{i=1}^N$ denote the eigenvalues of $\ZB\SigmaB\ZB^\top$ in non-increasing order. Using Lemma~\ref{lm:sketch} with $k=N/c$ for some sufficiently large constant $c$ and noting that $\sum_{i>k}i^{-\Hpower}\lesssim k^{1-\Hpower}$, 
   we have 
   \begin{align*}
      \frac{1}{2}\cdot (j^{-\Hpower}+ \tilde c_1N^{-\Hpower} ) - \tilde c_2\cdot N^{-\Hpower} \le \widehat\lambda_j   \le \frac{3}{2}\cdot (j^{-\Hpower}+\tilde c_1 N^{-\Hpower} ) + \tilde  c_2\cdot N^{-\Hpower}
   \end{align*}
   for every $j\leq \min\{M,N/c\}$ for some constants $\tilde c_i,i\in[2]$ with probability at least $1-e^{-\Omega(N)}$. Therefore, for all $j\leq \min\{M,N/\tilde c\}$ for some sufficiently large constant $\tilde c>1$, we have
   \begin{align*}
       \widehat \lambda_j \in[\tilde c_3 j^{-\Hpower},\tilde c_4 j^{-\Hpower}]
   \end{align*}
   with probability at least $1-e^{-\Omega(N)}$ for some constants $\tilde c_3,\tilde c_4>0$.
   For $j\in(\min\{M,N/\tilde c\},\min\{M,N\}]$, by monotonicity of the eigenvalues, we have
   \begin{align*}
   \widehat\lambda_j\leq  \widehat\lambda_{\lfloor \min\{M,N/\tilde c\} \rfloor}\leq  \tilde c_4 \Big(\Big\lfloor \min\{M,N/\tilde c\}\Big\rfloor\Big)^{-\Hpower}\leq \tilde c_5 \min\{M,N\}^{-\Hpower}\leq \tilde  c_5 j^{-\Hpower}
   \end{align*}
   for some sufficiently large constant $\tilde c_5>\tilde c_4$ with probability at least $1-e^{-\Omega(N)}$. 
  \end{proof}

\end{document}